\DeclareMathOperator*{\argmin}{argmin}
\newcommand{\col}[2]{\bgroup\color{#1}#2\egroup}
\renewcommand{\d}{\mathop{}\!\mathrm{d}}
\newcommand{\E}{\mathbb{E}}
\newcommand{\mnorm}[1]{{\vert\kern-0.25ex\vert\kern-0.25ex\vert #1 
    \vert\kern-0.25ex\vert\kern-0.25ex\vert}}
\renewcommand{\P}{\mathbb{P}}
\newcommand{\R}{\mathbb{R}}
\DeclareMathOperator{\var}{var}
\def\hyph{-\penalty0\hskip0pt\relax} 
\newif\ifnocolor
\newcommand{\st}[1]{\ifmmode\text{\sout{\ensuremath{#1}}}\else\sout{#1}\fi} 
	\newcommand{\DELETE}[1]{\col{black!30}{\st{#1}}} 
	\newcommand{\DELETETWO}[1]{\col{black!30}{\st{#1}}}
	\newcommand{\DELETELONG}[1]{\col{black!30}{#1}} 
	\newcommand{\DELETELONGTWO}[1]{\col{black!30}{#1}} 
	\renewcommand{\DELETE}[1]{} 
	\renewcommand{\DELETETWO}[1]{}
	\renewcommand{\DELETELONG}[1]{} 
	\renewcommand{\DELETELONGTWO}[1]{} 
\begin{document}

\title{Generalized Score Matching for Non-Negative Data}

\author{\name Shiqing Yu \email sqyu@uw.edu \\
       \addr Department of Statistics\\
       University of Washington, Seattle, WA, U.S.A. 
              \AND
       \name Mathias Drton \email md5@uw.edu \\
       \addr Department of Mathematical Sciences\\
       University of Copenhagen, Copenhagen, Denmark\\
       and\\
       \addr Department of Statistics\\
       University of Washington, Seattle, WA, U.S.A. 
       \AND
       \name Ali Shojaie \email ashojaie@uw.edu \\
       \addr Department of Biostatistics\\
       University of Washington, Seattle, WA, U.S.A.
     }

\editor{Aapo Hyvarinen}

\maketitle

\begin{abstract}%
A common challenge in estimating parameters of probability density
functions is the intractability of the normalizing constant. While
in such cases maximum likelihood estimation may be implemented using
numerical integration, the approach becomes computationally
intensive.  The score matching method of \citet{hyv05}
avoids direct calculation of the normalizing constant and yields
closed-form estimates for exponential families of continuous
distributions over $\mathbb{R}^m$. \citet{hyv07} extended the
approach to distributions supported on the non-negative orthant, 
$\mathbb{R}_+^m$. In this paper, we give a generalized form of score
matching for non-negative data that improves estimation
efficiency. As an example, we consider a general class of pairwise
interaction models. Addressing an overlooked inexistence problem, we
generalize the regularized score matching method of
\citet{lin16} and improve its theoretical guarantees for non-negative Gaussian graphical models.
\end{abstract}

\begin{keywords}
exponential family, graphical model, positive data, score matching, sparsity
\end{keywords}

\section{Introduction}

Score matching was first developed in \citet{hyv05} for continuous distributions supported on all of $\mathbb{R}^m$.  Consider such a distribution $P_0$, with density $p_0$ and support equal to $\mathbb{R}^m$.
Let $\mathcal{P}$ be a family of distributions with twice continuously differentiable densities. The score matching estimator of $p_0$ using $\mathcal{P}$ as a model is the minimizer of the expected squared $\ell_2$ distance between the gradients of
$\log p_0$ and a log-density from $\mathcal{P}$.  So we minimize the
loss
$\int_{\mathbb{R}^m}p_{0}(\boldsymbol{x})\|\nabla\log
p(\boldsymbol{x})-\nabla\log
p_0(\boldsymbol{x})\|_2^2\d\boldsymbol{x}$ 
with respect to densities $p$ from $\mathcal{P}$.  The loss depends on $p_0$, but integration by parts can be used to rewrite it in a form that can be
approximated by averaging over the sample without knowing $p_0$.  A key feature of score matching is that normalizing constants cancel in gradients of log-densities, allowing for simple treatment of models with intractable normalizing constants.  
For exponential families, the loss is quadratic in the canonical parameter, making optimization straightforward. 

If the considered distributions are supported on a proper subset of $\mathbb{R}^m$, then the integration by parts arguments underlying the score matching estimator may fail due to discontinuities at the boundary of the support.  For data supported on the non-negative orthant $\mathbb{R}_+^m$, \citet{hyv07} addresses this problem by modifying the loss to 
$\int_{\mathbb{R}^m}p_{0}(\boldsymbol{x})\|\nabla\log
p(\boldsymbol{x})\circ \boldsymbol{x}-\nabla\log
p_0(\boldsymbol{x})\circ \boldsymbol{x}\|_2^2\d \boldsymbol{x}$, where $\circ$ denotes entrywise multiplication. In this loss, boundary effects are dampened by multiplying gradients elementwise with the identity functions $x_j$. 

In this paper, we propose \emph{generalized score matching} methods that are based on element\-wise multiplication with functions other than $x_j$. As we show, this can lead to drastically improved estimation accuracy, both theoretically and empirically.  To demonstrate these advantages, we consider a family of 
graphical models on $\mathbb{R}_+^m$, which does not have tractable normalizing constants and hence serves as a practical example.

\emph{Graphical models} specify conditional independence relations for a
random vector $\boldsymbol{X}=(X_{i})_{i\in V}$ indexed by the nodes
of a graph \citep{lau96}.
For undirected graphs, variables $X_{i}$ and $X_{j}$ are required to be conditionally independent given  $(X_{k})_{k\not=i,j}$ if there is no edge between $i$ and $j$. The smallest undirected graph with this property is the \emph{conditional independence graph} of $\boldsymbol{X}$.  Estimation of this graph and associated interaction parameters has been a topic of continued research as reviewed by \citet{drt17}.

Largely due to their tractability, Gaussian graphical models (GGMs)
have gained great popularity.  The conditional independence graph of
a multivariate normal vector
$\boldsymbol{X}\sim\mathcal{N}(\boldsymbol{\mu},\boldsymbol{\Sigma})$
is determined by the \emph{inverse covariance matrix}
$\mathbf{K}\equiv\boldsymbol{\Sigma}^{-1}$, also termed
\emph{concentration} or \emph{precision matrix}.  Specifically, $X_{i}$ and $X_{j}$ are conditionally independent given all other variables  if and only if the $(i,j)$-th and the $(j,i)$-th entries of $\mathbf{K}$ are both zero. This simple relation underlies a rich literature including \citet{drt04}, \citet{mei06}, \citet{yua07} and \citet{fri08}, among others.

More recent work has provided tractable procedures also for
non-Gaussian graphical models. This includes Gaussian copula models
\citep{liu09,dob11,liu12}, Ising models \citep{rav10}, other
exponential family models \citep{che14,yan15}, as well as semi- or
non-parametric estimation techniques \citep{fel13,voo13}. In this
paper, we apply our method to a class of pairwise interaction models
that generalizes non-negative Gaussian random variables, as recently
considered by \citet{lin16} and \citet{yu16}, as well as square root
graphical models proposed by \citet{ino16} when the sufficient
statistic function is a pure power. However, our main ideas can also
be applied for other classes of exponential families whose
support is restricted to a rectangular set.

Our focus will be on \emph{pairwise interaction power models} with
probability distributions having (Lebesgue) densities proportional to
\begin{equation}\label{eq:ab-density}
\exp\left\{-\frac{1}{2a}{\boldsymbol{x}^a}^{\top}\mathbf{K}\boldsymbol{x}^a+\boldsymbol{\eta}^{\top}\frac{\boldsymbol{x}^b-\mathbf{1}_m}{b}\right\}
\end{equation}
on $\mathbb{R}_+^m\equiv[0,\infty)^m$. Here $a>0$ and $b\ge 0$ are known
constants, and $\mathbf{K}\in\mathbb{R}^{m\times m}$ and
$\boldsymbol{\eta}\in\mathbb{R}^{m}$ are unknown parameters of
interest. When $b=0$ we define $(x^b-1)/b\equiv\log x$ and
$\mathbb{R}_+^m\equiv(0,\infty)^m$.  This class of models is motivated
by the form of important univariate distributions for non-negative
data, including gamma and truncated normal distributions.  It provides a
framework for pairwise interaction that is concrete yet rich enough to
capture key differences in how densities may behave at the boundary of
the non-negative orthant, $\mathbb{R}_+^m$.  Moreover, the conditional
independence graph of a random vector $\boldsymbol{X}$ with
distribution as in~(\ref{eq:ab-density}) is determined just as in the
Gaussian case: $X_i$ and $X_j$ are conditionally independent given all
other variables if and only if $\kappa_{ij}=\kappa_{ji}=0$ in the
interaction matrix $\mathbf{K}$.  Section \ref{A General Framework of Pairwise Interaction Models} gives further details on these models. We will 
develop estimators of
$(\boldsymbol{\eta},\mathbf{K})$ in (\ref{eq:ab-density}) and the associated conditional independence graph using the proposed \emph{generalized score matching}.

A special case of (\ref{eq:ab-density}) are truncated Gaussian
graphical models, with $a=b=1$.  Let
$\boldsymbol{\mu}\in\mathbb{R}^m$, and let $\mathbf{K}$ be a positive
definite matrix.  Then a non-negative random vector $\boldsymbol{X}$
follows a truncated normal distribution for mean parameter
$\boldsymbol{\mu}$ and inverse covariance parameter $\mathbf{K}$, in
symbols $\boldsymbol{X}\sim\mathrm{TN}(\boldsymbol{\mu},\mathbf{K})$,
if it has density proportional to
\begin{equation}
  \label{eq:tn-density}
  \exp\left\{-\frac{1}{2}(\boldsymbol{x}-\boldsymbol{\mu})^{\top}\mathbf{K}(\boldsymbol{x}-\boldsymbol{\mu})\right\}
\end{equation}
on $\mathbb{R}_+^m$. We refer to
$\boldsymbol{\Sigma}=\mathbf{K}^{-1}$ as the covariance parameter of
the distribution, and note that the $\boldsymbol{\eta}$ parameter in (\ref{eq:ab-density}) is $\mathbf{K}\boldsymbol{\mu}$.  Another special case of (\ref{eq:ab-density}) is the exponential square root graphical models in \citet{ino16}, where $a=b=1/2$.

\citet{lin16} estimate truncated GGMs based on Hyv\"arinen's modification, with an $\ell_1$ penalty on the entries of $\mathbf{K}$ added to the loss. However, the paper overlooks the fact that the loss can be unbounded from below in the high-dimensional setting even with an $\ell_1$ penalty, such that no minimizer may exist. Since the unpenalized loss is quadratic in the parameter to be estimated, we propose modifying it by adding small positive values to the diagonals of the positive semi-definite matrix that defines the quadratic part, in order to ensure that the loss is bounded and strongly convex and admits a unique minimizer. We apply this to the estimator for GGMs considered in \citet{lin16}, which uses score-matching on $\mathbb{R}^m$, and to the \emph{generalized score matching} estimator for pairwise interaction power models on $\mathbb{R}_+^m$ proposed in this paper. In these cases, we show, both empirically and theoretically, that the consistency results still hold (or even improve) if the positive values added are smaller than a threshold that is readily computable.

The rest of the paper is organized as follows. Section \ref{Score
  Matching} introduces score matching and our proposed
\emph{generalized score matching}. In Section
\ref{Exponential_Families}, we apply generalized score matching to
exponential families, with univariate truncated normal distributions
as an example. \emph{Regularized generalized score matching} for
graphical models is formulated in Section \ref{Regularized Generalized
  Score Matching}. The estimators for pairwise interaction
power models are shown in Section \ref{sec:graph-models-trunc}, while
theoretical consistency results are presented in Section \ref{Theory
  for Graphical Models}, where we treat the probabilistically most
tractable case of truncated GGMs. Simulation results and applications
to RNAseq data are given in Section \ref{Numerical
  Experiments}. Proofs for theorems in Sections \ref{Score
  Matching}--\ref{Theory for Graphical Models} are presented in
Appendices~\ref{Proofs} and~\ref{AppA}. 
Additional experimental results are presented in Appendix \ref{ER}.



\subsection{Notation}

Constant scalars, vectors, and functions are written in lower-case
(e.g., $a$, $\boldsymbol{a}$), random scalars and vectors in
upper-case (e.g., $X$, $\boldsymbol{X}$).  Regular font is used for
scalars (e.g.~$a$, $X$), and boldface for vectors
(e.g.~$\boldsymbol{a}$, $\boldsymbol{X}$).  Matrices are in upright
bold, with constant matrices in upper-case ($\mathbf{K}$,
$\mathbf{M}$) and random matrices holding observations in lower-case
($\mathbf{x}$, $\mathbf{y}$). Subscripts refer to entries in vectors
and columns in matrices. Superscripts refer to rows in matrices.  So
$X_j$ is the $j$-th component of a random vector $\boldsymbol{X}$.
For a data matrix $\mathbf{x}\in\mathbb{R}^{n\times m}$, each row
comprising one observation of $m$ variables/features, $X_{j}^{(i)}$ is
the $j$-th feature for the $i$-th observation.  Stacking the columns
of a matrix $\mathbf{K}=[\kappa_{ij}]_{i,j}\in\mathbb{R}^{q\times r}$
gives its vectorization
$\mathrm{vec}(\mathbf{K})=(\kappa_{11},\ldots,\kappa_{q1},\kappa_{12},\ldots,\kappa_{q2},\ldots,\kappa_{1r},\ldots,\kappa_{qr})^{\top}$.
For a matrix $\mathbf{K}\in\mathbb{R}^{q\times q}$,
$\mathrm{diag}(\mathbf{K})\in\mathbb{R}^q$ denotes its diagonal, and
for a vector $\boldsymbol{v}\in\mathbb{R}^q$,
$\mathrm{diag}(\boldsymbol{v})$ is the $q\times q$ diagonal matrix
with diagonals $v_1,\dots,v_q$.

For $a\geq 1$, the $\ell_{a}$-norm of a vector $\boldsymbol{v}\in\mathbb{R}^q$ is denoted 
\[\|\boldsymbol{v}\|_a=\Bigg(\sum_{j=1}^q |v_j|^a\Bigg)^{1/a},\] with
$\|\boldsymbol{v}\|_{\infty}=\max\limits_{j=1,\ldots,q}|v_j|$.  A
matrix $\mathbf{K}=[\kappa_{ij}]_{i,j}\in\mathbb{R}^{q\times r}$ has Frobenius norm
\[\mnorm{\mathbf{K}}_{F}\equiv\|\mathrm{vec}(\mathbf{K})\|_{2}\equiv\sqrt{\sum_{i=1}^q\sum_{j=1}^r\kappa_{ij}^2},\]
and max norm
$\|\mathbf{K}\|_{\infty}\equiv\|\mathrm{vec}(\mathbf{K})\|_{\infty}\equiv\max\limits_{i,j}|\kappa_{ij}|.$
Its $\ell_a$-$\ell_b$ operator norm  is
\[\mnorm{\mathbf{K}}_{a,b}\equiv\max_{\boldsymbol{x}\neq\boldsymbol{0}}\frac{\|\mathbf{K}\boldsymbol{x}\|_b}{\|\boldsymbol{x}\|_a}\]
with shorthand notation
$\mnorm{\mathbf{K}}_{a}\equiv\mnorm{\mathbf{K}}_{a,a}$; for instance,
$\mnorm{\mathbf{K}}_{\infty}\equiv\max\limits_{i=1.\ldots,q}\sum\limits_{j=1}^{r}|\kappa_{ij}|$. 

For a function $f:\mathbb{R}^m\to\mathbb{R}$, we define
$\partial_j f(\boldsymbol{x})$ as the partial derivative with respect
to 
$x_j$, and
$\partial_{jj}f(\boldsymbol{x})=\partial_j\partial_j
f(\boldsymbol{x})$.  For $\boldsymbol{f}:\mathbb{R}\to\mathbb{R}^m$,
$\boldsymbol{f}(x)=(f_1(x),\ldots,f_m(x))^{\top}$, we let
$\boldsymbol{f}'(x)=(f_1'(x), \ldots, f_m'(x))^{\top}$ be the vector
of derivatives.  Likewise $\boldsymbol{f}''(x)$ is used for second
derivatives.  The symbol $\mathds{1}_A(\cdot)$ denotes the indicator
function of the set $A$, while $\mathbf{1}_n\in\mathbb{R}^n$ is the
vector of all $1$'s. For $\boldsymbol{a}$,
$\boldsymbol{b}\in\mathbb{R}^m$,
$\boldsymbol{a}\circ\boldsymbol{b}\equiv(a_1b_1,\ldots,a_mb_m)^{\top}$.
A density of a distribution is always a probability density function
with respect to Lebesgue measure.  When it is clear from the context,
$\mathbb{E}_0$ denotes the expectation under a true distribution $P_0$.

\section{Score Matching}\label{Score Matching}
In this section, we review the original score matching and develop our generalized score matching estimators.
\subsection{Original Score Matching}
\label{sec:orig-score-match}

Let $\boldsymbol{X}$ be a random vector taking values in $\mathbb{R}^m$ with distribution $P_0$ and density $p_0$. Let $\mathcal{P}$ be a family of distributions of interest with twice continuously differentiable densities supported on $\mathbb{R}^m$. Suppose $P_0\in\mathcal{P}$. 
The \emph{score matching loss} for $P\in\mathcal{P}$, with density $p$, is given by
\begin{equation}\label{eq_sm}
J(P)=\int_{\mathbb{R}^m}p_0(\boldsymbol{x})\|\nabla\log p(\boldsymbol{x})-\nabla\log p_0(\boldsymbol{x})\|_2^2\d\boldsymbol{x}.
\end{equation}
The gradients in~(\ref{eq_sm}) can be thought of as gradients with respect to a hypothetical location parameter, evaluated at the origin \citep{hyv05}.
The loss $J(P)$ is minimized if and only if $P=P_0$, which forms the
basis for estimation of $P_0$.  Importantly, since the loss depends on
$p$ only through its log-gradient, it suffices to know $p$ up to a
normalizing constant. Under mild conditions, (\ref{eq_sm}) can be
rewritten as
\begin{equation}\label{eq_sm_eq}
J(P)=\int_{\mathbb{R}^m}p_0(\boldsymbol{x})\sum\limits_{j=1}^m\left[\partial_{jj}\log p(\boldsymbol{x})+\frac{\left(\partial_{j}\log p(\boldsymbol{x})\right)^2}{2}\right]\d \boldsymbol{x},
\end{equation}
plus a constant independent of $p$.  The integral in (\ref{eq_sm_eq})
can be approximated by a sample average; this alleviates the need for knowing the true density $p_0$, and provides a way to estimate $p_0$.

\subsection{Generalized Score Matching for Non-Negative Data}\label{Generalized Score Matching for Non-Negative Data}

When the true density $p_0$ is supported on a proper subset of
$\mathbb{R}^m$, the integration by parts underlying the equivalence of
(\ref{eq_sm}) and (\ref{eq_sm_eq}) may fail due to discontinuity at the boundary.  For distributions supported on the
non-negative orthant, $\mathbb{R}_+^m$, \citet{hyv07} addressed this
issue by instead minimizing the \emph{non-negative score matching
  loss}
\begin{equation}\label{eq_nn_sm}
J_+(P)=\int_{\mathbb{R}^m_+}p_0(\boldsymbol{x})\|\nabla\log p(\boldsymbol{x})\circ\boldsymbol{x}-\nabla\log p_0(\boldsymbol{x})\circ\boldsymbol{x}\|_2^2\d\boldsymbol{x}.
\end{equation}
This loss can be motivated via gradients with respect to a
hypothetical scale parameter \citep{hyv07}. Under mild conditions,
$J_+(P)$ can again be rewritten in terms of an expectation of a
function independent of $p_0$, thus allowing one to form a sample
loss.

In this work, we consider generalizing the non-negative score matching
loss as follows.

\begin{definition}\label{definition_GSM_loss}
  Let $\mathcal{P}_+$ be the family of distributions of interest, and
  assume every $P\in\mathcal{P}_+$ has a twice continuously
  differentiable density supported on $\mathbb{R}_+^m$. Suppose the
  $m$-variate random vector $\boldsymbol{X}$ has true distribution
  $P_0\in\mathcal{P}_+$, and let $p_0$ be its twice continuously
  differentiable density.  Let
  $h_1,\dots,h_m:\mathbb{R}_+\to\mathbb{R}_+$ be a.s.~positive
  functions that are absolutely continuous in every bounded
  sub-interval of $\mathbb{R}_+$,
  and set $\boldsymbol{h}(\boldsymbol{x})=(h_1(x_1),\dots,h_m(x_m))^{\top}$. For $P\in\mathcal{P}_+$ with density $p$, the \emph{generalized
    $\boldsymbol{h}$-score matching loss} is
\begin{equation}\label{eq_gsm}
J_{\boldsymbol{h}}(P)=\int_{\mathbb{R}_+^m}\frac{1}{2}p_0(\boldsymbol{x})\|\nabla\log p(\boldsymbol{x})\circ \boldsymbol{h}(\boldsymbol{x})^{1/2}-\nabla\log p_0(\boldsymbol{x})\circ \boldsymbol{h}(\boldsymbol{x})^{1/2}\|_2^2\d\boldsymbol{x},
\end{equation}
where $\boldsymbol{h}^{1/2}(\boldsymbol{x})\equiv(h_1^{1/2}(x_1),\ldots,h_m^{1/2}(x_m))^{\top}$.
\end{definition}

\begin{proposition}
  \label{theorem_gsm}
  The distribution $P_0$ is the unique minimizer of
  ${J}_{\boldsymbol{h}}(P)$ for $P\in\mathcal{P}_+$.
\end{proposition}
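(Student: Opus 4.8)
The plan is to show that $J_{\boldsymbol{h}}$ is bounded below by $0$, that this bound is attained exactly at $P_0$, and hence that $P_0$ is its unique minimizer. First I would observe that the integrand in~(\ref{eq_gsm}) is the product of the nonnegative weight $\tfrac12 p_0(\boldsymbol{x})$ and the squared Euclidean norm $\bigl\|\bigl(\nabla\log p(\boldsymbol{x})-\nabla\log p_0(\boldsymbol{x})\bigr)\circ\boldsymbol{h}(\boldsymbol{x})^{1/2}\bigr\|_2^2\ge 0$; hence $J_{\boldsymbol{h}}(P)\ge 0$ for every $P\in\mathcal{P}_+$, while $J_{\boldsymbol{h}}(P_0)=0$ since the bracketed difference vanishes identically. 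It therefore suffices to prove that $J_{\boldsymbol{h}}(P)=0$ forces $P=P_0$.

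Second, assume $J_{\boldsymbol{h}}(P)=0$. Then the nonnegative, measurable integrand is zero for Lebesgue-almost every $\boldsymbol{x}\in\mathbb{R}_+^m$. Because $p_0$ is positive on $\mathbb{R}_+^m$ (its support) and each $h_j$ is a.s.\ positive, for a.e.\ $\boldsymbol{x}$ all the scalar factors $p_0(\boldsymbol{x})$ and $h_1(x_1),\dots,h_m(x_m)$ are strictly positive; cancelling them coordinatewise gives $\partial_j\log p(\boldsymbol{x})=\partial_j\log p_0(\boldsymbol{x})$ for every $j$ and a.e.\ $\boldsymbol{x}\in\mathbb{R}_+^m$, i.e.\ $\nabla g(\boldsymbol{x})=\boldsymbol{0}$ a.e., where $g\equiv\log p-\log p_0$.

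Third, I would upgrade this a.e.\ identity to an everywhere statement using the regularity hypotheses. Since $p$ and $p_0$ are twice continuously differentiable and strictly positive, $g$ is $C^1$ on the open connected set $(0,\infty)^m$ (the interior of $\mathbb{R}_+^m$, which coincides with $\mathbb{R}_+^m$ itself when $b=0$). The set $\{\nabla g\neq\boldsymbol{0}\}$ is open by continuity and Lebesgue-null by the previous step, hence empty, so $\nabla g\equiv\boldsymbol{0}$ on the connected open set $(0,\infty)^m$; this forces $g$ to equal a constant $c$ there, and by continuity on all of $\mathbb{R}_+^m$. Thus $p=e^{c}p_0$ on $\mathbb{R}_+^m$, and integrating while using $\int p=\int p_0=1$ yields $e^{c}=1$, so $p=p_0$ and $P=P_0$. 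Combined with the first step, $P_0$ is the unique minimizer.

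The only delicate point is the passage in the third step from "the integrand vanishes a.e." to "$g$ is constant": the measure-zero exceptional set now collects contributions both from the boundary $\partial\mathbb{R}_+^m$ and from the a.s.-positivity (rather than everywhere-positivity) of the weight functions $h_j$. Invoking continuity of $\nabla g$ on the open interior dispatches this cleanly, after which the cancellation of positive factors and the normalization argument are routine.
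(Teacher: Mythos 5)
Your proof is correct and follows essentially the same route as the paper's: nonnegativity with equality at $P_0$, then cancellation of the a.s.\ positive factors $p_0$ and $h_j$ to force $\nabla\log p=\nabla\log p_0$ a.e., and finally proportionality plus normalization to conclude $p=p_0$. The only difference is that you spell out the passage from the a.e.\ identity to constancy of $\log p-\log p_0$ via continuity of the gradient on the connected open orthant, a step the paper's proof states without elaboration.
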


\begin{proof} 
  First, observe that $J_{\boldsymbol{h}}(P)\geq 0$ and
  $J_{\boldsymbol{h}}(P_0)=0$.  For uniqueness, suppose
  $J_{\boldsymbol{h}}(P_1)=0$ for some $P_1\in\mathcal{P}_+$.  Let
  $p_0$ and $p_1$ be the respective densities.  By
  assumption $p_0(\boldsymbol{x})>0$ a.s.~and
  $h_j^{1/2}(\boldsymbol{x})>0$ a.s.~for all $j=1,\dots,m$.  Therefore,
  we must have
  $\nabla\log p_1(\boldsymbol{x})=\nabla\log
  p_0(\boldsymbol{x})$ a.s., or equivalently,
  $p_1(\boldsymbol{x})=\mathrm{const}\times
  p_0(\boldsymbol{x})$ almost surely in
  $\mathbb{R}_+^m$. Since $p_1$
  and $p_0$ are continuous densities supported on $\mathbb{R}_+^m$, it follows that
  $p_1(\boldsymbol{x})=p_0(\boldsymbol{x})$ for
  all $\boldsymbol{x}\in\mathbb{R}_+^m$.
\end{proof}

Choosing all $h_j(x)=x^2$ recovers the loss from (\ref{eq_nn_sm}).  In
our generalization, we will focus on using functions $h_j$ that are
increasing but are bounded or grow rather slowly.
This will alleviate the need to estimate higher moments,
leading to better practical performance and improved theoretical
guarantees. 

We will consider the following assumptions:
\begin{align*}
(\text{A1}) &\,\,\, p_0(\boldsymbol{x})h_j(x_j)\partial_j\log p(\boldsymbol{x})\left|^{x_j\nearrow+\infty}_{x_j\searrow 0^+}=0\right.,\quad\forall\boldsymbol{x}_{-j}\in\mathbb{R}_{+}^{m-1},\quad \forall p\in\mathcal{P}_+;\\
(\text{A2}) &\,\,\, \mathbb{E}_{p_0}\|\nabla\log p(\boldsymbol{X})\circ \boldsymbol{h}^{1/2}(\boldsymbol{X})\|_2^2<+\infty, \,\,\, \mathbb{E}_{p_0}\|(\nabla\log p(\boldsymbol{X})\circ\boldsymbol{h}(\boldsymbol{X}))'\|_1<+\infty,\,\,\,\forall p\in\mathcal{P}_+,
\end{align*}
where $\partial_j\log p(\boldsymbol{x})\equiv\left.\frac{\partial\log p(\boldsymbol{y})}{\partial y_j}\right|_{\boldsymbol{y}=\boldsymbol{x}}$,\,$f(\boldsymbol{x})\left|_{x_j\searrow 0^+}^{x_j\nearrow+\infty}\right.\equiv\lim_{x_j\nearrow+\infty}f(\boldsymbol{x})-\lim_{x_j\searrow 0}f(\boldsymbol{x})$, ``$\forall p\in\mathcal{P}_+$'' is a shorthand for ``for all $p$ being the density of some $P\in\mathcal{P}_+$'', and the prime symbol denotes component-wise differentiation. While the second half of (A2) was not made explicit in  \citet{hyv05,hyv07}, (A1)-(A2) were both required for integration by parts and Fubini-Tonelli to apply.

Once the forms of $p_0$ and $p$ are given, sufficient conditions for
$\boldsymbol{h}$ for Assumptions (A1)-(A2) to hold are easy to find.
In particular, (A1) and (A2) are easily satisfied and verified for exponential families.

Integration by parts yields the following theorem which shows that
$J_{\boldsymbol{h}}$ from~(\ref{eq_gsm}) is an expectation (under
$P_0$) of a function that does not depend on $p_0$, similar to
(\ref{eq_sm_eq}). The proof is given in Appendix~\ref{Proof of Theorem theorem_GSM_loss_alt}.

\begin{theorem}\label{theorem_GSM_loss_alt}
Under (A1) and (A2), the loss from (\ref{eq_gsm}) equals
\begin{multline}
J_{\boldsymbol{h}}(P)=\int_{\mathbb{R}_+^m}p_0(\boldsymbol{x})\sum_{j=1}^m\left[h_j'(x_j)\partial_j (\log p(\boldsymbol{x}))+h_j(x_j)\partial_{jj}(\log p(\boldsymbol{x}))\phantom{+\frac{1}{2}}\right.\\
\left.+\frac{1}{2}h_j(x_j)\left(\partial_j(\log p(\boldsymbol{x}))\right)^2\right]\d\boldsymbol{x}\label{eq_gsm_eq}
\end{multline}
plus a constant independent of $p$.
\end{theorem}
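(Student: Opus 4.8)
The plan is to expand the squared norm in~(\ref{eq_gsm}) and deal with the resulting three pieces, the cross term being the only one that requires work. Write
\[
\|\nabla\log p\circ\boldsymbol{h}^{1/2}-\nabla\log p_0\circ\boldsymbol{h}^{1/2}\|_2^2
=\sum_{j=1}^m h_j(x_j)\left[\left(\partial_j\log p\right)^2-2\,\partial_j\log p\,\partial_j\log p_0+\left(\partial_j\log p_0\right)^2\right].
\]
The term $\tfrac12 p_0\sum_j h_j(x_j)(\partial_j\log p_0)^2$ integrates to a constant that does not depend on $p$, so it may be discarded. The term $\tfrac12 p_0\sum_j h_j(x_j)(\partial_j\log p)^2$ is already in the desired form and is kept as is. So everything comes down to rewriting the cross term $-\int_{\mathbb{R}_+^m}p_0\sum_j h_j(x_j)\,\partial_j\log p\;\partial_j\log p_0\,\d\boldsymbol{x}$.

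For the cross term I would fix $j$, use $p_0\,\partial_j\log p_0=\partial_j p_0$, and integrate by parts in the single variable $x_j$ (holding $\boldsymbol{x}_{-j}$ fixed), with Fubini–Tonelli justifying the iterated integral; this is exactly where (A1) and (A2) are invoked. Concretely,
\[
-\int_0^\infty h_j(x_j)\,\partial_j\log p(\boldsymbol{x})\,\partial_j p_0(\boldsymbol{x})\,\d x_j
= -\Big[h_j(x_j)\,\partial_j\log p(\boldsymbol{x})\,p_0(\boldsymbol{x})\Big]_{x_j\searrow 0^+}^{x_j\nearrow+\infty}
+\int_0^\infty p_0(\boldsymbol{x})\,\partial_j\!\big(h_j(x_j)\,\partial_j\log p(\boldsymbol{x})\big)\,\d x_j.
\]
The boundary term vanishes by (A1). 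Expanding the remaining derivative by the product rule gives $\partial_j(h_j\,\partial_j\log p)=h_j'(x_j)\,\partial_j\log p+h_j(x_j)\,\partial_{jj}\log p$; here one uses that $h_j$ is absolutely continuous on bounded subintervals so that $h_j'$ exists a.e.\ and the product rule for (Sobolev/absolutely continuous) functions applies. Summing over $j$ and reintegrating over $\boldsymbol{x}_{-j}$ (again Fubini, legitimate by the finiteness in (A2)) produces $\int_{\mathbb{R}_+^m}p_0\sum_j\big[h_j'\,\partial_j\log p+h_j\,\partial_{jj}\log p\big]\d\boldsymbol{x}$.

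Adding back the retained quadratic term $\tfrac12\int p_0\sum_j h_j(\partial_j\log p)^2$ and dropping the $p$-free constant yields exactly~(\ref{eq_gsm_eq}). The main obstacle is the technical bookkeeping around the integration by parts: ensuring the one-dimensional integration by parts is valid for merely absolutely-continuous $h_j$ and twice continuously differentiable $p,p_0$, that the boundary limits in (A1) actually exist (not just that their difference is zero), and that Fubini–Tonelli applies so the order of integration can be switched — all of which is precisely what Assumptions (A1) and (A2) are designed to guarantee, mirroring the arguments of \citet{hyv05,hyv07}. Everything else is routine algebra.
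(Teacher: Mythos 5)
Your proposal is correct and follows essentially the same route as the paper's proof: split the expanded square into the quadratic term in $p$, the $p$-free constant, and the cross term, then handle the cross term by writing $p_0\,\partial_j\log p_0=\partial_j p_0$, integrating by parts in $x_j$ (boundary term killed by (A1)), and invoking Fubini--Tonelli via (A2). The paper packages the one-dimensional integration by parts for absolutely continuous functions as a separate lemma, but the substance is identical.
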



Given a data matrix $\mathbf{x}\in\mathbb{R}^{n\times m}$ with rows
$\boldsymbol{X}^{(i)}$, we define the sample version of (\ref{eq_gsm_eq})
as
\begin{multline}
\hat{J}_{\boldsymbol{h}}(P)=\frac{1}{n}\sum_{i=1}^n\sum_{j=1}^m\left\{h_j'(X_j^{(i)})\partial_j (\log p(\boldsymbol{X}^{(i)}))\phantom{\left[\frac{1}{2}\left(\partial_j(\log p(\boldsymbol{X}^{(i)}))\right)^2\right]}\right.\\
\left.+h_j(X_j^{(i)})\left[\partial_{jj}(\log p(\boldsymbol{X}^{(i)}))+\frac{1}{2}\left(\partial_j(\log p(\boldsymbol{X}^{(i)}))\right)^2\right]\right\}.\label{eq_gsm_sample}
\end{multline}

\noindent
Subsequently, for a distribution $P$ with density $p$, we let
$J_{\boldsymbol{h}}(p)\equiv J_{\boldsymbol{h}}(P)$.  Similarly, when
a distribution $P_{\boldsymbol{\theta}}$ with density
$p_{\boldsymbol{\theta}}$ is associated to a parameter vector
$\boldsymbol{\theta}$, we write
$J_{\boldsymbol{h}}(\boldsymbol{\theta})\equiv
J_{\boldsymbol{h}}(p_{\boldsymbol{\theta}}) \equiv
J_{\boldsymbol{h}}(P_{\boldsymbol{\theta}})$.  We apply similar
conventions to the sample version $\hat{J}_{\boldsymbol{h}}(P)$.  We
note that this type of loss is also treated in slightly different
settings in \citet{par16} and \citet{alm93}.

\begin{remark}\rm
In the one-dimensional case, using the notation in \citet{par12}, $J_{\boldsymbol{h}}(P)$ and $\hat{J}_{\boldsymbol{h}}(P)$ correspond to $d(P_0,P)$ and $S(x,P)$, respectively, and can be generated by $\phi(x,p,p_1)\equiv-h(x)p_1^2/(2p)$ (c.f.~Equations (39), (51), (53) and Section 10.1 therein). Thus Theorem \ref{theorem_GSM_loss_alt} follows from this correspondence. While (A1) is equivalent to the condition implied by the boundary divergence $d_b=0$ in that paper, (A2), which we assume for invoking Fubini-Tonelli due to multi-dimensionality, is not present. On the other hand, while \citet{par16} treats the multivariate case, it does not cover the connection between our $J_{\boldsymbol{h}}$ and $\hat{J}_{\boldsymbol{h}}$. Since $\phi$ is concave but not strictly concave in $(p,p_1)$, the results in \citet{par16} only imply that $P_0$ is \emph{a} minimizer, a weaker conclusion than Proposition \ref{theorem_gsm}.

\end{remark}




\section{Exponential Families}\label{Exponential_Families}

In this section, we study the case where $\mathcal{P}_+\equiv\{p_{\boldsymbol{\theta}}:\boldsymbol{\theta}\in\boldsymbol{\Theta}\}$ is an
exponential family comprising continuous distributions with support
$\mathbb{R}_+^m$.  More specifically, we consider densities that are indexed by the canonical parameter
$\boldsymbol{\theta}\in\mathbb{R}^{r}$ and have the form
\begin{equation}\label{definition_exponential_family}
\log p_{\boldsymbol{\theta}}(\boldsymbol{x})=\boldsymbol{\theta}^{\top}\boldsymbol{t}(\boldsymbol{x})-\psi(\boldsymbol{\theta})+b(\boldsymbol{x}),\quad\boldsymbol{x}\in\mathbb{R}_+^m,
\end{equation}
where $\boldsymbol{t}(\boldsymbol{x})\in\mathbb{R}_+^{r}$ comprises the sufficient statistics, $\psi(\boldsymbol{\theta})$ is a normalizing constant depending on $\boldsymbol{\theta}$ only, and $b(\boldsymbol{x})$ is the base measure, with $\boldsymbol{t}$ and $b$ a.s.~differentiable with respect to each component. Define $\boldsymbol{t_j'}(\boldsymbol{x})\equiv(\partial_j t_1(\boldsymbol{x}),\ldots,\partial_j t_{r}(\boldsymbol{x}))^{\top}$ and $b_j'(\boldsymbol{x})\equiv\partial_j b(\boldsymbol{x})$.
 
%

\begin{theorem}\label{theorem_GSM_estimator_exp}
Under Assumptions (A1)-(A2) from Section~\ref{Generalized Score Matching for Non-Negative Data}, the empirical generalized $\boldsymbol{h}$-score matching loss (\ref{eq_gsm_sample}) can be rewritten as a quadratic function in $\boldsymbol{\theta}\in\mathbb{R}^{r}$:
\begin{align}\label{eq_gsm_exponential}
\hspace{-0.02in}\hat{J}_{\boldsymbol{h}}(p_{\boldsymbol{\theta}})&=\frac{1}{2}\boldsymbol{\theta}^{\top}\boldsymbol{\Gamma}(\mathbf{x})\boldsymbol{\theta}-\boldsymbol{g}(\mathbf{x})^{\top}\boldsymbol{\theta}+\mathrm{const},\quad\text{where}\\
\boldsymbol{\Gamma}(\mathbf{x})&=\frac{1}{n}\sum_{i=1}^{n}\sum_{j=1}^mh_j(X_j^{(i)})\boldsymbol{t}_j'(\boldsymbol{X}^{(i)})
\boldsymbol{t}_j'(\boldsymbol{X}^{(i)})^{\top}\quad\text{and}\label{def_Gamma}\\
\boldsymbol{g}(\mathbf{x})&=-\frac{1}{n}\sum_{i=1}^n\sum_{j=1}^m\left[h_j(X_j^{(i)})b_j'(\boldsymbol{X}^{(i)})
\boldsymbol{t}'_j(\boldsymbol{X}^{(i)})+h_j(X_j^{(i)})\boldsymbol{t}_j''(\boldsymbol{X}^{(i)})+
h_j'(X_j^{(i)})\boldsymbol{t}_j'(\boldsymbol{X}_i)\right]\label{def_g}
\end{align}
are sample averages of functions of the data matrix $\mathbf{x}$ only. 
\end{theorem}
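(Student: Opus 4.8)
The plan is to substitute the exponential-family log-density (\ref{definition_exponential_family}) into the sample loss (\ref{eq_gsm_sample}) and sort the resulting expression by its degree in $\boldsymbol{\theta}$. First I would record the two pointwise derivative identities that feed into (\ref{eq_gsm_sample}): since $\log p_{\boldsymbol{\theta}}(\boldsymbol{x})=\boldsymbol{\theta}^{\top}\boldsymbol{t}(\boldsymbol{x})-\psi(\boldsymbol{\theta})+b(\boldsymbol{x})$ and $\psi$ does not depend on $\boldsymbol{x}$, we get $\partial_j\log p_{\boldsymbol{\theta}}(\boldsymbol{x})=\boldsymbol{\theta}^{\top}\boldsymbol{t}_j'(\boldsymbol{x})+b_j'(\boldsymbol{x})$ and, assuming (as the formula for $\boldsymbol{g}$ already presupposes) that $\boldsymbol{t}$ and $b$ are twice differentiable in each coordinate, $\partial_{jj}\log p_{\boldsymbol{\theta}}(\boldsymbol{x})=\boldsymbol{\theta}^{\top}\boldsymbol{t}_j''(\boldsymbol{x})+b_j''(\boldsymbol{x})$, where $\boldsymbol{t}_j''(\boldsymbol{x})\equiv(\partial_{jj}t_1(\boldsymbol{x}),\dots,\partial_{jj}t_r(\boldsymbol{x}))^{\top}$ and $b_j''(\boldsymbol{x})\equiv\partial_{jj}b(\boldsymbol{x})$.

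Next I would expand, term by term, the summand of (\ref{eq_gsm_sample}). The only place $\boldsymbol{\theta}$ enters quadratically is $\frac{1}{2}h_j(X_j^{(i)})\big(\partial_j\log p_{\boldsymbol{\theta}}(\boldsymbol{X}^{(i)})\big)^2=\frac{1}{2}h_j(X_j^{(i)})\big(\boldsymbol{\theta}^{\top}\boldsymbol{t}_j'(\boldsymbol{X}^{(i)})+b_j'(\boldsymbol{X}^{(i)})\big)^2$; expanding the square yields a quadratic part $\frac{1}{2}\boldsymbol{\theta}^{\top}\big(h_j(X_j^{(i)})\boldsymbol{t}_j'(\boldsymbol{X}^{(i)})\boldsymbol{t}_j'(\boldsymbol{X}^{(i)})^{\top}\big)\boldsymbol{\theta}$, whose sample average is $\frac{1}{2}\boldsymbol{\theta}^{\top}\boldsymbol{\Gamma}(\mathbf{x})\boldsymbol{\theta}$ with $\boldsymbol{\Gamma}$ as in (\ref{def_Gamma}); a cross term $h_j(X_j^{(i)})b_j'(\boldsymbol{X}^{(i)})\boldsymbol{t}_j'(\boldsymbol{X}^{(i)})^{\top}\boldsymbol{\theta}$ that is linear in $\boldsymbol{\theta}$; and a term $\frac{1}{2}h_j(X_j^{(i)})b_j'(\boldsymbol{X}^{(i)})^2$ free of $\boldsymbol{\theta}$. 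The other two summands contribute $h_j'(X_j^{(i)})\big(\boldsymbol{\theta}^{\top}\boldsymbol{t}_j'(\boldsymbol{X}^{(i)})+b_j'(\boldsymbol{X}^{(i)})\big)$ and $h_j(X_j^{(i)})\big(\boldsymbol{\theta}^{\top}\boldsymbol{t}_j''(\boldsymbol{X}^{(i)})+b_j''(\boldsymbol{X}^{(i)})\big)$, each splitting into a linear-in-$\boldsymbol{\theta}$ part and a $\boldsymbol{\theta}$-free part.

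Collecting the linear contributions, the coefficient vector multiplying $\boldsymbol{\theta}$ is $\frac{1}{n}\sum_{i,j}\big[h_j(X_j^{(i)})b_j'(\boldsymbol{X}^{(i)})\boldsymbol{t}_j'(\boldsymbol{X}^{(i)})+h_j(X_j^{(i)})\boldsymbol{t}_j''(\boldsymbol{X}^{(i)})+h_j'(X_j^{(i)})\boldsymbol{t}_j'(\boldsymbol{X}^{(i)})\big]$, which is precisely $-\boldsymbol{g}(\mathbf{x})$ from (\ref{def_g}); hence this part of the loss is $-\boldsymbol{g}(\mathbf{x})^{\top}\boldsymbol{\theta}$. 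The leftover pieces $h_j'(X_j^{(i)})b_j'(\boldsymbol{X}^{(i)})$, $h_j(X_j^{(i)})b_j''(\boldsymbol{X}^{(i)})$, and $\frac{1}{2}h_j(X_j^{(i)})b_j'(\boldsymbol{X}^{(i)})^2$ depend on $\mathbf{x}$ alone, and their sample average is the additive $\mathrm{const}$ in (\ref{eq_gsm_exponential}). Combining the three groups gives the claimed identity.

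I expect no genuine obstacle here: the argument is disciplined bookkeeping of one polynomial expansion, and Assumptions (A1)–(A2) enter only through Theorem \ref{theorem_GSM_loss_alt}, which already justifies using (\ref{eq_gsm_sample}) as the finite-sample object, so nothing new about them is needed. The two minor points to flag are that writing $\boldsymbol{t}_j''$ and $b_j''$ presupposes twice-differentiability of the sufficient statistics and base measure in each coordinate, and — as an optional remark — that $\boldsymbol{\Gamma}(\mathbf{x})$ in (\ref{def_Gamma}) is symmetric positive semidefinite, being an average of the nonnegative multiples $h_j(X_j^{(i)})\geq 0$ of rank-one matrices $\boldsymbol{t}_j'(\boldsymbol{X}^{(i)})\boldsymbol{t}_j'(\boldsymbol{X}^{(i)})^{\top}$, so that (\ref{eq_gsm_exponential}) is a convex quadratic in $\boldsymbol{\theta}$.
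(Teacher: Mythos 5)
Your proposal is correct and follows essentially the same route as the paper's proof: substitute the exponential-family form, use $\partial_j\log p_{\boldsymbol{\theta}}=\boldsymbol{\theta}^{\top}\boldsymbol{t}_j'+b_j'$ and $\partial_{jj}\log p_{\boldsymbol{\theta}}=\boldsymbol{\theta}^{\top}\boldsymbol{t}_j''+b_j''$, and collect terms by degree in $\boldsymbol{\theta}$ to read off $\boldsymbol{\Gamma}$, $\boldsymbol{g}$, and the constant. Your added remarks on twice-differentiability and on positive semidefiniteness of $\boldsymbol{\Gamma}$ are accurate side observations that do not change the argument.
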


Define $\boldsymbol{\Gamma}_0\equiv\mathbb{E}_{p_0}\boldsymbol{\Gamma}(\mathbf{x})$, $\boldsymbol{g}_0\equiv\mathbb{E}_{p_0}\boldsymbol{g}(\mathbf{x})$, and $\boldsymbol{\Sigma}_0\equiv\mathbb{E}_{p_0}[(\boldsymbol{\Gamma}(\mathbf{x})\boldsymbol{\theta}_0-g(\mathbf{x}))(\boldsymbol{\Gamma}(\mathbf{x})\boldsymbol{\theta}_0-g(\mathbf{x}))^{\top}]$.
\begin{theorem}\label{theorem_exponential}
Suppose that
 \begin{itemize}
 \item[] \hspace{-.5cm}(C1) \ $\boldsymbol{\Gamma}$ is a.s.~invertible, and  \\[-0.55cm]
  \item[] \hspace{-.5cm}(C2) \ $\boldsymbol{\Gamma}_0$, $\boldsymbol{\Gamma}_0^{-1}$, $\boldsymbol{g}_0$
    and $\boldsymbol{\Sigma}_0$ exist and are entry-wise finite.
  \end{itemize}
  Then the minimizer of (\ref{eq_gsm_exponential}) is a.s.~unique with closed-form solution $\hat{\boldsymbol{\theta}}\equiv\boldsymbol{\Gamma}(\mathbf{x})^{-1}\boldsymbol{g}(\mathbf{x})$. Moreover, 
\begin{align*}
\hat{\boldsymbol{\theta}}\to_{\text{a.s.}}\boldsymbol{\theta}_0
\quad\text{and}\quad\sqrt{n}(\hat{\boldsymbol{\theta}}-\boldsymbol{\theta}_0)\to_d\mathcal{N}_{r}\left(\boldsymbol{0},\boldsymbol{\Gamma}_0^{-1}\boldsymbol{\Sigma}_0\boldsymbol{\Gamma}_0^{-1}\right)\quad\text{as }n\to\infty.
\end{align*}
\end{theorem}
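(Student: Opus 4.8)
The plan is to recognize Theorem~\ref{theorem_exponential} as a standard M-estimation / Z-estimation argument, where the estimator $\hat{\boldsymbol\theta}$ solves the linear system $\boldsymbol\Gamma(\mathbf{x})\boldsymbol\theta=\boldsymbol g(\mathbf{x})$ obtained by setting the gradient of the quadratic sample loss~(\ref{eq_gsm_exponential}) to zero. First I would observe that by Theorem~\ref{theorem_GSM_estimator_exp}, $\hat{J}_{\boldsymbol h}(p_{\boldsymbol\theta})$ is quadratic with Hessian $\boldsymbol\Gamma(\mathbf{x})$, which is positive semidefinite by construction (it is a sum of rank-one terms $h_j(X_j^{(i)})\boldsymbol t_j'\boldsymbol t_j'^\top$ with $h_j>0$ a.s.); under (C1) it is a.s.\ positive definite, so the loss is a.s.\ strictly convex and the unique minimizer is $\hat{\boldsymbol\theta}=\boldsymbol\Gamma(\mathbf{x})^{-1}\boldsymbol g(\mathbf{x})$. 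Since $P_0\in\mathcal P_+$ and $P_0$ is the population minimizer of $J_{\boldsymbol h}$ (Proposition~\ref{theorem_gsm}), the population first-order condition gives $\boldsymbol\Gamma_0\boldsymbol\theta_0=\boldsymbol g_0$, i.e.\ $\boldsymbol\theta_0=\boldsymbol\Gamma_0^{-1}\boldsymbol g_0$, using invertibility of $\boldsymbol\Gamma_0$ from (C2). (One should double-check that differentiating $J_{\boldsymbol h}$ under the integral sign is justified and that the population minimizer of the quadratic coincides with $\boldsymbol\theta_0$; this follows since $J_{\boldsymbol h}(p_{\boldsymbol\theta})$ has the same quadratic form with $\boldsymbol\Gamma_0,\boldsymbol g_0$ in place of $\boldsymbol\Gamma,\boldsymbol g$ plus a constant.)

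For the strong consistency, I would apply the strong law of large numbers componentwise: $\boldsymbol\Gamma(\mathbf{x})$ and $\boldsymbol g(\mathbf{x})$ are averages of i.i.d.\ functions of the rows $\boldsymbol X^{(i)}$ whose expectations $\boldsymbol\Gamma_0$ and $\boldsymbol g_0$ are finite by (C2), so $\boldsymbol\Gamma(\mathbf{x})\to_{\text{a.s.}}\boldsymbol\Gamma_0$ and $\boldsymbol g(\mathbf{x})\to_{\text{a.s.}}\boldsymbol g_0$. On the event (of probability one) that $\boldsymbol\Gamma(\mathbf{x})$ is invertible for all large $n$ — guaranteed because $\boldsymbol\Gamma_0$ is invertible and matrix inversion is continuous at $\boldsymbol\Gamma_0$ — continuity of $(\boldsymbol A,\boldsymbol v)\mapsto\boldsymbol A^{-1}\boldsymbol v$ yields $\hat{\boldsymbol\theta}=\boldsymbol\Gamma(\mathbf{x})^{-1}\boldsymbol g(\mathbf{x})\to_{\text{a.s.}}\boldsymbol\Gamma_0^{-1}\boldsymbol g_0=\boldsymbol\theta_0$.

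For asymptotic normality, I would write the exact identity
\[
\sqrt n(\hat{\boldsymbol\theta}-\boldsymbol\theta_0)=\boldsymbol\Gamma(\mathbf{x})^{-1}\sqrt n\big(\boldsymbol g(\mathbf{x})-\boldsymbol\Gamma(\mathbf{x})\boldsymbol\theta_0\big)
=-\boldsymbol\Gamma(\mathbf{x})^{-1}\,\frac{1}{\sqrt n}\sum_{i=1}^n\big(\boldsymbol\Gamma^{(i)}\boldsymbol\theta_0-\boldsymbol g^{(i)}\big),
\]
where $\boldsymbol\Gamma^{(i)},\boldsymbol g^{(i)}$ denote the per-observation summands so that $\boldsymbol\Gamma^{(i)}\boldsymbol\theta_0-\boldsymbol g^{(i)}$ are i.i.d.\ with mean $\boldsymbol\Gamma_0\boldsymbol\theta_0-\boldsymbol g_0=\boldsymbol 0$ and covariance $\boldsymbol\Sigma_0$, which is finite by (C2). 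The multivariate CLT gives $\frac{1}{\sqrt n}\sum_i(\boldsymbol\Gamma^{(i)}\boldsymbol\theta_0-\boldsymbol g^{(i)})\to_d\mathcal N_r(\boldsymbol 0,\boldsymbol\Sigma_0)$, and combining with $\boldsymbol\Gamma(\mathbf{x})^{-1}\to_{\text{a.s.}}\boldsymbol\Gamma_0^{-1}$ via Slutsky's theorem yields $\sqrt n(\hat{\boldsymbol\theta}-\boldsymbol\theta_0)\to_d\mathcal N_r(\boldsymbol 0,\boldsymbol\Gamma_0^{-1}\boldsymbol\Sigma_0\boldsymbol\Gamma_0^{-1})$, since the asymptotic covariance is $\boldsymbol\Gamma_0^{-1}\boldsymbol\Sigma_0(\boldsymbol\Gamma_0^{-1})^\top$ and $\boldsymbol\Gamma_0$ is symmetric.

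The only genuinely delicate point — the "main obstacle" — is the bookkeeping around the a.s.\ invertibility of $\boldsymbol\Gamma(\mathbf{x})$ for finite $n$ versus in the limit: (C1) is assumed for each $n$, but one must make sure the event on which all the limiting statements hold has probability one simultaneously, and that $\hat{\boldsymbol\theta}$ is well-defined where used. This is handled by intersecting the probability-one events from the SLLN with the probability-one event from (C1) (and noting that once $\boldsymbol\Gamma(\mathbf x)\to\boldsymbol\Gamma_0$ invertible, invertibility for large $n$ is automatic). Everything else is routine application of the SLLN, the multivariate CLT, and the continuous mapping / Slutsky theorems; no new estimates are needed beyond the moment finiteness packaged in (C2).
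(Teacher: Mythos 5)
Your proposal is correct and follows essentially the same route as the paper's proof: quadratic loss with closed-form minimizer under (C1), the population first-order condition $\boldsymbol{\Gamma}_0\boldsymbol{\theta}_0=\boldsymbol{g}_0$ from Proposition \ref{theorem_gsm}, the strong law for a.s.\ consistency, and the Lindeberg--L\'evy CLT plus Slutsky for the limiting distribution. Your extra care about the positive-semidefinite structure of $\boldsymbol{\Gamma}$ and the intersection of probability-one events is a welcome tightening but does not change the argument.
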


Theorems \ref{theorem_GSM_estimator_exp} and \ref{theorem_exponential}
are proved in Appendix~\ref{Proof of Theorems and Examples in Section Exponential_Families}.
Theorem~\ref{theorem_GSM_estimator_exp} clarifies the quadratic nature
of the loss, and Theorem \ref{theorem_exponential} provides a basis for
asymptotically valid tests and confidence intervals for the parameter
$\boldsymbol{\theta}$.  Note that Condition (C1) holds if and only if
$h_j(X_j)>0$ a.s.~and $[\boldsymbol{t}_j'(\boldsymbol{X}^{(1)}),\ldots,\boldsymbol{t}_j'(\boldsymbol{X}^{(n)})]\in\mathbb{R}^{{r}\times n}$ has rank ${r}$ a.s.~for some $j=1,\ldots,m$.

The conclusion in Theorem \ref{theorem_exponential} indicates that, similar to the estimator in \citet{hyv07} with $h_j(x)=x^2$, the closed-form solution for our generalized $\hat{\boldsymbol{\theta}}$ allows one to consistently estimate the canonical parameter in an exponential family distribution without needing to calculate the often complicated normalizing constant $\psi(\boldsymbol{\theta})$ or resort to numerical methods. Computational details are explicated in Section \ref{Computational Details}.

Below we illustrate the estimator $\hat{\boldsymbol{\theta}}$ in the case of univariate truncated normal distributions. We assume (A1)-(A2) and (C1)-(C2) throughout.

\begin{example}\label{theorem_GSM_normality_tnorm_mu}
Univariate ($m={r}=1$) truncated normal distributions for
mean parameter $\mu$ and variance parameter $\sigma^2$ have density 
\begin{equation}\label{pdf_univariate}
p_{\mu,\sigma^2}(x)\propto\exp\left\{-\frac{(x-\mu)^2}{2\sigma^2}\right\},\quad x\in\mathbb{R}_+.
\end{equation}
If $\sigma^2$ is known but $\mu$ unknown, then writing the density in canonical form as
in (\ref{definition_exponential_family}) yields
\[p_{\theta}(x)\propto\exp\left\{\theta t(x)+b(x)\right\},\quad \theta\equiv\frac{\mu}{\sigma^2},\quad t(x)\equiv x,\quad b(x)=-\frac{x^2}{2\sigma^2}.\]
Given an i.i.d.~sample $X_1,\dots,X_n\sim p_{\mu_0,\sigma^2}$, the generalized $h$-score matching estimator of $\mu$ is
\[\hat{\mu}_{h}\equiv\frac{\sum_{i=1}^nh(X_i)X_i-\sigma^2h'(X_i)}{\sum_{i=1}^n h(X_i)}.
\]
If $\lim_{x\searrow 0^+}h(x)=0$, $\lim_{x\nearrow+\infty}h^2(x)(x-\mu_0)p_{\mu_0,\sigma^2}(x)=0$ and the expectations are finite (for example, when $h(x)=o(\exp(Mx^2))$ for $M<\frac{1}{4\sigma^2}$), then 
\[\sqrt{n}(\hat{\mu}_{h}-\mu_0)\to_d\mathcal{N}\left(0,\frac{\mathbb{E}_{0}[\sigma^2 h^2(X)+\sigma^4 {h'}^{2}(X)]}{\mathbb{E}_{0}^2[h(X)]}\right).\]
We recall that the \emph{Cram\'er-Rao lower bound} (i.e.~the lower bound on the variance of any unbiased estimator) for estimating $\mu_{}$ is \[\frac{\sigma^4}{\var(X-\mu_0)}.\]
\end{example}

\begin{example}\label{theorem_GSM_normality_tnorm_sigma}
  Consider the univariate truncated normal distributions from
  (\ref{pdf_univariate}) in the setting where the mean parameter $\mu$
  is known but the variance parameter $\sigma^2>0$ is unknown.
In canonical form as in (\ref{definition_exponential_family}), we write
\[p_{\theta}(x)\propto\exp\left\{\theta t(x)+b(x)\right\},\quad \theta\equiv\frac{1}{\sigma^2},\quad t(x)\equiv -(x-\mu)^2/2,\quad b(x)=0.\]
Given an i.i.d.~sample $X_1,\dots,X_n\sim p_{\mu,\sigma_0^2}$, the generalized $h$-score matching estimator of $\sigma^2$ is
\[
\hat{\sigma}_{h}^2\equiv\frac{\sum_{i=1}^nh(X_i)(X_i-\mu)^2}{\sum_{i=1}^n
  h(X_i)+h'(X_i)(X_i-\mu)}.
\]
If, in addition to the assumptions in Example \ref{theorem_GSM_normality_tnorm_mu}, $\lim\limits_{x\nearrow+\infty}h^2(x)(x-\mu)^3p_{\mu,\sigma_{0}^{2}}(x)=0$, then 
\[\sqrt{n}(\hat{\sigma}_{h}^2-\sigma_0^2)\to_d\mathcal{N}\left(0,\frac{2\sigma^6_{0}\mathbb{E}_{0}[h^2(X)(X-\mu)^2]+\sigma^8_{0}\mathbb{E}_{0}[h'^{2}(X)(X-\mu)^2]}{\mathbb{E}^2_{0}[h(X)(X-\mu)^2]}\right).\]
Moreover, the \emph{Cram\'er-Rao lower bound} for estimating $\sigma^2_{}$ is \[\frac{4\sigma_0^8}{\var(X-\mu)^2}.\]
\end{example}

\begin{remark}
  \rm In Example \ref{theorem_GSM_normality_tnorm_sigma}, if $\mu_{0}=0$, then $h(x)\equiv 1$ also satisfies (A1)-(A2) and
  (C1)-(C2) and one recovers the sample variance
  $\tfrac{1}{n}\sum_i X_i^2$, which obtains the Cram\'er-Rao lower
  bound.
\end{remark}

In these examples, there is a benefit in using a bounded function $h$, which can be explained as follows. When $\mu\gg\sigma$, there is effectively no truncation to the Gaussian distribution, and our method adapts to using low moments in (\ref{eq_gsm}), since a bounded and increasing $h(x)$ becomes almost constant as it reaches its asymptote for $x$ large.  Hence, we effectively revert to the original score matching (recall Section~\ref{sec:orig-score-match}). In the other cases, the truncation effect is significant and 
our estimator uses higher moments accordingly.


Figure \ref{plot_univariate_mu_var} plots the asymptotic
variance of $\hat{\mu}_{h}$ from Example
\ref{theorem_GSM_normality_tnorm_mu}, with $\sigma=1$
known. Efficiency as measured by the Cram\'er-Rao lower bound divided
by the asymptotic variance is also shown. We see that two truncated
versions of $\log(1+x)$ have asymptotic variance close to the
Cram\'er-Rao
bound. 
This asymptotic variance is also reflective of the variance for smaller
finite samples.
\begin{figure}[htp]
  \centering \includegraphics[scale=0.35]{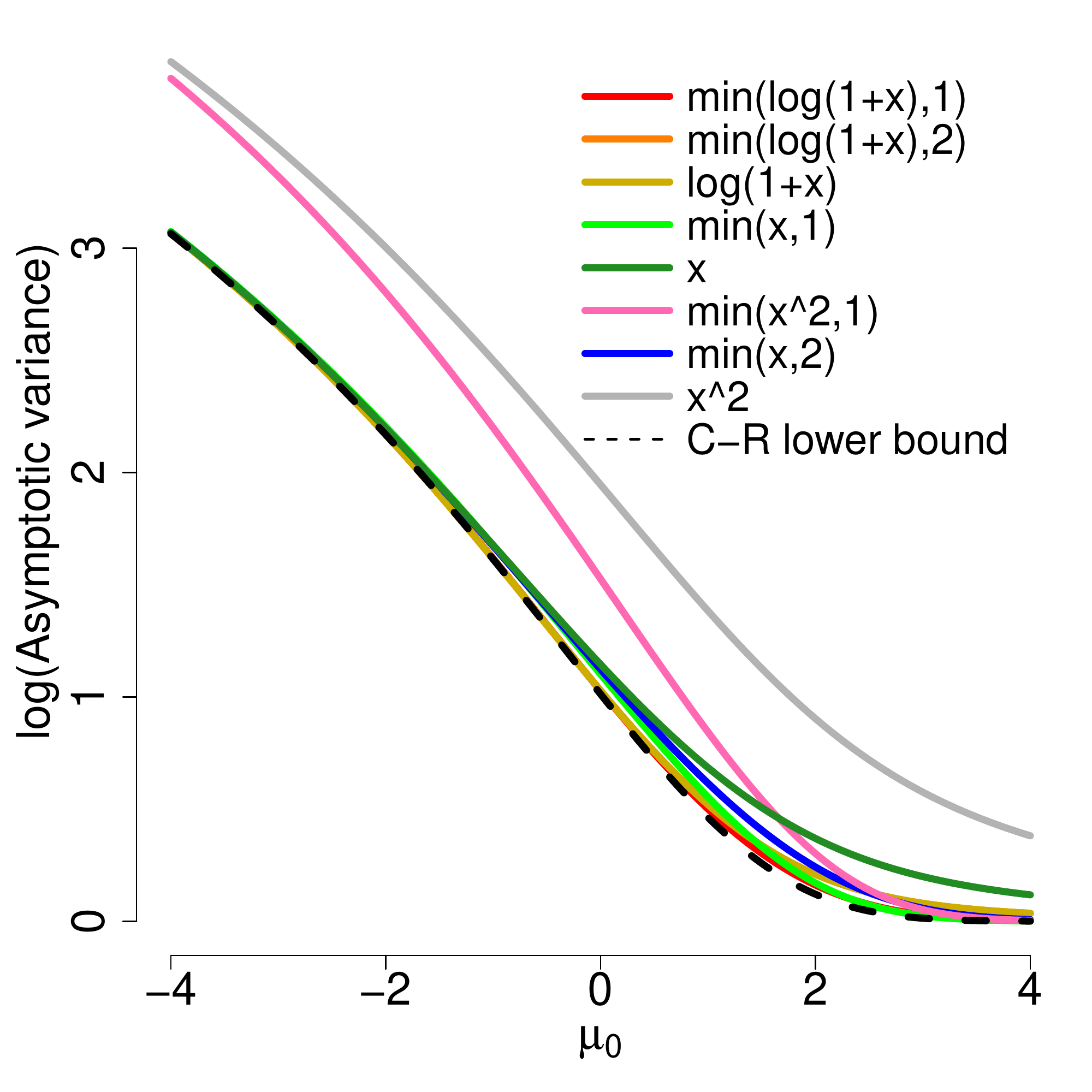}
  \includegraphics[scale=0.35]{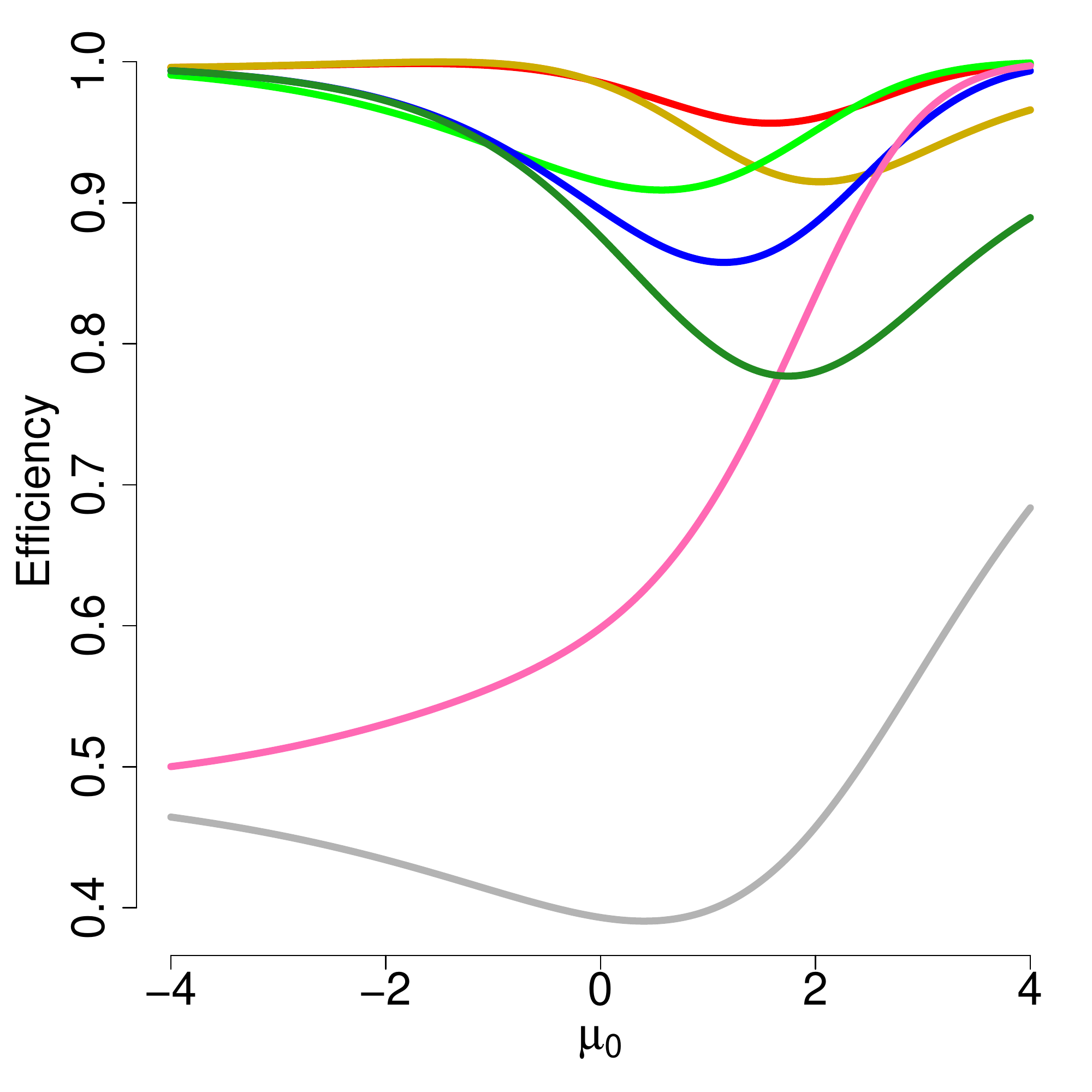}\caption{Log of asymptotic variance and efficiency with respect to the
    Cram\'er-Rao bound for $\hat{\mu}_h$ ($\sigma^{2}=1$
    known).}\label{plot_univariate_mu_var} \centering
  \includegraphics[scale=0.35]{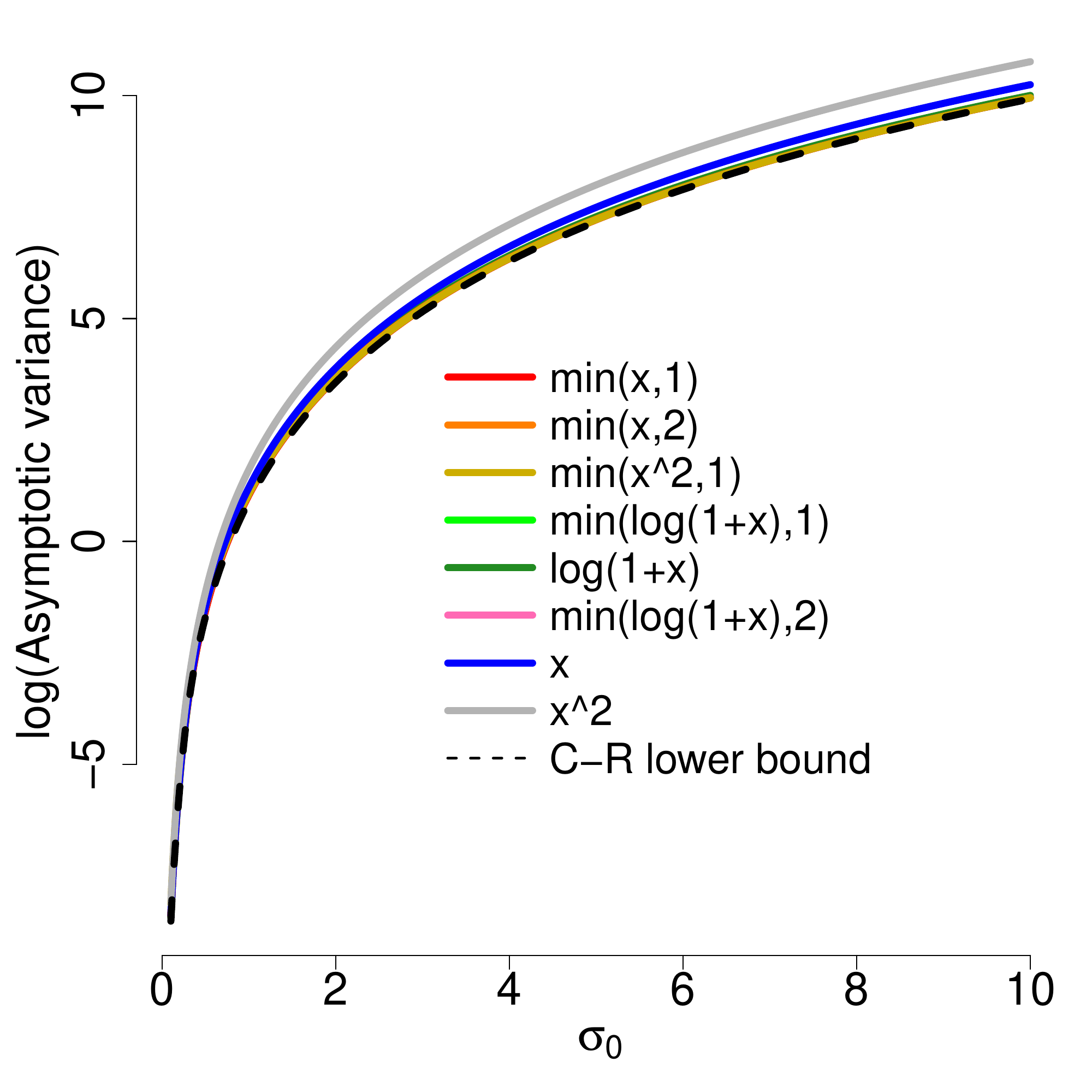}
  \includegraphics[scale=0.35]{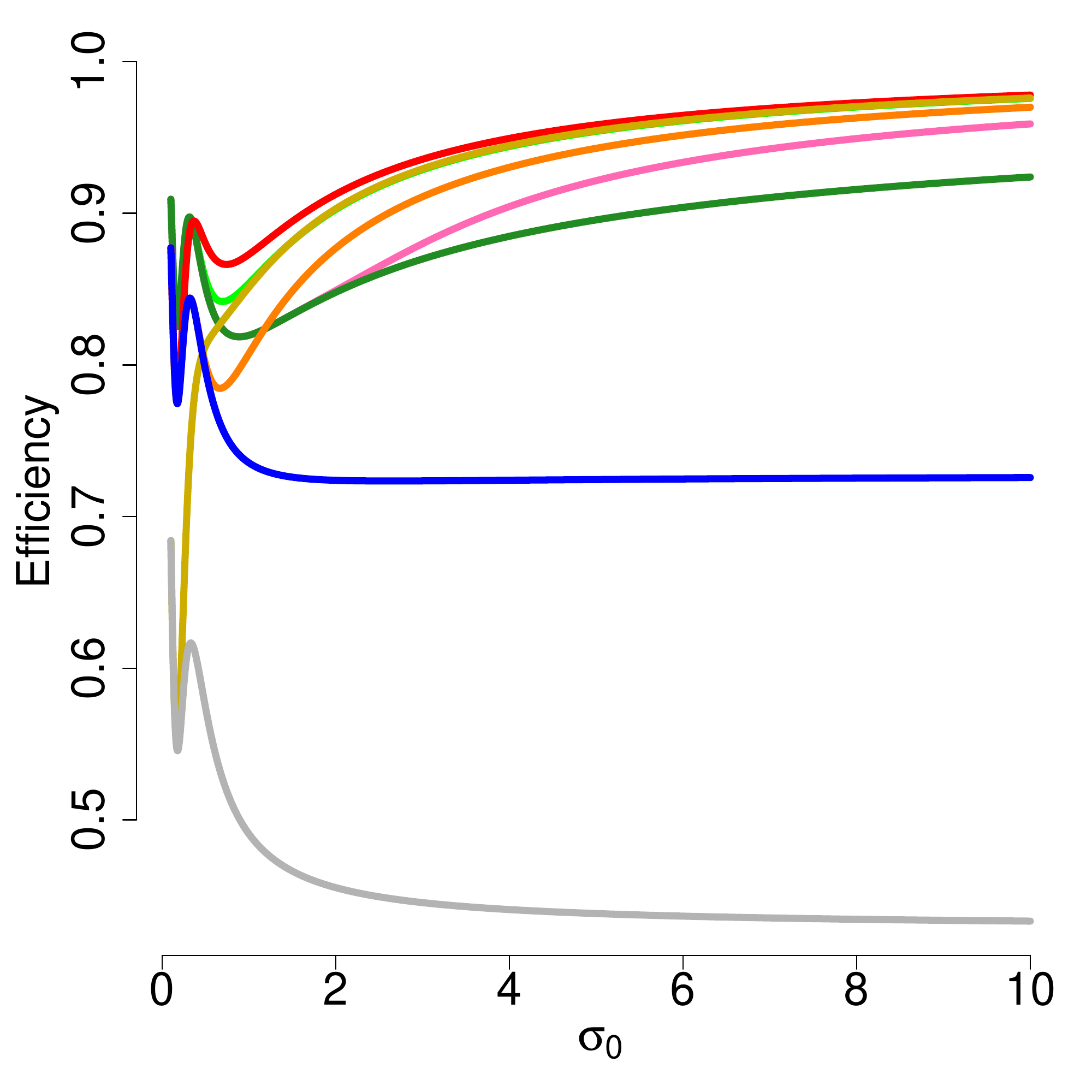}\caption{Log
    of asymptotic variance and efficiency with respect to the
    Cram\'er-Rao bound for $\hat{\sigma}_h^2$ ($\mu=0.5$ known).}
\label{plot_univariate_sigma_var}
\end{figure}

Figure \ref{plot_univariate_sigma_var} is the analog of Figure
\ref{plot_univariate_mu_var} for $\hat{\sigma}^2_{h}$ from Example \ref{theorem_GSM_normality_tnorm_sigma} with
$\mu=0.5$ known.
While the specifics are a bit different the benefits of using bounded
or slowly growing $h$ are again clear.  We note that when
$\sigma$ is small, the effect of truncation to the positive part of
the real line is small.

In both plots we order/color the curves based on their overall efficiency, so they have different colors in one from the other, although the same functions are presented. For all functions presented here (A1)--(A2) and (C1)--(C2) are satisfied.

\section{Regularized Generalized Score Matching}\label{Regularized Generalized Score Matching}

In high-dimensional settings, when the number ${r}$ of parameters to
estimate may be larger than the sample size $n$, it is hard, if not
impossible, to estimate the parameters consistently without turning
to some form of 
regularization. More specifically, for exponential families, condition
(C1) in Section \ref{Exponential_Families} fails when ${r}>n$. A
popular approach is then the use of $\ell_1$ regularization to exploit
possible sparsity.


Let the data matrix $\mathbf{x}\in\mathbb{R}^{n\times m}$ comprise $n$
i.i.d.~samples from distribution $P_0$.  Assume $P_0$ has density
$p_0$ belonging to an exponential family
$\mathcal{P}_+\equiv\{p_{\boldsymbol{\theta}}:\boldsymbol{\theta}\in\boldsymbol{\Theta}\}$,
where $\boldsymbol{\Theta}\subseteq\mathbb{R}^{r}$.  Adding an
$\ell_1$ penalty to (\ref{eq_gsm_exponential}), we obtain the
regularized generalized score matching loss 
\begin{equation}\label{loss_lin}\frac{1}{2}\boldsymbol{\theta}^{\top}\boldsymbol{\Gamma}(\mathbf{x})\boldsymbol{\theta}-\boldsymbol{g}(\mathbf{x})^{\top}\boldsymbol{\theta}+\lambda\|\boldsymbol{\theta}\|_1
\end{equation}
as in \citet{lin16}.  The loss in~(\ref{loss_lin}) involves a
quadratic smooth part as in the familiar lasso loss for linear
regression.  However, although the matrix $\boldsymbol{\Gamma}$ is
positive semidefinite, the regularized loss in~(\ref{loss_lin}) is not
guaranteed to be bounded unless the tuning parameter $\lambda$ is
sufficiently large---a problem that does not occur in lasso.  We note
that here, and throughout, we suppress the dependence on the data
$\mathbf{x}$ for $\boldsymbol{\Gamma}(\mathbf{x})$,
$\boldsymbol{g}(\mathbf{x})$ and derived quantities.

For a more detailed explanation, note that that by (\ref{def_Gamma}),
$\boldsymbol{\Gamma}=\mathbf{H}^{\top}\mathbf{H}$ for some
$\mathbf{H}\in\mathbb{R}^{nm\times r}$.  In the high-dimensional case,
the rank of $\boldsymbol{\Gamma}$, or equivalently $\mathbf{H}$, is at
most $nm<r$.  Hence, $\boldsymbol{\Gamma}$ is not invertible and
$\boldsymbol{g}$ does not necessarily lie in the column span of
$\boldsymbol{\Gamma}$.  Let $\mathrm{Ker}(\boldsymbol{\Gamma})$ be the
kernel of $\boldsymbol{\Gamma}$.  Then there may exist
$\boldsymbol{\nu}\in\mathrm{Ker}(\boldsymbol{\Gamma})$ with
$\boldsymbol{g}^{\top}\boldsymbol{\nu}\not=0$.  In this case, if
\[
  0\leq\lambda<\sup_{\boldsymbol{\nu}\in\mathrm{Ker}(\boldsymbol{\Gamma})}|
  \boldsymbol{g}^{\top}\boldsymbol{\nu}|/\|\boldsymbol{\nu}\|_1, 
\] 
there exists $\boldsymbol{\nu}\in\mathrm{Ker}(\boldsymbol{\Gamma})$
with
$\frac{1}{2}\boldsymbol{\nu}^{\top}\boldsymbol{\Gamma}\boldsymbol{\nu}=0$
and
$-\boldsymbol{g}^{\top}\boldsymbol{\nu}+\lambda\|\boldsymbol{\nu}\|_1<0$.
Evaluating at $\boldsymbol{\theta}(a)=a\cdot\boldsymbol{\nu}$ for
scalar $a>0$, the loss becomes
$a\left(-\boldsymbol{g}^{\top}\boldsymbol{\nu}+\lambda\|\boldsymbol{\nu}\|_1\right)$,
which is negative and linear in $a$, and thus unbounded below. In this
case no minimizer of (\ref{loss_lin}) exists for small values of
$\lambda$. This issue also exists for the estimators from
\cite{zha14} and \cite{liu15}, which correspond to score
matching for GGMs.  We note that in the context of estimating the interaction matrix in pairwise models, $r=m^2$; thus, the condition $nm<r$ reduces to $n<m$, or $n<m+1$ when both $\mathbf{K}$ and $\boldsymbol{\eta}$ are estimated.

To circumvent the unboundedness problem, we add small values
$\gamma_{\ell}>0$ to the diagonal entries of $\boldsymbol{\Gamma}$,
which become $\boldsymbol{\Gamma}_{\ell,\ell}+\gamma_{\ell}$,
$\ell=1,\ldots,r$. This is in the spirit of work such as \citet{led04}
and corresponds to an elastic net-type penalty \citep{zou05} with
weighted $\ell_2$ penalty
$\sum_{\ell=1}^{r}\gamma_{\ell}\theta_{\ell}^2$.  After this
modification, $\boldsymbol{\Gamma}$ is positive definite, our
regularized loss is strongly convex in $\boldsymbol{\theta}$, and a
unique minimizer exists for all $\lambda\geq 0$.  For the special case
of truncated GGMs, we will show that a result on consistent estimation holds if we choose $\gamma_{\ell}=\delta_0\boldsymbol{\Gamma}_{\ell,\ell}$ for a suitably small constant $\delta_0>0$, for which we propose a particular choice to avoid tuning. This choice of $\gamma_{\ell}$ depends on the data through $\boldsymbol{\Gamma}_{\ell,\ell}$.

\begin{definition}
  For
  $\boldsymbol{\gamma}\in\mathbb{R}_+^{r}\backslash\{\boldsymbol{0}\}$,
  let
  $\boldsymbol{\Gamma}_{\boldsymbol{\gamma}}\equiv\boldsymbol{\Gamma}+\mathrm{diag}(\boldsymbol{\gamma})$.
  The \emph{regularized generalized $\boldsymbol{h}$-score matching
    estimator} with tuning parameter $\lambda\ge 0$ and \emph{amplifier}
  $\boldsymbol{\gamma}$ is the estimator
\begin{equation}\label{eq_definition_RGSM_loss_exp}
\hat{\boldsymbol{\theta}}\,\in\argmin\limits_{\boldsymbol{\theta}\in\boldsymbol{\Theta}}\hat{J}_{\boldsymbol{h},\lambda,\boldsymbol{\gamma}}(\boldsymbol{\theta})\equiv\argmin\limits_{\boldsymbol{\theta}\in\boldsymbol{\Theta}}\frac{1}{2}\boldsymbol{\theta}^{\top}\boldsymbol{\Gamma}_{\boldsymbol{\gamma
}}(\mathbf{x})\boldsymbol{\theta}-\boldsymbol{g}(\mathbf{x})^{\top}\boldsymbol{\theta}+\lambda\|\boldsymbol{\theta}\|_1.
\end{equation}
\end{definition}


In the case where $\boldsymbol{\gamma}=(\delta-1)\mathrm{diag}(\boldsymbol{\Gamma})$ for some $\delta>1$, we also call $\delta$ the \emph{multiplier}. We note that $\hat{\boldsymbol{\theta}}$ from~(\ref{eq_definition_RGSM_loss_exp}) is a \emph{piecewise linear} function of $\lambda$ \citep{lin16}.



\section{Score Matching for Graphical Models for Non-negative Data}
\label{sec:graph-models-trunc}
In this section we apply our generalized score matching estimator to a general class of graphical models for non-negative data.

\subsection{A General Framework of Pairwise Interaction Models}\label{A General Framework of Pairwise Interaction Models}


We consider the class of pairwise interaction power models with
density introduced in (\ref{eq:ab-density}).  We recall the form of
the density:
\begin{equation}\label{eq:ab-density2}
p_{\boldsymbol{\eta},\mathbf{K}}(\boldsymbol{x})\propto\exp\left(-\frac{1}{2a}{\boldsymbol{x}^a}^{\top}\mathbf{K}\boldsymbol{x}^a+\boldsymbol{\eta}^{\top}\frac{\boldsymbol{x}^b-\mathbf{1}_m}{b}\right)\mathds{1}_{\mathbb{R}_+^m}(\boldsymbol{x}),
\end{equation}
where $a$ and $b$ are known constants, and the interaction matrix
$\mathbf{K}$ and the vector $\boldsymbol{\eta}$ are parameters.  When
$b=0$, we use the convention that $\frac{x^0-1}{0}\equiv\log x$ and
apply the logarithm element-wise.  Our focus will be on the
interaction matrix $\mathbf{K}$ that determines the conditional
independence graph through its support
$S(\mathbf{K})\equiv\{(i,j):\kappa_{ij}\neq 0\}$.  However, unless
$\boldsymbol{\eta}$ is known or assumed to be zero, we also need to
estimate $\boldsymbol{\eta}$ as a nuisance parameter. In the case where we assume $\boldsymbol{\eta}\equiv\boldsymbol{0}$ is known (i.e.~the linear part $(\boldsymbol{x}^b-\mathbf{1}_m)/b$ is not present), we call the distribution (and the corresponding estimator) a \emph{centered} distribution (estimator), in contrast to the general case termed \emph{non-centered} when we assume $\boldsymbol{\eta}\neq\boldsymbol{0}$ or unknown.

We first give a set of sufficient conditions for the density to be
valid, i.e., the right-hand side of (\ref{eq:ab-density2}) to be
integrable.  The proof is given in Appendix~\ref{Proof of Theorems in Section sec:graph-models-trunc}.

\begin{theorem}\label{thm_normalizability}
Define conditions 
\begin{enumerate}[label=(CC\arabic*),leftmargin=\widthof{(CC3)}+\labelsep]
\item $\mathbf{K}$ is \emph{strictly co-positive}, i.e., $\boldsymbol{v}^{\top}\mathbf{K}\boldsymbol{v}>0$ for all $\boldsymbol{v}\in\mathbb{R}_+^m\backslash\{\boldsymbol{0}\}$;
\item $2a>b>0$;
\item $a>0$, $b=0$, and $\eta_j>-1$ for $j=1,\dots,m$ ($\boldsymbol{\eta}\succ-\mathbf{1}_m$).
\end{enumerate}
In the non-centered case, if (CC1) and one of (CC2) and (CC3) holds,
then the function on the right-hand side of (\ref{eq:ab-density2}) is
integrable over $\mathbb{R}_+^m$. In the centered case, (CC1) and
$a>0$ are sufficient.
\end{theorem}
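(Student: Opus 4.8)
The plan is to show that the integral $\int_{\mathbb{R}_+^m}\exp\left(-\frac{1}{2a}{\boldsymbol{x}^a}^{\top}\mathbf{K}\boldsymbol{x}^a+\boldsymbol{\eta}^{\top}\frac{\boldsymbol{x}^b-\mathbf{1}_m}{b}\right)\d\boldsymbol{x}$ is finite by a change of variables that linearizes the quadratic form. First I would substitute $u_j = x_j^a$ (so $x_j = u_j^{1/a}$ and $\d x_j = \frac{1}{a}u_j^{1/a-1}\d u_j$), which turns the quadratic part into the genuine quadratic form $-\frac{1}{2a}\boldsymbol{u}^{\top}\mathbf{K}\boldsymbol{u}$ on $\mathbb{R}_+^m$, at the cost of a Jacobian factor $\prod_j u_j^{1/a-1}$ and a linear-part exponent that becomes $\boldsymbol{\eta}^{\top}(\boldsymbol{u}^{b/a}-\mathbf{1}_m)/b$ (or $\frac{1}{a}\boldsymbol{\eta}^{\top}\log\boldsymbol{u}$ when $b=0$). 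So the question reduces to integrability over $\mathbb{R}_+^m$ of $\exp(-\frac{1}{2a}\boldsymbol{u}^{\top}\mathbf{K}\boldsymbol{u})$ times a product of powers of $u_j$ (possibly with the extra $\exp(\eta_j u_j^{b/a}/b)$ growth).

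The key point is controlling the quadratic form from below using strict co-positivity (CC1): since $\boldsymbol{v}^{\top}\mathbf{K}\boldsymbol{v}>0$ for every $\boldsymbol{v}\in\mathbb{R}_+^m\setminus\{\boldsymbol 0\}$ and the unit sphere intersected with $\mathbb{R}_+^m$ is compact, there is $c>0$ with $\boldsymbol{u}^{\top}\mathbf{K}\boldsymbol{u}\ge c\|\boldsymbol{u}\|_2^2$ for all $\boldsymbol{u}\in\mathbb{R}_+^m$. Hence the Gaussian-type factor decays like $\exp(-\frac{c}{2a}\|\boldsymbol{u}\|_2^2)$ at infinity. I would then split $\mathbb{R}_+^m$ into a bounded region near the origin and its complement. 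Near the origin the exponential factors are bounded, so integrability is governed solely by the power weights $\prod_j u_j^{1/a-1}$, which are locally integrable precisely because $1/a-1>-1$, i.e. $a>0$ — this is where the hypothesis $a>0$ enters, and it is all that is needed in the centered case since then there is no linear part. Away from the origin, the super-exponential Gaussian decay $\exp(-\frac{c}{2a}\|\boldsymbol{u}\|_2^2)$ dominates any polynomial weight; in the non-centered case one must also check it dominates $\exp(\eta_j u_j^{b/a}/b)$, which it does under (CC2) because $2a>b$ forces $b/a<2$, so the linear-part growth is of strictly smaller order than the quadratic decay. In the $b=0$ case (CC3), the linear part contributes $\prod_j u_j^{\eta_j/a}$, and combined with the Jacobian the exponent on $u_j$ near the origin is $\eta_j/a + 1/a - 1 = (\eta_j+1)/a - 1 > -1$ exactly when $\eta_j>-1$, which is the stated condition; at infinity this is again a polynomial beaten by the Gaussian factor.

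The main obstacle I anticipate is the bookkeeping in the non-centered $b>0$ case: one has to verify cleanly that for each $j$, $\exp(\eta_j u_j^{b/a}/b - \frac{c}{2ma}u_j^2)$ is bounded on $\mathbb{R}_+$ (distributing the quadratic decay coordinatewise via $\|\boldsymbol u\|_2^2 \ge \frac1m(\sum u_j)^2 \ge \frac1m \max_j u_j^2$ or simply $\|\boldsymbol u\|_2^2\ge u_j^2$), using $b/a<2$ to guarantee this, and then absorbing the residual polynomial weight against the remaining portion of the Gaussian. This is the step where the precise inequality $2a>b$ (rather than merely $b\le 2a$) gets used, and where sign issues with $\eta_j$ and with the constant $b$ need care. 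Once the coordinatewise bounds are in hand, Tonelli's theorem assembles the finite one-dimensional integrals into a finite $m$-dimensional one, completing the argument.
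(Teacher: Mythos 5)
Your argument is correct, and for the non-centered $b>0$ case it takes a genuinely different route from the paper. The paper splits into two cases with two different mechanisms: for $b>0$ it uses a radial decomposition $\boldsymbol{x}=z\boldsymbol{v}$ over directions $\boldsymbol{v}$ in the $\ell_1$-sphere intersected with the orthant, reduces to the one-dimensional ray integral $\int_0^\infty\exp(-Az^{2a}+Bz^b)\,\d z$ with $A(\boldsymbol{v})>0$ pointwise from (CC1), and invokes compactness and continuity in $\boldsymbol{v}$; only for $b=0$ does it use the uniform coercivity bound $\boldsymbol{v}^{\top}\mathbf{K}\boldsymbol{v}\ge N_{\mathbf{K}}\|\boldsymbol{v}\|_2^2$ to factor the integral into univariate gamma integrals. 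You instead use the coercivity bound in all cases, combined with the substitution $u_j=x_j^a$, so that $\exp(-\tfrac{1}{2a}\boldsymbol{u}^{\top}\mathbf{K}\boldsymbol{u})\le\prod_j\exp(-\tfrac{N_{\mathbf{K}}}{2a}u_j^2)$ and the entire integrand factorizes coordinatewise; this handles $b>0$, $b=0$, and the centered case by one uniform mechanism and makes the roles of $a>0$ (local integrability of the Jacobian weight $u_j^{1/a-1}$), $2a>b$ (Gaussian beats $\exp(\eta_j u_j^{b/a}/b)$ at infinity since $b/a<2$), $b>0$ (boundedness of $u_j^{b/a}$ near the origin), and $\eta_j>-1$ (exponent $(\eta_j+1)/a-1>-1$) completely transparent. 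The one worry you flag at the end is a non-issue: since $\|\boldsymbol{u}\|_2^2=\sum_j u_j^2$ exactly, the Gaussian majorant factorizes identically and no coordinatewise distribution trick is needed. What the paper's radial argument buys in exchange is that for $b>0$ it never needs the quantitative lower bound $N_{\mathbf{K}}>0$, only pointwise positivity along each ray; what yours buys is a single self-contained computation reducing everything to explicit univariate integrals assembled by Tonelli.
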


We emphasize that (CC1) is a weaker condition than positive
definiteness.  Criteria for strict co-positivity are discussed in
\citet{val86}.

\subsection{Implementation for Different Models}
In this section we give some implementation details for the
regularized generalized $\boldsymbol{h}$-score matching estimator
defined in (\ref{eq_definition_RGSM_loss_exp}) applied to the pairwise
interaction models from (\ref{eq:ab-density2}).
We again let
$\boldsymbol{\Psi}\equiv\left(\mathbf{K}^{\top},
  \boldsymbol{\eta}\right)^{\top}\in\mathbb{R}^{(m+1)\times m}$.  The
unregularized loss is then
\begin{align*}
\hat{J}_{\boldsymbol{h}}(P)&=\frac{1}{2}\mathrm{vec}(\boldsymbol{\Psi})^{\top}\boldsymbol{\Gamma}(\boldsymbol{x})\mathrm{vec}(\boldsymbol{\Psi})-\boldsymbol{g}(\boldsymbol{x})^{\top}\mathrm{vec}(\boldsymbol{\Psi}).
\end{align*}
The general form of the matrix $\boldsymbol{\Gamma}$ and the vector
$\boldsymbol{g}$ in the loss were given in equations
(\ref{eq_gsm_exponential})--(\ref{def_g}).
Here $\boldsymbol{\Gamma}\in\mathbb{R}^{(m+1)m\times (m+1)m}$ is block-diagonal, with the $j$-th $\mathbb{R}^{(m+1)\times (m+1)}$ block
\begin{align}
\boldsymbol{\Gamma}_j(\mathbf{x})&\equiv\begin{bmatrix}\boldsymbol{\Gamma}_{11,j} & \boldsymbol{\Gamma}_{12,j} \\ \boldsymbol{\Gamma}_{12,j}^{\top} & \Gamma_{22,j}\end{bmatrix}\label{eq_Gammasub}
\\
&\equiv\frac{1}{n}\sum\limits_{i=1}^n
\begin{bmatrix}
h_j\left(X_j^{(i)}\right){X_j^{(i)}}^{2a-2}{\boldsymbol{X}^{(i)}}^a{{\boldsymbol{X}^{(i)}}^a}^{\top} & -h_j\left(X_j^{(i)}\right){X_j^{(i)}}^{a+b-2}{\boldsymbol{X}^{(i)}}^a\\
-h_j\left(X_j^{(i)}\right){X_j^{(i)}}^{a+b-2}{{\boldsymbol{X}^{(i)}}^a}^{\top} & h_j\left(X_j^{(i)}\right){X_j^{(i)}}^{2b-2}
\end{bmatrix}\nonumber\\
&=\frac{1}{n}\mathbf{y}^{\top}\mathbf{y},\quad\mathbf{y}\equiv\begin{bmatrix}-(\sqrt{\boldsymbol{h}_j(\boldsymbol{X}_j)}\circ\boldsymbol{X}_j^{a-1})\circ\mathbf{x}^a & \sqrt{\boldsymbol{h}_j(\boldsymbol{X}_j)}\circ\boldsymbol{X}_j^{b-1}\end{bmatrix}\in\mathbb{R}^{n,m+1},\label{eq_Gamma_y}
\end{align}
where the $\circ$ product between a vector and a matrix means an
elementwise multiplication of the vector with each \emph{column} of
the matrix, and
$\boldsymbol{h}_j(\boldsymbol{X}_j)\equiv[h_j(X_j^{(1)}),\dots,h_j(X_j^{(n)})^{\top}]\in\mathbb{R}^m$.
Furthermore, $\boldsymbol{g}\equiv\begin{bmatrix}\mathrm{vec}(\mathbf{g}_1) \\ \boldsymbol{g}_2\end{bmatrix}\in\mathbb{R}^{(m+1)m}$, where $\mathbf{g}_1$ and $\boldsymbol{g}_2$ correspond to each entry of $\mathbf{K}$ and $\boldsymbol{\eta}$, respectively. The $j$-th column of $\mathbf{g}_1\in\mathbb{R}^{m\times m}$, written as $\boldsymbol{g}_{1,j}(\mathbf{x})$, is
\[
\frac{1}{n}\sum_{i=1}^n \bigg(h_j'\left(X_j^{(i)}\right){X_j^{(i)}}^{a-1}+(a-1)h_j\left(X_j^{(i)}\right){X_j^{(i)}}^{a-2}\bigg){\boldsymbol{X}^{(i)}}^a+ah_j\left(X_j^{(i)}\right){X_j^{(i)}}^{2a-2}\boldsymbol{e}_{j,m},
\]
where $\boldsymbol{e}_{j,m}$ is the $m$-vector with 1 at the $j$-th position and 0 elsewhere, and the $j$-th entry of $\boldsymbol{g}_2\in\mathbb{R}^m$ is
\[g_{2,j}=\frac{1}{n}\sum_{i=1}^n-h_j'\left(X_j^{(i)}\right){X_j^{(i)}}^{b-1}-(b-1)h_j\left(X_j^{(i)}\right){X_j^{(i)}}^{b-2}.\]


These formulae also hold for $b=0$ since $\boldsymbol{\Gamma}$ and $\boldsymbol{g}$ only depend on the gradient of the log density, and $\frac{\d (x^{b}-1)/b}{\d x}=x^{b-1}$ also holds for $b=0$. In the centered case where we know $\boldsymbol{\eta}_0\equiv\boldsymbol{0}$, we only estimate $\mathbf{K}\in\mathbb{R}^{m\times m}$, and $\boldsymbol{\Gamma}\in\mathbb{R}^{m^2\times m^2}$ is still block-diagonal, with the $j$-th block being the $\boldsymbol{\Gamma}_{11,j}$ submatrix in (\ref{eq_Gammasub}), while $\boldsymbol{g}$ is just $\mathrm{vec}(\mathbf{g}_1)$. Since $b$ only appears in the $\boldsymbol{\eta}$ part of the density, the formulae only depend on $a$ in the centered case.

We emphasize that it is indeed necessary to introduce amplifiers
$\boldsymbol{\gamma}\succ \boldsymbol{0}$ or a multiplier $\delta>1$ in
addition to the $\ell_1$ penalty.  It is clear
from~(\ref{eq_Gamma_y}) that
$\mathrm{rank}(\boldsymbol{\Gamma}_j)\le\min\{n,m+1\}$ (or $\min\{n,m\}$ if centered).
Thus, $\boldsymbol{\Gamma}$ is non-invertible when $n\leq m$ (or $n<m$ if centered) and
$\boldsymbol{g}$ need not lie in its column span.  

We claim that including amplifiers/multipliers for the submatrices
$\boldsymbol{\Gamma}_{11,j}$ only is sufficient for unique existence of a solution for all penalty parameters $\lambda\ge 0$.  To see this,
consider any nonzero vector $\boldsymbol{\nu}\in\mathbb{R}^{m+1}$.
Partition it as
$\boldsymbol{\nu}\equiv(\boldsymbol{\nu}_1,\boldsymbol{\nu}_2)$ with
$\boldsymbol{\nu}_1\in\mathbb{R}^{m}$.  Let
$\boldsymbol{\Gamma}_{j,\boldsymbol{\gamma}}$ be our amplified version of the
matrix $\boldsymbol{\Gamma}_j$ from~(\ref{eq_noncentered_gamma}), so
\[
  \boldsymbol{\Gamma}_{j,\boldsymbol{\gamma}} \;=\;
  \begin{pmatrix}
    \boldsymbol{\Gamma}_{11,j}+\mathrm{diag}(\gamma_1,\dots,\gamma_m) &
  \boldsymbol{\Gamma}_{12,j}\\
  \boldsymbol{\Gamma}_{12,j}^\top& \Gamma_{22,j}
\end{pmatrix}.
\]
As $\boldsymbol{\Gamma}_j$ itself is positive semidefinite, we find that if at least
one of the first $m$ entries of $\boldsymbol{\nu}$ is nonzero then
\[
  \boldsymbol{\nu}^{\top}\boldsymbol{\Gamma}_{j,\boldsymbol{\gamma}}\boldsymbol{\nu}\;\ge\;
  \boldsymbol{\nu}^{\top}\boldsymbol{\Gamma}_{j}\boldsymbol{\nu} + \sum_{k=1}^m
  \nu_k^2\boldsymbol{\gamma}_k \;\ge\; \sum_{k=1}^m
  \nu_k^2\boldsymbol{\gamma}_k \;>\;0.
\]
If only the last entry of  $\boldsymbol{\nu}$ is nonzero then
\[
  \boldsymbol{\nu}^{\top}\Gamma_{j,\boldsymbol{\gamma}}\boldsymbol{\nu}\;=\;
  \nu_{m+1}^2\Gamma_{22,j}  \;>\;0
\]
almost surely; recall that
$\Gamma_{22,j}=\tfrac{1}{n}\sum_{i=1}^nh_j\left(X_{j}^{(i)}\right)X_j^{2b-2}$.
We conclude that $\boldsymbol{\Gamma}_{j,\boldsymbol{\gamma}}$ (and thus the entire amplified $\boldsymbol{\Gamma}$) is a.s.~positive
definite, which ensures unique existence of the loss minimizer.

Given the formulae for $\boldsymbol{\Gamma}$ and $\boldsymbol{g}$, one adds the $\ell_1$ penalty on $\boldsymbol{\Psi}$ to get the regularized loss (\ref{eq_definition_RGSM_loss_xi}). Our methodology readily accommodates two different choices of the penalty parameter $\lambda$ for $\mathbf{K}$ and $\boldsymbol{\eta}$. This is also theoretically supported for truncated GGMs, since if the ratio of the respective values $\lambda_{\mathbf{K}}$ and $\lambda_{\boldsymbol{\eta}}$ is fixed, the proof of the theorems in Section \ref{Theory for Graphical Models} can be easily modified by replacing $\boldsymbol{\eta}$ by $(\lambda_{\boldsymbol{\eta}}/\lambda_{\mathbf{K}})\boldsymbol{\eta}$.
  To avoid picking two tuning parameters, one may also
  choose to remove the penalty on $\boldsymbol{\eta}$ altogether by
  profiling out $\boldsymbol{\eta}$ and solve for
  $\hat{\boldsymbol{\eta}}\equiv\boldsymbol{\Gamma}_{22}^{-1}\left(\boldsymbol{g}_2-\boldsymbol{\Gamma}_{12}^{\top}\mathrm{vec}(\mathbf{\hat{K}})\right)$,
  with $\mathbf{\hat{K}}$ the minimizer of the profiled loss
\begin{equation}\label{loss-noncentered-profiled}
\hat{J}_{\boldsymbol{h},\lambda,\boldsymbol{\gamma},\mathrm{profile}}(\mathbf{K})\equiv
\frac{1}{2}\mathrm{vec}(\mathbf{K})^{\top}\boldsymbol{\Gamma}_{\boldsymbol{\gamma},11.2}
\mathrm{vec}(\mathbf{K})
-(\boldsymbol{g}_1-\boldsymbol{\Gamma}_{12}\boldsymbol{\Gamma}_{22}^{-1}
\boldsymbol{g}_2)^{\top}\mathrm{vec}(\mathbf{K})+\lambda\|\mathbf{K}\|_1,
\end{equation}
where the Schur complement $\boldsymbol{\Gamma}_{\boldsymbol{\gamma},11.2}\equiv\boldsymbol{\Gamma}_{\boldsymbol{\gamma},11}-
\boldsymbol{\Gamma}_{12}\boldsymbol{\Gamma}_{22}^{-1}\boldsymbol{\Gamma}_{12}^{\top}$ is a.s.~positive definite such that the profiled estimator exists a.s.~for all $\lambda\ge 0$.  This profiled approach corresponds to choosing $\lambda_{\boldsymbol{\eta}}/\lambda_{\mathbf{K}}=0$. A detailed theoretical analysis of the profiled estimator is beyond the scope of
this paper, however.  We note that in the other extreme, with $\lambda_{\boldsymbol{\eta}}/\lambda_{\mathbf{K}}=+\infty$, the non-centered estimator reduces to the estimator from the centered case.

\begin{example}
\label{Truncated Non-centered GGMs}
The truncated normal
model comprises the density
\begin{equation}\label{pdf-noncentered} 
p_{\boldsymbol{\mu},\mathbf{K}}(\boldsymbol{x})\;\propto\;\exp\left\{-\frac{1}{2}(\boldsymbol{x}-\boldsymbol{\mu})^{\top}\mathbf{K}(\boldsymbol{x}-\boldsymbol{\mu})\right\}\mathds{1}_{[0,\infty)^m}(\boldsymbol{x}).
\end{equation}
This corresponds to (\ref{eq:ab-density2}) with $a=b=1$, and $\boldsymbol{\eta}=\mathbf{K}\boldsymbol{\mu}$.
The 
$j$-th $(m+1)\times (m+1)$ block of $\boldsymbol{\Gamma}(\mathbf{x})$ is
\begin{equation}\label{eq_noncentered_gamma}
\frac{1}{n}\begin{bmatrix}
\mathbf{x}^{\top}\mathrm{diag}(\boldsymbol{h}_j(\boldsymbol{X}_j))\mathbf{x}  &  -\mathbf{x}^{\top}\boldsymbol{h}_j(\boldsymbol{X}_j) \\ -\boldsymbol{h}_j(\boldsymbol{X}_j)^{\top}\mathbf{x} &  \boldsymbol{h}_j(\boldsymbol{X}_{j})^{\top}\mathbf{1}_n
\end{bmatrix}.
\end{equation}
Partitioning the vector $\boldsymbol{g}(\mathbf{x})$ into $m$ subvectors
$\boldsymbol{g}_j(\mathbf{x})\in\mathbb{R}^{m+1}$, where the entries of
$\boldsymbol{g}_j(\mathbf{x})$ correspond to column
$\boldsymbol{\Psi}_j$, the $k$-th entry of
$\boldsymbol{g}_j(\mathbf{x})$ is
\begin{equation}\label{eq_noncentered_g}
  g_{jk}(\mathbf{x})\equiv
  \begin{cases}
    \tfrac{1}{n}\sum_{i=1}^nh'_j\left(X_j^{(i)}\right)X_k^{(i)} 
    &\text{ if } k\le m,\; k\not= j,\\
    \tfrac{1}{n}\sum_{i=1}^nh'_j\left(X_j^{(i)}\right)X_k^{(i)} 
    +h_j\left(X_{j}^{(i)}\right)
    &\text{ if } k= j,\\
    -\tfrac{1}{n}\sum_{i=1}^nh_j'\left(X_{j}^{(i)}\right)
    &\text{ if } k=m+1.
  \end{cases}
\end{equation}
\end{example}

\begin{example}
The exponential square-root graphical model in \citet{ino16} has
\[p_{\boldsymbol{\eta},\mathbf{K}}(\boldsymbol{x})\propto\exp\left(-\sqrt{\boldsymbol{x}}^{\top}\mathbf{K}\sqrt{\boldsymbol{x}}+2\boldsymbol{\eta}^{\top}\sqrt{\boldsymbol{x}}\right)\mathds{1}_{[0,\infty)^m}(\boldsymbol{x}),\]
which corresponds to (\ref{eq:ab-density2}) with $a=b=1/2$. We refer to this as the \emph{exponential} model. In this case, the $j$-th $\mathbb{R}^{(m+1)\times (m+1)}$ block of $\boldsymbol{\Gamma}$ is
\[\boldsymbol{\Gamma}_j(\mathbf{x})\equiv\frac{1}{n}\sum_{i=1}^n\frac{h_j\left(X_j^{(i)}\right)}{X_j^{(i)}}\left(\begin{matrix}-\sqrt{\boldsymbol{X}^{(i)}}\\ 1\end{matrix}\right)\left(-\sqrt{\boldsymbol{X}^{(i)}}^{\top},1\right)\]
and $\boldsymbol{g}=\mathrm{vec}(\mathbf{g}_0)$, where the $j$-th column of $\mathbf{g}_0\in\mathbb{R}^{(m+1)\times m}$ is
\[\boldsymbol{g}_j(\mathbf{x})\equiv\frac{1}{n}\sum_{i=1}^n\frac{2h'_j\left(X_j^{(i)}\right)X_j^{(i)}-h_j\left(X_j^{(i)}\right)}{2{X_j^{(i)}}^{3/2}}\left(\begin{matrix}\sqrt{\boldsymbol{X}^{(i)}}\\ -1\end{matrix}\right)+\frac{h_j\left(X_j^{(i)}\right)}{2X_j^{(i)}}\boldsymbol{e}_{j,m+1}.\]
\end{example}

\begin{example}
If $a=1/2$ and $b=0$, then (\ref{eq:ab-density2}) becomes
\begin{equation}\label{eq:gamma-density}
p_{\boldsymbol{\eta},\mathbf{K}}(\boldsymbol{x})\propto\exp\left(-\sqrt{\boldsymbol{x}}^{\top}\mathbf{K}\sqrt{\boldsymbol{x}}+\boldsymbol{\eta}^{\top}\log(\boldsymbol{x})\right)\mathds{1}_{(0,\infty)^m}(\boldsymbol{x}).
\end{equation}
If $\mathbf{K}$ is diagonal in this case, then
$\boldsymbol{X}\sim p_{\boldsymbol{\eta},\mathbf{K}}$ has independent
entries with $X_j$ following the gamma distribution with rate
$\kappa_{jj}$ and shape $\eta_j+1$, which gives an intuition for
condition (CC3) $\eta_j>-1$ in Theorem \ref{thm_normalizability}. We
can thus view (\ref{eq:gamma-density}) as a multivariate gamma
distribution with pairwise interactions among the covariates, and call this the \emph{gamma} model.  For this model, the $j$-th block
of $\boldsymbol{\Gamma}$ is
\[\boldsymbol{\Gamma}_j(\mathbf{x})\equiv\frac{1}{n}\sum_{i=1}^n\frac{h_j\left(X_j^{(i)}\right)}{{X_j^{(i)}}^2}
\left(\begin{matrix}-\sqrt{X_j^{(i)}\boldsymbol{X}^{(i)}}\\ 1\end{matrix}\right)\left(-\sqrt{X_j^{(i)}\boldsymbol{X}^{(i)}}^{\top},1\right)
\]
and the part of $\boldsymbol{g}$ corresponding to $\mathbf{K}_j$ is 
\[\boldsymbol{g}_{1,j}(\mathbf{x})\equiv\frac{1}{n}\sum_{i=1}^n\frac{2h'_j\left(X_j^{(i)}\right)X_j^{(i)}-h_j\left(X_j^{(i)}\right)}{2{X_j^{(i)}}^{3/2}}\sqrt{\boldsymbol{X}^{(i)}}+\frac{h_j\left(X_j^{(i)}\right)}{2X_j^{(i)}}\boldsymbol{e}_{j,m},\]
while the part for $\eta_j$ is 
\[g_{2.j}(\mathbf{x})=\frac{1}{n}\sum_{i=1}^n\frac{h_j\left(X_j^{(i)}\right)}{{X_j^{(i)}}^2}-\frac{h_j'\left(X_j^{(i)}\right)}{X_j^{(i)}}.\]
\end{example}

We note that the $\boldsymbol{\Gamma}_{11,j}$ sub-matrix of $\boldsymbol{\Gamma}_j$ and the $\boldsymbol{g}_{1,j}$ sub-vector of $\boldsymbol{g}_{j}$ for the gamma model are the same as those for the exponential model, since $a=1/2$ in both cases and the parts involving $\mathbf{K}$ in the densities are the same.

\subsection{Computational Details}\label{Computational Details}

In the most general exponential family setting, as in Eq.~(\ref{eq_gsm_exponential})--(\ref{def_g}) in Theorem \ref{theorem_GSM_estimator_exp}, the time complexity for forming $\boldsymbol{\Gamma}\in\mathbb{R}^{r\times r}$ and $\boldsymbol{g}\in\mathbb{R}^r$ is $\mathcal{O}\left(nm(f_{b'}(m)+r^2+r(f_{t'}(m)+f_{t''}(m)))\right)$. Here $f_{b'}(m)$ is the average time complexity for calculating $\partial_j b(\boldsymbol{x})$ over $j=1,\dots,m$, and similarly $f_{t'}(m)$ for $\partial_j t_{\ell}(\boldsymbol{x})$ and $f_{t''}(m)$ for $\partial_{jj} t_{\ell}(\boldsymbol{x})$ over $j=1,\dots,m$ and $\ell=1\dots,r$. In many applications, however, these three functions would be constant in $m$, thus giving an $\mathcal{O}(nmr^2)$ computational complexity, with the dominating term coming from the operations for $\boldsymbol{t}_j'{\boldsymbol{t}_j'}^{\top}$ in $\boldsymbol{\Gamma}$ since $\boldsymbol{\Gamma}$ is of dimension $r\times r$. 

For pairwise interaction power models, $r=m^2$ and the formula above becomes $\mathcal{O}(nm^5)$. However, since $\boldsymbol{\Gamma}$ is block-diagonal with only $m^3$ nonzero entries and by the special form of $\boldsymbol{t}(\boldsymbol{x})=\boldsymbol{x}^a{\boldsymbol{x}^a}^{\top}$, the true complexity is in fact $\mathcal{O}(nm^3)$.

While the introduction of the $\ell_1$ penalty inevitably precludes the estimator from having a closed-form solution and introduces non-differentiability, state-of-art numerical optimization algorithms, such as coordinate-descent \citep{fri07}, can be applied for fast estimation. To speed up estimation, one can usually use warm starts using the solution from the previous $\lambda$'s, as well as lasso-type strong screening rules \citep{tib12} to eliminate components of $\hat{\boldsymbol{\theta}}$ that are known a priori to have zero estimates.

In our implementation for pairwise interaction models of Section \ref{A General Framework of Pairwise Interaction Models} (that will become available in an R package), we optimize our loss functions with respect to a symmetric matrix $\hat{\mathbf{K}}$; in the non-centered case the vector $\hat{\boldsymbol{\eta}}$ is also included. We use a coordinate-descent method analogous to Algorithm 2 in \citet{lin16}, where in each step we update each element of $\hat{\mathbf{K}}$ and $\hat{\boldsymbol{\eta}}$ based on the other entries from the previous steps, while maintaining symmetry. In our simulations in Section \ref{Numerical Experiments} we always scale the data matrix by column $\ell_2$ norms before proceeding to estimation. Note that estimation of $\hat{\mathbf{K}}$ without symmetry can be parallelized as the loss can be decomposed into a sum over the columns.

\subsection{Choice of the Function \texorpdfstring{$\boldsymbol{h}$}{h}}
In this subsection we discuss the requirements on the function $\boldsymbol{h}$ as well as some reasonable choices of $\boldsymbol{h}$.
\subsubsection{Requirements on \texorpdfstring{$\boldsymbol{h}$}{h}}\label{Requirement of h}
In Section \ref{Generalized Score Matching for Non-Negative Data}, we presented two assumptions (A1) and (A2) under which the generalized score-matching loss is valid, i.e., the integration by parts is justified and Theorem \ref{theorem_GSM_loss_alt} holds. In this section, we present some sufficient (and nearly necessary) requirements on $\boldsymbol{h}$ such that (A1) and (A2) are satisfied.

\begin{definition}\label{def_h}
Suppose $\boldsymbol{h}:\mathbb{R}_+^m\to\mathbb{R}_+^m$ with $\boldsymbol{h}(\boldsymbol{x})=(h_1(x_1),\dots,h_m(x_m))^{\top}$. We write that $\boldsymbol{h}\in\mathcal{H}_{a,b}$ (for simplicity we omit the dependency on $m$) if for all $j=1,\dots,m$:
\begin{enumerate}[i)]
\item $h_j$ is absolutely continuous in every bounded sub-interval of $\mathbb{R}_+$, and thus has derivative $h_j'$ a.s.;
\item $h_j(x)>0$ a.s.~on $\mathbb{R}_+$;
\item $h_j$ and $h_j'$ are both bounded by some piecewise powers of $x$ a.s.~on $\mathbb{R}_+$; 
\item \label{def_h_lim0} \mbox{$\lim\limits_{x\searrow 0+}h_j(x)/x_j^q=0$, where $q\equiv\begin{cases}
\max\{1-a,1-b\} & \text{if }b>0,\\ 1-\eta_{0,j} & \text{if }b=0.\end{cases}$}
\end{enumerate}
\end{definition}

\begin{theorem}\label{thm_h}
  Assume every $P$ in the family of distribution $\mathcal{P}_+$ satisfies (CC1)--(CC3) and thus has finite normalizing constants. If $\boldsymbol{h}\in\mathcal{H}_{a,b}$, then (A1) and (A2) are satisfied.
\end{theorem}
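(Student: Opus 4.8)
The plan is to verify conditions (A1) and (A2) for the exponential family specified by (\ref{eq:ab-density2}), i.e.~for the log-density $\log p_{\boldsymbol{\eta},\mathbf{K}}(\boldsymbol{x}) = -\frac{1}{2a}{\boldsymbol{x}^a}^\top\mathbf{K}\boldsymbol{x}^a + \boldsymbol{\eta}^\top\frac{\boldsymbol{x}^b-\mathbf{1}_m}{b} + \mathrm{const}$, using the assumption $\boldsymbol{h}\in\mathcal{H}_{a,b}$. The key preliminary computation is the explicit form of $\partial_j\log p(\boldsymbol{x})$: differentiating gives $\partial_j\log p(\boldsymbol{x}) = -x_j^{a-1}(\mathbf{K}\boldsymbol{x}^a)_j + \eta_j x_j^{b-1}$ (with $x_j^{b-1}=x_j^{-1}$ when $b=0$). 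So $h_j(x_j)\partial_j\log p(\boldsymbol{x})$ is a sum of terms of the form (polynomial in the $x_k$'s, $k\neq j$) times $h_j(x_j)x_j^{a-1}$ or $h_j(x_j)x_j^{b-1}$. Both assumptions then reduce to controlling $p_0(\boldsymbol{x})$ times such terms, as $x_j\searrow 0^+$ and as $x_j\nearrow+\infty$.

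For (A1), I would split into the two endpoints. At $x_j\searrow 0^+$: since $p_0$ is continuous and bounded near the boundary (it extends continuously, and $b(\boldsymbol x)$ stays bounded when $b>0$; when $b=0$ the factor $x_j^{\eta_{0,j}}$ appears), the limit is governed by $h_j(x_j)x_j^{a-1}$ and $h_j(x_j)x_j^{b-1}$ (and, in the $b=0$ case, $x_j^{\eta_{0,j}}\cdot h_j(x_j)x_j^{-1}$). Condition (iv) of Definition~\ref{def_h}, namely $h_j(x)/x^q\to 0$ with $q=\max\{1-a,1-b\}$ (resp.\ $q=1-\eta_{0,j}$ when $b=0$), is exactly what forces each of these products to $0$; the other factors ($x_k^a$ for $k\neq j$, which are held fixed, and the bounded part of $p_0$) do not interfere. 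At $x_j\nearrow+\infty$: here I use strict co-positivity (CC1) of $\mathbf{K}$. The quadratic form ${\boldsymbol{x}^a}^\top\mathbf{K}\boldsymbol{x}^a$ grows; more precisely, with $\boldsymbol{x}_{-j}$ fixed and $x_j\to\infty$, the dominant term is $\kappa_{jj}x_j^{2a}$ with $\kappa_{jj}>0$ (a diagonal entry of a strictly co-positive matrix is positive, taking $\boldsymbol v=\boldsymbol e_j$), so $p_0(\boldsymbol{x})$ decays like $\exp(-c\,x_j^{2a})$ for some $c>0$, up to lower-order polynomial corrections from the cross terms and the linear $\boldsymbol{\eta}$ term. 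Since $h_j$ and $h_j'$ are bounded by piecewise powers (condition (iii)), and $\partial_j\log p$ is polynomial-type in $x_j$, the whole product is (polynomial)$\times\exp(-c\,x_j^{2a})\to 0$. Note $2a>0$ always, and when $b>0$ we have $2a>b>0$ so the linear term cannot overwhelm; when $b=0$ the linear term is $\eta_j\log x_j$, harmless.

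For (A2), I need $\mathbb{E}_{p_0}\|\nabla\log p(\boldsymbol X)\circ\boldsymbol h^{1/2}(\boldsymbol X)\|_2^2<\infty$ and $\mathbb{E}_{p_0}\|(\nabla\log p(\boldsymbol X)\circ\boldsymbol h(\boldsymbol X))'\|_1<\infty$. Both integrands are, by the computation above and condition (iii), bounded a.s.\ by a finite sum of products of (piecewise) powers of the coordinates, possibly with negative exponents near $0$ (from $x_j^{a-2}$, $x_j^{b-2}$, or, when $b=0$, $x_j^{-2}$ type terms in the second derivative $\partial_{jj}\log p$ and in $h_j' x_j^{a-1}$ etc.). So I reduce (A2) to: all mixed moments $\mathbb{E}_{p_0}\big[\prod_k X_k^{s_k}\big]$ with the relevant (finitely many) exponent vectors $\boldsymbol s$ are finite, including the negative powers that arise. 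Finiteness of positive moments is immediate from the $\exp(-c\,x_j^{2a})$-type tail established via (CC1) (the argument of Theorem~\ref{thm_normalizability}'s proof applies, since multiplying an integrable density by a polynomial keeps it integrable). Finiteness near the origin for the negative-power terms: the most negative exponent in $x_j$ that appears is controlled — in $\partial_{jj}\log p$ one gets $x_j^{a-2}$ and $x_j^{b-2}$ (or $x_j^{-2}$ if $b=0$), each multiplied by $h_j(x_j)$; condition (iv) with $q=\max\{1-a,1-b\}\ge 1-a$ (resp.\ $1-\eta_{0,j}$) makes $h_j(x_j)x_j^{a-2}= o(x_j^{q+a-2})=o(x_j^{-1+\min\{a,b\}})$ which is integrable on $(0,1)$ since $\min\{a,b\}>0$ (resp.\ in the $b=0$ case $h_j(x_j)x_j^{-2}=o(x_j^{-1-\eta_{0,j}})$, integrable against the $x_j^{\eta_{0,j}}$ from the density). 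The $\ell_1$/$\ell_2$ structure is harmless since there are only finitely many terms. The main obstacle — and the part requiring care rather than routine work — is the $b=0$ (gamma-type) case: there the base density behaves like $\prod_j x_j^{\eta_{0,j}}$ near the coordinate hyperplanes with $\eta_{0,j}$ possibly negative (only $>-1$), so the exponent bookkeeping near the origin must thread the needle between the $x_j^{\eta_{0,j}}$ weight, the negative powers from differentiation, and exactly how fast (iv) forces $h_j$ to vanish; this is where the precise choice $q=1-\eta_{0,j}$ in Definition~\ref{def_h} is used, and one must check both endpoints and all the second-derivative terms line up.
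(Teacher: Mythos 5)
Your proposal is correct and follows essentially the same route as the paper's proof: the explicit form of $\partial_j\log p$, strict co-positivity giving $\kappa_{0,jj}>0$ and hence $\exp(-c\,x_j^{2a})$ decay at infinity that dominates the polynomially bounded $h_j$, condition iv) of Definition~\ref{def_h} at the origin, and a reduction of (A2) to finiteness of univariate integrals with the same exponent bookkeeping (including the delicate $b=0$ case weighted by $x_j^{\eta_{0,j}}$). One small imprecision: for $h_j(x_j)x_j^{a-2}$ condition iv) alone gives only $o\big(x_j^{\max\{-1,\,a-b-1\}}\big)$, not $o\big(x_j^{-1+\min\{a,b\}}\big)$, so the borderline $o(x_j^{-1})$ case must be closed by invoking the piecewise-power bound in condition iii) --- which is exactly how the paper handles it, by restricting to (piecewise) powers of $x$.
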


In centered models, where $\boldsymbol{\eta}\equiv\boldsymbol{0}$, we can assume $b=2a$ and iv) in the definition of $\mathcal{H}_{a,2a}$ has $q=1-a$. For truncated GGMs, $a=b=1$, so \ref{def_h_lim0} in Definition~\ref{def_h} is simply $\lim_{x_j\searrow 0^+}h_j(x_j)=0$.

In the case of $b=0$, $\boldsymbol{\eta}$ is an unknown parameter, and (CC3) requires each of its component to be greater than $-1$. If one has prior information on $\boldsymbol{\eta}$ or restricts the parameter space for $\boldsymbol{\eta}$, the requirement reduces to $h_j(x_j)=o(x_j^{1-\eta_{0,j}})$ as $x_j\searrow 0^+$.
Otherwise, it suffices to require $h_j(x_j)=o(x_j^2)$.  Note that this is only a condition for $x_j\searrow 0^+$, and the globally quadratic behavior of $h_j(x_j)=x_j^2$ from the original score matching is not needed on the entire $\mathbb{R}_+$, leaving
opportunities for improvements.

\subsubsection{Reasonable Choices of \texorpdfstring{$\boldsymbol{h}$}{h}}\label{Reasonable Choices of h}
Assume a common univariate $h$ for all components in
$\boldsymbol{h}$. Inspired by Theorem \ref{thm_h}, we consider $h$ that behaves like a power of $x$ both as $x\nearrow+\infty$ and as $x\searrow 0^+$. Since the requirements on the two tails are separate, we can choose $h$ to be a piecewise defined function that joins two
powers with possibly different degrees.  In other words, $h(x)=\min(x^{p_1},cx^{p_2})$ for some powers
$p_1\geq p_2\geq 0$ and constant $c>0$.
Only one constant $c$ is
required since generalized score matching is invariant to scaling of
$h$.  In determining the exact power of $p_1$ we have the following
considerations:
\begin{enumerate}[a)]
\item In the centered case:
\begin{enumerate}[(i)]
\item (A1) and (A2): Theorem \ref{thm_h} requires that $p_1\geq 1-a$.
\item ``Controlled $\boldsymbol{\Gamma}$ and $\boldsymbol{g}$  for
  $\boldsymbol{x}^a$'':  We propose avoiding poles at the origin for the entries of
  $\boldsymbol{\Gamma}$ and $\boldsymbol{g}$.  The formula for
  $\boldsymbol{\Gamma}_{11}$ in (\ref{eq_Gamma_y}) shows that to this end
  $\sqrt{h(x)}x^{a-1}$ needs to have a non-negative degree  . This
  requires $p_1\geq 2-2a$.  
  The formula for $\boldsymbol{g}_1$ similarly shows that $h'(x)x^{a-1}$, $h(x)x^{a-2}$
  and $h(x)x^{2a-2}$ all need to have a non-negative degree for small
  $x$.  This requires $p_1\geq 2-a$.
\end{enumerate}
\item In the non-centered case, in addition to (i) and (ii),
\begin{enumerate}[(i)]
\setcounter{enumii}{2}
\item (A1) and (A2): Theorem \ref{thm_h} requires $p_1\geq \max\{1-a,1-b\}$ for $b>0$, or $1-\min_j\eta_{0,j}$ for $b=0$.
\item ``Controlled $\boldsymbol{\Gamma}$ and $\boldsymbol{g}$ for $\boldsymbol{x}^b$'': From the definition of $\boldsymbol{\Gamma}_{22}$ and $\boldsymbol{g}_2$ and by the same reasoning as above, $\sqrt{h(x)}x^{b-1}$, $h'(x)x^{b-1}$ and $h(x)x^{b-2}$ need to be non-negative powers of $x$, thus requiring $p_1\geq \max\{2-b,2-2b\}=2-b$.
\end{enumerate}
\end{enumerate}

The choice of $p_2$, is only relevant for large data
points.  Our main consideration is then merely how well $\boldsymbol{\Gamma}$ and $\boldsymbol{g}$ concentrate on their true population values (Theorem \ref{theorem_lin}).  From this perspective, our intuition is that $p_2$ should be chosen small so that the tails of the distributions of the entries of $\boldsymbol{\Gamma}$ and $\boldsymbol{g}$ are well-behaved.  Thus, we can choose $p_2=0$, in
which case $h(x)=\min(x^{p_1},c)$ is a truncated power.

\subsection{Tuning Parameter Selection}\label{Parameter Tuning}
By treating the unpenalized loss (i.e.,~$\lambda=0$,
$\boldsymbol{\gamma}=0$) as a negative log-likelihood, we may use the extended Bayesian Information Criterion (eBIC) to choose the tuning parameter \citep{che08,foy10}. Consider the centered case as an example.
Let $\hat{S}^{\lambda}\equiv\{(i,j):\hat{\kappa}_{ij}^{\lambda}\neq 0,i<j\}$, where $\hat{\mathbf{K}}^{\lambda}$ be the estimate associated with tuning parameter $\lambda$. The eBIC is then
\begin{align*}
\mathrm{eBIC}(\lambda)=&-n\mathrm{vec}(\hat{\mathbf{K}})^{\top}\boldsymbol{\Gamma}(\mathbf{x})\mathrm{vec}(\hat{\mathbf{K}})+2n\boldsymbol{g}(\mathbf{x})^{\top}\mathrm{vec}(\hat{\mathbf{K}})+|\hat{S}^{\lambda}|\log n+2\log\left(\begin{matrix}p(p-1)/2 \\ |\hat{S}^{\lambda}|\end{matrix}\right),
\end{align*}
where $\hat{\mathbf{K}}$ can be either the original estimate associated with $\lambda$, or a refitted solution obtained by restricting the support to $\hat{S}^{\lambda}$.


We use the eBIC instead of the ordinary BIC (Bayesian Information Criterion) since the BIC tends to choose an overly complex model when the model space is large, as encountered in the high-dimensional setting.  The extension in eBIC comes from the last term in the above display which can be motivated by a prior distribution under which the number of edges in the conditional independence graph is uniformly distributed; see also \cite{bogdan11} and \cite{foy15}.

\section{Theory for Graphical Models}\label{Theory for Graphical Models}

In our regularized generalized score matching framework, we introduced the amplifiers/ multipliers to address the inexistence problem. We also proposed using a general function $\boldsymbol{h}$ in place of $\boldsymbol{x}^2$ as a means to improve estimation accuracy. This section provides a theoretical analysis of these two aspects.

In Section \ref{Theory for Pairwise Interaction Models}, we present the theory for our regularized generalized score matching estimators for general pairwise interaction models before going into the details for the special cases of (truncated) GGMs.
Next, we show that a specific choice of amplifiers/multipliers yields consistent estimation without the need for tuning. This point is important even in the case of Gaussian models on all of $\mathbb{R}^m$. Therefore, in Section~\ref{sec:revis-gauss-score} we digress from non-negative data and consider 
the original score matching of \cite{hyv05} for centered Gaussian distributions.
Finally, in Section~\ref{Generalized Score Matching for Truncated GGMs}, we derive probabilistic results for $\hat{\boldsymbol{\Psi}}$ based on Theorem \ref{theorem_lin}, justifying the benefits of using a general bounded $\boldsymbol{h}$ over $\boldsymbol{x}^2$ in the non-negative setting. As the most important models from the class of pairwise interaction power models over $\mathbb{R}_+^m$, we only treat truncated GGMs since they have the most tractable concentration bounds; this case  also provides a comparison to Corollary 2 in \citet{lin16}, which uses $\boldsymbol{x}^2$.

\subsection{Theory for Pairwise Interaction Models}\label{Theory for Pairwise Interaction Models}

The graphical models we treat are parametrized by the interaction matrix $\mathbf{K}$ and the coefficients $\boldsymbol{\eta}$ on $(\boldsymbol{x}^b-\mathbf{1}_m)/b$. 
It is convenient to accommodate this setting with a matrix-valued parameter
$\mathbf{\Psi}\in\mathbb{R}^{r_1\times r_2}$ (in place of
$\boldsymbol{\theta}$) and specify our regularized
$\boldsymbol{h}$-score matching loss as
\begin{equation}\label{eq_definition_RGSM_loss_xi}
\hat{J}_{\boldsymbol{h},\lambda,\boldsymbol{\gamma}}(\boldsymbol{\Psi})\equiv\argmin\limits_{\boldsymbol{\Psi}\in\mathbb{R}^{r_1\times r_2}}\frac{1}{2}\mathrm{vec}(\boldsymbol{\Psi})^{\top}\boldsymbol{\Gamma}_{\boldsymbol{\gamma}}(\mathbf{x})\mathrm{vec}(\boldsymbol{\Psi})-\boldsymbol{g}(\mathbf{x})^{\top}\mathrm{vec}(\boldsymbol{\Psi})+\lambda\|\boldsymbol{\Psi}\|_1.
\end{equation}

In the non-centered case we thus take $\mathbf{\Psi}=[\mathbf{K},\boldsymbol{\eta}]^{\top}\in\mathbb{R}^{m(m+1)\times m}$. In the centered case, $\mathbf{\Psi}$ is simply the $m\times m$ interaction matrix $\mathbf{K}$. Following related prior
work such as \cite{lin16}, for ease of proof we allow the matrix $\mathbf{K}$ to be
nonsymmetric, which allows us to decouple optimization over the
different columns of $\mathbf{K}$ or $\boldsymbol{\Psi}$, while in our implementations we ensure that $\mathbf{K}$ is symmetric.

\begin{definition}\label{def_constants_centered}
  Let
  $\boldsymbol{\Gamma}_0\equiv\mathbb{E}_0\boldsymbol{\Gamma}(\mathbf{x})$
  and $\boldsymbol{g}_0\equiv\mathbb{E}_0\boldsymbol{g}(\mathbf{x})$
  be the population versions of $\boldsymbol{\Gamma}(\mathbf{x})$ and
  $\boldsymbol{g}(\mathbf{x})$ under the distribution given by a true
  parameter matrix $\mathbf{\Psi}_0$.  The support of a matrix
  $\mathbf{\Psi}$ is
  $S(\mathbf{\Psi})\equiv\{(i,j):\psi_{ij}\neq 0\}$, and we let
  $S_0=S(\mathbf{\Psi}_0)$.  For a matrix $\mathbf{\Psi}_0$, we
  define $d_{\mathbf{\Psi}_0}$ to be the maximum number of non-zero
  entries in any column, and
  $c_{\mathbf{\Psi}_0}\equiv\mnorm{\mathbf{\Psi}_0}_{\infty,\infty}$.
  Writing $\mathbf{\Gamma}_{0,AB}$ for the $A\times B$ submatrix of
  $\boldsymbol{\Gamma}_0$, we define
  \begin{equation}\label{def_c_gamma}
    c_{\boldsymbol{\Gamma}_0}\equiv
    \mnorm{(\boldsymbol{\Gamma}_{0,S_0S_0})^{-1}}_{\infty,\infty}.
  \end{equation}
  Finally, $\boldsymbol{\Gamma}_0$ satisfies \emph{the
    irrepresentability condition with incoherence parameter
    $\alpha\in(0,1]$ and edge set $S_0$} if
\begin{equation}\label{irrepresentability}
\mnorm{\boldsymbol{\Gamma}_{0,S_0^cS_0}(\boldsymbol{\Gamma}_{0,S_0S_0})^{-1}}_{\infty,\infty}\leq (1-\alpha).
\end{equation}
\end{definition}


Our analysis of the regularized generalized $\boldsymbol{h}$-score
matching estimator builds on the following theorem taken from
\citet[Theorem 1]{lin16}.  

\begin{theorem} 
  \label{theorem_lin}
  Suppose $\boldsymbol{\Gamma}_{0}$ has
  $\boldsymbol{\Gamma}_{0,S_0S_0}$ invertible and satisfies the
  irrepresentability condition (\ref{irrepresentability})
  with
  incoherence parameter $\alpha\in(0,1]$. Assume that
\begin{equation}\label{Max_constants}\|\boldsymbol{\Gamma}_{\boldsymbol{\gamma}}(\mathbf{x})-\boldsymbol{\Gamma}_0\|_{\infty}<\epsilon_1,\quad\quad\quad\quad\|\boldsymbol{g}(\mathbf{x})-\boldsymbol{g}_0\|_{\infty}<\epsilon_2,
\end{equation}
with $d_{\mathbf{\Psi}_0}\epsilon_1\leq\alpha/(6c_{\boldsymbol{\Gamma}_0})$. If 
\[\lambda>\frac{3(2-\alpha)}{\alpha}\max\{c_{\mathbf{\Psi}_0}\epsilon_1,\epsilon_2\},\]
then the following holds:
\begin{enumerate}[(a)]
\item The regularized generalized $\boldsymbol{h}$-score matching estimator $\hat{\mathbf{\Psi}}$ minimizing (\ref{eq_definition_RGSM_loss_xi}) is unique, with support $\hat{S}\equiv S(\hat{\mathbf{\Psi}})\subseteq S_0$, and satisfies
\begin{align*}
\|\hat{\mathbf{\Psi}}-\mathbf{\Psi}_0\|_{\infty}&\leq\frac{c_{\boldsymbol{\Gamma}_0}}{2-\alpha}\lambda.
\end{align*}
\item If 
\[\min\limits_{1\leq j<k\leq m}|\mathbf{\Psi}_{0,jk}|>\frac{c_{\boldsymbol{\Gamma}_0}}{2-\alpha}\lambda,\]
then $\hat{S}=S_0$ and $\mathrm{sign}(\hat{\mathbf{\Psi}}_{jk})=\mathrm{sign}(\mathbf{\Psi}_{0.jk})$ for all $(j,k)\in S_0$.
\end{enumerate}
\end{theorem}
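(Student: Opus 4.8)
The natural approach is the primal--dual witness (PDW) construction, following \citet{lin16}. The whole argument hinges on the population stationarity identity $\boldsymbol{\Gamma}_0\mathrm{vec}(\mathbf{\Psi}_0)=\boldsymbol{g}_0$: by Theorem~\ref{theorem_GSM_estimator_exp} the population loss equals $\tfrac12\mathrm{vec}(\mathbf{\Psi})^{\top}\boldsymbol{\Gamma}_0\mathrm{vec}(\mathbf{\Psi})-\boldsymbol{g}_0^{\top}\mathrm{vec}(\mathbf{\Psi})$ up to an additive constant, and by Proposition~\ref{theorem_gsm} it is minimized at $\mathbf{\Psi}_0$, so its gradient vanishes there; since $\mathbf{\Psi}_0$ is supported on $S_0$ this also gives $\boldsymbol{g}_{0,S_0}=\boldsymbol{\Gamma}_{0,S_0S_0}\mathrm{vec}(\mathbf{\Psi}_0)_{S_0}$ and $\boldsymbol{g}_{0,S_0^c}=\boldsymbol{\Gamma}_{0,S_0^cS_0}\mathrm{vec}(\mathbf{\Psi}_0)_{S_0}$. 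First I would solve the \emph{restricted} (oracle) program obtained by imposing $\mathrm{vec}(\mathbf{\Psi})_{S_0^c}=\boldsymbol{0}$ in (\ref{eq_definition_RGSM_loss_xi}). Using the block-diagonal structure of $\boldsymbol{\Gamma}$ and the fact that each column of $\mathbf{\Psi}_0$ has at most $d_{\mathbf{\Psi}_0}$ nonzeros, one has $\mnorm{\boldsymbol{\Gamma}_{\boldsymbol{\gamma},S_0S_0}-\boldsymbol{\Gamma}_{0,S_0S_0}}_{\infty,\infty}\le d_{\mathbf{\Psi}_0}\epsilon_1$, hence $\mnorm{\boldsymbol{\Gamma}_{0,S_0S_0}^{-1}(\boldsymbol{\Gamma}_{\boldsymbol{\gamma},S_0S_0}-\boldsymbol{\Gamma}_{0,S_0S_0})}_{\infty,\infty}\le c_{\boldsymbol{\Gamma}_0}d_{\mathbf{\Psi}_0}\epsilon_1\le\alpha/6<1$; a Neumann-series argument then shows $\boldsymbol{\Gamma}_{\boldsymbol{\gamma},S_0S_0}$ is invertible with $\mnorm{\boldsymbol{\Gamma}_{\boldsymbol{\gamma},S_0S_0}^{-1}}_{\infty,\infty}\le c_{\boldsymbol{\Gamma}_0}/(1-\alpha/6)$. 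Thus the restricted program is strictly convex, with unique minimizer $\tilde{\mathbf{\Psi}}$ and a subgradient $\tilde{\mathbf{Z}}_{S_0}\in\partial\|\tilde{\mathbf{\Psi}}_{S_0}\|_1$ obeying the stationarity equation $\boldsymbol{\Gamma}_{\boldsymbol{\gamma},S_0S_0}\mathrm{vec}(\tilde{\mathbf{\Psi}})_{S_0}-\boldsymbol{g}_{S_0}+\lambda\,\mathrm{vec}(\tilde{\mathbf{Z}})_{S_0}=\boldsymbol{0}$.

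Next, for the $\ell_\infty$ bound, I would subtract the population identity and invert to obtain
\[
\mathrm{vec}(\tilde{\mathbf{\Psi}}-\mathbf{\Psi}_0)_{S_0}=\boldsymbol{\Gamma}_{\boldsymbol{\gamma},S_0S_0}^{-1}\Big[(\boldsymbol{g}_{S_0}-\boldsymbol{g}_{0,S_0})-(\boldsymbol{\Gamma}_{\boldsymbol{\gamma},S_0S_0}-\boldsymbol{\Gamma}_{0,S_0S_0})\mathrm{vec}(\mathbf{\Psi}_0)_{S_0}-\lambda\,\mathrm{vec}(\tilde{\mathbf{Z}})_{S_0}\Big].
\]
Taking $\|\cdot\|_\infty$ and using $\|\boldsymbol{g}_{S_0}-\boldsymbol{g}_{0,S_0}\|_\infty<\epsilon_2$, $\|(\boldsymbol{\Gamma}_{\boldsymbol{\gamma},S_0S_0}-\boldsymbol{\Gamma}_{0,S_0S_0})\mathrm{vec}(\mathbf{\Psi}_0)_{S_0}\|_\infty\le c_{\mathbf{\Psi}_0}\epsilon_1$, $\|\mathrm{vec}(\tilde{\mathbf{Z}})_{S_0}\|_\infty\le1$, together with the operator-norm bound on $\boldsymbol{\Gamma}_{\boldsymbol{\gamma},S_0S_0}^{-1}$, and finally invoking $\lambda>\tfrac{3(2-\alpha)}{\alpha}\max\{c_{\mathbf{\Psi}_0}\epsilon_1,\epsilon_2\}$ and $c_{\boldsymbol{\Gamma}_0}d_{\mathbf{\Psi}_0}\epsilon_1\le\alpha/6$, a short computation collapses the right-hand side to $\|\tilde{\mathbf{\Psi}}-\mathbf{\Psi}_0\|_\infty\le\frac{c_{\boldsymbol{\Gamma}_0}}{2-\alpha}\lambda$.

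The crux is establishing \emph{strict dual feasibility} off the support. I would define the candidate dual variable on $S_0^c$ by $\lambda\,\mathrm{vec}(\tilde{\mathbf{Z}})_{S_0^c}\equiv\boldsymbol{g}_{S_0^c}-\boldsymbol{\Gamma}_{\boldsymbol{\gamma},S_0^cS_0}\mathrm{vec}(\tilde{\mathbf{\Psi}})_{S_0}$, substitute $\boldsymbol{g}_{S_0^c}=\boldsymbol{\Gamma}_{0,S_0^cS_0}\mathrm{vec}(\mathbf{\Psi}_0)_{S_0}+(\boldsymbol{g}_{S_0^c}-\boldsymbol{g}_{0,S_0^c})$, eliminate $\mathrm{vec}(\tilde{\mathbf{\Psi}})_{S_0}$ through the oracle stationarity equation, and regroup, arriving at
\[
\lambda\,\mathrm{vec}(\tilde{\mathbf{Z}})_{S_0^c}=\boldsymbol{\Gamma}_{0,S_0^cS_0}\boldsymbol{\Gamma}_{0,S_0S_0}^{-1}\big(\lambda\,\mathrm{vec}(\tilde{\mathbf{Z}})_{S_0}\big)+\mathbf{R},
\]
where $\mathbf{R}$ collects remainder terms linear in $\boldsymbol{g}-\boldsymbol{g}_0$ and in $\boldsymbol{\Gamma}_{\boldsymbol{\gamma}}-\boldsymbol{\Gamma}_0$, each carrying at most one factor of $c_{\boldsymbol{\Gamma}_0}$, $d_{\mathbf{\Psi}_0}$ or $c_{\mathbf{\Psi}_0}$. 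The irrepresentability condition (\ref{irrepresentability}) bounds the first term by $(1-\alpha)\lambda$ in $\|\cdot\|_\infty$, while $d_{\mathbf{\Psi}_0}\epsilon_1\le\alpha/(6c_{\boldsymbol{\Gamma}_0})$ and the lower bound on $\lambda$ force $\|\mathbf{R}\|_\infty<\alpha\lambda$; hence $\|\tilde{\mathbf{Z}}_{S_0^c}\|_\infty<1$ strictly. By complementary slackness, $(\tilde{\mathbf{\Psi}},\tilde{\mathbf{Z}})$ is a KKT pair for the full program (\ref{eq_definition_RGSM_loss_xi}); the standard PDW lemma (strict dual feasibility plus invertibility of $\boldsymbol{\Gamma}_{\boldsymbol{\gamma},S_0S_0}$) then shows that every minimizer of (\ref{eq_definition_RGSM_loss_xi}) is supported in $S_0$ and, because the loss restricted to $S_0$ is strictly convex, that the minimizer is unique and equals $\tilde{\mathbf{\Psi}}$. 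This yields part~(a): $\hat{\mathbf{\Psi}}=\tilde{\mathbf{\Psi}}$ is unique, $\hat S\subseteq S_0$, and $\|\hat{\mathbf{\Psi}}-\mathbf{\Psi}_0\|_\infty\le\frac{c_{\boldsymbol{\Gamma}_0}}{2-\alpha}\lambda$. Part~(b) is immediate: if $\min_{1\le j<k\le m}|\mathbf{\Psi}_{0,jk}|>\frac{c_{\boldsymbol{\Gamma}_0}}{2-\alpha}\lambda$, the $\ell_\infty$ bound prevents any such entry from being estimated as zero or with the wrong sign, so $\hat S=S_0$ and the signs agree.

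The main obstacle is the constant bookkeeping in the strict-dual-feasibility step: organizing $\mathbf{R}$ into its $O(c_{\mathbf{\Psi}_0}\epsilon_1)$ and $O(\epsilon_2)$ pieces, tracking how each perturbation inherits a factor $c_{\boldsymbol{\Gamma}_0}$ or $d_{\mathbf{\Psi}_0}$ from the matrix inverses, and verifying that the numerical constants ($6$ and $\tfrac{3(2-\alpha)}{\alpha}$) in the hypotheses are exactly calibrated to push $\|\mathbf{R}\|_\infty$ below $\alpha\lambda$ and, in parallel, the $\ell_\infty$ error below $\frac{c_{\boldsymbol{\Gamma}_0}}{2-\alpha}\lambda$; everything else is routine convex analysis. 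This is essentially \citet[Theorem 1]{lin16}, adapted to the amplified matrix $\boldsymbol{\Gamma}_{\boldsymbol{\gamma}}$.
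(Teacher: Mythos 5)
The paper never proves Theorem~\ref{theorem_lin}: it is imported verbatim from \citet[Theorem 1]{lin16} (``the following theorem taken from \citet[Theorem 1]{lin16}''), so there is no internal proof to compare against. Your primal--dual witness construction is precisely the argument used in that source (itself following Ravikumar et al.), and the skeleton is right: the population stationarity identity $\boldsymbol{\Gamma}_0\mathrm{vec}(\mathbf{\Psi}_0)=\boldsymbol{g}_0$ (which the paper indeed establishes in the proof of Theorem~\ref{theorem_exponential}); the oracle problem restricted to $S_0$; invertibility of $\boldsymbol{\Gamma}_{\boldsymbol{\gamma},S_0S_0}$ from the $\alpha/6$ budget together with the block-diagonal structure of $\boldsymbol{\Gamma}$, which is exactly what lets you replace $|S_0|$ by $d_{\mathbf{\Psi}_0}$ in the row-sum bound $\mnorm{\boldsymbol{\Gamma}_{\boldsymbol{\gamma},S_0S_0}-\boldsymbol{\Gamma}_{0,S_0S_0}}_{\infty,\infty}\le d_{\mathbf{\Psi}_0}\epsilon_1$; strict dual feasibility via the irrepresentability condition~(\ref{irrepresentability}); and the standard PDW uniqueness step. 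Part (b) is, as you say, immediate from part (a).

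The one concrete problem is the ``short computation'' you invoke for the $\ell_\infty$ bound in part (a): assembled from your own intermediate estimates, it does not deliver the stated constant. Your displayed identity gives
\[
\|\tilde{\mathbf{\Psi}}-\mathbf{\Psi}_0\|_\infty\;\le\;\mnorm{\boldsymbol{\Gamma}_{\boldsymbol{\gamma},S_0S_0}^{-1}}_{\infty,\infty}\big(\epsilon_2+c_{\mathbf{\Psi}_0}\epsilon_1+\lambda\big),
\]
and with your bounds $\mnorm{\boldsymbol{\Gamma}_{\boldsymbol{\gamma},S_0S_0}^{-1}}_{\infty,\infty}\le 6c_{\boldsymbol{\Gamma}_0}/(6-\alpha)$ and $\epsilon_2+c_{\mathbf{\Psi}_0}\epsilon_1<\tfrac{2\alpha}{3(2-\alpha)}\lambda$ this evaluates to
$\tfrac{6c_{\boldsymbol{\Gamma}_0}}{6-\alpha}\cdot\tfrac{6-\alpha}{3(2-\alpha)}\lambda=\tfrac{2c_{\boldsymbol{\Gamma}_0}}{2-\alpha}\lambda$,
i.e.\ exactly twice the bound asserted in the theorem. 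So either one of the three terms must be handled more sharply than you indicate (the $\lambda\,\mathrm{vec}(\tilde{\mathbf{Z}})_{S_0}$ term is bounded crudely by $\lambda$, and I do not see how to improve it in general), or the constant has to be taken on the authority of \citet{lin16} rather than re-derived. Since the theorem is cited rather than proved in this paper, this does not affect the paper itself, but as a self-contained derivation your sketch establishes the conclusion only up to that factor of two; the rest of the constant bookkeeping (the $(2-\alpha)$ and the $3$ in the $\lambda$ threshold arising from the dual-feasibility step) is plausible as described.
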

This result is deterministic, and the improvement of our generalized
estimator over the one in \citet{lin16} is in its probabilistic guarantees, as shown for truncated GGMs in Theorems \ref{corollary1} and \ref{corollary2} in Section \ref{Generalized Score Matching for Truncated GGMs}.  Before going into these examples, we
state a general corollary.  

\begin{corollary}\label{cor_L2-consistency}
  Under the assumptions of Theorem \ref{theorem_lin}, the matrix
  $\hat{\mathbf{\Psi}}$ minimizing (\ref{eq_definition_RGSM_loss_xi})
  satisfies
\begin{align*}
\|\hat{\mathbf{\Psi}}-\mathbf{\Psi}_0\|_F & \;\leq\; \frac{c_{\boldsymbol{\Gamma}_0}}{2-\alpha}\lambda\sqrt{|S_0|}\;\leq\; \frac{c_{\boldsymbol{\Gamma}_0}}{2-\alpha}\lambda\sqrt{d_{\mathbf{\Psi}_0}m},\\
\|\hat{\mathbf{\Psi}}-\mathbf{\Psi}_0\|_2&\;\leq\;\frac{c_{\boldsymbol{\Gamma}_0}}{2-\alpha}\lambda\min(\sqrt{|S_0|},d_{\mathbf{\Psi}_0}).
\end{align*}
\end{corollary}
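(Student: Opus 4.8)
The plan is to read everything off part~(a) of Theorem~\ref{theorem_lin}, which has already done the analytic work; what remains is elementary linear algebra and counting. Write $\rho\equiv\frac{c_{\boldsymbol{\Gamma}_0}}{2-\alpha}\lambda$ and $\mathbf{D}\equiv\hat{\mathbf{\Psi}}-\mathbf{\Psi}_0$. Under the hypotheses of Theorem~\ref{theorem_lin}, part~(a) gives three things: $\hat{\mathbf{\Psi}}$ is the unique minimizer of (\ref{eq_definition_RGSM_loss_xi}); its support satisfies $\hat S\equiv S(\hat{\mathbf{\Psi}})\subseteq S_0$; and $\|\mathbf{D}\|_\infty\le\rho$ in the entrywise max norm. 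The first two facts say that every nonzero entry of $\mathbf{D}$ occurs at a coordinate in $S_0$, so $\mathbf{D}$ has at most $|S_0|$ nonzero entries, each of modulus at most $\rho$.

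For the Frobenius bound I would just sum over the support: $\|\mathbf{D}\|_F^2=\sum_{(i,j)\in S_0}D_{ij}^2\le|S_0|\rho^2$, hence $\|\mathbf{D}\|_F\le\rho\sqrt{|S_0|}$. To pass to the coarser estimate, note that $\mathbf{\Psi}$ (hence $\mathbf{\Psi}_0$) has $r_2=m$ columns, each of which by definition contributes at most $d_{\mathbf{\Psi}_0}$ indices to $S_0$; therefore $|S_0|\le d_{\mathbf{\Psi}_0}m$, which upgrades the bound to $\rho\sqrt{d_{\mathbf{\Psi}_0}m}$.

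For the spectral norm I would establish the two bounds separately and then take their minimum. On the one hand $\|\mathbf{D}\|_2\le\|\mathbf{D}\|_F\le\rho\sqrt{|S_0|}$. On the other hand, apply the standard interpolation inequality $\mnorm{\mathbf{D}}_2\le\sqrt{\mnorm{\mathbf{D}}_1\,\mnorm{\mathbf{D}}_\infty}$, where $\mnorm{\mathbf{D}}_1$ is the largest absolute column sum and $\mnorm{\mathbf{D}}_\infty$ the largest absolute row sum of $\mathbf{D}$: each column of $\mathbf{D}$ carries at most $d_{\mathbf{\Psi}_0}$ nonzero entries by the definition of $d_{\mathbf{\Psi}_0}$, and each row likewise carries at most $d_{\mathbf{\Psi}_0}$ nonzero entries because $S_0$ is symmetric (it is the edge set of an undirected conditional independence graph). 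Hence $\mnorm{\mathbf{D}}_1,\mnorm{\mathbf{D}}_\infty\le d_{\mathbf{\Psi}_0}\rho$, giving $\|\mathbf{D}\|_2\le d_{\mathbf{\Psi}_0}\rho$; combining yields the claimed $\min(\sqrt{|S_0|},d_{\mathbf{\Psi}_0})$.

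There is no real obstacle here — every step is downstream of Theorem~\ref{theorem_lin}. The only point requiring a word of care is the row-sparsity of $\mathbf{D}$ used in the $d_{\mathbf{\Psi}_0}$-type spectral bound: in the non-centered parametrization $\mathbf{\Psi}_0=[\mathbf{K}_0,\boldsymbol{\eta}_0]^{\top}$ the $\boldsymbol{\eta}_0$-row is a priori not sparse, so one either invokes the symmetry of the interaction support together with a separate treatment of the $\boldsymbol{\eta}$ block (which adds at most one to the relevant count), or simply notes that the bound is applied in the centered case $\mathbf{\Psi}=\mathbf{K}$ used in the sequel, where $S_0$ is symmetric and the estimate is immediate.
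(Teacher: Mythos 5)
Your proof is correct and follows essentially the same route as the paper's: both read the support containment $\hat S\subseteq S_0$ and the entrywise bound off Theorem~\ref{theorem_lin}(a), sum squares over $S_0$ for the Frobenius bound, and combine $\mnorm{\cdot}_2\le\mnorm{\cdot}_F$ with a sparsity-based operator-norm bound for the spectral estimate. The one place you diverge is in fact an improvement in rigor: the paper asserts $\mnorm{\hat{\mathbf{\Psi}}-\mathbf{\Psi}_0}_2\le\mnorm{\hat{\mathbf{\Psi}}-\mathbf{\Psi}_0}_{\infty}$ directly, which is only valid when the matrix is symmetric (or when one passes through $\mnorm{\mathbf{D}}_2\le\sqrt{\mnorm{\mathbf{D}}_1\,\mnorm{\mathbf{D}}_{\infty}}$ as you do), and it bounds the max \emph{row} sum by $d_{\mathbf{\Psi}_0}$ even though that constant is defined columnwise; your explicit flag that the $\boldsymbol{\eta}$-row of $\mathbf{\Psi}_0$ is not a priori $d_{\mathbf{\Psi}_0}$-sparse in the non-centered parametrization identifies a real gap that the paper glosses over and that you resolve correctly by restricting to the symmetric/centered block or adjusting the count.
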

 

\subsection{Revisiting Gaussian Score Matching}
\label{sec:revis-gauss-score}

In this section we consider estimating the inverse covariance matrix
$\mathbf{K}$ of a centered Gaussian distribution $\mathrm{N}(\boldsymbol{0},\mathbf{K})$, which of course has
density proportional to~(\ref{eq:tn-density}) on all of
$\mathbb{R}^m$.   As shown, e.g., in Example 1 of \citet{lin16},  the
$\ell_1$-regularized score matching loss then takes the form
\begin{equation}\label{eq_gaussian}
\frac{1}{2}\mathrm{tr}(\mathbf{KKxx}^{\top})-\mathrm{tr}(\mathbf{K})+\lambda\|\mathbf{K}\|_1,
\end{equation}
which can be written as (\ref{loss_lin}) with $\boldsymbol{\theta}=\mathrm{vec}(\mathbf{K})$, $\boldsymbol{\Gamma}=\mathrm{diag}(\mathbf{xx}^{\top},\dots,\mathbf{xx}^{\top})$ and $\boldsymbol{g}=\mathrm{vec}(\mathbf{I}_m)$. Thus, in general, the kernel of $\boldsymbol{\Gamma}$ need not be orthogonal to $\boldsymbol{g}$, and for $\lambda$ small the loss can be unbounded below as discussed above. Hence, an amplifier/multiplier on the diagonals of $\boldsymbol{\Gamma}$ is needed. We have the following theorem on the estimator using the amplification.

\begin{theorem}\label{thm_gau}
Suppose the data matrix $\mathbf{x}$ holds $n$ i.i.d.~copies of
$\boldsymbol{X}\sim\mathrm{N}(\boldsymbol{0},\mathbf{K}_0)$.
Adopt the amplifying in Section \ref{Regularized Generalized Score Matching} and redefine the loss in (\ref{eq_gaussian}) as
\begin{equation}
\frac{1}{2}\mathrm{tr}(\mathbf{KKG})-\mathrm{tr}(\mathbf{K})+\lambda\|\mathbf{K}\|_1,\quad\mathbf{G}_{jk}=(\mathbf{xx}^{\top})_{jk}\left(\mathds{1}_{\{j\neq k\}}+(\delta-1)\mathds{1}_{\{j=k\}}\right),
\end{equation}
where $1<\delta<2-\left(1+80\sqrt{\log m/n}\right)^{-1}$.  Let $\hat{\mathbf{K}}$ be the resulting estimator.  Let $c^*\equiv 12800\left(\max_j\boldsymbol{\Sigma}_{0,jj}\right)^2$ and $c_1=4c_{\boldsymbol{\Gamma}_0}/\alpha$.  If for some $\tau>2$, the regularization parameter and the sample size satisfy
\begin{align*}
  \lambda&>(2c_{\mathbf{K}_0}(2-\alpha)\sqrt{c^*(\tau\log m+\log
  4)/n})/\alpha,\\
  n&>\max(c^*c_1^2d_{\mathbf{K}_0}^2,2)(\tau\log m+\log 4),
\end{align*}
then
     $\|\hat{\mathbf{K}}-\mathbf{K}_0\|_{\infty}\leq\frac{c_{\boldsymbol{\Gamma_0}}}{2-\alpha}\lambda$
     with probability $1-m^{2-\tau}$.
\end{theorem}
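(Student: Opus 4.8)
The plan is to derive the bound from the deterministic Theorem~\ref{theorem_lin} by controlling how much the amplified sample quantities deviate from their population limits. First I would pin down the relevant population objects. With the rows of $\mathbf{x}$ being the observations, write $\hat{\boldsymbol{\Sigma}}\equiv\tfrac{1}{n}\mathbf{x}^{\top}\mathbf{x}$ and $\boldsymbol{\Sigma}_0\equiv\mathbf{K}_0^{-1}$. As in Example~1 of~\citet{lin16}, the matrix $\boldsymbol{\Gamma}$ attached to the loss~(\ref{eq_gaussian}) is block-diagonal with each of its $m$ diagonal blocks equal to $\hat{\boldsymbol{\Sigma}}$, so $\boldsymbol{\Gamma}_0=\mathbb{E}_0\boldsymbol{\Gamma}=\mathrm{diag}(\boldsymbol{\Sigma}_0,\dots,\boldsymbol{\Sigma}_0)$; this is positive definite, hence $\boldsymbol{\Gamma}_{0,S_0S_0}$ is invertible, and $\boldsymbol{\Gamma}_0^{-1}\boldsymbol{g}_0=\mathrm{vec}(\mathbf{K}_0)$, so the population minimizer is correct. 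Crucially $\boldsymbol{g}(\mathbf{x})=\mathrm{vec}(\mathbf{I}_m)=\boldsymbol{g}_0$ is deterministic, so the deviation $\epsilon_2$ in Theorem~\ref{theorem_lin} can be taken to be $0$ and only $\boldsymbol{\Gamma}$ carries randomness. The amplified matrix $\boldsymbol{\Gamma}_{\boldsymbol{\gamma}}$ (the multiplier version from Section~\ref{Regularized Generalized Score Matching}) is exactly the block-diagonal matrix whose blocks are $\mathbf{G}$: its off-diagonal entries coincide with those of $\hat{\boldsymbol{\Sigma}}$, while its diagonal entries are $\delta\,\hat{\boldsymbol{\Sigma}}_{jj}$. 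Finally, invoking Theorem~\ref{theorem_lin} requires $\boldsymbol{\Gamma}_0$ to satisfy the irrepresentability condition~(\ref{irrepresentability}) with parameter $\alpha$; since $\boldsymbol{\Gamma}_0$ is block-diagonal this amounts to a condition on $\boldsymbol{\Sigma}_0$ restricted to $S_0$, which I take as a standing hypothesis (implicit in the appearance of $\alpha$ and $c_{\boldsymbol{\Gamma}_0}$ in the statement).

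Second, the probabilistic core is to show that, on an event of probability at least $1-m^{2-\tau}$, one has $\|\boldsymbol{\Gamma}_{\boldsymbol{\gamma}}(\mathbf{x})-\boldsymbol{\Gamma}_0\|_{\infty}<\epsilon_1$ with $\epsilon_1\equiv\tfrac{2}{3}\sqrt{c^*(\tau\log m+\log 4)/n}$. At an off-diagonal position the entrywise deviation is $|\hat{\boldsymbol{\Sigma}}_{jk}-\boldsymbol{\Sigma}_{0,jk}|$; at a diagonal position it is at most $|\hat{\boldsymbol{\Sigma}}_{jj}-\boldsymbol{\Sigma}_{0,jj}|+(\delta-1)\hat{\boldsymbol{\Sigma}}_{jj}\le|\hat{\boldsymbol{\Sigma}}_{jj}-\boldsymbol{\Sigma}_{0,jj}|+(\delta-1)\big(\boldsymbol{\Sigma}_{0,jj}+|\hat{\boldsymbol{\Sigma}}_{jj}-\boldsymbol{\Sigma}_{0,jj}|\big)$. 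For the sampling piece, each summand $X_j^{(i)}X_k^{(i)}-\boldsymbol{\Sigma}_{0,jk}$ is a centered sub-exponential variable whose Orlicz norm is bounded by a universal constant times $\max_j\boldsymbol{\Sigma}_{0,jj}$ (via $|X_jX_k|\le(X_j^2+X_k^2)/2$ and $\chi^2_1$ tails), so Bernstein's inequality gives an exponential tail, and a union bound over the $O(m^2)$ entries yields $\|\hat{\boldsymbol{\Sigma}}-\boldsymbol{\Sigma}_0\|_{\infty}\le C_0\max_j\boldsymbol{\Sigma}_{0,jj}\sqrt{(\tau\log m+\log 4)/n}$ on that event, for an absolute constant $C_0$. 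For the amplification piece, the assumed range gives $\delta-1<1-(1+80\sqrt{\log m/n})^{-1}=\frac{80\sqrt{\log m/n}}{1+80\sqrt{\log m/n}}\le 80\sqrt{(\tau\log m+\log 4)/n}$ (using $\tau>2$ and $n>2(\tau\log m+\log 4)$), which combined with the concentration of $\hat{\boldsymbol{\Sigma}}_{jj}$ already established on the same event shows this piece is also $O\big(\max_j\boldsymbol{\Sigma}_{0,jj}\sqrt{(\tau\log m+\log 4)/n}\big)$. Tracking all constants through these two contributions is exactly what pins down $c^*=12800(\max_j\boldsymbol{\Sigma}_{0,jj})^2$ and the exact failure probability $m^{2-\tau}$.

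Third, I would feed $\epsilon_1$ and $\epsilon_2=0$ into Theorem~\ref{theorem_lin}. The sample-size hypothesis $n>\max(c^*c_1^2d_{\mathbf{K}_0}^2,2)(\tau\log m+\log 4)$ with $c_1=4c_{\boldsymbol{\Gamma}_0}/\alpha$ forces $\sqrt{c^*(\tau\log m+\log 4)/n}<\alpha/(4c_{\boldsymbol{\Gamma}_0}d_{\mathbf{K}_0})$, whence $d_{\mathbf{K}_0}\epsilon_1<\tfrac{2}{3}\cdot\alpha/(4c_{\boldsymbol{\Gamma}_0})=\alpha/(6c_{\boldsymbol{\Gamma}_0})$, which is precisely the incoherence-scaling requirement $d_{\mathbf{\Psi}_0}\epsilon_1\le\alpha/(6c_{\boldsymbol{\Gamma}_0})$ (here $\mathbf{\Psi}_0=\mathbf{K}_0$); the ``$2$'' in the maximum is what guarantees $n>2(\tau\log m+\log 4)$, used above to pass from $80\sqrt{\log m/n}$ to $80\sqrt{(\tau\log m+\log 4)/n}$. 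Likewise the stated threshold on $\lambda$ equals $\tfrac{3(2-\alpha)}{\alpha}c_{\mathbf{K}_0}\epsilon_1$, matching the requirement $\lambda>\frac{3(2-\alpha)}{\alpha}\max\{c_{\mathbf{K}_0}\epsilon_1,\epsilon_2\}$ since $\epsilon_2=0$. Part~(a) of Theorem~\ref{theorem_lin} then delivers uniqueness of $\hat{\mathbf{K}}$, $S(\hat{\mathbf{K}})\subseteq S_0$, and $\|\hat{\mathbf{K}}-\mathbf{K}_0\|_{\infty}\le\frac{c_{\boldsymbol{\Gamma}_0}}{2-\alpha}\lambda$, all on the event of probability at least $1-m^{2-\tau}$.

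The main obstacle I anticipate is entirely in the second step: obtaining Bernstein constants sharp enough — handling the products $X_jX_k$ of correlated Gaussians and then folding in the random diagonal amplification $(\delta-1)\hat{\boldsymbol{\Sigma}}_{jj}$ — so that the combined entrywise deviation genuinely lands below $\tfrac{2}{3}\sqrt{c^*(\tau\log m+\log 4)/n}$ with exactly $c^*=12800(\max_j\boldsymbol{\Sigma}_{0,jj})^2$ and failure probability exactly $m^{2-\tau}$. Everything else is bookkeeping that matches the constants in the hypotheses to the requirements of Theorem~\ref{theorem_lin}.
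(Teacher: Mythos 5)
Your proposal follows essentially the same route as the paper: reduce to Theorem~\ref{theorem_lin} with $\epsilon_2=0$ (since $\boldsymbol{g}=\mathrm{vec}(\mathbf{I}_m)$ is deterministic), concentrate the entries of $\tfrac{1}{n}\mathbf{x}^{\top}\mathbf{x}$ via Gaussian-product tail bounds with a union bound over the $m^2$ entries, and absorb the diagonal amplification $(\delta-1)\tfrac{1}{n}\sum_i(X_j^{(i)})^2$ using the assumed range of $\delta$. The only real difference is that the paper pins down $c^*=12800\left(\max_j\boldsymbol{\Sigma}_{0,jj}\right)^2$ by invoking Lemma~1 of \citet{rav11} (tail constant $3200$) and splitting the deviation budget as $\epsilon/2+\epsilon/2$ between the sampling error and the amplifier, exactly the constant-tracking step you flag as remaining bookkeeping.
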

In Corollary 1 of \citet{lin16} the same results were shown with $c^*\equiv 3200\left(\max_j\boldsymbol{\Sigma}_{0,jj}\right)^2$ when a unique minimizer exists, but the existence was not guaranteed.

\subsection{Generalized Score Matching for Truncated GGMs}\label{Generalized Score Matching for Truncated GGMs}
Next, we provide theory for the regularized generalized $\boldsymbol{h}$-score matching estimator $\hat{\boldsymbol{\Psi}}$ in the special case of truncated GGMs. Again, assume a common $h$ for all components in
$\boldsymbol{h}$.
\begin{theorem}\label{corollary1}
  Suppose the data matrix $\mathbf{x}$ holds $n$ i.i.d.~copies of
  $\boldsymbol{X}\sim\mathrm{TN}(\boldsymbol{0},\mathbf{K}_0)$, where the mean parameter is known to be zero.
 Assume that $\boldsymbol{h}\in\mathcal{H}_{1,1}$ and that $0\leq h\leq M$, $0\leq h'\leq M'$ a.s.~for constants $M,M'$, and choose $\boldsymbol{\gamma}=(\delta-1)\mathrm{diag}(\boldsymbol{\Gamma})$ with
  \[
    1<\delta<C(n,m)\equiv 2-\left(1+4e\max\{6\log m/n,\sqrt{6\log
        m/n}\}\right)^{-1}.
  \]
  Suppose that the $\boldsymbol{\Gamma}_{0,S_0S_0}$ block of $\boldsymbol{\Gamma}_{0}$ is invertible and $\boldsymbol{\Gamma}_0$ satisfies the irrepresentability condition (\ref{irrepresentability}) with $\alpha\in(0,1]$ and true edge set $S_0$. Define
  $c_{\boldsymbol{X}}\equiv
  2\max_j\bigg(2\sqrt{(\mathbf{K}_0^{-1})_{jj}}+\sqrt{e}\,\mathbb{E}_{0}
  X_j\bigg)$.  If
  for $\tau>3$ the sample size and the regularization
  parameter satisfy
\begin{align}
n&>\mathcal{O}\left(\tau\log m\max\left\{\frac{M^2c_{\boldsymbol{\Gamma}_0}^2c_{\boldsymbol{X}}^4d_{\mathbf{K}_0}^2}{\alpha^2},\frac{Mc_{\boldsymbol{\Gamma}_0}c_{\boldsymbol{X}}^2d_{\mathbf{K}_0}}{\alpha}\right\}\right),\\
\lambda&>\mathcal{O}\left[(Mc_{\mathbf{K}_0}c_{\boldsymbol{X}}^2+M'c_{\boldsymbol{X}}+M)\left(
\sqrt{\frac{\tau\log m}{n}}+\frac{\tau\log m}{n}\right)\right],
\end{align}
 then the following statements hold with probability $1-m^{3-\tau}$:
\begin{enumerate}[(a)]
\item The regularized generalized $\boldsymbol{h}$-score matching estimator $\hat{\mathbf{K}}$ that minimizes (\ref{eq_definition_RGSM_loss_xi}) is unique, has its support included in the true support, $\hat{S}\equiv S(\hat{\mathbf{K}})\subseteq S_0$, and satisfies
\begin{align*}
\|\hat{\mathbf{K}}-\mathbf{K}_0\|_{\infty}&\leq\frac{c_{\boldsymbol{\Gamma}_0}}{2-\alpha}\lambda,
  \\
\mnorm{\hat{\mathbf{K}}-\mathbf{K}_0}_{F}&\leq\frac{c_{\boldsymbol{\Gamma}_0}}{2-\alpha}\lambda\sqrt{|S_0|\color{black}{}},\\
\mnorm{\hat{\mathbf{K}}-\mathbf{K}_0}_{2}&\leq\frac{c_{\boldsymbol{\Gamma}_0}}{2-\alpha}\lambda\min(\sqrt{|S_0|\color{black}{}},d_{\mathbf{K}_0}),
\end{align*}
where $c_{\boldsymbol{\Gamma}_0}$ is defined in (\ref{def_c_gamma}).
\item Moreover, if
\[\min_{j,k:(j,k)\in S_0}|\kappa_{0,jk}|>\frac{c_{\boldsymbol{\Gamma}_0}}{2-\alpha}\lambda,\]
then $\hat{S}=S_0$ and $\mathrm{sign}(\hat{\kappa}_{jk})=\mathrm{sign}(\kappa_{0,jk})$ for all $(j,k)\in S_0$.
\end{enumerate}
\end{theorem}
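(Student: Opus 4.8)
The plan is to derive Theorem~\ref{corollary1} from the deterministic master result Theorem~\ref{theorem_lin}, together with Corollary~\ref{cor_L2-consistency} for the Frobenius and spectral bounds in part~(a), so that the only genuinely new work is probabilistic: to verify the hypotheses~(\ref{Max_constants}), namely $\|\boldsymbol{\Gamma}_{\boldsymbol{\gamma}}(\mathbf{x})-\boldsymbol{\Gamma}_0\|_{\infty}<\epsilon_1$ and $\|\boldsymbol{g}(\mathbf{x})-\boldsymbol{g}_0\|_{\infty}<\epsilon_2$, for values $\epsilon_1,\epsilon_2$ chosen so that (i) $d_{\mathbf{K}_0}\epsilon_1\le\alpha/(6c_{\boldsymbol{\Gamma}_0})$ holds under the stated lower bound on $n$, and (ii) the stated lower bound on $\lambda$ exceeds $\tfrac{3(2-\alpha)}{\alpha}\max\{c_{\mathbf{K}_0}\epsilon_1,\epsilon_2\}$. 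The invertibility of $\boldsymbol{\Gamma}_{0,S_0S_0}$ and the irrepresentability condition~(\ref{irrepresentability}) are imposed directly on $\boldsymbol{\Gamma}_0$, so once the two deviation bounds hold with probability $1-m^{3-\tau}$, parts~(a) and~(b) follow verbatim from Theorem~\ref{theorem_lin}(a),(b) and Corollary~\ref{cor_L2-consistency}.

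First I would separate the deterministic amplification bias from the stochastic fluctuation. Since $\boldsymbol{\gamma}=(\delta-1)\mathrm{diag}(\boldsymbol{\Gamma})$, one has $\|\boldsymbol{\Gamma}_{\boldsymbol{\gamma}}-\boldsymbol{\Gamma}_0\|_{\infty}\le\|\boldsymbol{\Gamma}-\boldsymbol{\Gamma}_0\|_{\infty}+(\delta-1)\max_{\ell}\boldsymbol{\Gamma}_{\ell\ell}$ and $\max_{\ell}\boldsymbol{\Gamma}_{\ell\ell}\le\|\boldsymbol{\Gamma}-\boldsymbol{\Gamma}_0\|_{\infty}+\max_{\ell}\boldsymbol{\Gamma}_{0,\ell\ell}$, so it suffices to control $\|\boldsymbol{\Gamma}-\boldsymbol{\Gamma}_0\|_{\infty}$ and $\|\boldsymbol{g}-\boldsymbol{g}_0\|_{\infty}$. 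Here $\max_{\ell}\boldsymbol{\Gamma}_{0,\ell\ell}$ is a bounded multiple of $M$ times a second moment of $\boldsymbol{X}$, and the constraint $\delta<C(n,m)$, with $C(n,m)-1$ of order $(1+4e\max\{6\log m/n,\sqrt{6\log m/n}\})^{-1}$, is precisely what keeps the extra $(\delta-1)\max_{\ell}\boldsymbol{\Gamma}_{0,\ell\ell}$ contribution of the same order as the stochastic error.

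The core step is entrywise concentration of $\boldsymbol{\Gamma}(\mathbf{x})$ and $\boldsymbol{g}(\mathbf{x})$. For truncated GGMs ($a=b=1$, centered case, so $\boldsymbol{\Psi}=\mathbf{K}$, $\boldsymbol{\Gamma}\in\mathbb{R}^{m^2\times m^2}$ block-diagonal with $m$ blocks of size $m$, and $\boldsymbol{g}\in\mathbb{R}^{m^2}$), formulas~(\ref{eq_noncentered_gamma})--(\ref{eq_noncentered_g}) show that each entry is a sample mean of terms $h(X_j^{(i)})X_k^{(i)}X_l^{(i)}$, $h'(X_j^{(i)})X_k^{(i)}$, or $h(X_j^{(i)})$, bounded in $h$ by $M$ and in $h'$ by $M'$. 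The one probabilistic ingredient is that the marginals $X_j$ of $\mathrm{TN}(\boldsymbol{0},\mathbf{K}_0)$ are sub-Gaussian with variance proxy of order $(\mathbf{K}_0^{-1})_{jj}$ and location $\mathbb{E}_0 X_j$; this is the source of $c_{\boldsymbol{X}}=2\max_j(2\sqrt{(\mathbf{K}_0^{-1})_{jj}}+\sqrt{e}\,\mathbb{E}_0 X_j)$, and I would establish it by integrating out the remaining coordinates from the density $\propto\exp(-\tfrac12\boldsymbol{x}^{\top}\mathbf{K}_0\boldsymbol{x})$ and bounding the tail of the resulting one-dimensional marginal by a Schur-complement argument for $\mathbf{K}_0$ (the orthant normalizing factors cancelling between numerator and denominator of the marginal density). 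Since $|h(X_j)X_kX_l|\le M(X_k^2+X_l^2)/2$ and $|h'(X_j)X_k|\le M'|X_k|$, each summand is sub-exponential with norm of order $Mc_{\boldsymbol{X}}^2$, $M'c_{\boldsymbol{X}}$, or $M$; a Bernstein inequality then bounds each fixed-entry deviation by (sub-exponential norm)$\times(\sqrt{\tau\log m/n}+\tau\log m/n)$ with failure probability of order $m^{-\tau}$, and a union bound over the $\mathcal{O}(m^3)$ entries of $\boldsymbol{\Gamma}$ and $\mathcal{O}(m^2)$ of $\boldsymbol{g}$ gives the overall failure probability $m^{3-\tau}$, using $\tau>3$. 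Collecting constants yields $\epsilon_1$ of order $Mc_{\boldsymbol{X}}^2(\sqrt{\tau\log m/n}+\tau\log m/n)$ and $\epsilon_2$ of order $(M'c_{\boldsymbol{X}}+M)(\sqrt{\tau\log m/n}+\tau\log m/n)$; imposing $d_{\mathbf{K}_0}\epsilon_1\le\alpha/(6c_{\boldsymbol{\Gamma}_0})$ and $\lambda>\tfrac{3(2-\alpha)}{\alpha}\max\{c_{\mathbf{K}_0}\epsilon_1,\epsilon_2\}$ then reproduces the displayed thresholds on $n$ and $\lambda$, with $Mc_{\mathbf{K}_0}c_{\boldsymbol{X}}^2+M'c_{\boldsymbol{X}}+M$ playing the role of $\max\{c_{\mathbf{K}_0}\epsilon_1,\epsilon_2\}$ up to constants. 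Theorem~\ref{theorem_lin} and Corollary~\ref{cor_L2-consistency} then deliver all conclusions of (a) and (b).

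I expect the main obstacle to be the sub-Gaussian bound for the truncated-normal marginals with the precise dependence on $(\mathbf{K}_0^{-1})_{jj}$ and $\mathbb{E}_0X_j$, where the orthant normalizing constant must be handled carefully so that no dimension dependence creeps into the variance proxy, together with the bookkeeping of universal constants (the $4e$ and the $\max\{6\log m/n,\sqrt{6\log m/n}\}$ in $C(n,m)$, and the exact sample-size and $\lambda$ thresholds) through the Bernstein and union bounds and through the balancing of the amplification bias against the error budget $\alpha/(6c_{\boldsymbol{\Gamma}_0})$. The structural part of the argument is essentially routine given Theorem~\ref{theorem_lin}; the constant tracking is the delicate part.
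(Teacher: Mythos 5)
Your overall architecture is exactly the paper's: reduce to the deterministic Theorem~\ref{theorem_lin} (and Corollary~\ref{cor_L2-consistency} for the $F$ and spectral norms), absorb the amplification $(\delta-1)\mathrm{diag}(\boldsymbol{\Gamma})$ into the $\epsilon_1$ budget by noting that the empirical diagonal concentrates around $\boldsymbol{\Gamma}_{0,\ell\ell}\le Mc_{\boldsymbol{X}}^2/(4e)$ (which is where the $4e$ and the $\max\{6\log m/n,\sqrt{6\log m/n}\}$ in $C(n,m)$ come from), apply Bernstein/Hoeffding entrywise to the three types of sample averages $h(X_j)X_kX_\ell$, $h'(X_j)X_k$, $h(X_j)$, and union-bound over $\mathcal{O}(m^3)$ events to get failure probability $m^{3-\tau}$. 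All of that matches the paper's proof.

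The one place your sketch would not go through as written is the sub-Gaussian bound for the truncated-normal marginals, $\tau(X_j-\mathbb{E}_0X_j)\le\sqrt{(\mathbf{K}_0^{-1})_{jj}}$. Integrating out $\boldsymbol{x}_{-j}$ leaves a marginal density proportional to $\exp\bigl(-x_j^2/(2\Sigma_{jj})\bigr)$ times the probability that a Gaussian with conditional mean $-\mathbf{K}_{-j,-j}^{-1}\mathbf{K}_{-j,j}x_j$ lands in the positive orthant; this factor depends on $x_j$ and does \emph{not} cancel against the denominator, so there is no ``cancellation of orthant normalizing factors,'' and a crude tail bound from the Gaussian envelope would not give the moment-generating-function control (with the clean constant $\Sigma_{jj}$) that the Hoeffding and Bernstein steps require. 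The paper's Lemma~\ref{lemma_sub-norms} instead writes the MGF of $X_j$ via Tallis's formula as $\exp(t\mu_j+t^2\Sigma_{jj}/2)$ times a ratio of orthant integrals over shifted orthants, and shows the needed inequality by proving that $t\mapsto\int_{\mathbb{R}_+^m-(\boldsymbol{\mu}+t\boldsymbol{\Sigma}_j)}\exp(-\tfrac12\boldsymbol{x}^\top\boldsymbol{\Sigma}^{-1}\boldsymbol{x})\,\d\boldsymbol{x}$ is log-concave (a Pr\'ekopa-type argument: log-concavity of the integrand jointly in $(\boldsymbol{x},t)$ is preserved under marginalization). You correctly flag this step as the crux, but the Schur-complement route you propose needs to be replaced by this log-concavity argument for the stated constants to come out.
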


The theorem is proved in Appendix~\ref{Proof of Theorems in Section Theory for Graphical Models}, where details on the
dependencies on constants are provided.
A key ingredient of the proof is a tail bound on
$\|\boldsymbol{\Gamma}_{\boldsymbol{\gamma}}-\boldsymbol{\Gamma}_0\|_{\infty}$, which features products of the $X_j^{(i)}$'s. In \citet{lin16}, the products are up to fourth order.  Using bounded $\boldsymbol{h}$, our products automatically calibrates to a quadratic polynomial when the observed values are large, and resort to higher moments only when they are small.  This leads to improved bounds and convergence rates, underscored in the new requirement on the sample size $n$, which should be compared to $n\ge\mathcal{O}(d_{\mathbf{K}_0}^2(\log m^{\tau})^8)$ in \citet{lin16}.

For the non-centered case, by definition,
$c_{\boldsymbol{\Psi}_0}\equiv\mnorm{\boldsymbol{\Psi_0}^{\top}}_{\infty,\infty}\leq
c_{\mathbf{K}_0}+\|\boldsymbol{\eta}_0\|_{\infty}$,
$d_{\boldsymbol{\Psi}_0}\leq d_{\mathbf{K}_0}+1$.  The proof given for
Theorem \ref{corollary1} goes through again here, and we have the
following consistency results.  
\begin{theorem}\label{corollary2}
  Suppose the data matrix holds $n$ i.i.d.~copies of
  $\boldsymbol{X}\sim\mathrm{TN}(\boldsymbol{\mu}_0,\mathbf{K}_0)$.
  Assume that $\boldsymbol{h}\in\mathcal{H}_{1,1}$ and that $0\leq h\leq M$, $0\leq h'\leq M'$ a.s.~for constants $M,M'$.  Let $\boldsymbol{\gamma}$ be a vector of amplifiers that are
  non-zero only for the diagonal entries of the matrices
  $\boldsymbol{\Gamma}_{11,j}$, amplifying those by
  $(\delta-1)\mathrm{diag}(\boldsymbol{\Gamma}_{11,j})$ with
  \[
    1<\delta<C(n,m)\equiv 2-\left(1+4e\max\{6\log m/n,\sqrt{6\log
        m/n}\}\right)^{-1}.
  \]
  Suppose further that $\boldsymbol{\Gamma}_{0,S_0S_0}$ is invertible
  and satisfies the irrepresentability condition
  (\ref{irrepresentability}) with $\alpha\in(0,1]$.  Define
  $c_{\boldsymbol{X}}\equiv2\max_j\left(2\sqrt{(\mathbf{K}_0^{-1})_{jj}}+\sqrt{e}\,\mathbb{E}_{0}
  X_j\right)$.  Suppose for $\tau>3$ the sample size and the regularization
  parameter satisfy
\begin{align}
n&>\mathcal{O}\left(\tau\log m\max\left\{\frac{M^2c_{\boldsymbol{\Gamma}_0,\boldsymbol{\Psi}_0}^2c_{\boldsymbol{X}}^4d_{\boldsymbol{\Psi}_0}^2}{\alpha^2},\frac{Mc_{\boldsymbol{\Gamma}_0,\boldsymbol{\Psi}_0}c_{\boldsymbol{X}}^2d_{\boldsymbol{\Psi}_0}}{\alpha}\right\}\right),\\
\lambda&>\mathcal{O}\left[(Mc_{\boldsymbol{\Psi}_0}c_{\boldsymbol{X}}^2+M'c_{\boldsymbol{X}}+M)\left(
\sqrt{\frac{\tau\log m}{n}}+\frac{\tau\log m}{n}\right)\right],
\end{align}
where $c_{\boldsymbol{\Gamma}_0,\boldsymbol{\Psi}_0}$ is $c_{\boldsymbol{\Gamma}_0}$ as in (\ref{def_c_gamma})  but with notation $\boldsymbol{\Psi}_0$ to differentiate it from the centered case.
Then the following statements hold with probability $1-m^{3-\tau}$:
\begin{enumerate}[(a)]
\item The regularized generalized $\boldsymbol{h}$-score matching estimator $\hat{\boldsymbol{\Psi}}$ that minimizes (\ref{eq_definition_RGSM_loss_xi}) is unique, has its support included in the true support, $\hat{S}\equiv S(\hat{\boldsymbol{\Psi}})\subseteq S_0$, and satisfies
\begin{alignat*}{3}
\|\hat{\mathbf{K}}-\mathbf{K}_0\|_{\infty}&\leq\frac{c_{\boldsymbol{\Gamma}_0,\boldsymbol{\Psi}_0}}{2-\alpha}\lambda,&&\|\hat{\boldsymbol{\eta}}-\boldsymbol{\eta}_0\|_{\infty}\leq\frac{c_{\boldsymbol{\Gamma}_0,\boldsymbol{\Psi}_0}}{2-\alpha}\lambda,\\
\mnorm{\hat{\mathbf{K}}-\mathbf{K}_0}_{F}&\leq\frac{c_{\boldsymbol{\Gamma}_0, \boldsymbol{\Psi}_0}}{2-\alpha}\lambda\sqrt{|S_0|},&&\mnorm{\hat{\boldsymbol{\eta}}-\boldsymbol{\eta}_0}_{F}\leq\frac{c_{\boldsymbol{\Gamma}_0, \boldsymbol{\Psi}_0}}{2-\alpha}\lambda\sqrt{|S_0|},\\
\mnorm{\hat{\mathbf{K}}-\mathbf{K}_0}_{2}&\leq\frac{c_{\boldsymbol{\Gamma}_0,\boldsymbol{\Psi}_0}}{2-\alpha}\lambda\min\left(\sqrt{|S_0|},d_{\boldsymbol{\Psi}_0}\right),\quad  &&\mnorm{\hat{\boldsymbol{\eta}}-\boldsymbol{\eta}_0}_{2}\leq\frac{c_{\boldsymbol{\Gamma}_0,\boldsymbol{\Psi}_0}}{2-\alpha}\lambda\min\left(\sqrt{|S_0|},d_{\boldsymbol{\Psi}_0}\right).
\end{alignat*}
\item Moreover, if
\[\min_{j,k:(j,k)\in S_0}|\kappa_{0,jk}|>\frac{c_{\boldsymbol{\Gamma}_0}}{2-\alpha}\lambda\quad\quad\text{and}\quad\quad\min_{j:(m+1,j)\in S_0}|\eta_{0,j}|>\frac{c_{\boldsymbol{\Gamma}_0}}{2-\alpha}\lambda,\]
then $\hat{S}=S_0$ and $\mathrm{sign}(\hat{\kappa}_{jk})=\mathrm{sign}(\kappa_{0,jk})$ for all $(j,k)\in S_0$ and $\mathrm{sign}(\hat{\eta}_j)=\mathrm{sign}(\eta_{0j})$ for $(m+1,j)\in S_0$.
\end{enumerate}
\end{theorem}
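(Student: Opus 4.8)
The plan is to deduce Theorem~\ref{corollary2} from the deterministic Theorem~\ref{theorem_lin} together with Corollary~\ref{cor_L2-consistency}, exactly as in the centered case (Theorem~\ref{corollary1}); the only genuinely new work is (i) checking that the amplified matrix $\boldsymbol{\Gamma}_{\boldsymbol{\gamma}}$ is almost surely positive definite, so a unique minimizer of~(\ref{eq_definition_RGSM_loss_xi}) exists, and (ii) producing high-probability versions of the two deviation bounds in~(\ref{Max_constants}) with $\epsilon_1,\epsilon_2$ small enough that $d_{\boldsymbol{\Psi}_0}\epsilon_1\le\alpha/(6c_{\boldsymbol{\Gamma}_0,\boldsymbol{\Psi}_0})$ and $\lambda>\frac{3(2-\alpha)}{\alpha}\max\{c_{\boldsymbol{\Psi}_0}\epsilon_1,\epsilon_2\}$. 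Positive definiteness is already established in the discussion following~(\ref{eq_noncentered_gamma}): amplifying only the diagonal of each $\boldsymbol{\Gamma}_{11,j}$ block makes every $\boldsymbol{\Gamma}_{j,\boldsymbol{\gamma}}$ strictly positive on any nonzero vector, using $\Gamma_{22,j}=\tfrac1n\sum_i h_j(X_j^{(i)})>0$ a.s. Given these inputs, parts~(a) and~(b) for $\hat{\mathbf{K}}$ follow verbatim from Theorem~\ref{theorem_lin} and Corollary~\ref{cor_L2-consistency} applied with $\boldsymbol{\Psi}_0=[\mathbf{K}_0,\boldsymbol{\eta}_0]^{\top}$, $c_{\boldsymbol{\Psi}_0}\le c_{\mathbf{K}_0}+\|\boldsymbol{\eta}_0\|_{\infty}$, $d_{\boldsymbol{\Psi}_0}\le d_{\mathbf{K}_0}+1$, and the analogous bounds for $\hat{\boldsymbol{\eta}}$ come from the same $\ell_\infty$/Frobenius estimates on $\hat{\boldsymbol{\Psi}}-\boldsymbol{\Psi}_0$ restricted to the last block row.

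The heart of the proof is the concentration of $\boldsymbol{\Gamma}_{\boldsymbol{\gamma}}$ and $\boldsymbol{g}$. I would write $\boldsymbol{\Gamma}_{\boldsymbol{\gamma}}-\boldsymbol{\Gamma}_0=(\boldsymbol{\Gamma}-\boldsymbol{\Gamma}_0)+\mathrm{diag}(\boldsymbol{\gamma})$ and bound the two pieces separately. For the amplifier term, $\|\mathrm{diag}(\boldsymbol{\gamma})\|_{\infty}=(\delta-1)\max_\ell(\boldsymbol{\Gamma}_{11,j})_{\ell\ell}$; since $\delta-1<C(n,m)-1$, which is of order $\max\{\log m/n,\sqrt{\log m/n}\}$, and each diagonal entry concentrates around $\boldsymbol{\Gamma}_{0,\ell\ell}=\mathcal{O}(M(1+c_{\boldsymbol{X}}^2))$, this contribution is $\mathcal{O}\big(M(1+c_{\boldsymbol{X}}^2)\sqrt{\log m/n}\big)$ on the relevant event. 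For $\boldsymbol{\Gamma}-\boldsymbol{\Gamma}_0$, each entry is a sample mean of one of the quantities in~(\ref{eq_noncentered_gamma})---$h_j(X_j^{(i)})X_k^{(i)}X_l^{(i)}$, $h_j(X_j^{(i)})X_k^{(i)}$, or $h_j(X_j^{(i)})$---and the hypothesis $0\le h\le M$ replaces each summand by $M$ times a product of at most two coordinates of $\boldsymbol{X}$. This is precisely what a bounded $\boldsymbol{h}$ buys: the summands are sub-exponential (products of two sub-Gaussians) rather than the eighth-moment-type objects produced by $h(x)=x^2$. I would record that $\boldsymbol{X}\sim\mathrm{TN}(\boldsymbol{\mu}_0,\mathbf{K}_0)$ is sub-Gaussian with proxy governed by the $(\mathbf{K}_0^{-1})_{jj}$ and $\mathbb{E}_0X_j$ appearing in $c_{\boldsymbol{X}}$ (truncating a Gaussian to $\mathbb{R}_+^m$ cannot worsen its tails, and only shifts the mean), so each $X_kX_l$ has a sub-exponential tail with parameter $\mathcal{O}(c_{\boldsymbol{X}}^2)$; a Bernstein inequality then gives, for a fixed entry, deviation $\mathcal{O}\big(Mc_{\boldsymbol{X}}^2(\sqrt{t/n}+t/n)\big)$ with probability $1-e^{-t}$, and a union bound over the $\mathcal{O}(m^3)$ nonzero entries with $t\asymp\tau\log m$ yields $\|\boldsymbol{\Gamma}-\boldsymbol{\Gamma}_0\|_{\infty}\le\mathcal{O}\big(Mc_{\boldsymbol{X}}^2(\sqrt{\tau\log m/n}+\tau\log m/n)\big)$ with probability $\ge 1-m^{3-\tau}$. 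Adding the two pieces gives $\epsilon_1$ of the same order, and the stated sample-size lower bound is exactly what forces $d_{\boldsymbol{\Psi}_0}\epsilon_1\le\alpha/(6c_{\boldsymbol{\Gamma}_0,\boldsymbol{\Psi}_0})$.

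For $\boldsymbol{g}-\boldsymbol{g}_0$ the argument is easier: the entries in~(\ref{eq_noncentered_g}) are sample means of $h_j'(X_j^{(i)})X_k^{(i)}$, $h_j'(X_j^{(i)})X_k^{(i)}+h_j(X_j^{(i)})$, and $-h_j'(X_j^{(i)})$, which under $0\le h\le M$, $0\le h'\le M'$ are sub-Gaussian with parameter $\mathcal{O}(M'c_{\boldsymbol{X}})$, the same plus a term bounded by $M$, and bounded by $M'$, respectively; a Hoeffding/Bernstein bound and a union bound over $\mathcal{O}(m^2)$ entries give $\|\boldsymbol{g}-\boldsymbol{g}_0\|_{\infty}\le\mathcal{O}\big((M'c_{\boldsymbol{X}}+M)\sqrt{\tau\log m/n}\big)=:\epsilon_2$ on the same event. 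Assembling: on the intersection of the two concentration events (probability $\ge 1-m^{3-\tau}$), all hypotheses of Theorem~\ref{theorem_lin} hold for the displayed $\lambda$, since $c_{\boldsymbol{\Psi}_0}\epsilon_1=\mathcal{O}\big(Mc_{\boldsymbol{\Psi}_0}c_{\boldsymbol{X}}^2\sqrt{\tau\log m/n}\big)$ and $\epsilon_2$ are both dominated by the $\lambda$-threshold in the statement; then (a) with its three norm bounds---via Corollary~\ref{cor_L2-consistency}, using $|S_0|$ and $d_{\boldsymbol{\Psi}_0}$ for the non-centered support---and (b) follow immediately, and the $\hat{\boldsymbol{\eta}}$ bounds are read off the last block row. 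The main obstacle, and the step deserving the most care, is the sub-exponential tail analysis of the products $h_j(X_j)X_kX_l$ for the shifted truncated Gaussian: one must verify that truncation to $\mathbb{R}_+^m$ does not degrade the sub-Gaussian constants and must track how $\boldsymbol{\mu}_0$ enters through $\mathbb{E}_0X_j$, so that the final constant genuinely collapses to the $c_{\boldsymbol{X}}$ of the statement; everything downstream is then a mechanical union-bound-and-plug-in mirroring the proof of Theorem~\ref{corollary1}.
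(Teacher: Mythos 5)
Your proposal is correct and follows essentially the same route as the paper: reduce to the deterministic Theorem~\ref{theorem_lin} (and Corollary~\ref{cor_L2-consistency}), establish entrywise concentration of $\boldsymbol{\Gamma}$ and $\boldsymbol{g}$ via the sub-Gaussian/sub-exponential bounds for truncated normals (the paper's Lemma~\ref{lemma_sub-norms}, which is stated to hold for $\boldsymbol{\mu}\neq\boldsymbol{0}$ and supplies exactly the sub-Gaussian-parameter control you flag as the delicate step), absorb the amplifier as a perturbation of size at most $\epsilon_1/2$ using $\delta<C(n,m)$, and union-bound over the $\mathcal{O}(m^3)$ entries. The paper's own proof of this theorem is just a pointer to the centered case with these same observations, so your write-up is in fact a more explicit version of the intended argument.
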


\begin{remark}
  \rm The quantity $c_{\boldsymbol{X}}$ in Theorem~\ref{corollary2}
  depends on $\mathbb{E}_{0}X_j$, which in turn depends on the
  structure of both $\boldsymbol{\mu}_0$ and $\mathbf{K}_0$.  If
  $\mu_{0,j}$ is large compared to ${(\mathbf{K}_0)^{-1}_{jj}}$, then
  $c_{\boldsymbol{X}}$ seems to scale as $\boldsymbol{\mu}_0$, which
  negatively impacts the guarantees stated in
  Theorem~\ref{corollary2}.  However, as in the one-dimensional case
  for estimation of $\mu_0$ (Example~\ref{theorem_GSM_normality_tnorm_mu}), our
  estimator should automatically adapt to the large mean parameter.
  This suggests that it might be possible to improve our analysis
  involving $c_{\boldsymbol{X}}$.
\end{remark}

\section{Numerical Experiments}\label{Numerical Experiments}

In this section, we compare the performance of our estimator with different choices of $\boldsymbol{h}$ to the existing approaches for pairwise interaction power models. In our simulation experiments, we consider $m=100$ variables and $n=80$ and $n=1000$ samples, corresponding to high- and low-dimensional settings. We also tried intermediate sample sizes between these two extremes, but found no interesting result worth reporting. For $n=80$, amplification is necessary. Except in Section~\ref{Choice of multiplier}, the amplifier is set based on Theorem~\ref{corollary1} to $\delta=C(n,m)=1.8647$ for truncated GGMs. The same amplifier is also used for settings with other $a$ and $b$. For $n=1000$, we consider $\delta=1$, i.e., no amplification, and $\delta=C(n,m)=1.6438$ (again, based on Theorem~\ref{corollary1}). Throughout, we assume a common univariate $h$ for all components in $\boldsymbol{h}$.

%


\subsection{Structure of \texorpdfstring{$\mathbf{K}$}{K}}\label{Structure of K}
The underlying interaction matrices are selected as follows:
Proceeding as in Section~4.2 of \citet{lin16}, the graph is chosen to have 10 disconnected subgraphs, each containing $m/10$ nodes. Thus, $\mathbf{K}_0$ is block-diagonal. In each block, each lower-triangular element is set to $0$ with probability $1-\pi$ for some $\pi\in(0,1)$, and is otherwise drawn from $\text{Uniform}[0.5,1]$. The upper triangular elements are determined by symmetry.  The diagonal elements of $\mathbf{K}_0$ are chosen as a common positive value such that the minimum eigenvalue of $\mathbf{K}_0$ is $0.1$.

We generate 5 different true precision matrices $\mathbf{K}_0$, and run 10 trials with each of these precision matrices.  For $n=1000$, we choose $\pi=0.8$, 
which is in accordance with \citet{lin16}. 
For $n=80$, we set $\pi=0.2$. 
This way $n/(d_{\mathbf{K}_0}^2\log m)$ is roughly constant; recall Theorems \ref{corollary1} and \ref{corollary2} for truncated GGMs.

In Appendix~\ref{ER}, we report results on \emph{Erd\"os-R\'enyi
  graphs}, which lead to similar conclusions.

\subsection{Truncated GGMs}\label{Simulations_Truncated GGMs}
Given our focus on truncated GGMs and their relevance in graphical modeling applications, we start with experiments for these models.

\subsubsection{Choice of \texorpdfstring{$\boldsymbol{h}$}{h}}\label{Choice of h}

Our estimator requires choosing a function
$\boldsymbol{h}:\mathbb{R}_+^m\to\mathbb{R}_+^m$. For simplicity, we will always specify $\boldsymbol{h}(\boldsymbol{x})=(h(x_1),\ldots,h(x_m))$ for a single non-decreasing univariate function
$h:\mathbb{R}_+\to\mathbb{R}_+$, i.e.~all coordinates share the same $h$ function.

As previously explained, $\boldsymbol{h}\in\mathcal{H}_{a,b}$ is a sufficient condition for assumptions (A1)-(A2), as well as (C1)-(C2) in the case of unregularized estimators. Only in the proofs of our theoretical guarantees in Section~\ref{Theory for Graphical Models} for truncated GGMs, did we require $h$ to be bounded and to have bounded derivatives. As motivated by the discussion in Section~\ref{Reasonable Choices of h}, we consider truncated and untruncated powers, $\min(x,c)$ and $x$ (since $2-a=2-b=1$); we evaluate this choice by contrasting them with powers $x^{1.5}$ and $x^2$. 
We also explore functions like $\log(1+x)$ that seem natural and are linear near $0$. In particular, we make a further comparison to functions linear near $0$ with a finite asymptote as $x\nearrow+\infty$ but differentiable everywhere: MCP- \citep{fan01} and SCAD-like \citep{zha10} functions defined below. The results we report are based on selections of best performing choices of $h$.
\[
\mathrm{SCAD}(x;\lambda,\gamma)\equiv\begin{cases} \lambda x & \text{if }0\leq x\leq \lambda,\\ \frac{2\gamma\lambda x-x^2-\lambda^2}{2(\gamma-1)} & \text{if }\lambda<x<\gamma\lambda,\\\frac{\lambda^2(\gamma+1)}{2} & \text{if }x\geq\gamma\lambda;\end{cases}\hfill \mathrm{MCP}(x;\lambda,\gamma)\equiv\begin{cases}\lambda x-\frac{x^2}{2\gamma} & \text{if }0\leq x\leq\gamma\lambda, \\ \frac{1}{2}\gamma\lambda^2 & \text{if }x>\gamma\lambda.\end{cases}
\]

We do not observe any clear relationship between features such as convexity, differentiability or the slope of $h$ at $0$, and performance of the estimator. Nonetheless, for many choices of rather simple functions $h$, our estimator provides a significant improvement over existing methods. In particular, most $h$ functions that behave linearly for small $x$, namely $\log(1+x)$ and $x$ and their truncations, and additionally MCP and SCAD, always perform better than $x^{1.5}$ and $x^2$. This agrees with our discussion in  Section~\ref{Reasonable Choices of h}, where $2-a=1$ is a reasonable choice of the power for small $x$; also see Section \ref{Other Models}. However, we conclude that there is no real gain from making the function smoother by using MCP or SCAD.\\


\noindent\emph{Truncated Centered GGMs}\label{Chooseh_Truncated Centered GGMs}:\indent For data from a truncated centered Gaussian distribution, we compare our generalized score matching estimator 
with various choices of $h$, to \emph{SpaCE JAM} (SJ,
\citealp{voo13}), which estimates graphs using additive models for conditional means, a pseudo-likelihood method \emph{SPACE}
\citep{pen09} in the reformulation of \cite{kha15}, \emph{graphical lasso} (GLASSO, \citealp{yua07,fri08}), the \emph{neighborhood selection} estimator (NS) of \citet{mei06}, and \emph{nonparanormal SKEPTIC} \citep{liu12} with Kendall's $\tau$. Recall that the choice of $h(x)=x^2$ corresponds to the estimator from \citet{lin16}. 

\begin{figure}[p]
\centering
\subfloat[$n=80$, $\mathrm{mult}=1.8647$]
{\includegraphics[trim={0 0.5in 1.3in 0},clip,scale=0.26]{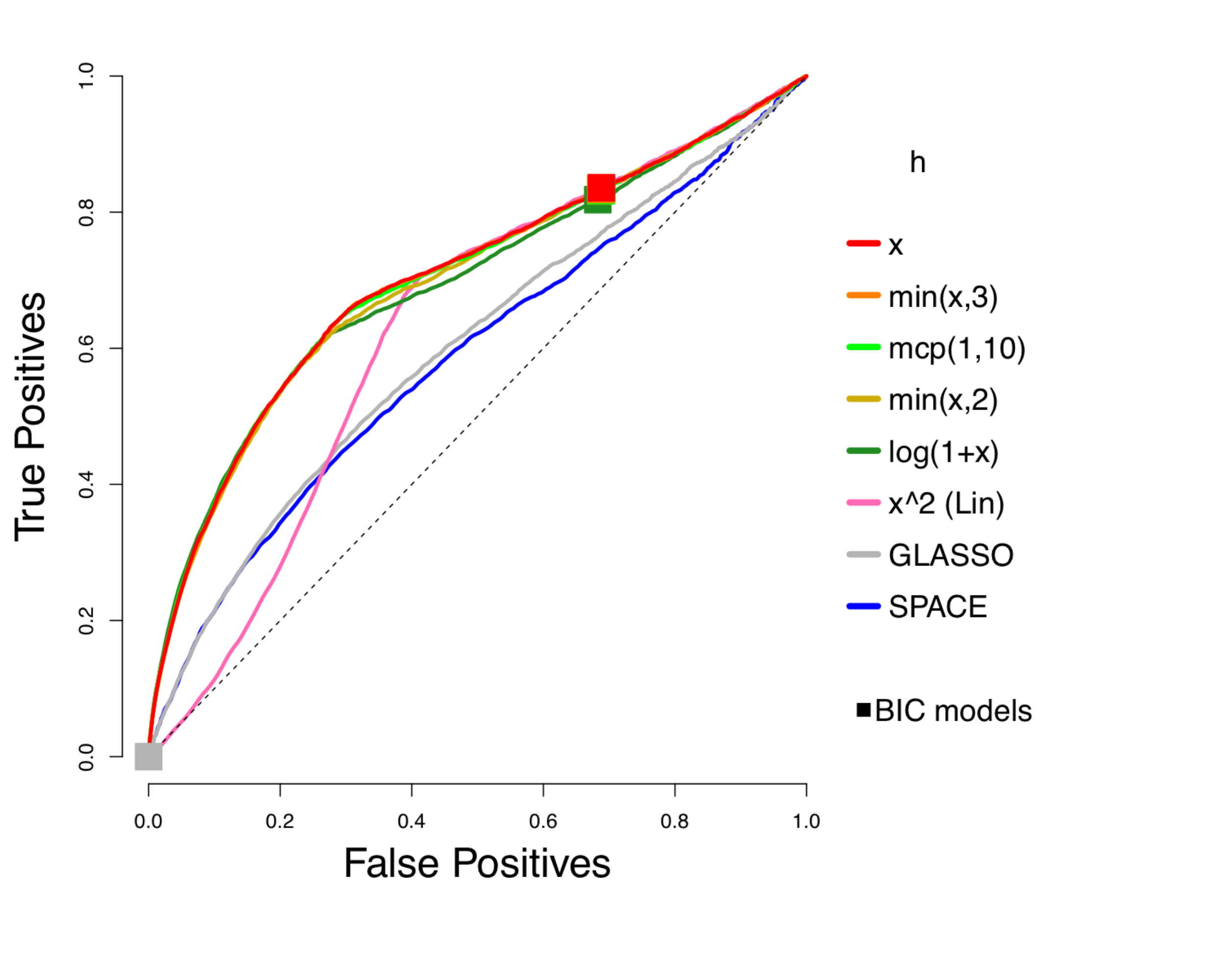}\hspace{-0.02in}}
\subfloat[$n=1000$, $\mathrm{mult}=1$]
{\includegraphics[trim={0 0.5in 2in 0},clip,scale=0.26]{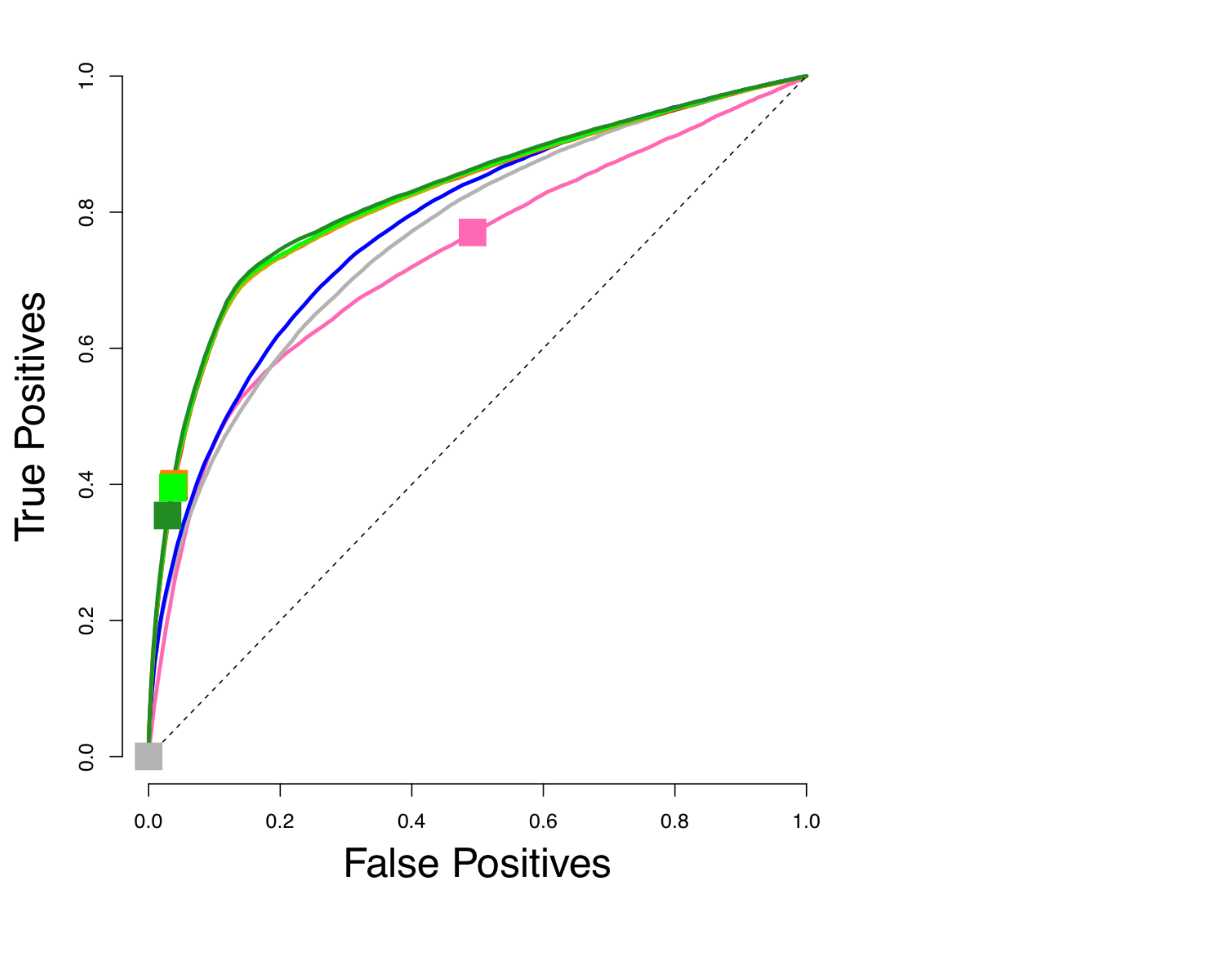}\hspace{-0.3in}}
\subfloat[$n=1000$, $\mathrm{mult}=1.6438$]
{\includegraphics[trim={0 0.5in 3in 0},clip,scale=0.26]{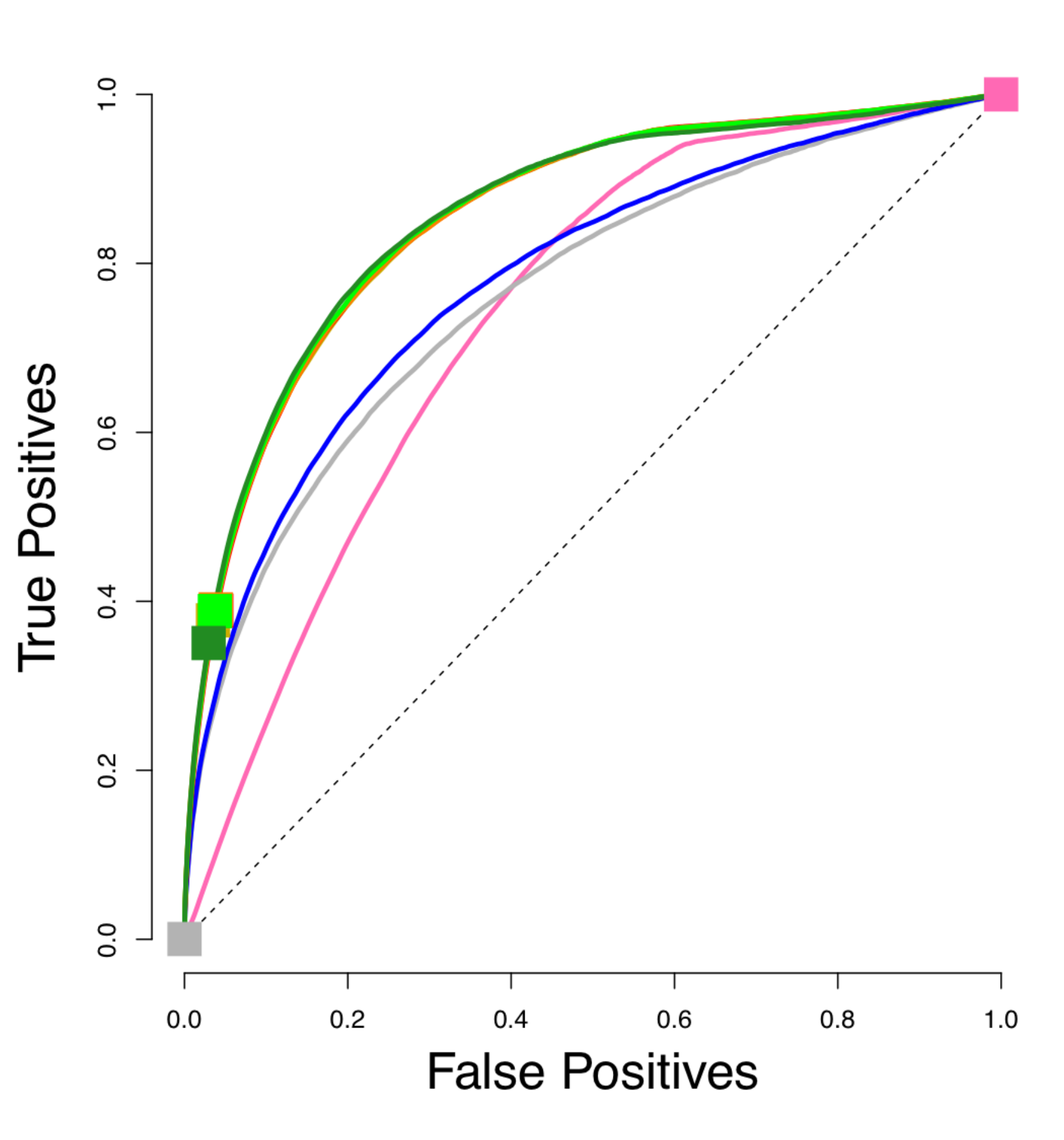}}
\caption{Average ROC curves of our centered estimator with various choices of $h$, compared to SPACE and GLASSO, for the truncated centered GGM case; $m=100$ variables and $n=80$ or $1000$ samples are considered. Squares indicate average true positive rate (TPR) and false positive rate (FPR) of models picked by eBIC with refitting for the estimator in the same color.}
\label{plot_chooseh_centered}
\end{figure}

\vspace{-0.0in}
\begin{table}[ht]
\begin{center}
\scalebox{0.85}{
\begin{tabular}{|c|c|c|c|c|c|}
\hline
\multicolumn{6}{|c|}{Centered, $n=80$, multiplier 1.8647}
\tabularnewline
\hline\hline
\multicolumn{3}{|c|}{$\min(\log(1+x),c)$} & \multicolumn{3}{c|}{$\min(x,c)$}
\tabularnewline
\hline
$c$ & Mean & sd & $c$ & Mean & sd
\tabularnewline\hline
$\infty$ & 0.694 & 0.033 & $\infty$ & 0.702 & 0.031
\tabularnewline
2 & 0.694 & 0.033  & 3 & 0.702 & 0.031
\tabularnewline
1 & 0.692 & 0.033 & 2 &  0.698 & 0.033
\tabularnewline
0.5 & 0.664 & 0.038  & 1 & 0.686 & 0.030
\tabularnewline
\hline
\multicolumn{3}{|c|}{$\mathrm{MCP}(1,c)$} & \multicolumn{3}{c|}{$\mathrm{SCAD}(1,c)$}
\tabularnewline
\hline
$c$ & Mean & sd & $c$ & Mean & sd
\tabularnewline
\hline
10 & 0.701 & 0.032 & 10 & 0.702 & 0.031
\tabularnewline
5 & 0.700 & 0.032 & 5 & 0.701 & 0.032
\tabularnewline
1 & 0.672 & 0.036 & 2 & 0.696 & 0.033
\tabularnewline
\hline
\multicolumn{3}{|c|}{$x^{1.5}$: (0.683, 0.030)} & \multicolumn{3}{|c|}{$x^2$: (0.630, 0.029)}
\tabularnewline
\hline
\hline
\multicolumn{3}{|c|}{GLASSO (0.600,0.032)} & \multicolumn{3}{|c|}{SPACE: (0.587, 0.031)}
\tabularnewline
\hline
\multicolumn{3}{|c|}{NS: (0.587,0.031)} & \multicolumn{3}{|c|}{SJ: (0.540,0.036)}
\tabularnewline
\hline
\end{tabular}}
\vskip0.0in  
\scalebox{0.85}{
\begin{tabular}{|c|c|c|c|c|c|}
\hline
\multicolumn{6}{|c|}{Centered, $n=1000$, multiplier 1}
\tabularnewline
\hline\hline
\multicolumn{3}{|c|}{$\min(\log(1+x),c)$} & \multicolumn{3}{c|}{$\min(x,c)$}
\tabularnewline
\hline
$c$ & Mean & sd & $c$ & Mean & sd
\tabularnewline\hline
2 & 0.826 & 0.015 & 2 & 0.820 & 0.014
\tabularnewline
$\infty$ & 0.826 & 0.015  & 3  & 0.820 & 0.015
\tabularnewline
1 & 0.824 & 0.014 & $\infty$ & 0.819 & 0.015
\tabularnewline 
0.5 & 0.804 & 0.015 & $1$ & 0.817 & 0.014
\tabularnewline
\hline
\multicolumn{3}{|c|}{$\mathrm{MCP}(1,c)$} & \multicolumn{3}{c|}{$\mathrm{SCAD}(1,c)$}
\tabularnewline
\hline
$c$ & Mean & sd & $c$ & Mean & sd
\tabularnewline
\hline
5 & 0.824 & 0.015 & 2 & 0.823 & 0.014
\tabularnewline
10 & 0.822 & 0.015 & 5 & 0.822 & 0.015
\tabularnewline
1 & 0.810 & 0.015 & 10 & 0.821 & 0.015
\tabularnewline
\hline
\multicolumn{3}{|c|}{$x^{1.5}$: (0.782,0.014)} & \multicolumn{3}{|c|}{$x^2$: (0.732,0.015)}
\tabularnewline
\hline
\hline
\multicolumn{3}{|c|}{SPACE: (0.780,0.015)} & \multicolumn{3}{|c|}{NS: (0.779,0.015)}
\tabularnewline
\hline
\multicolumn{3}{|c|}{GLASSO (0.764,0.014)} & \multicolumn{3}{|c|}{SJ: (0.703,0.015)}
\tabularnewline
\hline
\end{tabular}
\begin{tabular}{|c|c|c|c|c|c|}
\hline
\multicolumn{6}{|c|}{Centered, $n=1000$, multiplier 1.6438}
\tabularnewline
\hline\hline
\multicolumn{3}{|c|}{$\min(\log(1+x),c)$} & \multicolumn{3}{c|}{$\min(x,c)$}
\tabularnewline
\hline
$c$ & Mean & sd & $c$ & Mean & sd
\tabularnewline\hline
$\infty$ & 0.857 & 0.011 & 3 & 0.855 & 0.011
\tabularnewline
2 & 0.857 & 0.011 & $\infty$ & 0.855 & 0.011
\tabularnewline
1 & 0.855 & 0.011 & 2 & 0.854 & 0.011
\tabularnewline 
0.5 & 0.833 & 0.012 & 1 & 0.847 & 0.011
\tabularnewline
\hline
\multicolumn{3}{|c|}{$\mathrm{MCP}(1,c)$} & \multicolumn{3}{c|}{$\mathrm{SCAD}(1,c)$}
\tabularnewline
\hline
$c$ & Mean & sd & $c$ & Mean & sd
\tabularnewline
\hline
5 & 0.857 & 0.011 & 5 & 0.856 & 0.011
\tabularnewline
10 & 0.856 & 0.011 & 10 & 0.855 & 0.011
\tabularnewline
1 & 0.840 & 0.012 & 2 & 0.855 & 0.011
\tabularnewline
\hline
\multicolumn{3}{|c|}{$x^{1.5}$: (0.812,0.011)} & \multicolumn{3}{|c|}{$x^2$: (0.736,0.011)}
\tabularnewline
\hline
\hline
\multicolumn{3}{|c|}{SPACE: (0.780,0.015)} & \multicolumn{3}{|c|}{NS: (0.779,0.015)}
\tabularnewline
\hline
\multicolumn{3}{|c|}{GLASSO (0.764,0.014)} & \multicolumn{3}{|c|}{SJ: (0.703,0.015)}
\tabularnewline
\hline
\end{tabular}}
\end{center}
\vspace{-0.1in}
\caption{Mean and standard deviation of areas under the ROC curves (AUC) using different estimators in the centered setting, with $n=80$ and multiplier $1.8647$, or $n=1000$ and multiplier $1$ and $1.6438$. Methods include our estimator with different choices of $h$,  GLASSO, SPACE, neighborhood selection (NS), and Space JAM (SJ).}\label{table_chooseh_centered}
\vspace{-0.3in}
\end{table}

The ROC (\emph{receiver operating characteristic}) curves for different estimators are shown in Figure~\ref{plot_chooseh_centered} on Page~\pageref{plot_chooseh_centered}. Each plotted curve corresponds to the average of 50 ROC curves, where the averaging is based on the vertical averaging from Algorithm~3 in \citet{faw06}, and is mean AUC-preserving. The $x$ and $y$ axes of each ROC curve represent the false positive and true positive rates at varying levels of penalty parameter $\lambda$, defined as

\[\mathrm{FPR}\equiv\frac{|\hat{S}_{\text{off}}\backslash S_{0,\text{off}}|}{m(m-1)-|S_{0,\text{off}}|}\quad\quad\text{and}\quad\quad \mathrm{TPR}\equiv\frac{|\hat{S}_{\text{off}}\cap S_{0,\text{off}}|}{|S_{0,\text{off}}|},\]
\noindent where $S_{0,\text{off}}\equiv\{(i,j):i\neq j\wedge\kappa_{0,ij}\neq 0\}$, and $\hat{S}_{\text{off}}\equiv\{(i,j):i\neq j\wedge\hat{\kappa}_{ij}\neq 0\}$.

To reduce clutter, we only report the results for the top performing competing methods. In particular, results for nonparanormal SKEPTIC are omitted, as the method always performs the worst in our experiments. The corresponding means and standard deviations of AUCs (\emph{areas under the curves}) over 50 curves are given in Table~\ref{table_chooseh_centered}. 

Looking at the mean AUCs, with the standard deviations in mind, all choices of $h$ considered here perform better than $h(x)=x^2$ from \citet{hyv07} and \citet{lin16} and the competing methods. The results for $n=1000$ in Table \ref{table_chooseh_centered} also show that the multiplier does help improve the AUCs, a matter to be discussed in Section \ref{Choice of multiplier}.

\newpage

\noindent\emph{Truncated Non-Centered GGMs}\label{Chooseh_Truncated Non-Centered GGMs}: \indent We generate data from a truncated non\hyph centered Gaussian distribution with both parameters $\boldsymbol{\mu}$ and $\mathbf{K}$
unknown. In each trial, we form the true $\mathbf{K}_0$ as in
Section~\ref{Structure of K}, and generate each component of
$\boldsymbol{\mu}_0$ independently from the normal distribution with
mean $0$ and standard deviation $0.5$.

We compare the performance of our \emph{profiled} estimator based on (\ref{loss-noncentered-profiled}), with different $h$ functions, but with no penalty on $\boldsymbol{\eta}\equiv\mathbf{K}\boldsymbol{\mu}$, to SPACE, SpaCE JAM (SJ), GLASSO, and neighborhood selection (NS). As before, we consider 50 trials. Representative ROC curves are plotted in Figure~\ref{plot_chooseh_profiled}, and the corresponding AUCs are summarized in Table~\ref{table_chooseh_profiled}.

\begin{figure}[p]
\centering
\vspace{-0.4in}
\subfloat[$n=80$, $\mathrm{mult}=1.8647$]
{\includegraphics[trim={0 0.5in 1.3in 0},clip,scale=0.26]{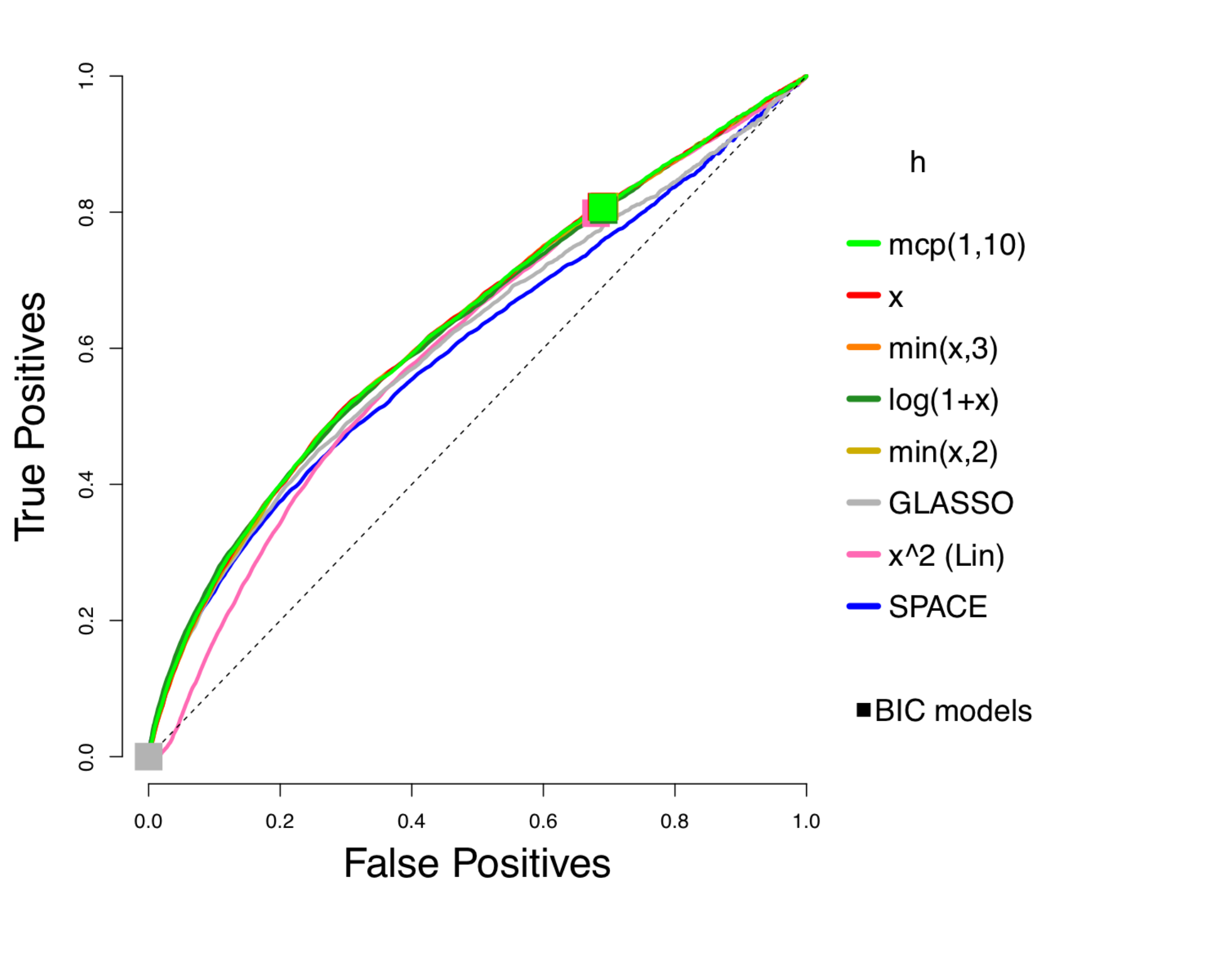}\hspace{-0.02in}}
\subfloat[$n=1000$, $\mathrm{mult}=1$]
{\includegraphics[trim={0 0.5in 2in 0},clip,scale=0.26]{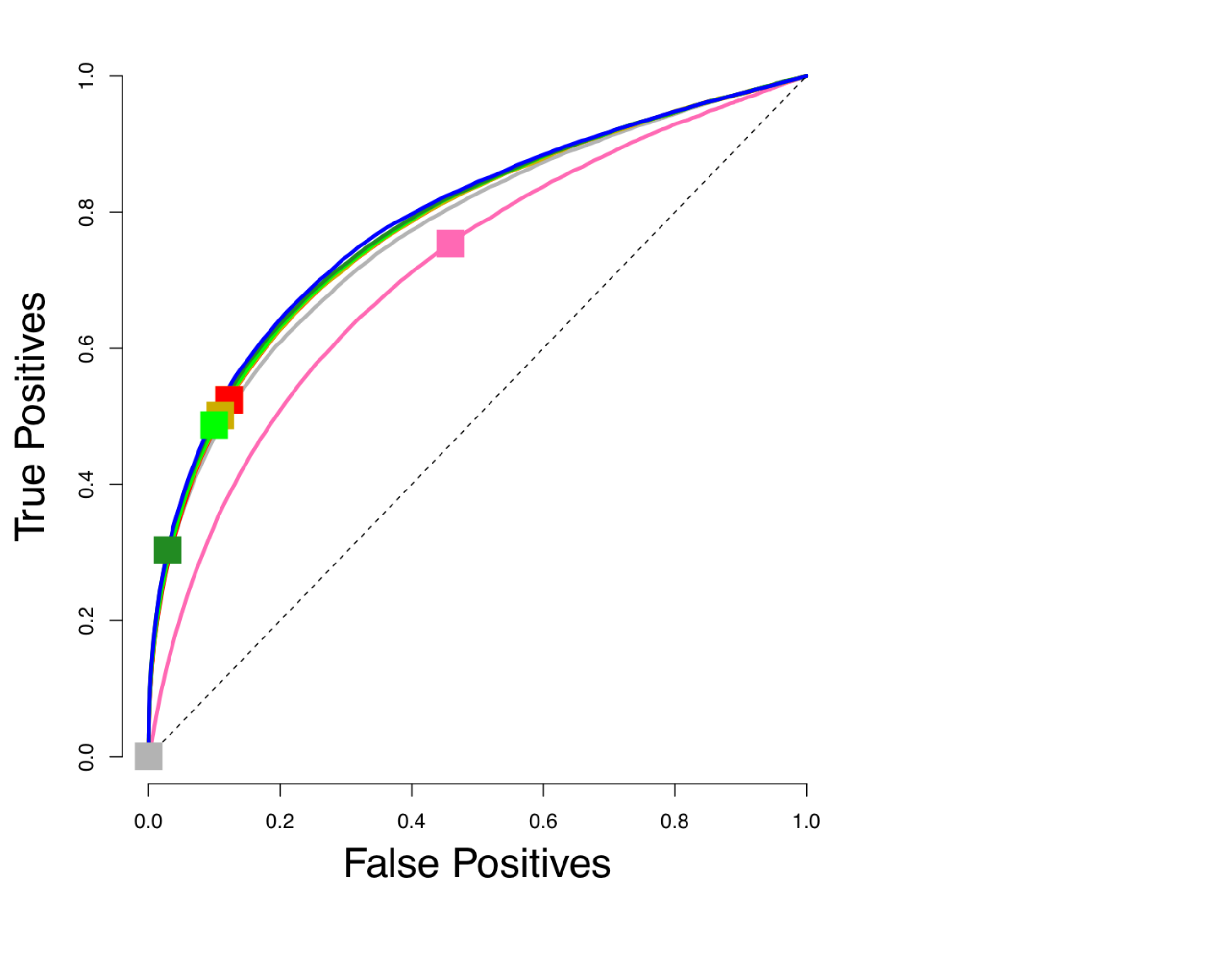}\hspace{-0.3in}}
\subfloat[$n=1000$, $\mathrm{mult}=1.6438$]
{\includegraphics[trim={0 0.5in 3in 0},clip,scale=0.26]{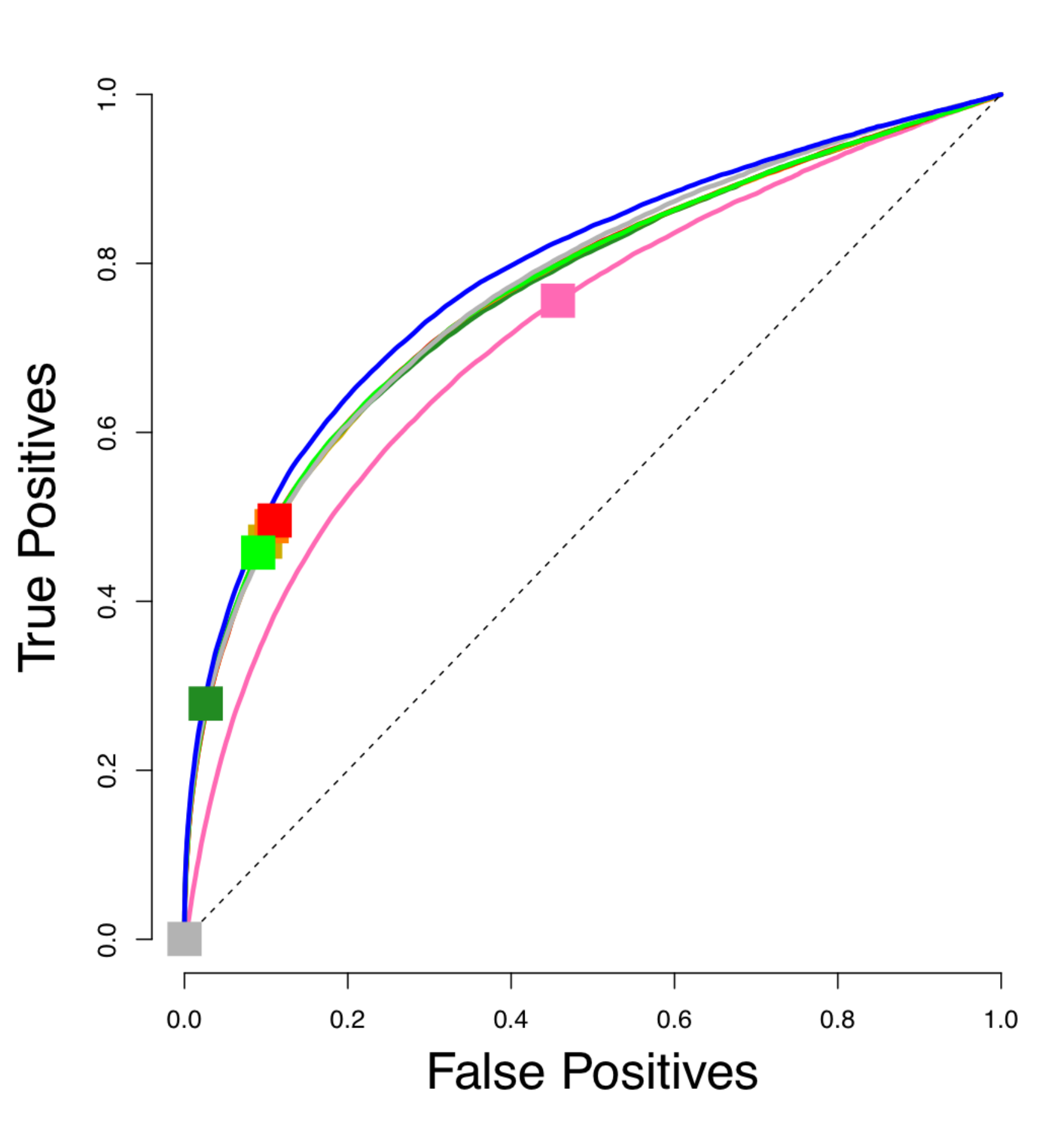}}
\caption{Average ROC curves of our non-centered profiled estimator with various choices of $h$, compared to SPACE and GLASSO, for the truncated non-centered GGM case; $m=100$ variables and $n=80$ or $1000$ samples are considered. Squares indicate average true positive rate (TPR) and false positive rate (FPR) of models picked by eBIC with refitting for the estimator in the same color.}
\label{plot_chooseh_profiled}
\end{figure}

\begin{table}[H]
\begin{center}
\scalebox{0.85}{
\begin{tabular}{|c|c|c|c|c|c|}
\hline
\multicolumn{6}{|c|}{Non-centered profiled, $n=80$, multiplier 1.8647}
\tabularnewline
\hline\hline
\multicolumn{3}{|c|}{$\min(\log(1+x),c)$} & \multicolumn{3}{c|}{$\min(x,c)$}
\tabularnewline
\hline
$c$ & Mean & sd & $c$ & Mean & sd
\tabularnewline\hline
$\infty$ & 0.632 & 0.032 & $\infty$ & 0.634 & 0.032
\tabularnewline
2 & 0.632 & 0.032 & 3 & 0.634 & 0.032
\tabularnewline
1 & 0.631 & 0.032 & 2 & 0.632 & 0.032
\tabularnewline 
0.5 & 0.619 & 0.033 & 1 & 0.628 & 0.032
\tabularnewline
\hline
\multicolumn{3}{|c|}{$\mathrm{MCP}(1,c)$} & \multicolumn{3}{c|}{$\mathrm{SCAD}(1,c)$}
\tabularnewline
\hline
$c$ & Mean & sd & $c$ & Mean & sd
\tabularnewline
\hline
10 & 0.634 & 0.032 & 5 & 0.634 & 0.032
\tabularnewline
5 & 0.634 & 0.032 & 10 & 0.634 & 0.032
\tabularnewline
1 & 0.622 & 0.032 & 2 & 0.634 & 0.032
\tabularnewline
\hline
\multicolumn{3}{|c|}{$x^{1.5}$: (0.623,0.031)} & \multicolumn{3}{|c|}{$x^2$: (0.607,0.030)}
\tabularnewline
\hline
\hline
\multicolumn{3}{|c|}{GLASSO: (0.614,0.029)} & \multicolumn{3}{|c|}{NS: (0.604,0.028)}
\tabularnewline
\hline
\multicolumn{3}{|c|}{SPACE: (0.602,0.029)} & \multicolumn{3}{|c|}{SJ: (0.561,0.036)}
\tabularnewline
\hline
\end{tabular}}
\vskip0.2in
\scalebox{0.85}{
\begin{tabular}{|c|c|c|c|c|c|}
\hline
\multicolumn{6}{|c|}{Non-centered profiled, $n=1000$, multiplier 1}
\tabularnewline
\hline\hline
\multicolumn{3}{|c|}{$\min(\log(1+x),c)$} & \multicolumn{3}{c|}{$\min(x,c)$}
\tabularnewline
\hline
$c$ & Mean & sd & $c$ & Mean & sd
\tabularnewline\hline
$\infty$ & 0.783 & 0.020 & 2 & 0.779 & 0.020
\tabularnewline
2 & 0.783 & 0.020  & $\infty$ & 0.779 & 0.020
\tabularnewline
1 & 0.782 & 0.020 & 3 & 0.779 & 0.020
\tabularnewline 
0.5 & 0.767 & 0.021 & $0.5$ & 0.758 & 0.020
\tabularnewline
\hline
\multicolumn{3}{|c|}{$\mathrm{MCP}(1,c)$} & \multicolumn{3}{c|}{$\mathrm{SCAD}(1,c)$}
\tabularnewline
\hline
$c$ & Mean & sd & $c$ & Mean & sd
\tabularnewline
\hline
5 & 0.782 & 0.020 & 2 & 0.780 & 0.020
\tabularnewline
10 & 0.780 & 0.020 & 5 & 0.780 & 0.020
\tabularnewline
1 & 0.771 & 0.021 & 10 & 0.779 & 0.020
\tabularnewline
\hline
\multicolumn{3}{|c|}{$x^{1.5}$: (0.751,0.019)} & \multicolumn{3}{|c|}{$x^2$: (0.713,0.018)}
\tabularnewline
\hline
\hline
\multicolumn{3}{|c|}{SPACE: (0.786,0.020)} & \multicolumn{3}{|c|}{NS: (0.785,0.02)}
\tabularnewline
\hline
\multicolumn{3}{|c|}{GLASSO (0.770,0.019)} & \multicolumn{3}{|c|}{SJ: (0.720,0.019)}
\tabularnewline
\hline
\end{tabular}
\begin{tabular}{|c|c|c|c|c|c|}
\hline
\multicolumn{6}{|c|}{Non-centered profiled, $n=1000$, multiplier 1.6438}
\tabularnewline
\hline\hline
\multicolumn{3}{|c|}{$\min(\log(1+x),c)$} & \multicolumn{3}{c|}{$\min(x,c)$}
\tabularnewline
\hline
$c$ & Mean & sd & $c$ & Mean & sd
\tabularnewline\hline
$\infty$ & 0.764 & 0.018 & $\infty$ & 0.766 & 0.019
\tabularnewline
2 & 0.764 & 0.018 & 3 & 0.765 & 0.019
\tabularnewline
1 & 0.762 & 0.018 & 2 & 0.764 & 0.018
\tabularnewline 
0.5 & 0.738 & 0.018 & 1 & 0.753 & 0.018
\tabularnewline
\hline
\multicolumn{3}{|c|}{$\mathrm{MCP}(1,c)$} & \multicolumn{3}{c|}{$\mathrm{SCAD}(1,c)$}
\tabularnewline
\hline
$c$ & Mean & sd & $c$ & Mean & sd
\tabularnewline
\hline
10 & 0.766 & 0.019 & 10 & 0.766 & 0.019
\tabularnewline
5 & 0.766 & 0.019 & 5 & 0.766 & 0.019
\tabularnewline
1 & 0.745 & 0.018 & 2 & 0.763 & 0.018
\tabularnewline
\hline
\multicolumn{3}{|c|}{$x^{1.5}$: (0.748,0.018)} & \multicolumn{3}{|c|}{$x^2$: (0.718,0.017)}
\tabularnewline
\hline
\hline
\multicolumn{3}{|c|}{SPACE: (0.786,0.020)} & \multicolumn{3}{|c|}{NS: (0.785,0.020)}
\tabularnewline
\hline
\multicolumn{3}{|c|}{GLASSO (0.770,0.019)} & \multicolumn{3}{|c|}{SJ: (0.720,0.019)}
\tabularnewline
\hline
\end{tabular}}
\end{center}\caption{Mean and standard deviation of AUC using different profiled estimators in the non-centered setting, with $n=80$ and multiplier $1.8647$, or $n=1000$ and multipliers $1$ and $1.6438$. Methods include our estimator with different choices of $h$,  GLASSO, SPACE, neighborhood selection (NS), and Space JAM (SJ).}\label{table_chooseh_profiled}
\end{table}

Even without tuning the extra penalty parameter on $\boldsymbol{\eta}\equiv\mathbf{K}\boldsymbol{\mu}$, our profiled estimator beats the competing methods by a large margin when $n=80$. 
With multipliers $1$ and $n=1000$, our estimators still do better than Space JAM and GLASSO, and have performance comparable to other competing methods. It might appear that the performance of our estimators deteriorate with a multiplier larger than $1$; however, as we will see, there can be significant improvement in AUCs if we tune an additional parameter for the multiplier.
As in the centered case, the leading $h$ functions in each category perform similarly, and the exact choice is not crucial.  Subsequently, we will simply use $h(x)=\min(x,3)$.

\subsubsection{Choice of multiplier}\label{Choice of multiplier}
\noindent\emph{Truncated Centered GGMs}\label{Choosemult_Truncated Centered GGMs}:\indent  In Figure~\ref{plot_mixed}, the ROC curves for GLASSO, SPACE, and our estimator with $h(x)=\min(x,3)$, but with different levels of amplification, via different choices of multipliers $\delta$, are compared for the centered case of Section~\ref{Chooseh_Truncated Centered GGMs}. 

While Theorem~\ref{corollary1} guarantees consistency only for $\delta<C(n,m)$, we observe that there can be a gain from going beyond the
\emph{upper-bound multiplier} $C(n,m)$, which is $1.8647$ for $n=80$ and $1.6438$ for $n=1000$ (when $n=1000$, $C(n,m)$ turns out to be the best-performing multiplier). However, the effect deteriorates fast as the multiplier grows larger. The figure suggests that while some additional gains are possible by tuning over the choice of multiplier, the \emph{upper-bound multiplier} is a good default.

\begin{figure}[H]
\centering
\vspace{-0.0in}
\subfloat[$n=80$]
{\includegraphics[scale=0.28]{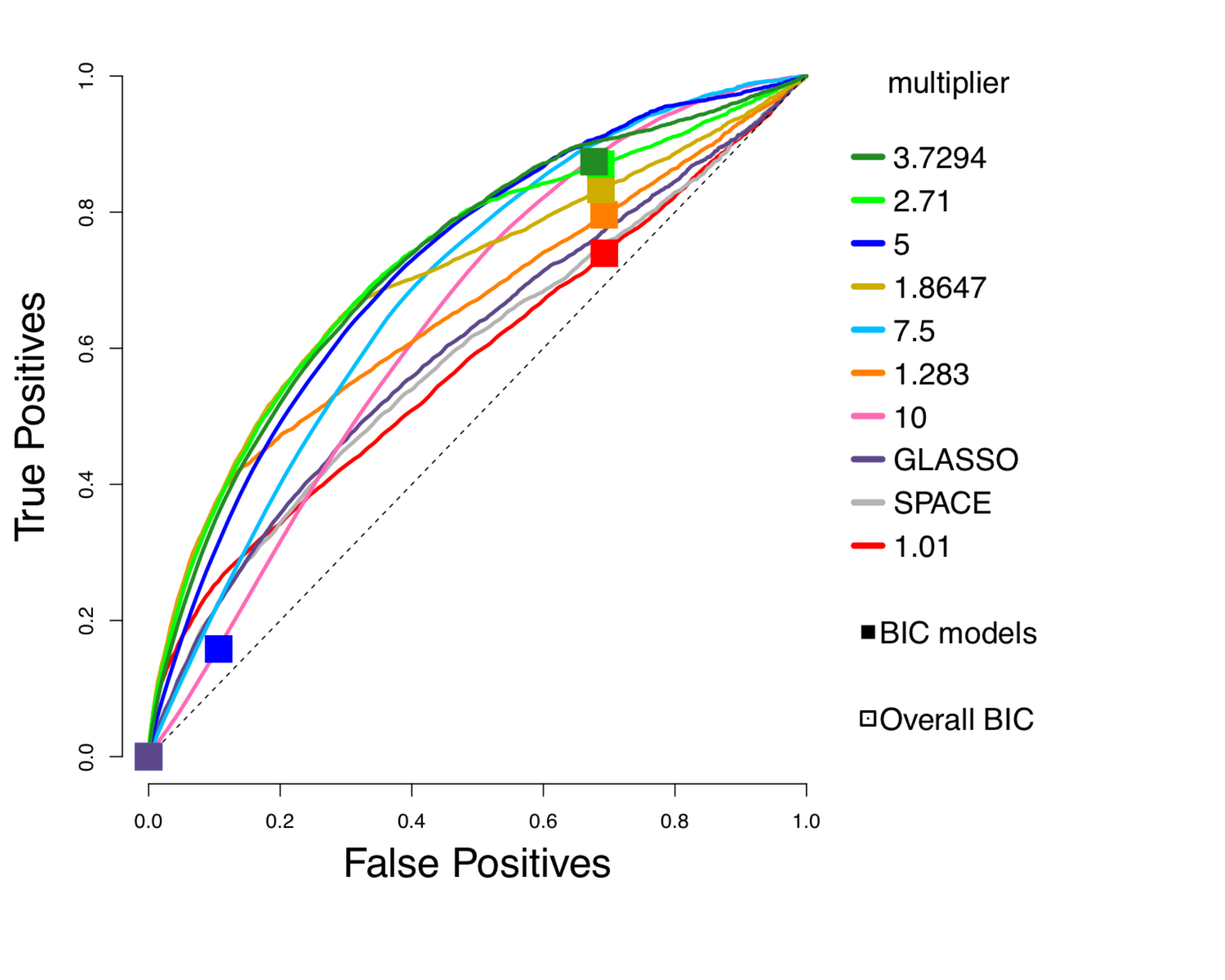}\hspace{-0.02in}}
\subfloat[$n=1000$]
{\includegraphics[scale=0.28]{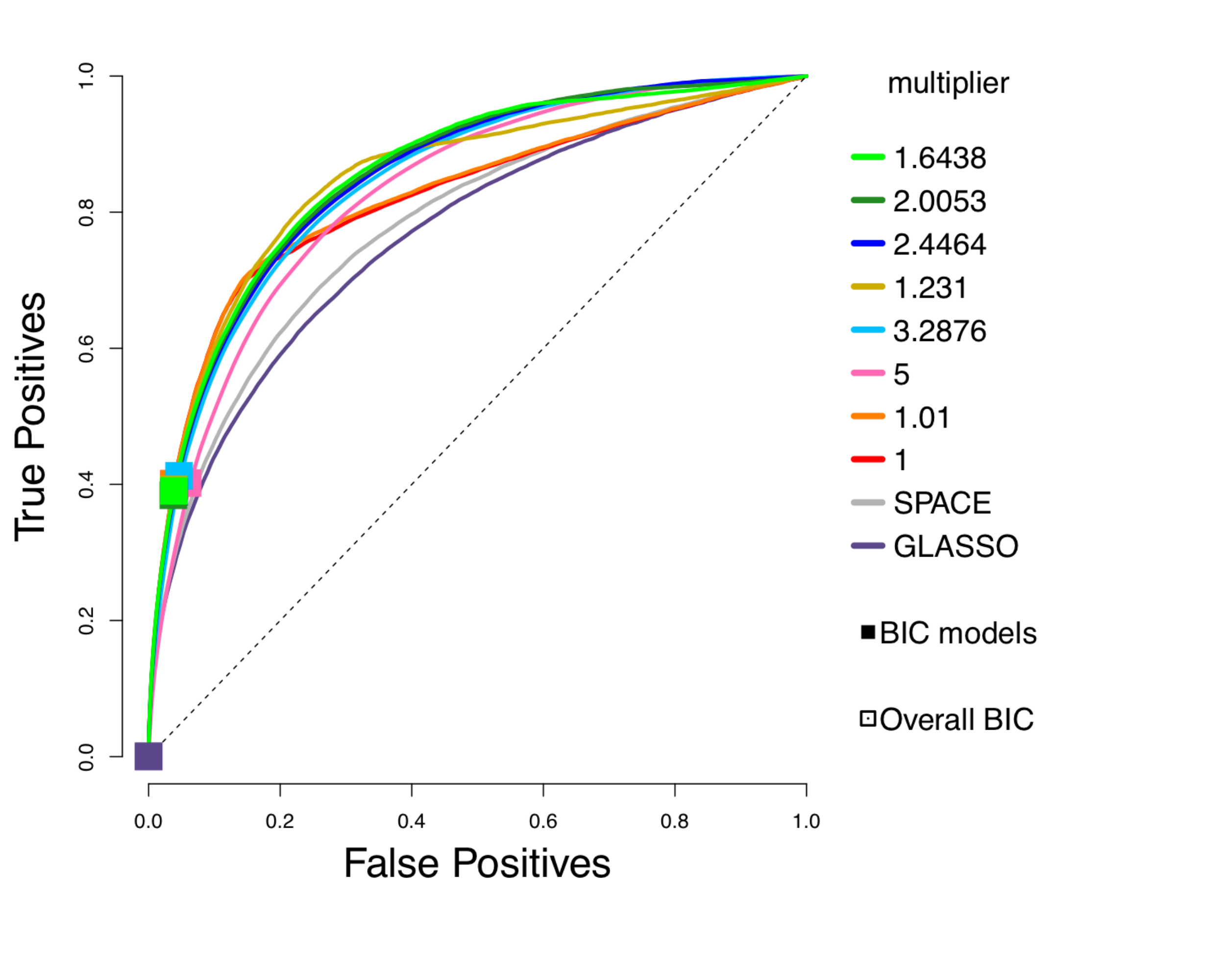}}
\caption{Performance of $\min(x,3)$ for truncated centered GGMs using different multipliers, compared to GLASSO and SPACE, in the centered setting, $n=80$ or 1000.}
\label{plot_mixed}
\end{figure}


\noindent\emph{Truncated Non-Centered GGMs}
\label{Choosemult_Truncated Non-Centered GGMs}:\indent  In Figure~\ref{plot_lr}, we consider the non-centered case of  Section~\ref{Chooseh_Truncated Non-Centered GGMs}, and use the non-profiled estimator; that is, the non-centered estimator with $\ell_1$ penalty on both $\mathbf{K}$ and $\boldsymbol{\eta}\equiv\mathbf{K}\boldsymbol{\mu}$. The ROC curves are compared to competing methods GLASSO and SPACE. For the choice of amplification in our estimator, we consider the upper-bound multiplier $C(n,m)$ from Theorem~\ref{corollary2} as the default. We refer to this as $\emph{high}$ amplification.  We also consider lower amplification, with $\delta=2-(1+24e\log m/n)^{-1}$, referred to as \emph{medium}.  For $n=1000$, we also consider a \emph{low} multiplier $1$, which corresponds to no amplification.  We compare these possible defaults to a finer grid of multipliers of which we show some representatives in the plots.

We see that among our defaults, the upper-bound choice $C(n,m)$
performs best. Some additional gains are possible by tuning
the multiplier over a grid of values containing this choice.
Moreover, we see that it can be beneficial to tune over both $\lambda_{\mathbf{K}}$ and $\lambda_{\mathbf{K}}/\lambda_{\boldsymbol{\eta}}$.


We remark that while for each run, the best model picked by BIC falls on the ROC curve, a few squares are off the curve in Figure~\ref{plot_lr}~(c). This is because these squares correspond to the average of the true and false positive rates of the chosen BIC models over 50 runs, potentially due to multimodality of the distribution of the models. Nonetheless, in all cases, the average of the models picked by BIC tuned over both $\lambda_{\mathbf{K}}$ and $\lambda_{\mathbf{K}}/\lambda_{\boldsymbol{\eta}}$ looks reasonable.

\begin{figure}[H]
\centering
\vspace{-0.2in}
\subfloat[$n=80$, $\mathrm{mult}=1.7897$]
{\includegraphics[trim={0 0.5in 1.3in 0},clip,scale=0.27]{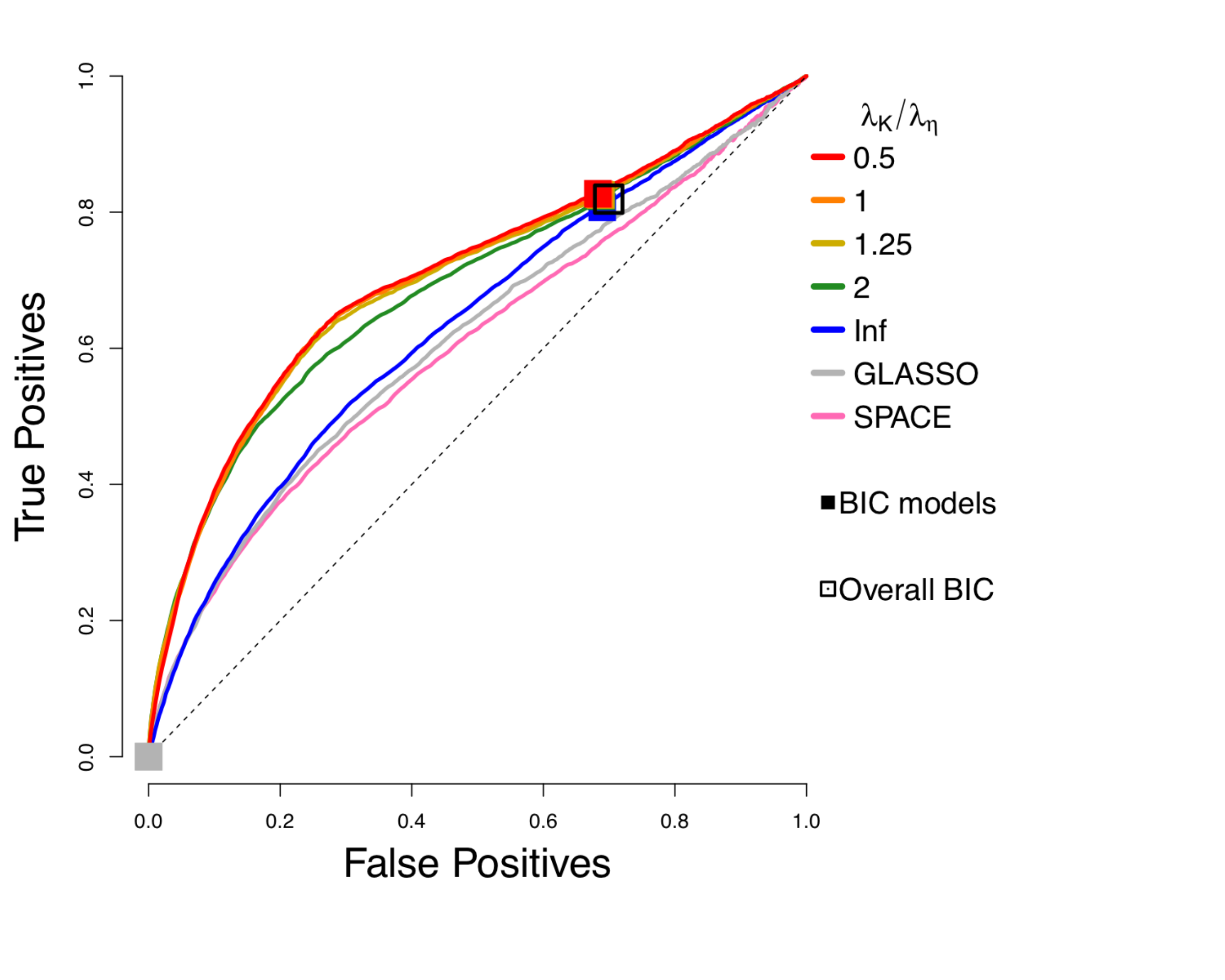}\hspace{-0.02in}}
\subfloat[$n=80$, $\mathrm{mult}=1.8647$]
{\includegraphics[trim={0 0.5in 2in 0},clip,scale=0.27]{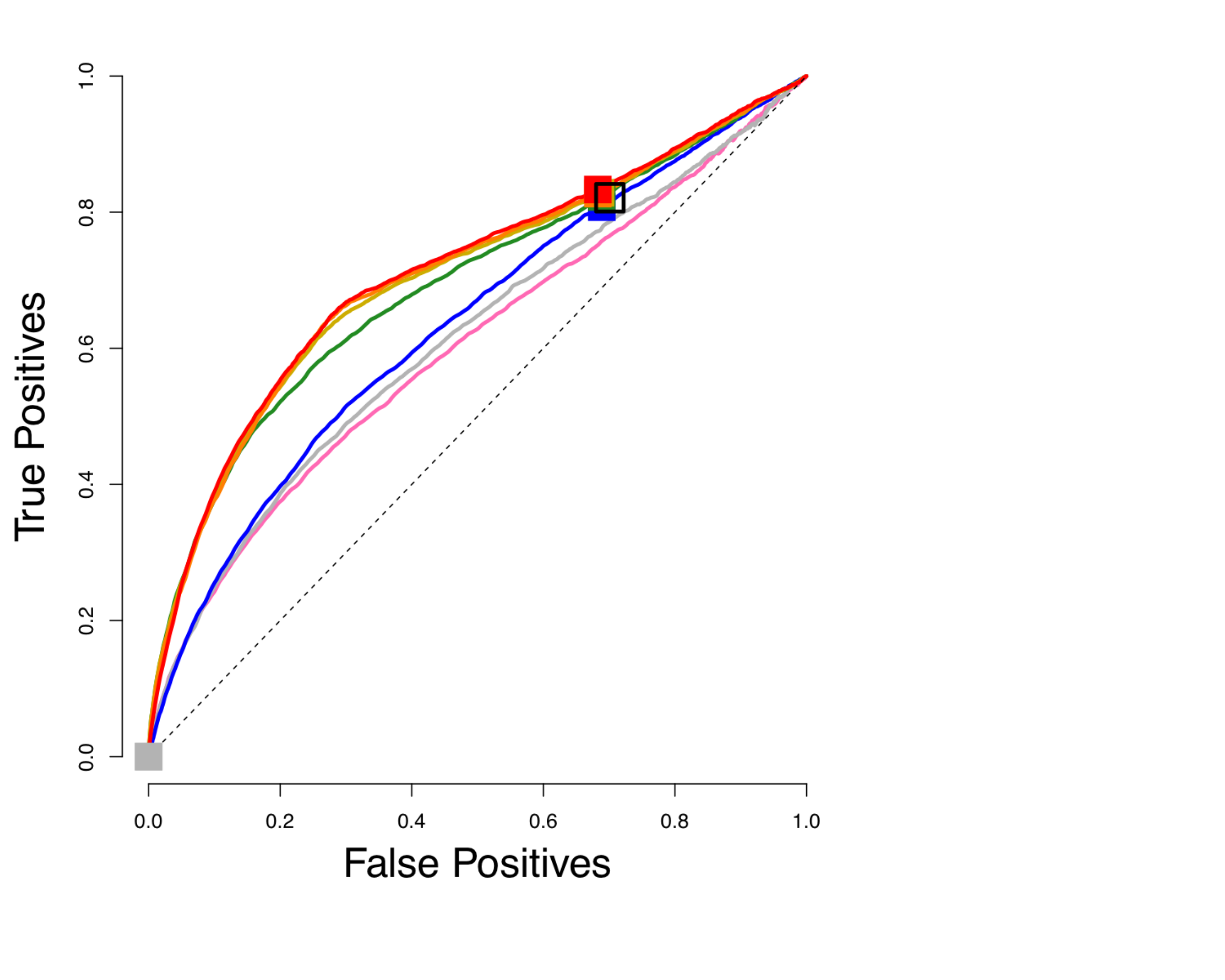}\hspace{-0.3in}}
\\ \vspace{-0.1in}
\subfloat[$n=1000$, $\mathrm{mult}=1$]
{\includegraphics[trim={0 0.5in 2in 0},clip,scale=0.27]{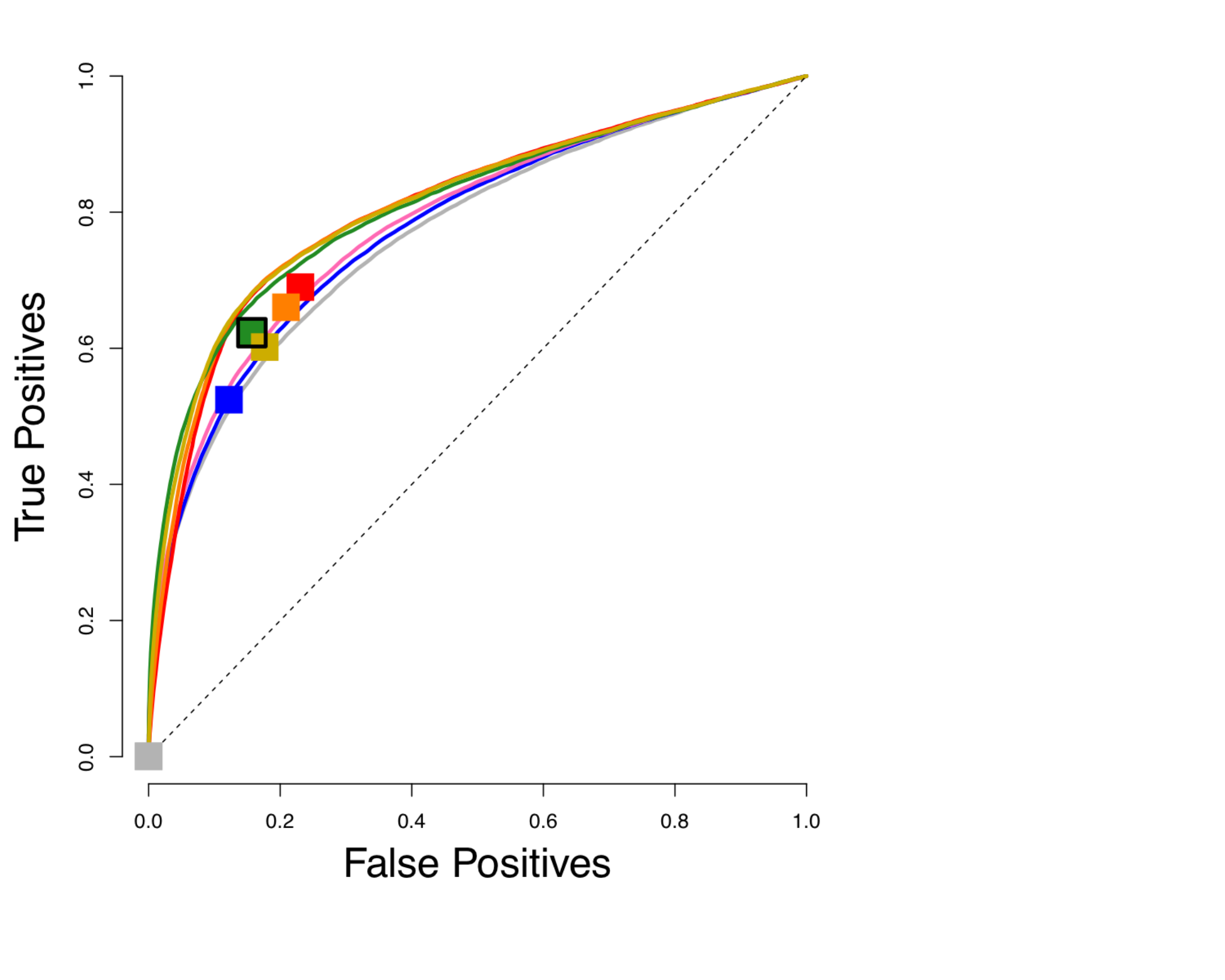}\hspace{-0.3in}}
\subfloat[$n=1000$, $\mathrm{mult}=1.2310$]
{\includegraphics[trim={0 0.5in 2in 0},clip,scale=0.27]{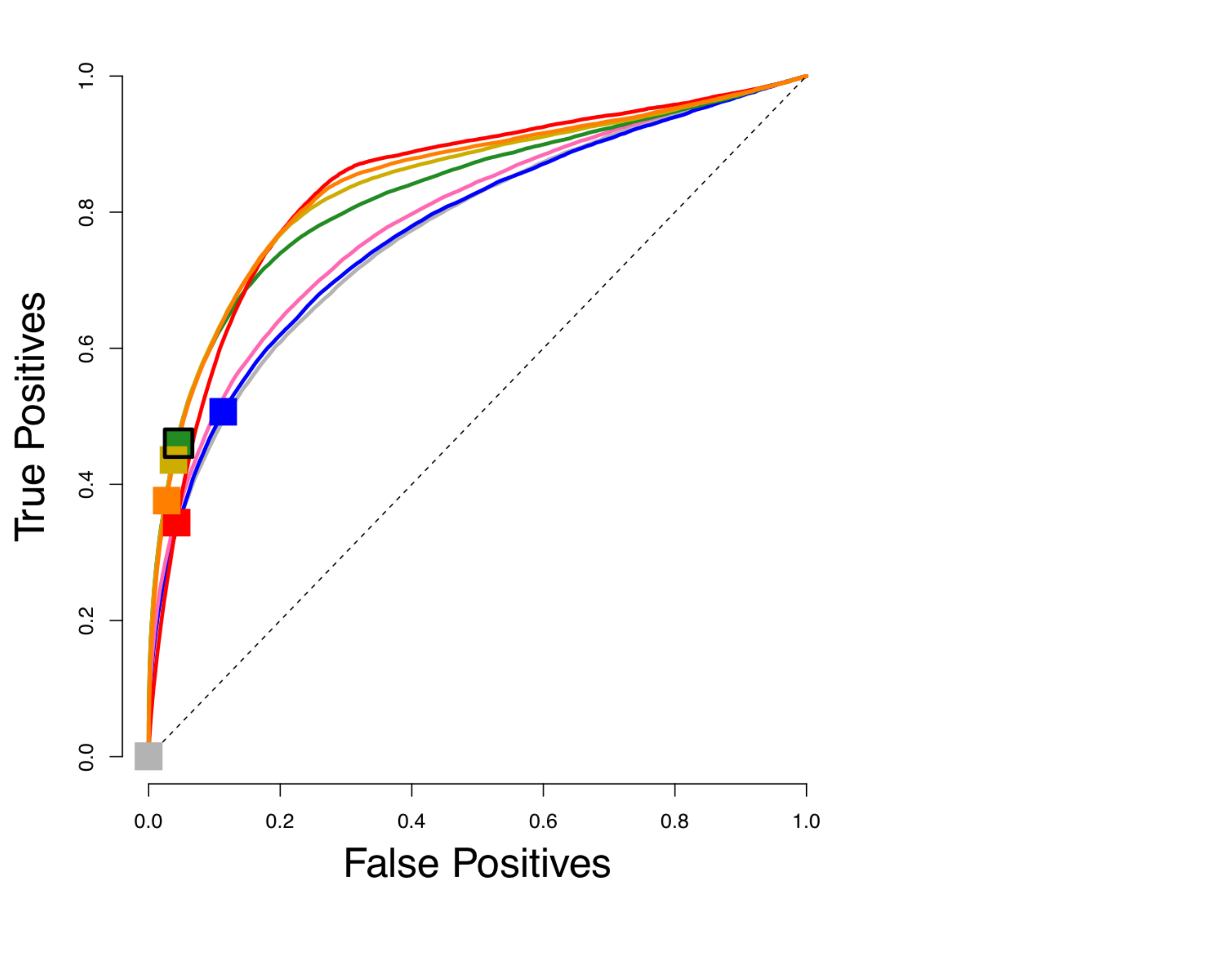}\hspace{-0.3in}}\subfloat[$n=1000$, $\mathrm{mult}=1.6438$]
{\includegraphics[trim={0 0.5in 2in 0},clip,scale=0.27]{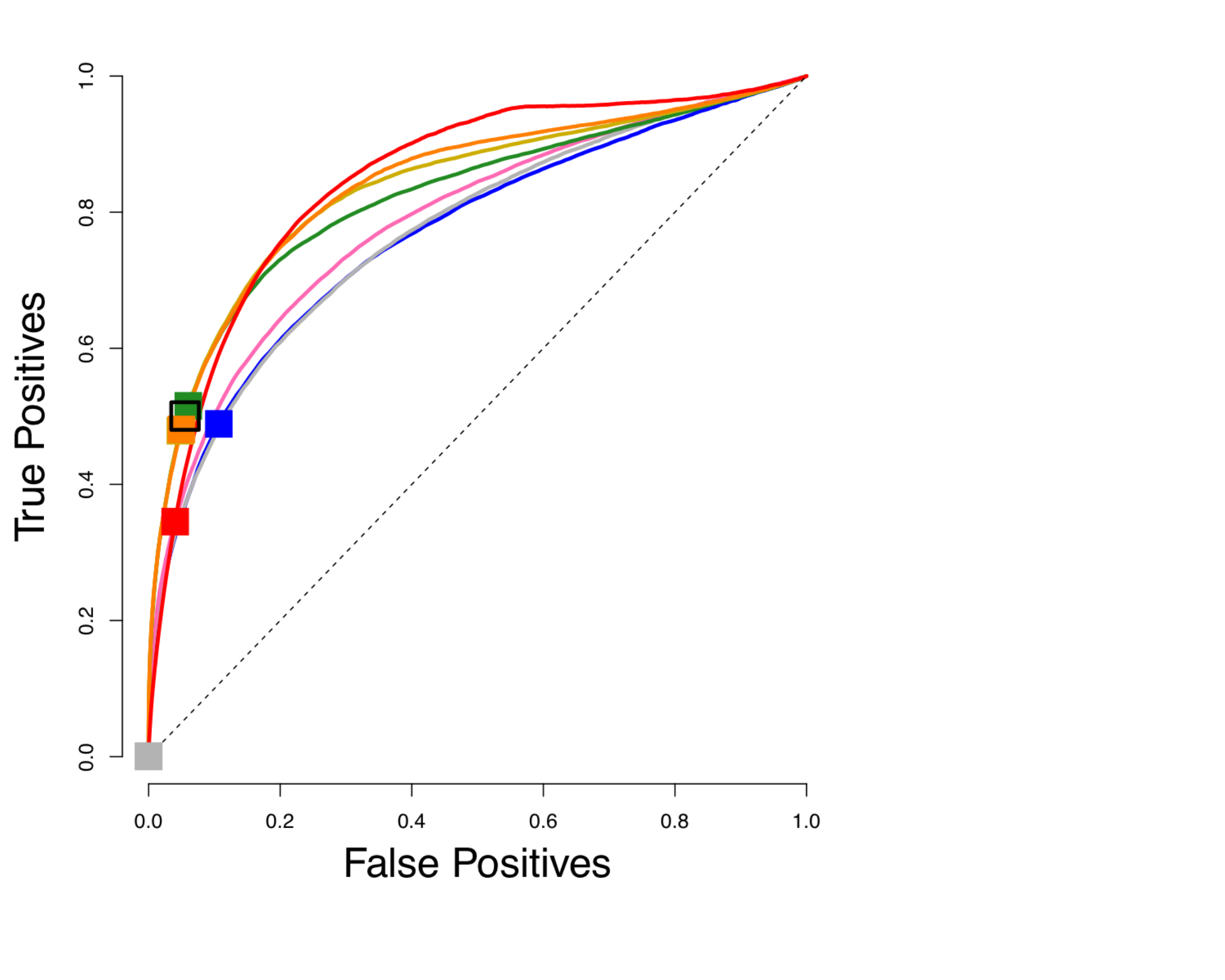}\hspace{-0.3in}}
\caption{Performance of the non-centered estimator with
  $h(x)=\min(x,3)$. Each curve corresponds to a different choice of
  $\lambda_{\mathbf{K}}/\lambda_{\boldsymbol{\eta}}$. 
  Squares indicate models picked by eBIC with refit.  The square with
  black outline has the highest eBIC among all models (combinations of
  $\lambda_{\mathbf{K}}$, $\lambda_{\boldsymbol{\eta}}$). Multipliers correspond to medium or high for $n=80$, and low, medium or high for $n=1000$, respectively.}
\label{plot_lr}
\vspace{-0.3in}
\end{figure}

\newpage

\subsection{Other \texorpdfstring{$a/b$}{a/b} Models}\label{Other Models}
We now turn to the non-Gaussian ($a\neq 1$ or $b\neq 1$) setting. Based on the observations in Section~\ref{Reasonable Choices of h}, we focus on functions of type $\min(x^p,c)$ for some power $p>0$ and truncation point $c>0$. For simplicity, for the non-centered models we use the profiled estimator (\ref{loss-noncentered-profiled}) (i.e., $\lambda_{\boldsymbol{\eta}}=0$) and use the multiplier $C(n,m)$ in Theorem~\ref{corollary1} for truncated GGMs as a guidance. We note that tuning over the $\lambda_{\boldsymbol{\eta}}$ parameter and the multiplier can potentially give a significant improvement as seen in Section~\ref{Simulations_Truncated GGMs}.

These simulations suggest that among the class of functions of the form $\min(x^p,c)$, $x^{2-a}$ or $\min(x^{2-a},c)$ with a moderately large $c$ can be used as the default choice of $h(x)$. This agrees with our findings in Section~\ref{Choice of h}. We note that bounded $h$ functions were only used in the proof for truncated GGMs, and picking a moderately large truncation point can correspond to having an untruncated power.

\subsubsection{Exponential Setting}
For the exponential models, $a=b=1/2$. Since $a=b$, for both centered and non-centered settings, based on the principle in Section~\ref{Reasonable Choices of h}, choosing $h(x)=\min(x^{3/2},c)$ satisfies (A1) and (A2) and also ensures that entries in $\boldsymbol{\Gamma}$ and $\boldsymbol{g}$ are bounded (for small $x$), while choosing $h(x)=\min(\sqrt{x},c)$ only guarantees (A1) and (A2).

In Figure~\ref{plot_exp}, we present the AUCs for the ROC curves of edge recovery with different choices of $h(x)=\min(x^{\mathrm{pow}},c)$. As before, we set $n=80$ or $1000$ and $m=100$, but we use an $\boldsymbol{\eta}_0$ with each component uniformly equal to $-0.5$, $0$ or $0.5$; for $\boldsymbol{\eta}_0\equiv\boldsymbol{0}$, we assume this information is known and use the centered estimator. The results suggest that $\mathrm{pow}=3/2=2-a$ is the best choice of power. For this optimal choice, the performance improves with larger $c$, so $x^{2-a}$ gives the best results. For sub-optimal powers, including truncation gives better results.

\subsubsection{Gamma Setting}
The centered gamma models reduce to the centered exponential models. Thus, in this section, we only consider the non-centered settings, with  $a=1/2$, $b=0$. From Section~\ref{Reasonable Choices of h}, we have the following choices:
\begin{itemize}
\item $\min(x^2,c)$ both satisfies (A1)--(A2) and ensures $\boldsymbol{\Gamma}$ and $\boldsymbol{g}$ are bounded;
\item $\min(x^{\max\{3/2,1-\min_j\eta_{0,j}\}},c)$ ensures (A1)--(A2) and bounds $\boldsymbol{\Gamma}_{11}$ and $\boldsymbol{g}_1$; by default without prior information on $\boldsymbol{\eta}_0$ this is $\min(x^2,c)$;
\item $\min(x^{3/2},c)$ satisfies both conditions on the interaction part only ($\boldsymbol{x}^a$), but does not guarantee (A1)--(A2);
\item $\min(x^{1/2},c)$ satisfies the sufficient conditions for (A1)--(A2) on the interaction only.
\end{itemize}
The results are shown in Figure~\ref{plot_gamma}, where we consider $n=80$, $1000$, and $\boldsymbol{\eta}=\pm 0.5\mathbf{1}_{100}$. They suggest that $\mathrm{pow}=2-a=1.5$ works consistently well, although slightly outperformed by 1 and 1.25 in one case. As in the exponential case, with the optimal power it is beneficial to choose a large truncation point, or work with an untruncated power $x^{1.5}$. We conclude that the performance is likely only dependent on the $(2-a)$ power requirement for the ${\boldsymbol{x}^a}^{\top}\mathbf{K}\boldsymbol{x}^a$ part or $2-\min_{j}\eta_{0,j}$; simulations in the next section rule out the possibility of the latter.

\begin{figure}[htp]
\centering
\vspace{-0.7in}
\subfloat[$n=80$, $\boldsymbol{\eta}=-0.5\mathbf{1}_{100}$, profiled estimator]
{\includegraphics[scale=0.30]{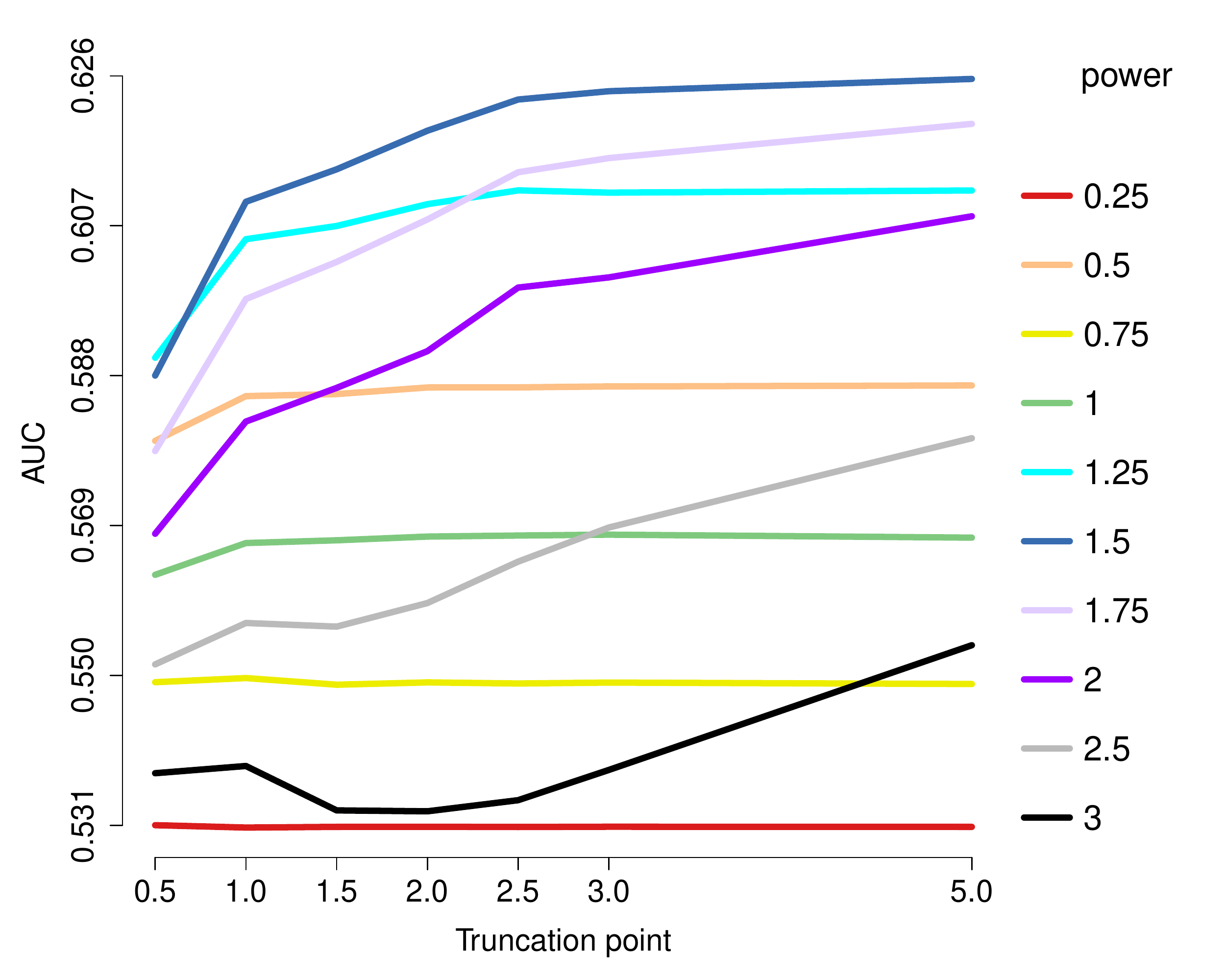}\hspace{-0.02in}}
\subfloat[$n=1000$, $\boldsymbol{\eta}=-0.5\mathbf{1}_{100}$, profiled estimator]
{\includegraphics[scale=0.30]{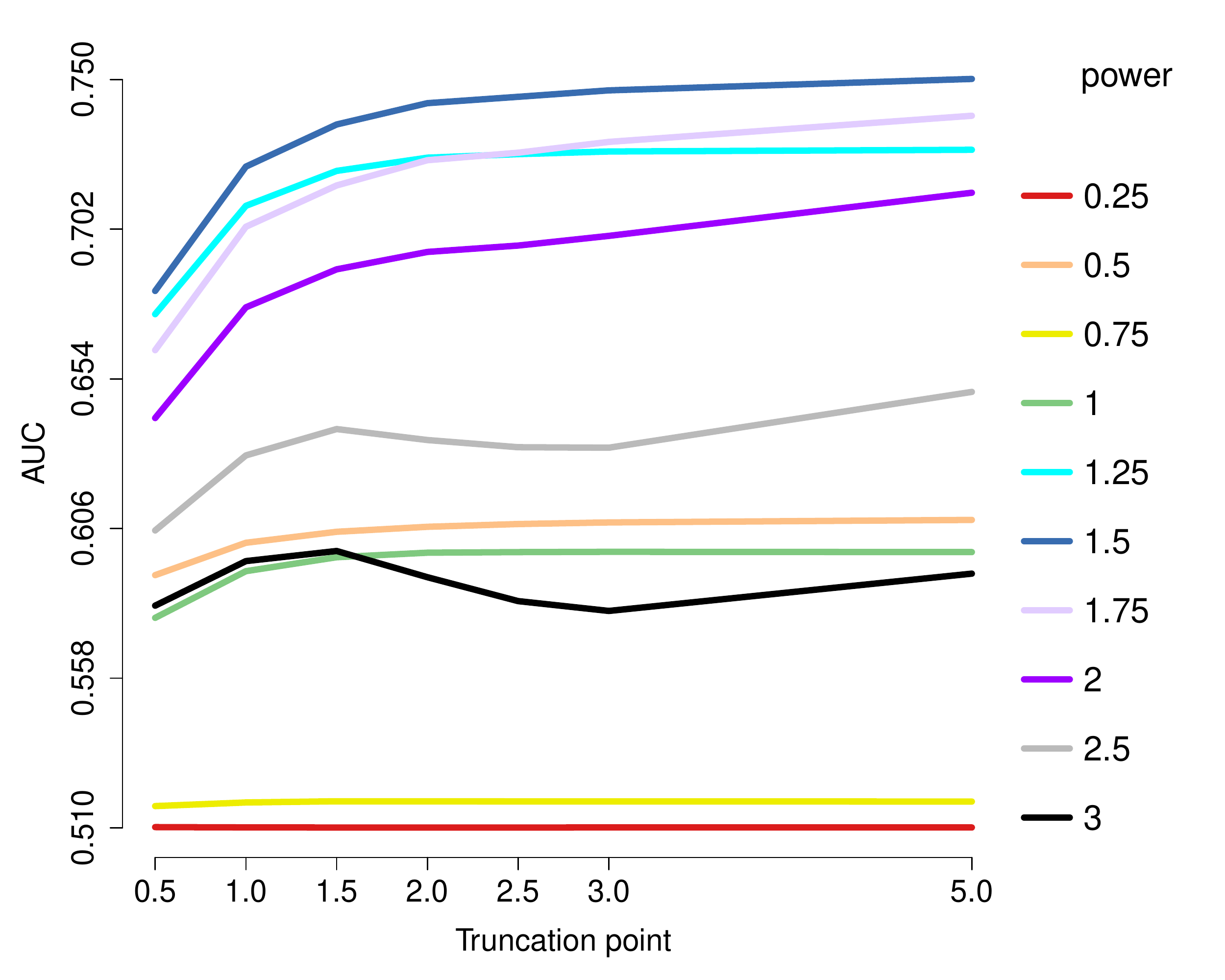}\hspace{-0.02in}}
\\ \vspace{-0.1in}
\subfloat[$n=80$, $\boldsymbol{\eta}=\boldsymbol{0}$, centered estimator]
{\includegraphics[scale=0.30]{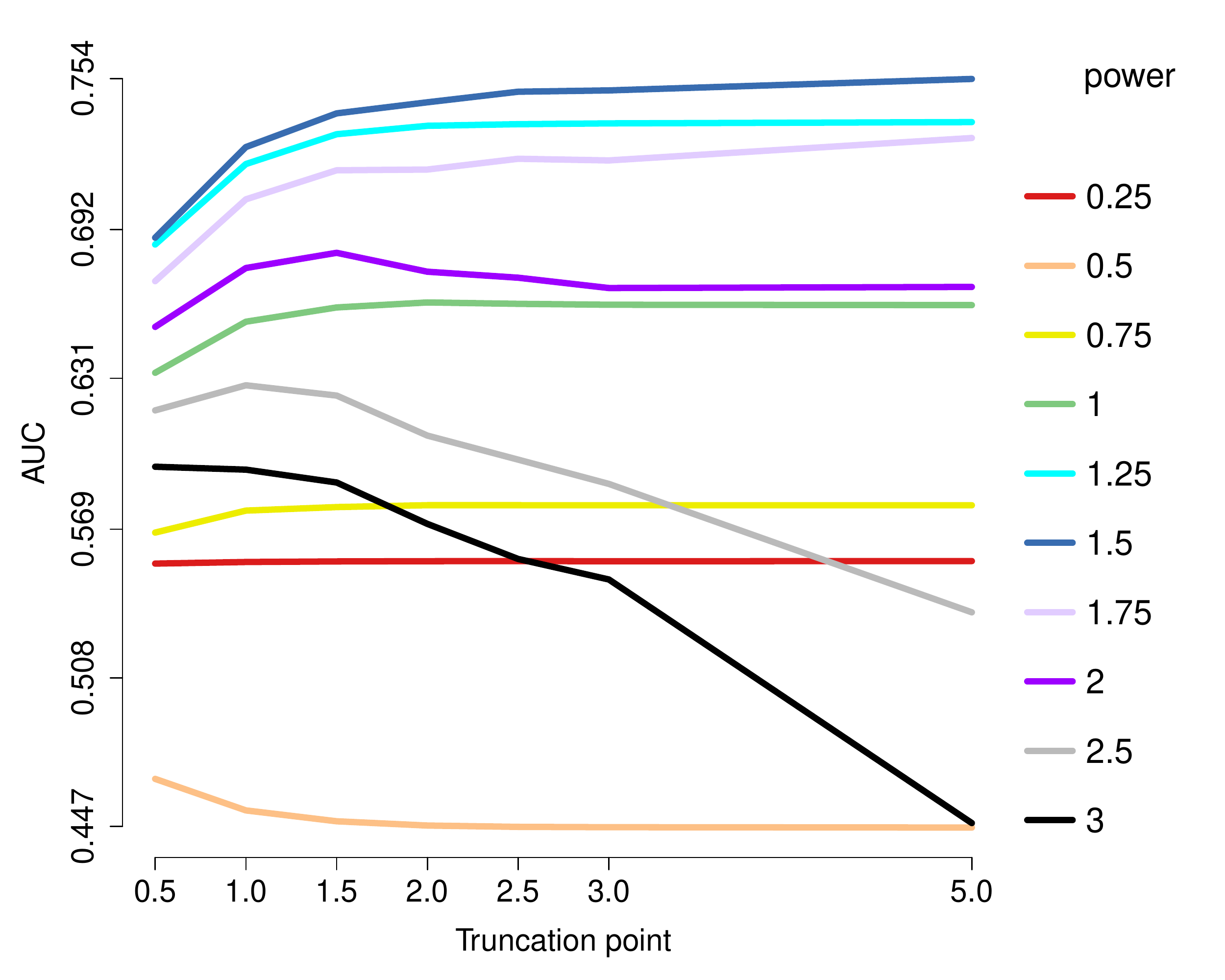}\hspace{-0.02in}}
\subfloat[$n=1000$, $\boldsymbol{\eta}=\boldsymbol{0}$, centered estimator]
{\includegraphics[scale=0.30]{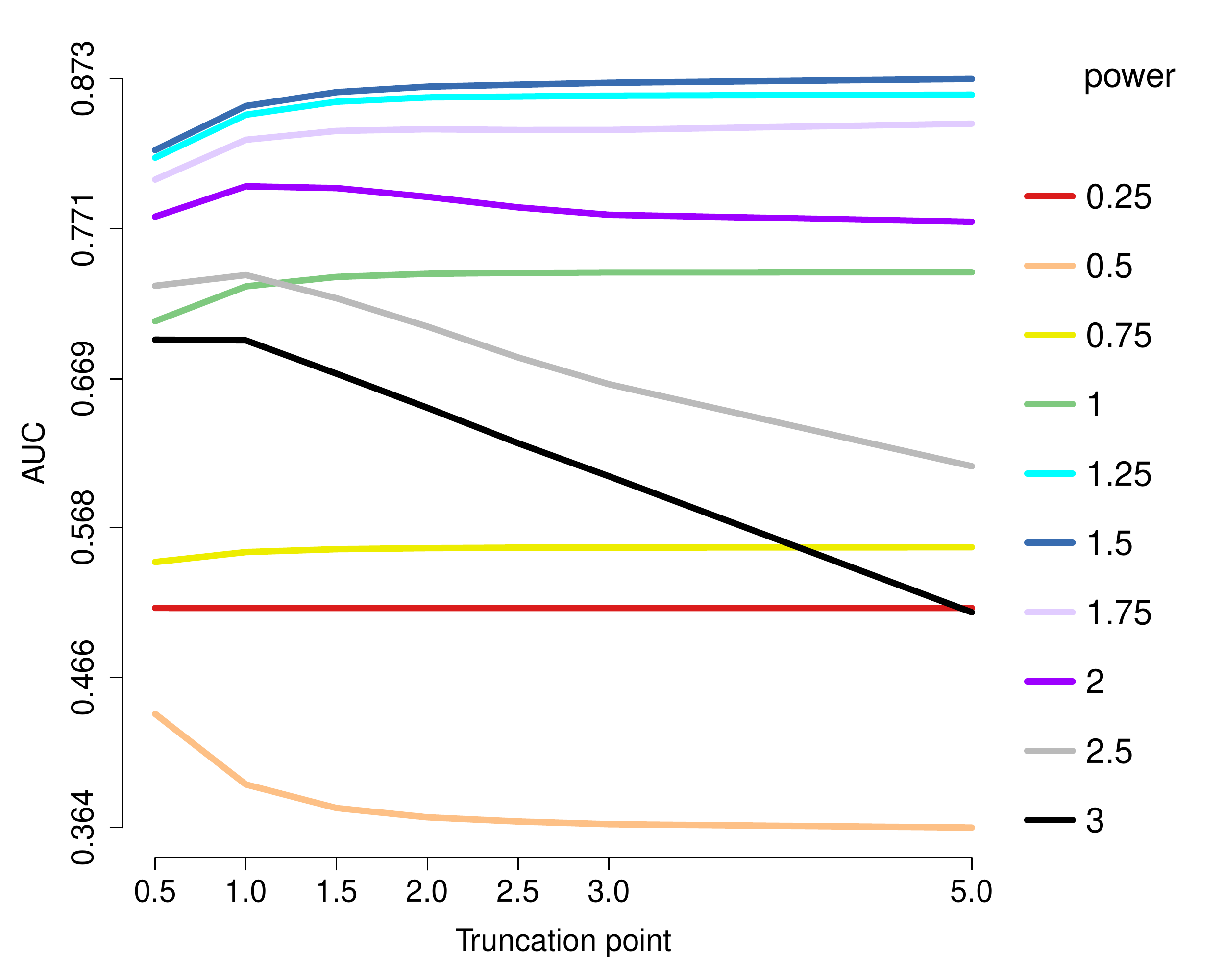}\hspace{-0.02in}}
\\ \vspace{-0.1in}
\subfloat[$n=80$, $\boldsymbol{\eta}=0.5\mathbf{1}_{100}$, profiled estimator]
{\includegraphics[scale=0.30]{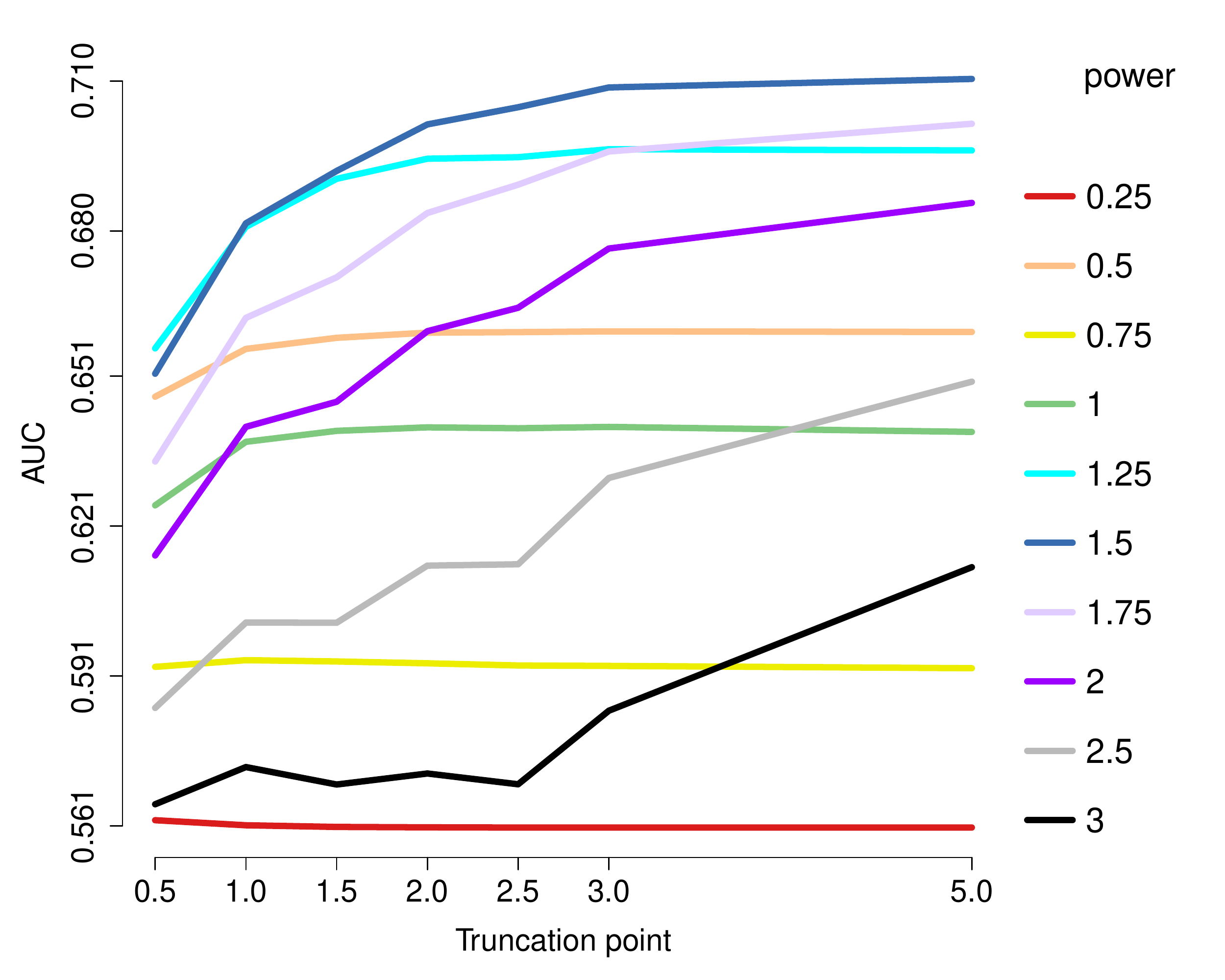}\hspace{-0.02in}}
\subfloat[$n=1000$, $\boldsymbol{\eta}=0.5\mathbf{1}_{100}$, profiled estimator]
{\includegraphics[scale=0.30]{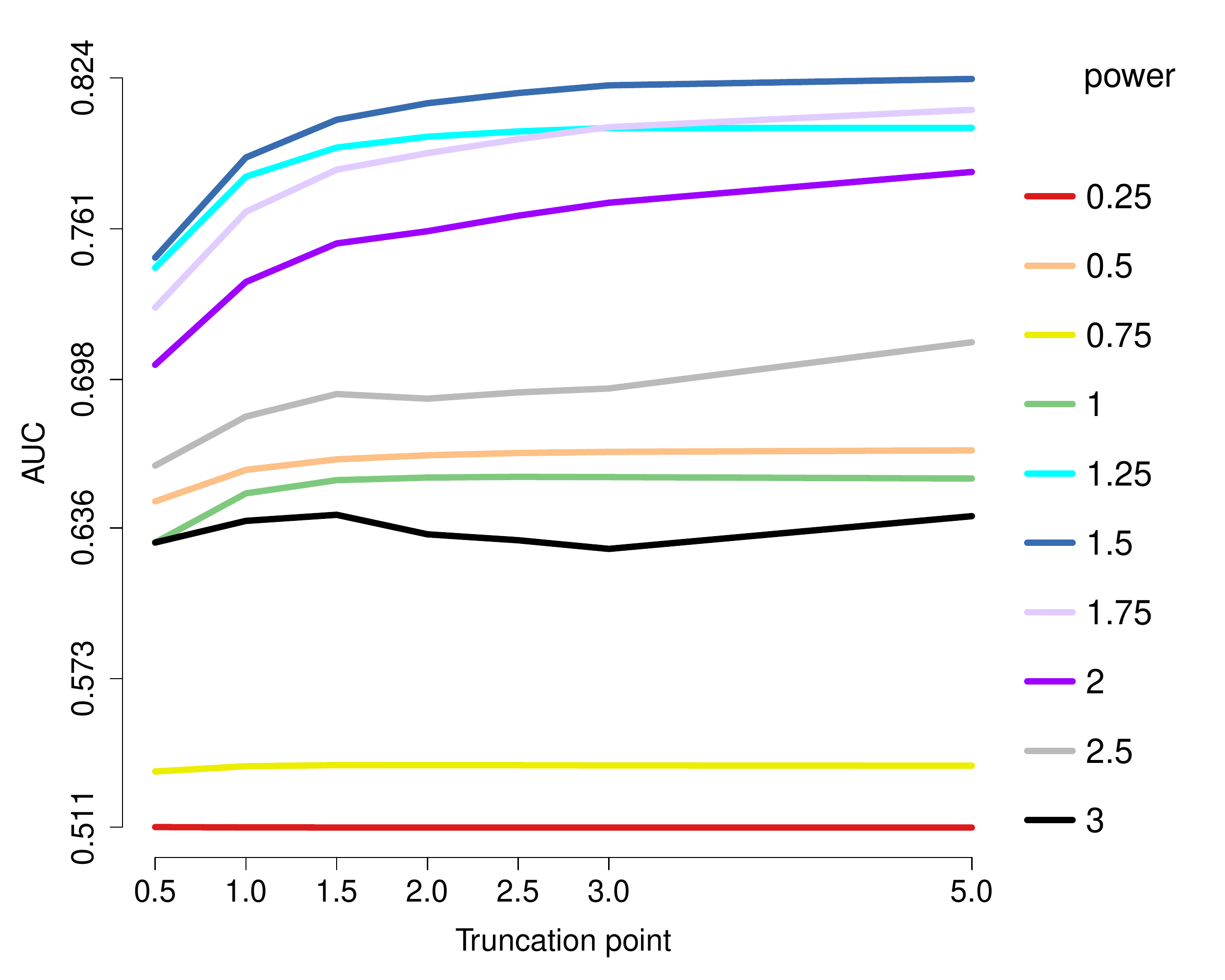}\hspace{-0.02in}}
\caption{AUCs for edge recovery using generalized score matching for the exponential models. Each curve represents a different choice of power $p$ in $h(x)=\min(x^p,c)$, and the $x$ axis marks the truncation point $c$. Colors are sorted by $p$.}\label{plot_exp}
\end{figure}

\vspace{-0.3in}
\subsubsection{Other Choices of \texorpdfstring{$a$}{a} and \texorpdfstring{$b$}{b}}
In this section, we consider other choices of $a$ and $b$. Specifically, $a=3/2$ and $b=1/2$ or $0$. These combinations are chosen to confirm, in a more extreme setting, that the performance is mainly determined by requirements on the power based on $a$, which correspond to choosing a power of $1-a$ or $2-a$, but not those on $b$ (or on $\boldsymbol{\eta}$ when $b=0$) that correspond to $1-b$ and $2-b$. The relationship between these  two settings is analogous to that between the exponential and gamma models (same $a$, $b$ nonzero/zero). 

The results are shown in Figures~\ref{plot_ab_1.5_0.5} and \ref{plot_ab_1.5_0}, and indeed confirm that $x^{2-a}=x^{0.5}$ consistently gives the optimal results, even though $\boldsymbol{\eta}^{\top}\boldsymbol{x}^b$ is in favor of $x^{2-b}=x^{1.5}$ for $b=0.5$, and $\boldsymbol{\eta}^{\top}\log(\boldsymbol{x})$ is in favor of $x^2$ or at least $x^{1-\min_j\eta_{0,j}}$ when $b=0$. 
There are two possible explanations for the optimality of $2-a$ over $\max\{2-a,2-b\}$ or $\max\{2-a,1-\min_j\eta_{0,j}\}$: (1) The AUC metric is measured only on our interest, edge recovery for the interaction matrix, which only depends on $\boldsymbol{x}^a$; (2) using the profiled estimator weakens the effect of $b$.

\begin{figure}[ht]
\centering
\vspace{-0.2in}
\subfloat[$n=80$, $\boldsymbol{\eta}=-0.5\mathbf{1}_{100}$, profiled estimator]
{\includegraphics[scale=0.30]{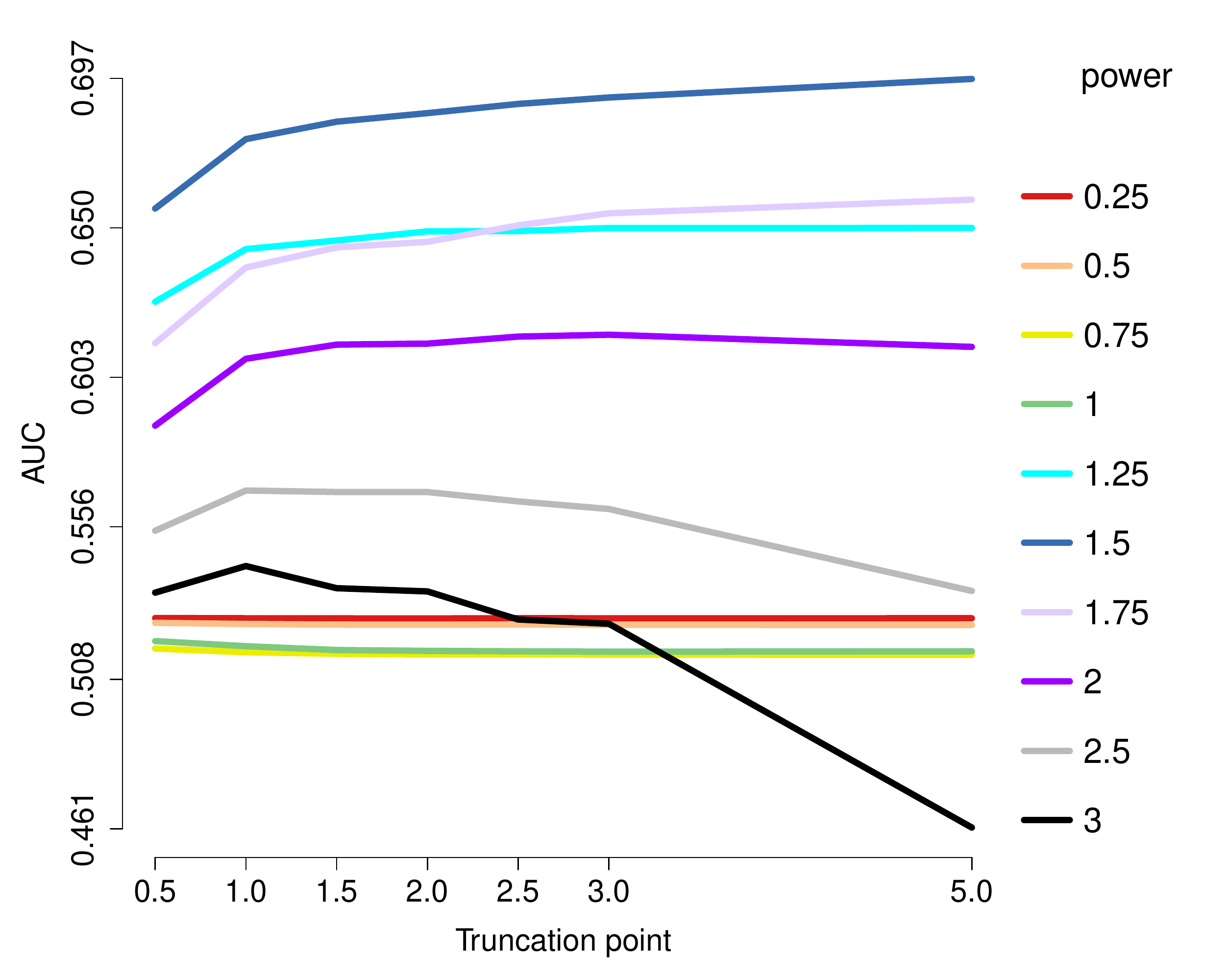}\hspace{-0.02in}}
\subfloat[$n=1000$, $\boldsymbol{\eta}=-0.5\mathbf{1}_{100}$, profiled estimator]
{\includegraphics[scale=0.30]{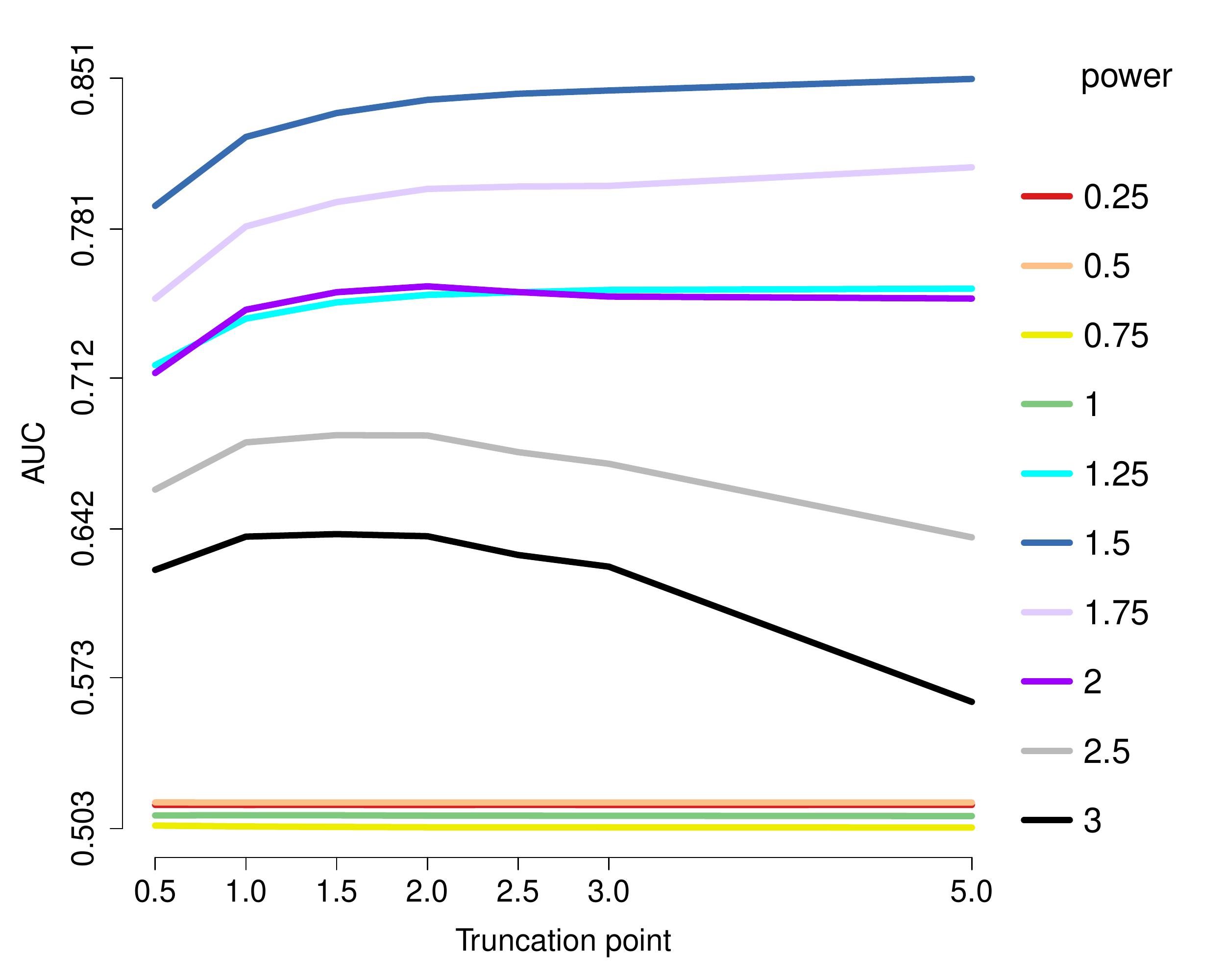}\hspace{-0.02in}}
\\ \vspace{-0.1in}
\subfloat[$n=80$, $\boldsymbol{\eta}=0.5\mathbf{1}_{100}$, profiled estimator]
{\includegraphics[scale=0.30]{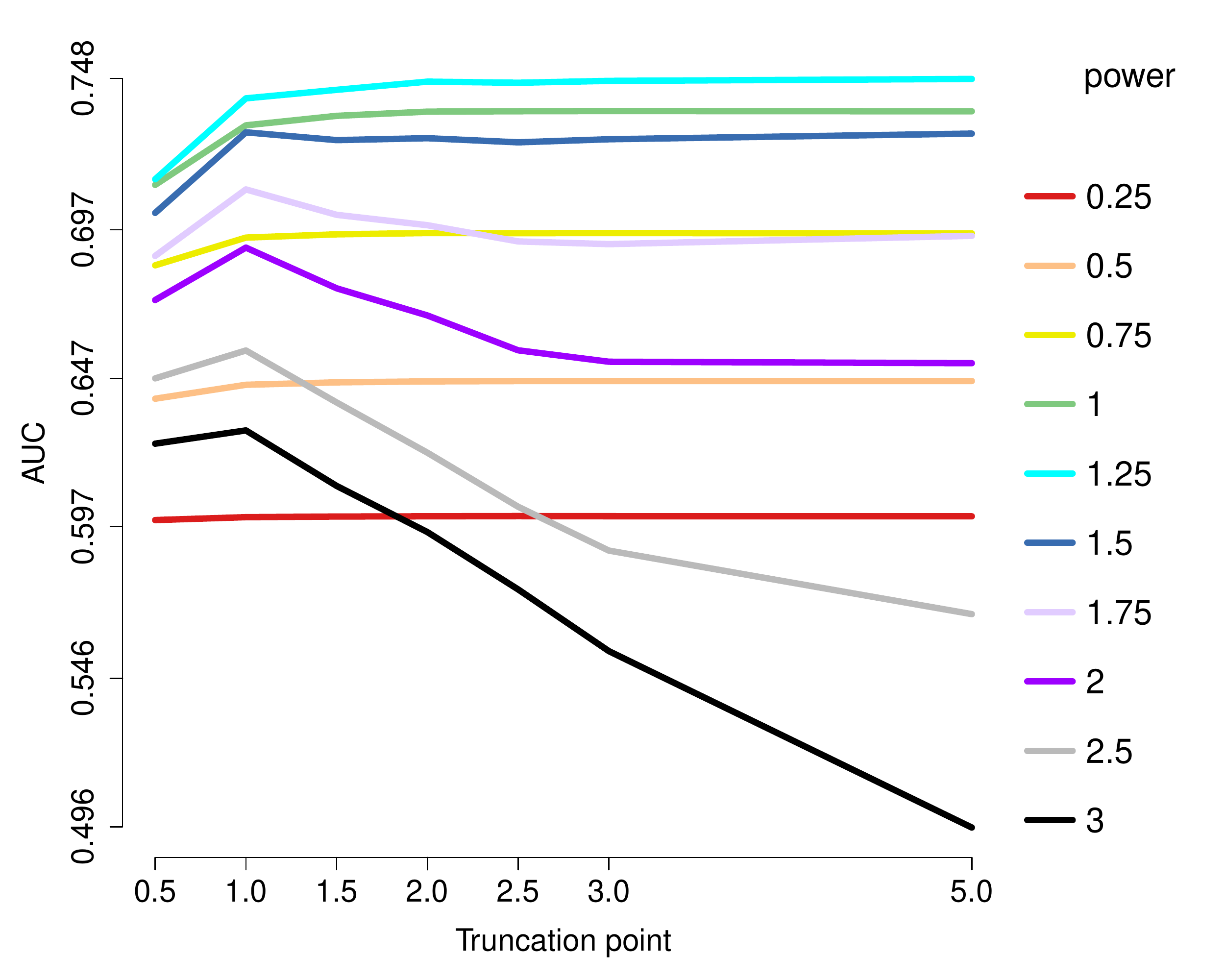}\hspace{-0.02in}}
\subfloat[$n=1000$, $\boldsymbol{\eta}=0.5\mathbf{1}_{100}$, profiled estimator]
{\includegraphics[scale=0.30]{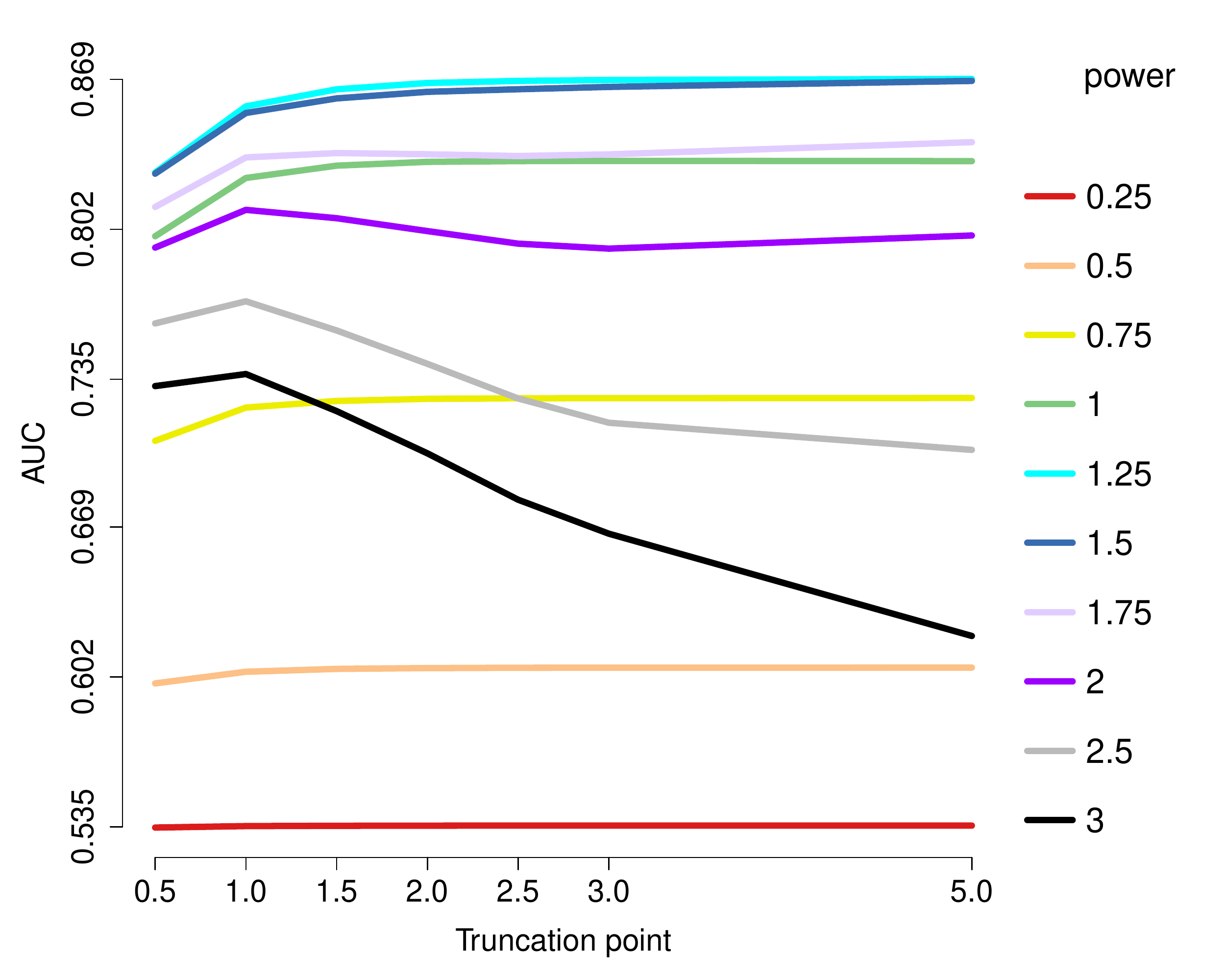}\hspace{-0.02in}}
\caption{AUCs for edge recovery using generalized score matching for the gamma models. Each curve represents a different choice of power $p$ in $h(x)=\min(x^p,c)$, and the $x$ axis marks the truncation point $c$. Colors are sorted by $p$.}
\vspace{-0.7in}
\label{plot_gamma}
\end{figure}

\begin{figure}[htp]
\centering
\vspace{-0.7in}
\subfloat[$n=80$, $\boldsymbol{\eta}=-0.5\mathbf{1}_{100}$, profiled estimator]
{\includegraphics[scale=0.30]{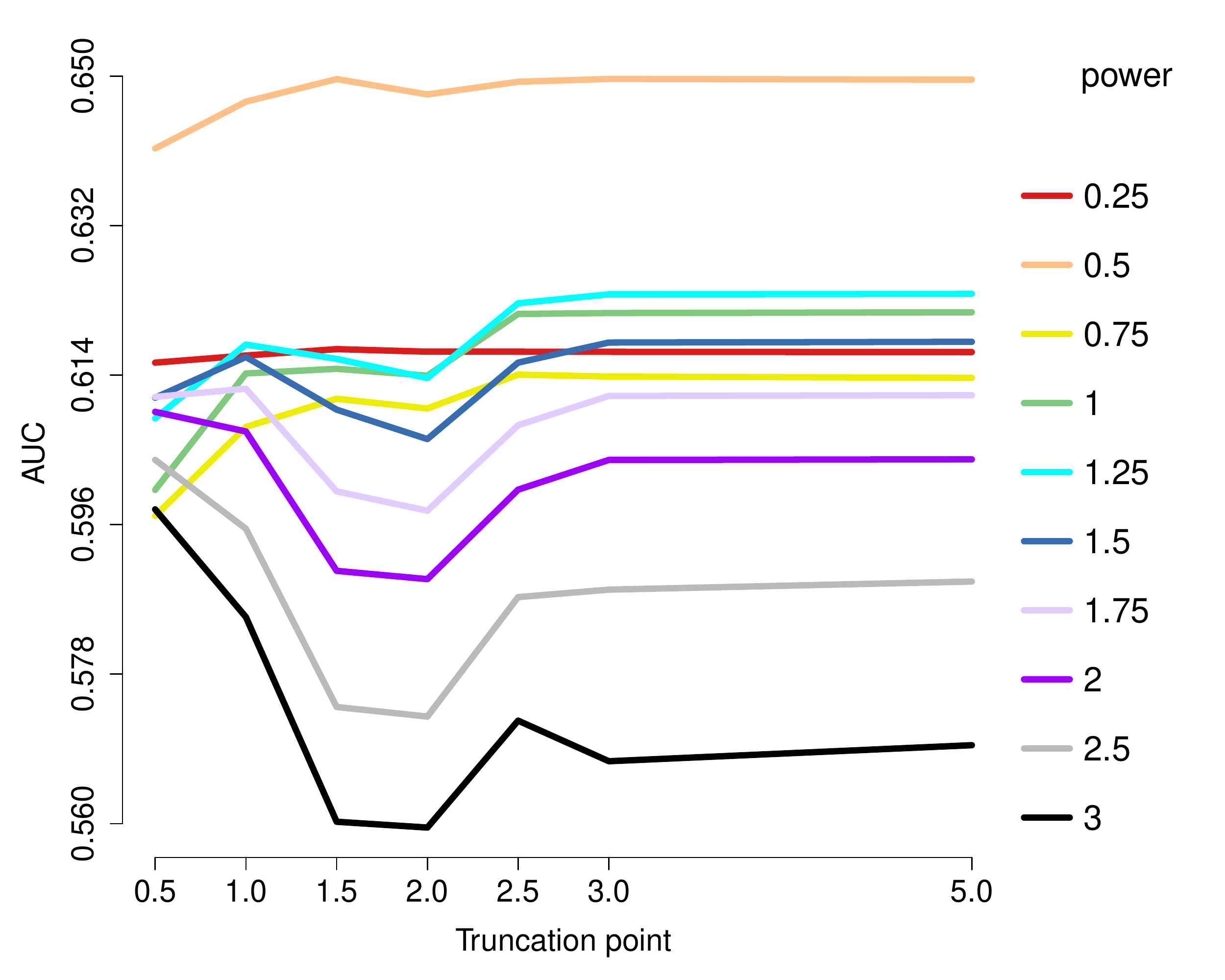}\hspace{-0.02in}}
\subfloat[$n=1000$, $\boldsymbol{\eta}=-0.5\mathbf{1}_{100}$, profiled estimator]
{\includegraphics[scale=0.30]{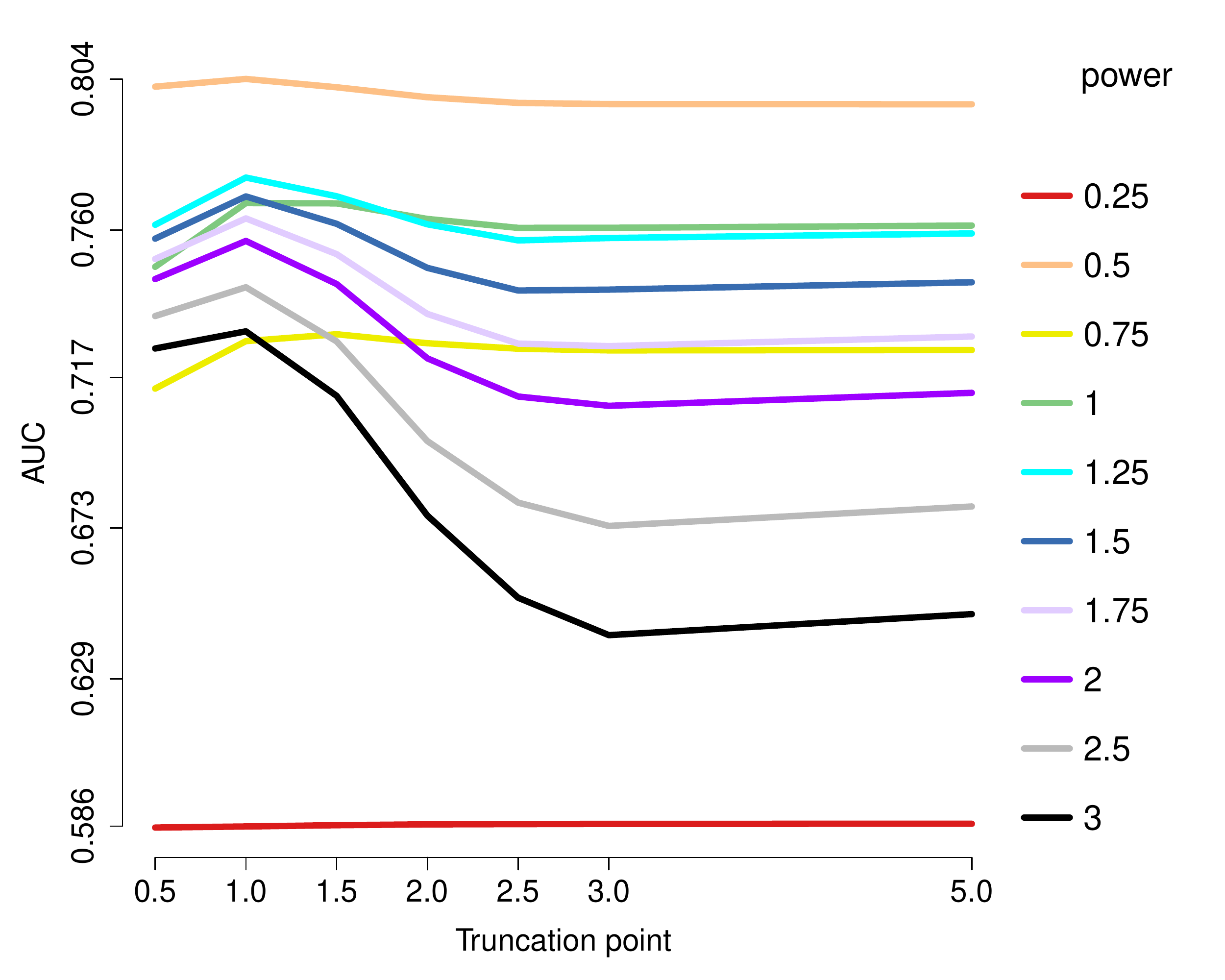}\hspace{-0.02in}}
\\ \vspace{-0.1in}
\subfloat[$n=80$, $\boldsymbol{\eta}=\boldsymbol{0}$, centered estimator]
{\includegraphics[scale=0.30]{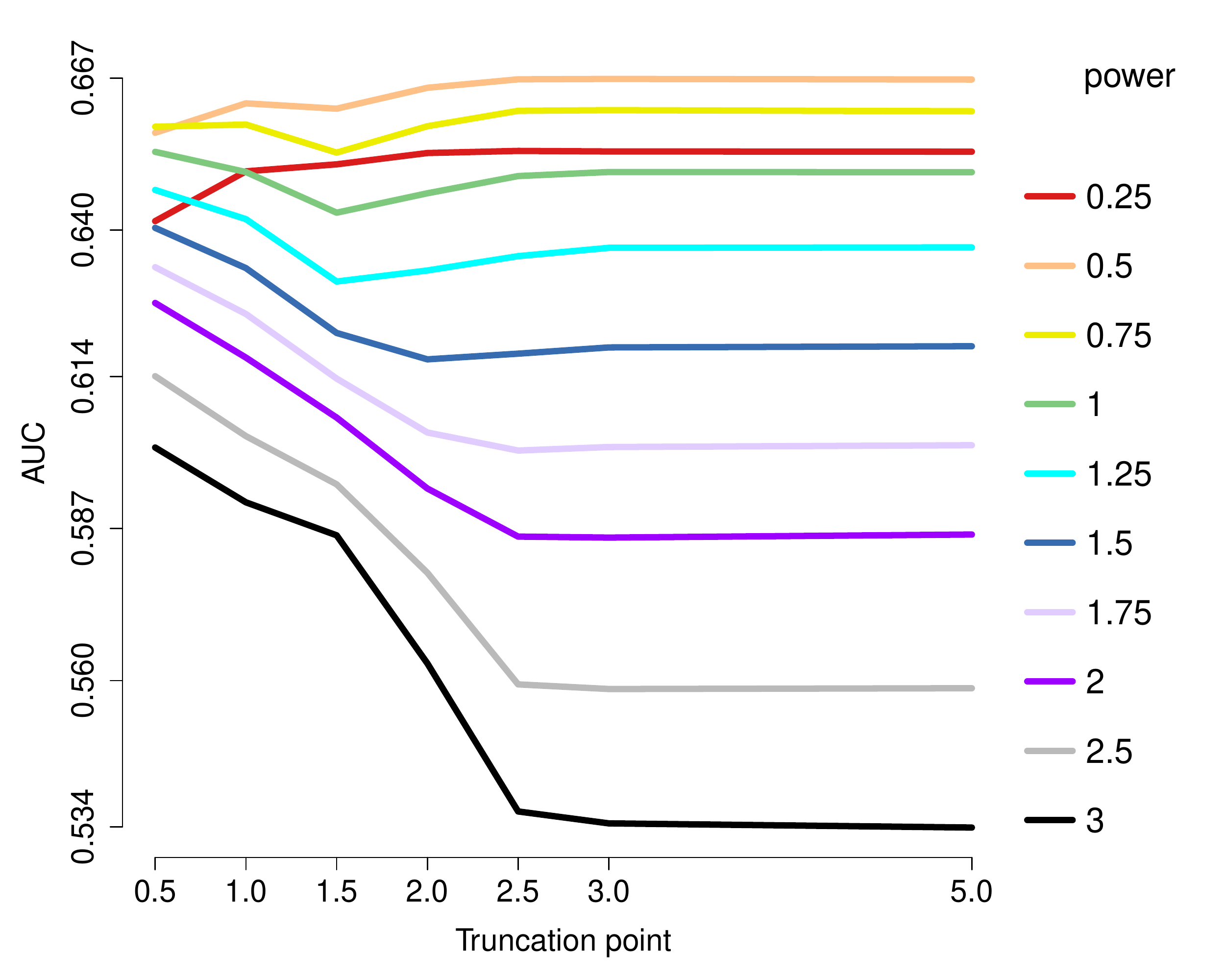}\hspace{-0.02in}}
\subfloat[$n=1000$, $\boldsymbol{\eta}=\boldsymbol{0}$, centered estimator]
{\includegraphics[scale=0.30]{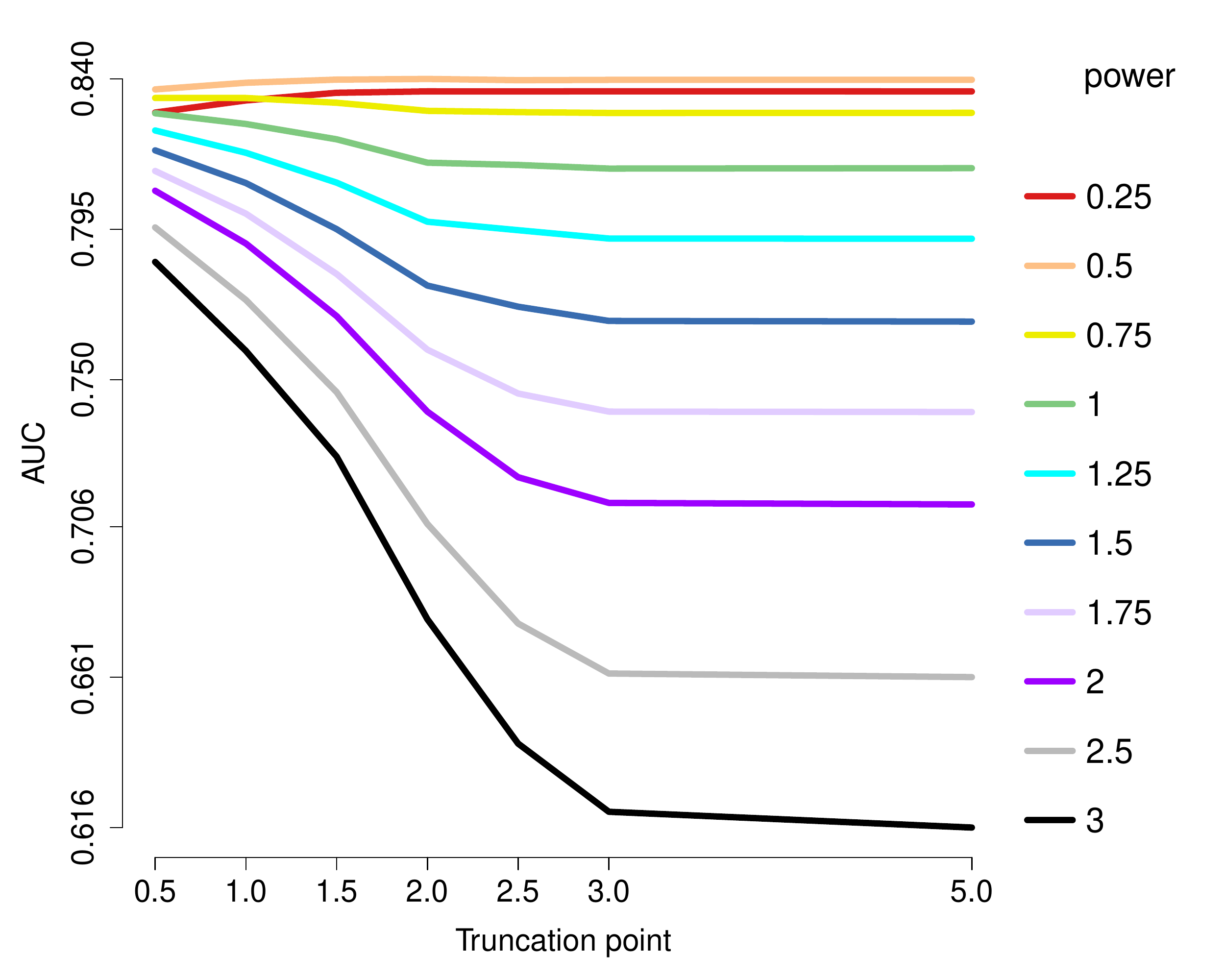}\hspace{-0.02in}}
\\ \vspace{-0.1in}
\subfloat[$n=80$, $\boldsymbol{\eta}=0.5\mathbf{1}_{100}$, profiled estimator]
{\includegraphics[scale=0.30]{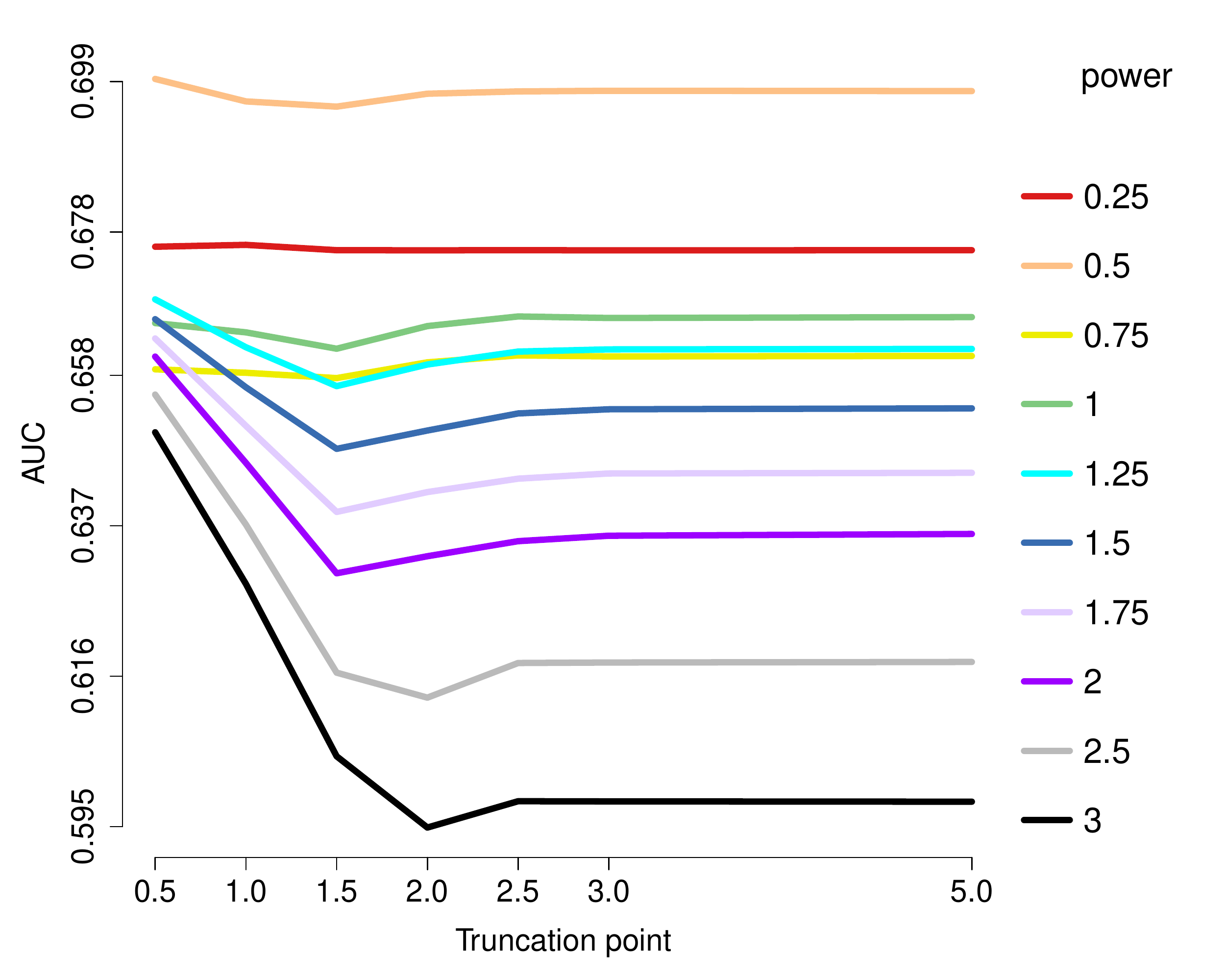}\hspace{-0.02in}}
\subfloat[$n=1000$, $\boldsymbol{\eta}=0.5\mathbf{1}_{100}$, profiled estimator]
{\includegraphics[scale=0.30]{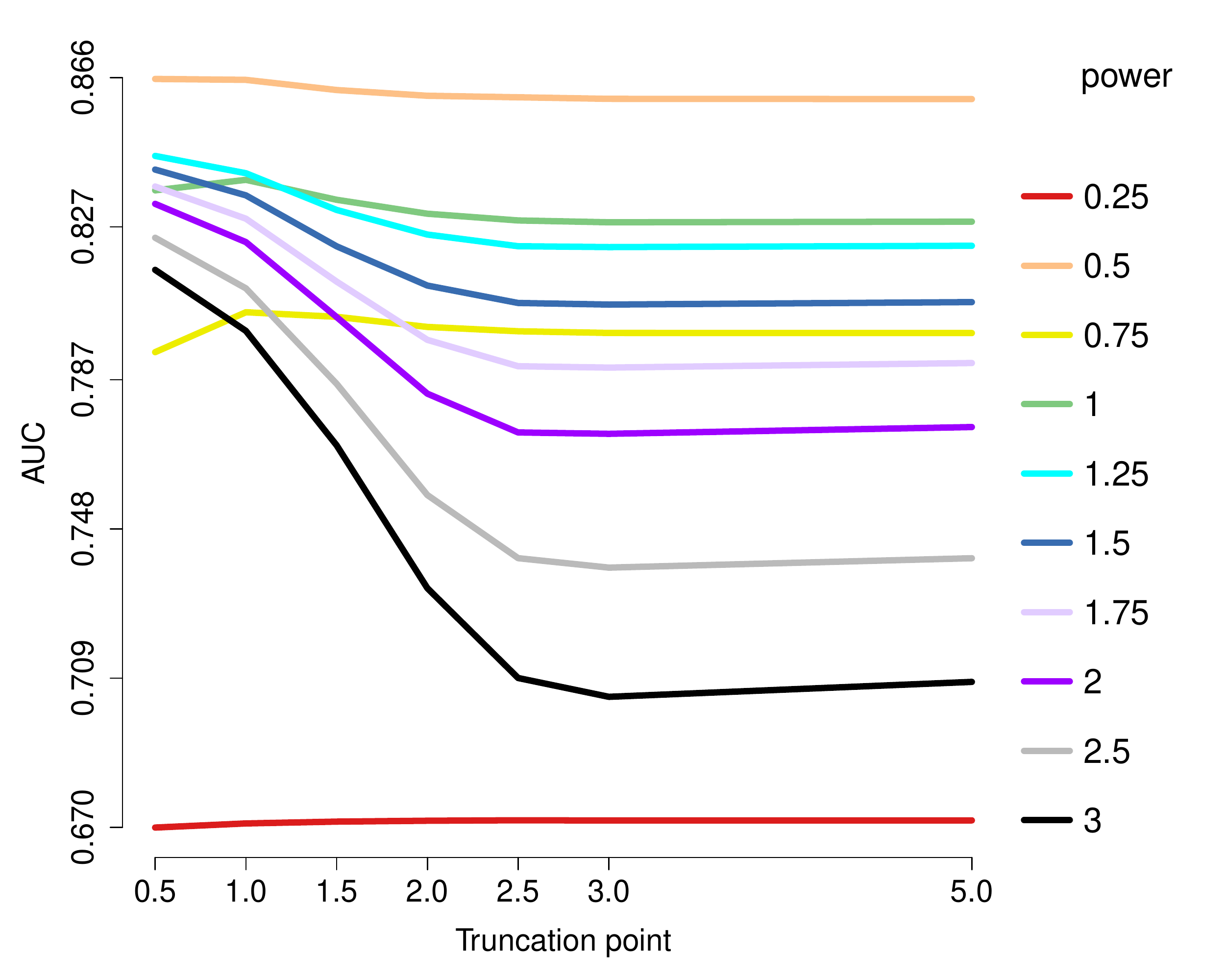}\hspace{-0.02in}}
\caption{AUCs for edge recovery using generalized score matching for $a=3/2$, $b=1/2$. Each curve represents a different choice of power $p$ in $h(x)=\min(x^p,c)$, and the $x$ axis marks the truncation point $c$. Colors are sorted by $p$.}
\label{plot_ab_1.5_0.5}
\end{figure}

\begin{figure}[htp]
\centering
\vspace{-0.0in}
\subfloat[$n=80$, $\boldsymbol{\eta}=-0.5\mathbf{1}_{100}$, profiled estimator]
{\includegraphics[scale=0.30]{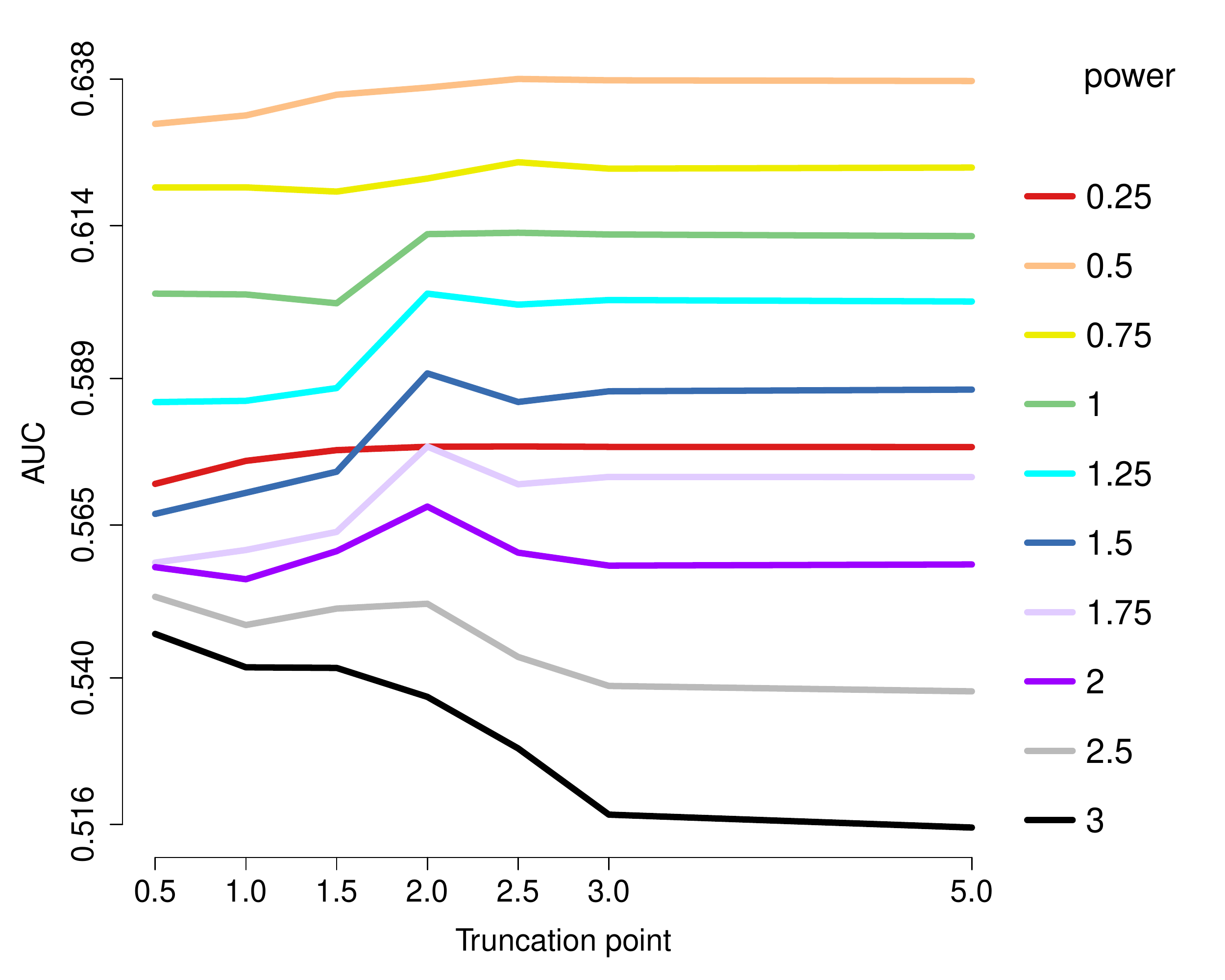}\hspace{-0.02in}}
\subfloat[$n=1000$, $\boldsymbol{\eta}=-0.5\mathbf{1}_{100}$, profiled estimator]
{\includegraphics[scale=0.30]{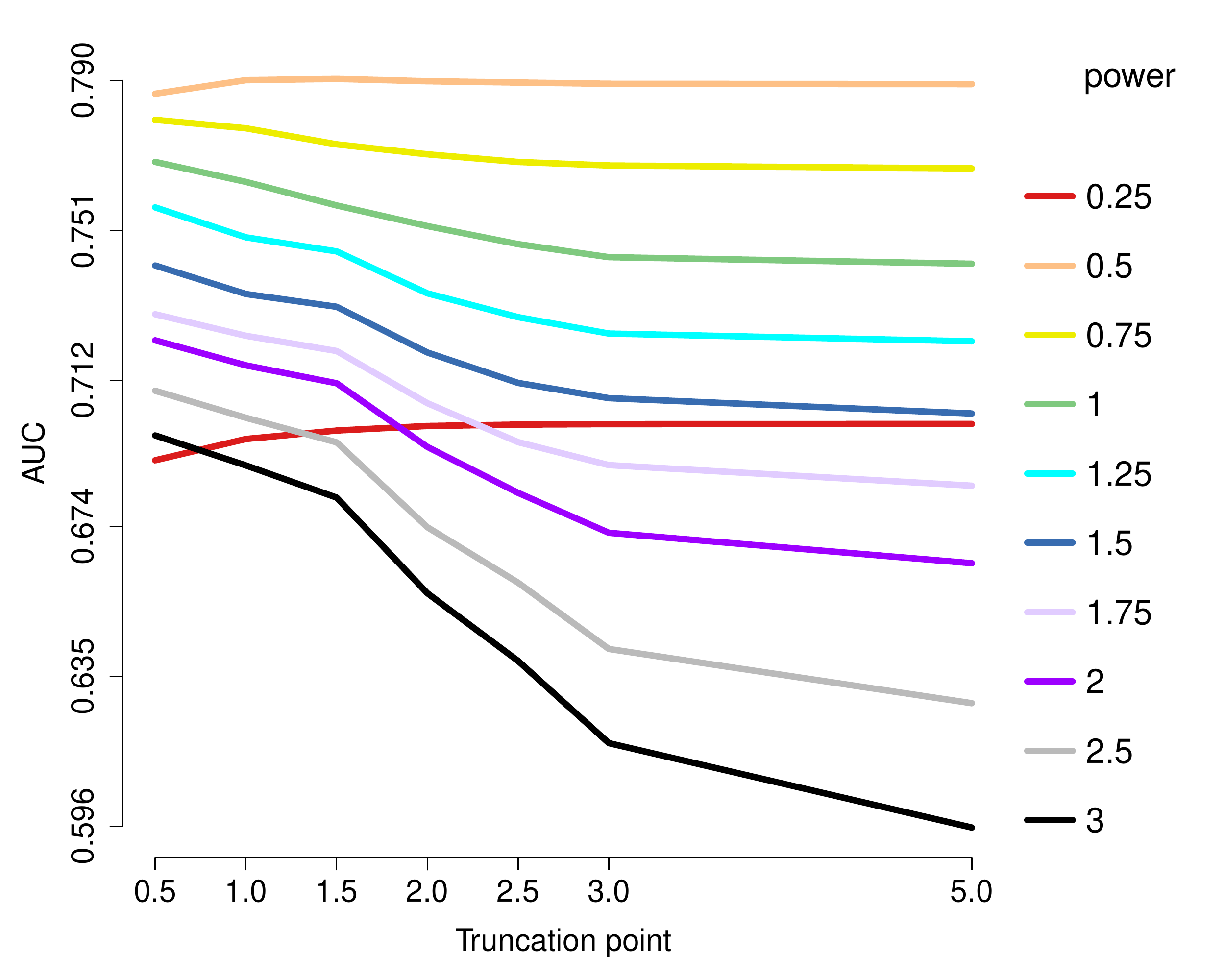}\hspace{-0.02in}}
\\ \vspace{-0.1in}
\subfloat[$n=80$, $\boldsymbol{\eta}=0.5\mathbf{1}_{100}$, profiled estimator]
{\includegraphics[scale=0.30]{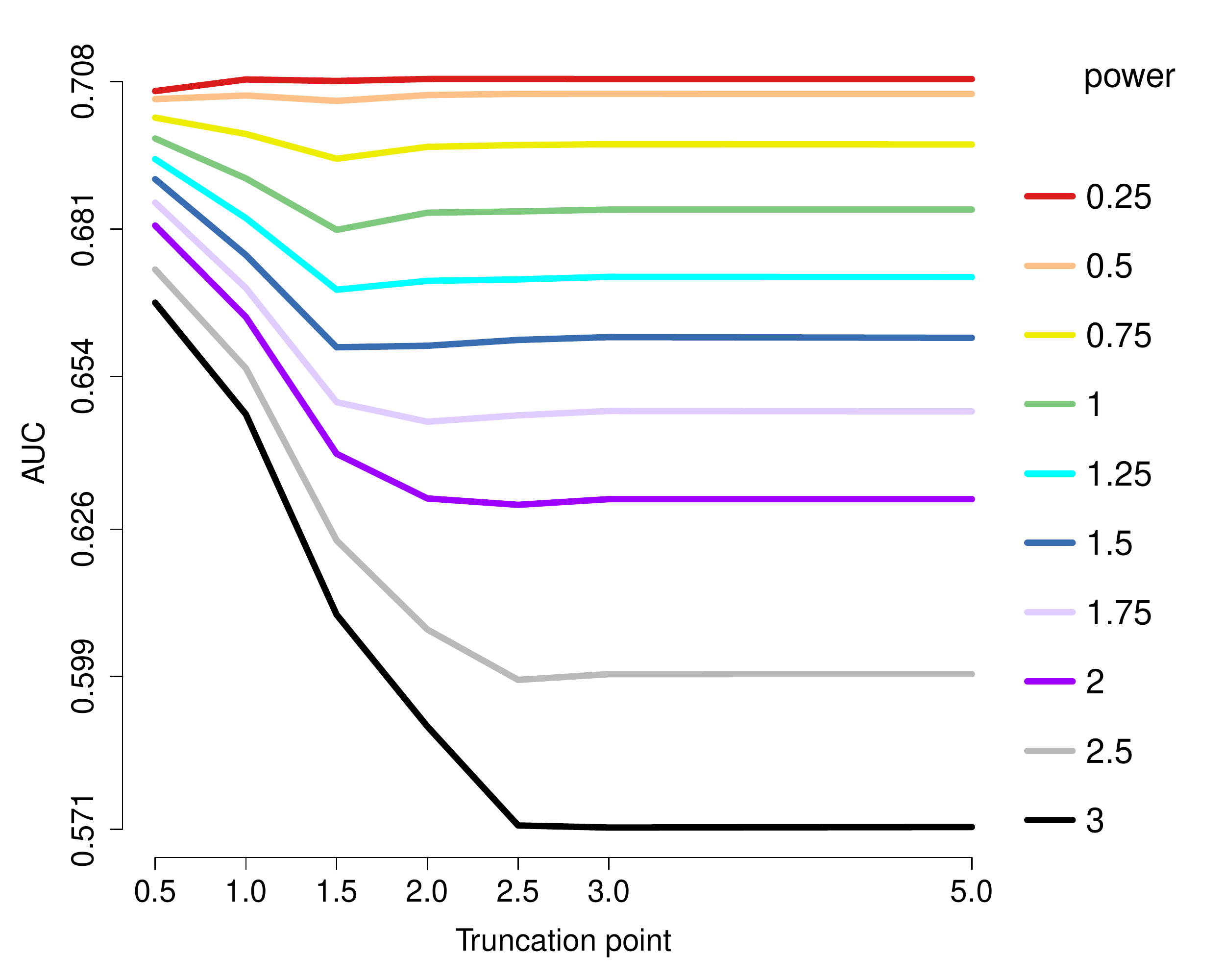}\hspace{-0.02in}}
\subfloat[$n=1000$, $\boldsymbol{\eta}=0.5\mathbf{1}_{100}$, profiled estimator]
{\includegraphics[scale=0.30]{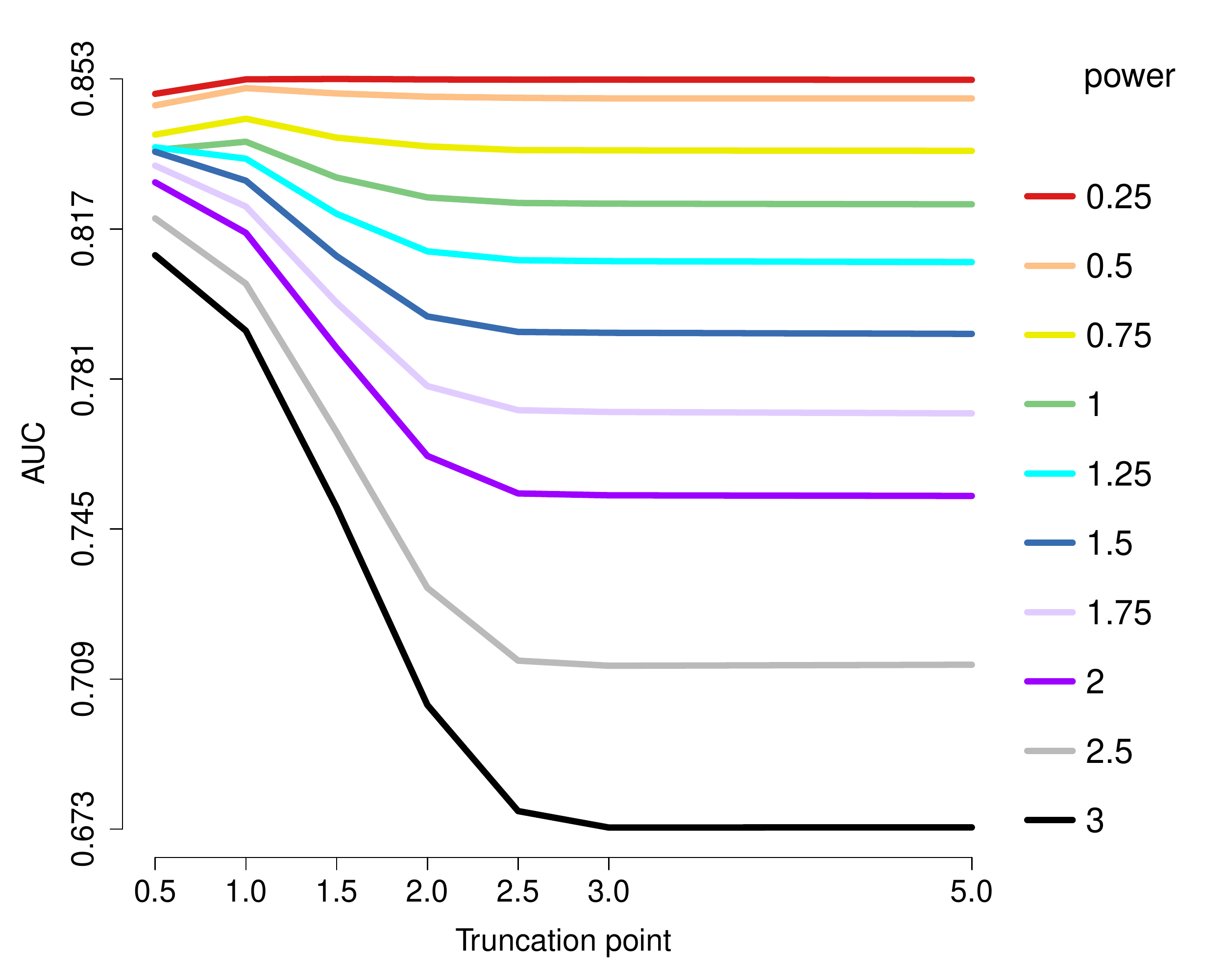}\hspace{-0.02in}}
\caption{AUCs for edge recovery using generalized score matching for $a=3/2$, $b=0$. Each curve represents a different choice of power $p$ in $h(x)=\min(x^p,c)$, and the $x$ axis marks the truncation point $c$. Colors are sorted by $p$.}
\label{plot_ab_1.5_0}
\end{figure}
\newpage

\subsection{RNAseq Data}\label{RNA}

In this section we apply our regularized generalized $\boldsymbol{h}$-score matching estimator for truncated non-centered GGMs to RNAseq data also studied in \citet{lin16}, since the same model is considered therein. The data consists of $n=487$ prostate adenocarcinoma samples from The Cancer Genome Atlas (TCGA) data set.  Following \citet{lin16}, we focus on $m=333$ genes that belong to the known cancer pathways in the Kyoto Encyclopedia of Genes and Genomes (KEGG) and that have no more than $10\%$ missing values.
Missing values are set to $0$.  We choose $h(x)=\min(x,3)$ and use the
upper-bound multiplier (\emph{high}), as discussed in Section
\ref{Choice of multiplier}.  For simplicity, we use the profiled
estimator, and choose the regularization parameter $\lambda$ so that
the estimated graph has exactly $m=333$ edges, all these choices being
as in \citet{lin16}. 

We compare our graph to the one in \citet{lin16}, which corresponds to $h(x)=x^2$ with no multiplier. Shown in Figure \ref{RNA_graphs_simp} are the estimated graphs, with their intersection in the middle. To improve visualization, isolated nodes are removed and the layouts are optimized for each plot. Red-colored points are the ``hub nodes'', namely nodes with degree at least 10. In Figure \ref{RNA_graphs}, we plot the same graphs in a fixed layout optimized for the graph corresponding to $h=\min(x,3)$, and include the isolated nodes.

Out of 333 edges, the two estimated graphs share 117 edges in
common. Assuming that edges are placed at random between nodes and the
two graphs are independent, the distribution of the number $R$ of
common edges follow a hypergeometric distribution, so
$P(R=r)=\frac{\left(\begin{smallmatrix}m\\r\end{smallmatrix}\right)\left(\begin{smallmatrix}m(m-1)/2-m\\
      m-r\end{smallmatrix}\right)}{\left(\begin{smallmatrix}m(m-1)/2\\
      m\end{smallmatrix}\right)}$.   For $m=333$ the probability of at
least 117 common edges is essentially zero. The large number of shared edges between the two methods can be explained by the fact that they both minimize the same {underlying score-matching loss}.

\begin{figure}[htp]
\centering
\subfloat[$\min(x,3)$]{\includegraphics[scale=0.16]{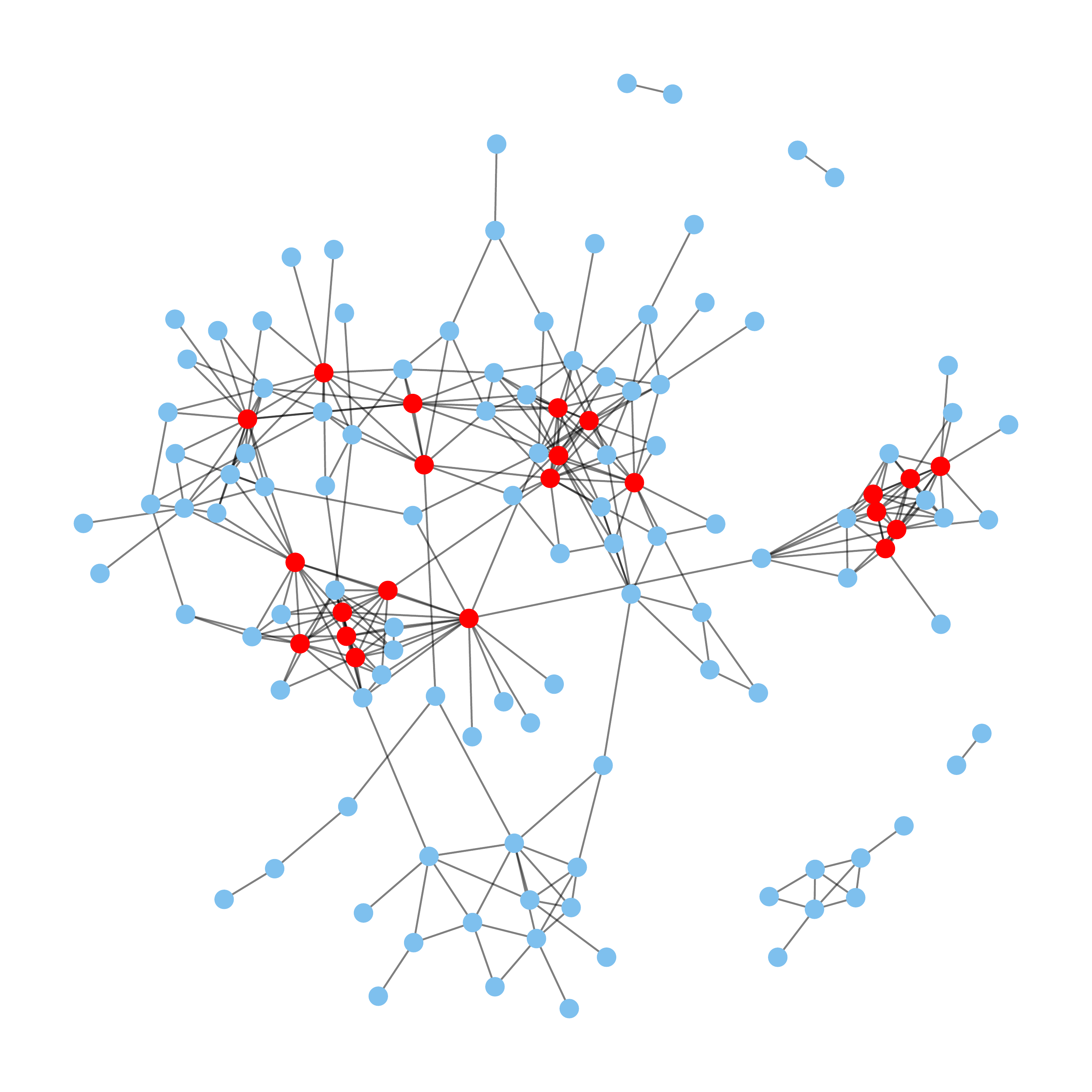}}
\subfloat[Common edges]{\includegraphics[scale=0.16]{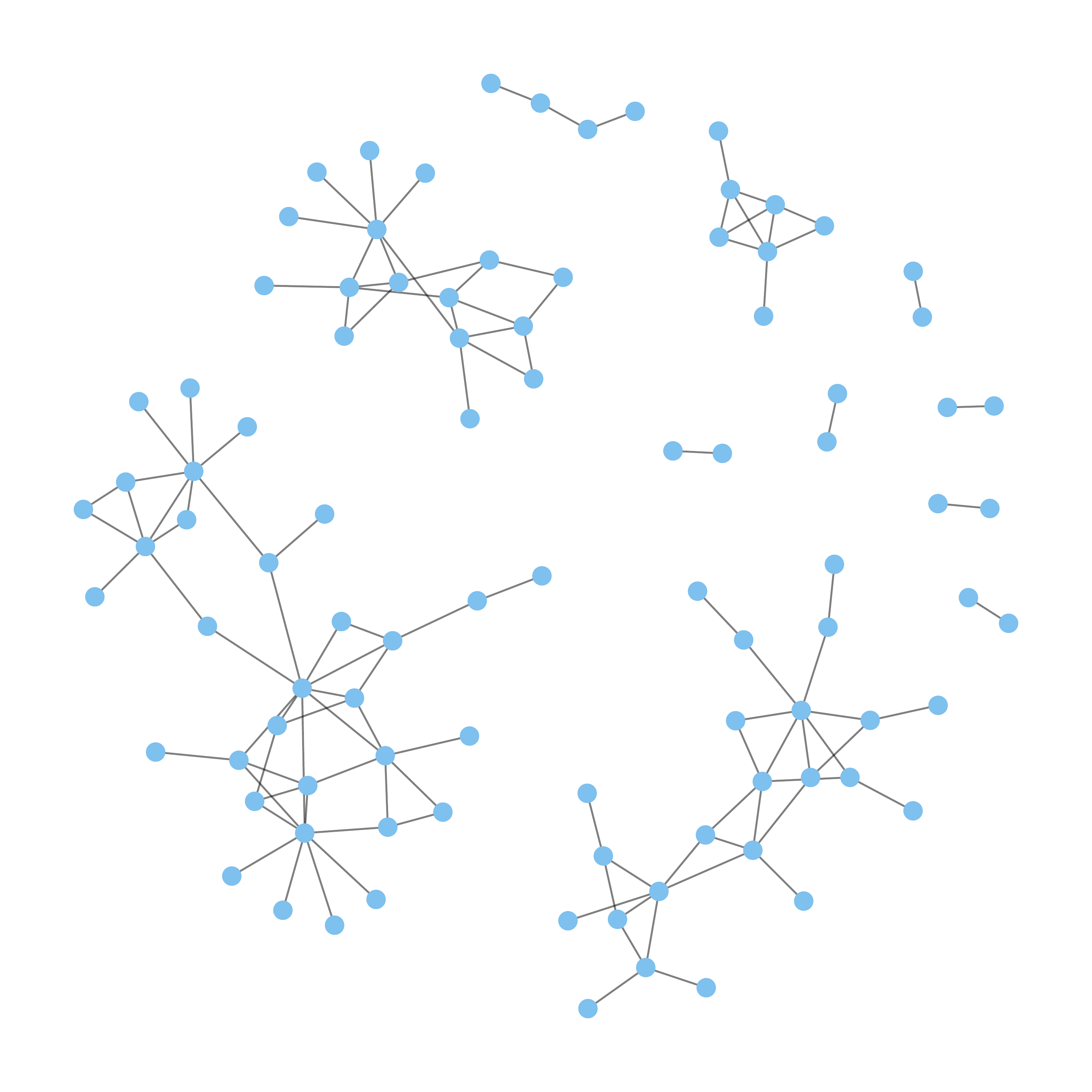}}
\subfloat[Lin et al.]{\includegraphics[scale=0.16]{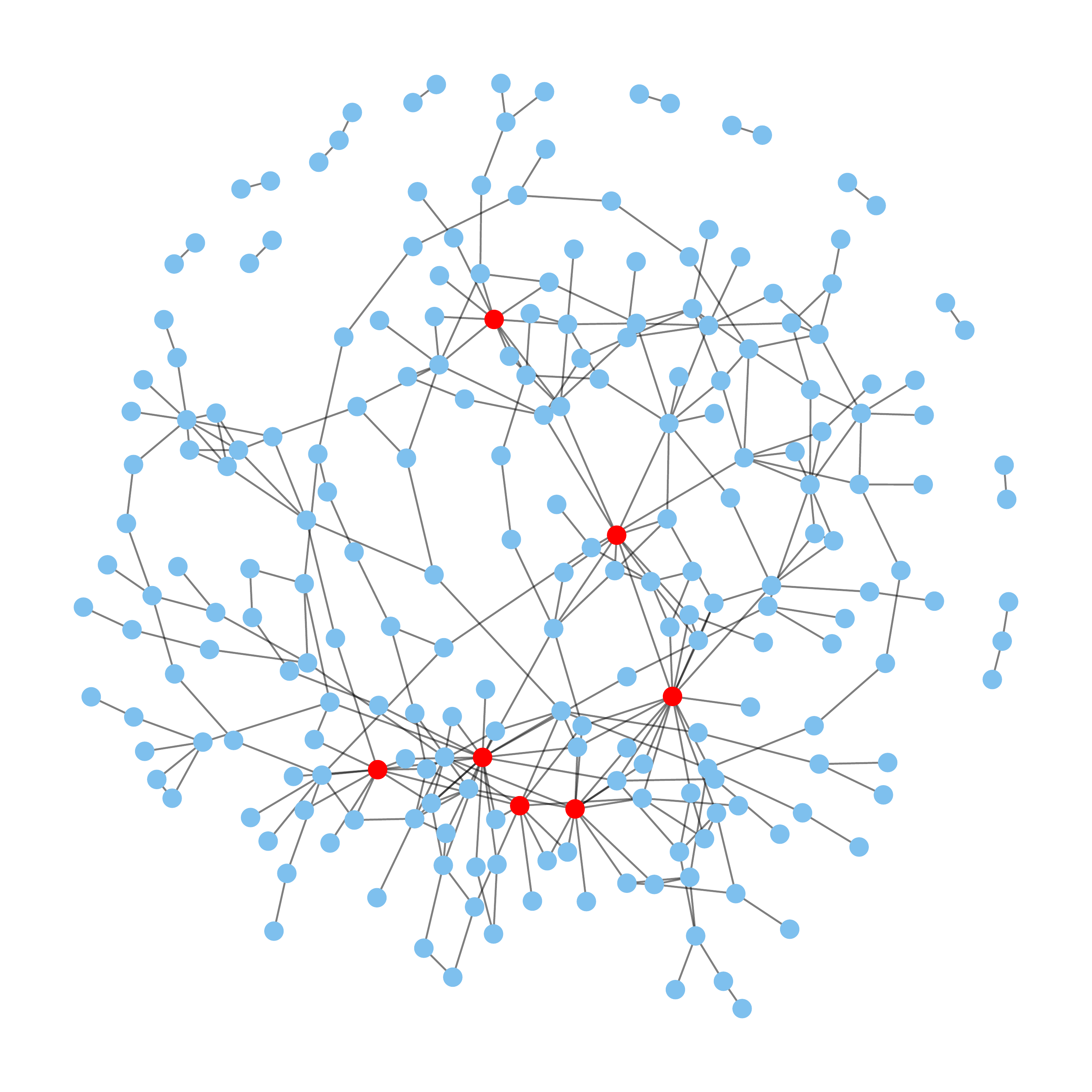}}
\caption{Graphs estimated by regularized generalized score matching estimator with $h(x)=\min(x,3)$ with upper-bound multiplier (left) and $h(x)=x^2$ with no multiplier \citep[right]{lin16}, and their intersection graph (middle). Isolated nodes with no edges are removed, and the layout is optimized for each plot. In (a) and (c), red points indicate nodes with degree at least 10 (``hub nodes'').}\label{RNA_graphs_simp}
\end{figure}

\begin{figure}[htp]
\centering
\subfloat[$\min(x,3)$]{\includegraphics[scale=0.16]{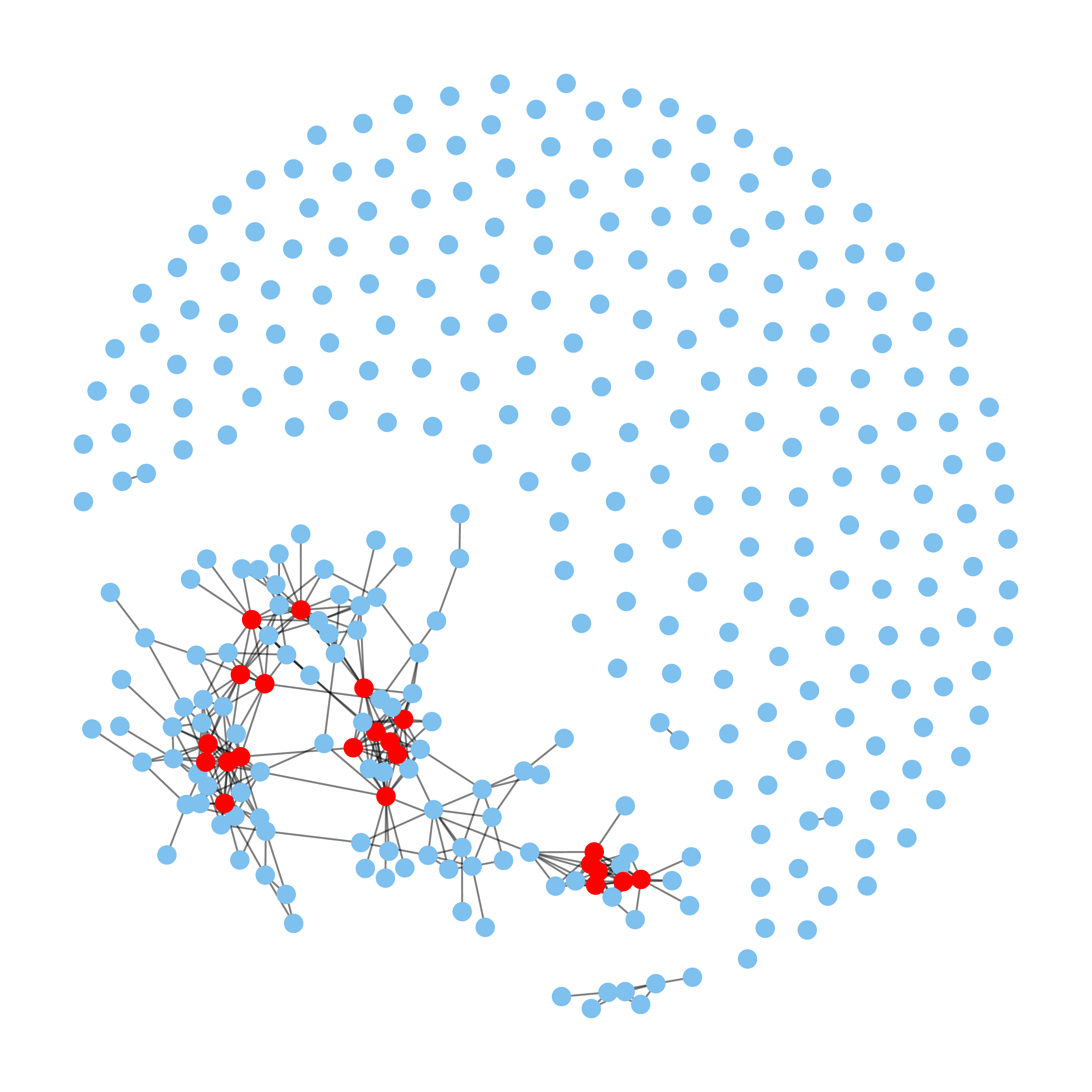}}
\subfloat[Common edges]{\includegraphics[scale=0.16]{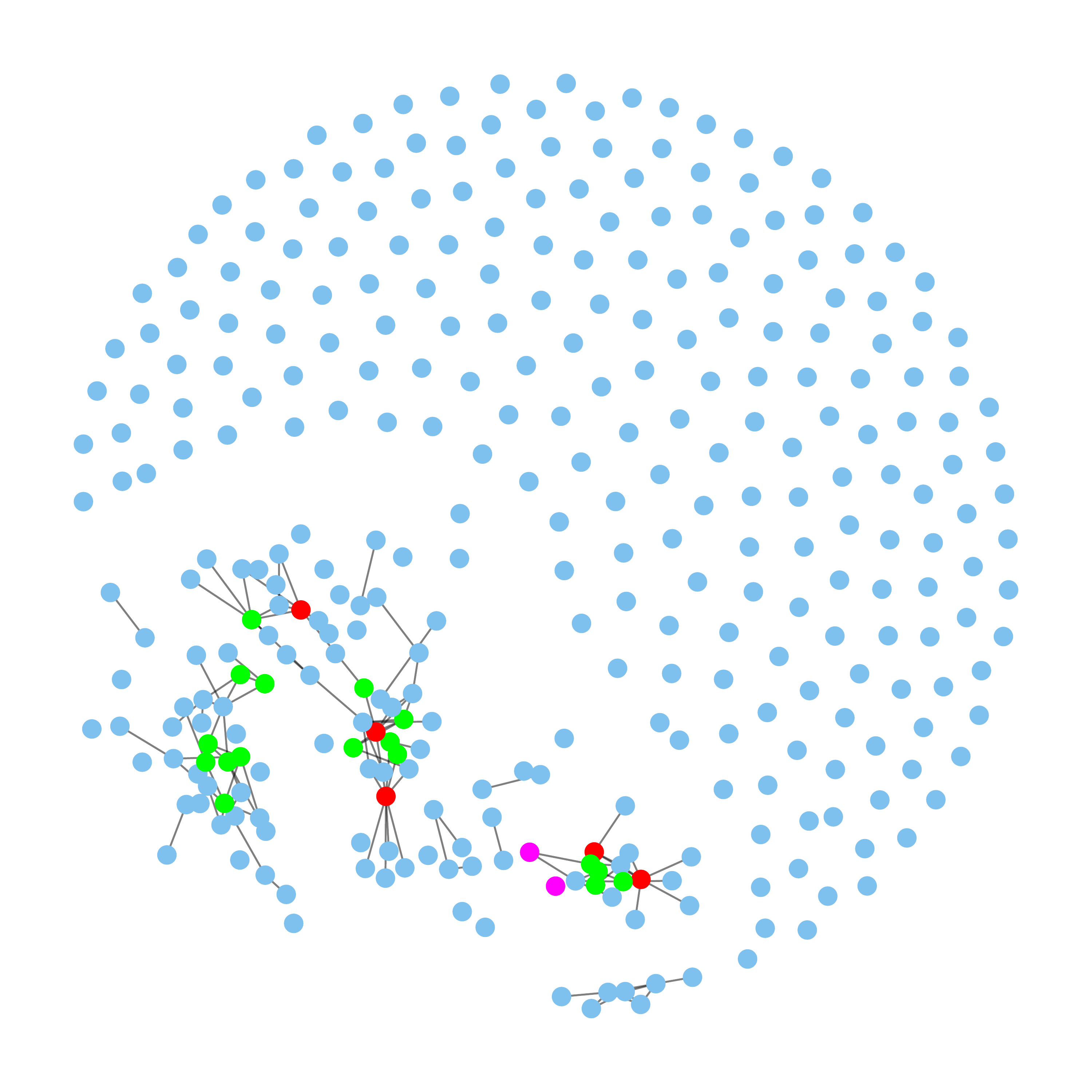}}
\subfloat[Lin et al.]{\includegraphics[scale=0.16]{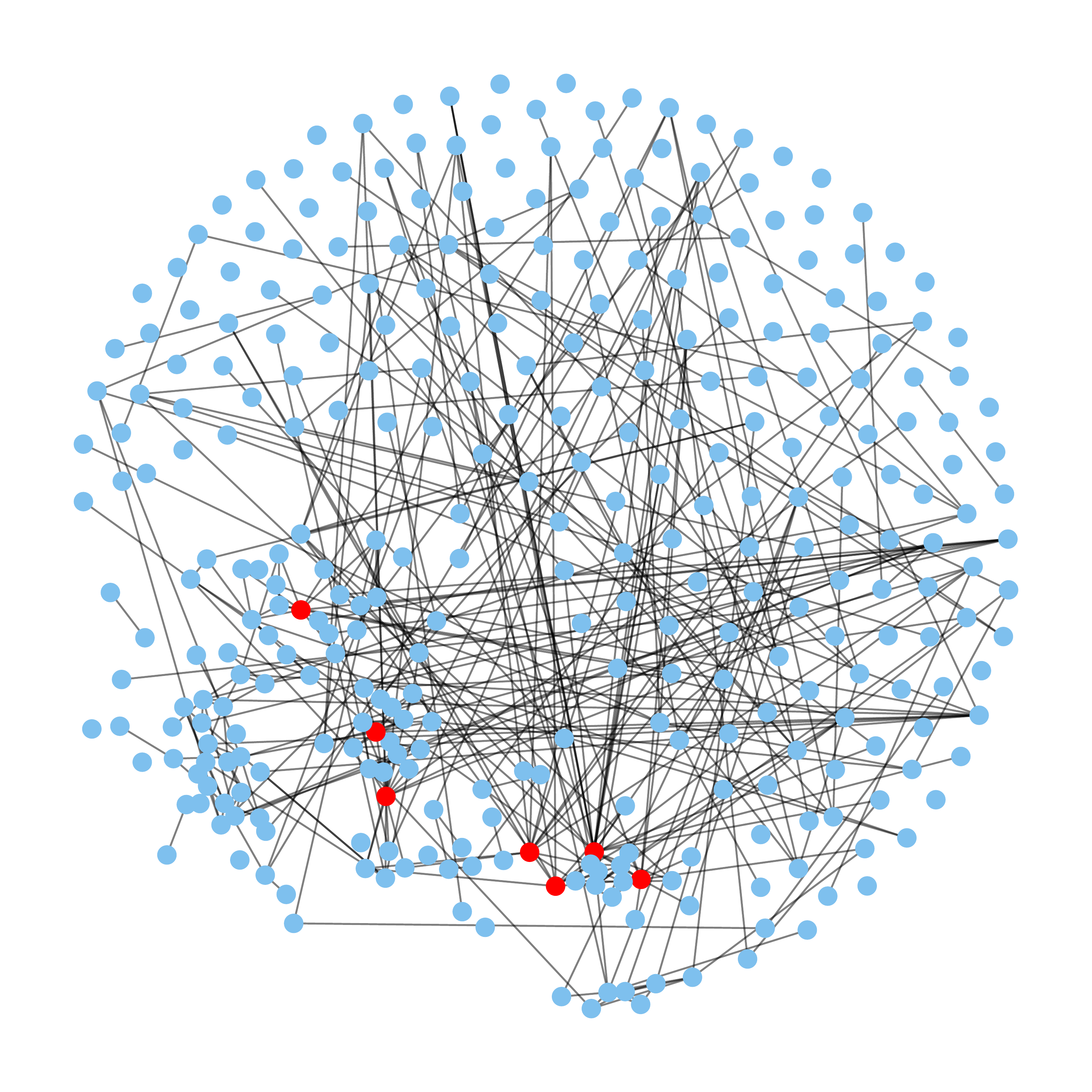}}
\caption{Graphs estimated by regularized generalized score matching estimator with $h(x)=\min(x,3)$ with upper-bound multiplier (left) and $h(x)=x^2$ with no multiplier \citep[right]{lin16}, and their intersection graph (middle). Isolated nodes are included and the layout is fixed across plots and optimized for graph (a). In (b) the red nodes are hub nodes shared by both graphs, the green ones are hub nodes in graph (a) only, and the magenta ones are hub nodes in graph (c) only.}\label{RNA_graphs}
\end{figure}

The graph using $h(x)=\min(x,3)$ has much more isolated nodes (204) than the other (108), and has a slightly smaller max degree (16 versus 19). 
Table \ref{RNA_table} provides another way of comparing between the two graphs by listing the genes with the highest node degrees.

\begin{table}[htp]
\centering
\begin{tabular}{c|c}
\hline
$\min(x,3)$ with multiplier 1.63 & Lin et al.
\tabularnewline
\hline
\textbf{LAMB3 (16)} & CCNE2 (19)\\
\textbf{PIK3CG (16)} & \textbf{PIK3CG (16)} \\
MMP2 (15) & BRCA2 (13) \\
GLI2 (13) & \textbf{BIRC5 (12)}\\
LAMA4 (13) & \textbf{LAMB3 (10)}\\
\textbf{PDGFRB (13)} &  \textbf{PIK3CD (10)}\\
\textbf{PIK3CD (13)} & SKP2 (10)\\
RASSF5 (13) & HRAS (9)\\
\textbf{BIRC5 (12)} & STAT5B (9)\\
FLT3 (12) & \textbf{GSTP1(8)} \\
\textbf{GSTP1 (12)} & \textbf{PDGFRB (8)} \\ 
LAMA2 (12)& \\ 
RAC2 (12) & 
\tabularnewline
\hline
\end{tabular}
\caption{List of genes with the highest node degrees in each estimated graph.}
\label{RNA_table}
\end{table}
In Table \ref{RNA_table} we list the top ten genes in terms of node degree for both estimated graphs. Due to ties, 13 genes are listed for $h(x)=\min(x,3)$ and 11 for \citet{lin16}. As noted in \citet{lin16}, genes with high node-degrees  are known to be important in biological networks \citep{car04,jeo01,han04}. Among these top genes, six are common in both graphs, and are discussed in \citet{lin16}. We next elaborate on the evidence supporting the first four of the newly discovered genes.

\begin{itemize}
\item MMP2 (Matrix metalloproteinase 2): According to \citet{tru03},
  increased MMP-2 expression is an independent predictor of decreased
  prostate cancer disease-free survival. \citet{mor05} state that
  activity of MMP-2 can be useful in diagnosis, therapy, and
  assessment of malignant progression in prostate cancer.
\item GLI2 (GLI family zinc finger 2): GLI2 is a primary mediator of the hedgehog signaling pathway, which has been reported in prostate cancer, and plays a critical role in the malignant phenotype of prostate cancer cells \citep{thi07}. Its increased level of expression is also related to AI prostate cancer, and may be a therapeutic target in castrate-resistant prostate cancer \citep{nar08}.
\item LAMA4 (Laminin subunit alpha 4): LAMA4 is consistently upregulated in benign prostatic hyperplasia when compared to normal prostate tissues \citep{luo02}.
\item RASSF5 (RAS association domain family member 5): The combination of RASSF5 along with four other DNA methylation markers can effectively differentiate between benign prostate biopsy cores from non-cancer patients and cancer cores, and can be used to identify patients at risk without repeat biopsies \citep{bri14}.
\end{itemize}

We note that the two methods indeed use different estimators (different $h$ functions and multipliers), and it is thus not surprising to see that some of the top genes by one method are not among those for the other. In particular, CCNE2, BRCA2, SKP2 and STAT5B, while previously reported as newly discovered in \citet{lin16}, are dropped by our new analysis. Testing and inference (potentially using bootstrapping) is an important problem but is beyond the scope of this paper.


\section{Discussion}

In this paper, we proposed a generalization of the score matching
estimator of \citet{hyv07}, based on scaling the log-gradients to be
matched with a suitably chosen function $\boldsymbol{h}$.  The
generalization retains the advantages of Hyv\"arinen's method:
Estimates can be computed without knowledge of normalizing constants,
and for canonical parameters of exponential families, the estimation
loss is a quadratic function.

For high-dimensional exponential family graphical models, following
 \citet{lin16}, we add an $\ell_1$ penalty to regularize
the generalized score matching loss.
One practical issue that is overlooked in \citet{lin16} is the fact
that the score matching loss can be unbounded below for a small tuning
parameter, when the dimension $m$ exceeds the sample size
$n$. We fix this issue by amplifying the diagonal entries in the
quadratic matrix in the definition of the generalized score matching
loss by a factor/multiplier, and we give an upper bound on that
multiplier that guarantees consistency.

As examples we consider \emph{pairwise interaction power models} on
the non-negative orthant $\mathbb{R}_+^m$.  Specifically, the
considered models are exponential families in which the log density is
the sum of pairwise interactions between entries in of powers
$\boldsymbol{x}^a$ plus linearly weighted effects $\boldsymbol{x}^b$,
or $\log(\boldsymbol{x})$ when $b=0$.  Our main interest is in the
matrix of interaction parameters whose support determines the
distributions' conditional independence graph.  The considered
framework covers truncated normal distributions ($a=b=1$),
exponential square root graphical models ($a=b=1/2$) from
\citet{ino16}, as well as a class of multivariate gamma distributions
($a=1/2$, $b=0$).  

In the case of multivariate truncated normal distributions, where
the conditional independence graph is given by the underlying Gaussian
inverse covariance matrix, the sample size required for the
consistency of our method using bounded $\boldsymbol{h}$ is
$\Omega(d^2\log m)$, where $d$ is the degree of the graph.  This
matches the rates for Gaussian graphical models in \citet{rav11} and
\citet{lin16}.  In contrast, the sample complexity for truncated
Gaussian models given in \citet{lin16} is $\Omega(d^2\log^8 m)$.

For the considered class of pairwise interaction models, we recommend
using the function $\boldsymbol{h}$ with coordinates
$h_j(x)=\min\big(x^{2-a},c\big)$ for some moderately large $c$, or
simply $h_j(x)=x^{2-a}$.  While this choice is effective, it would be an
interesting problem for future work to develop a method that
adaptively chooses an optimized function $\boldsymbol{h}$ from data.


\acks{This work was partially supported by grant DMS/NIGMS-1561814 from the National Science Foundation (NSF). AS also gratefully acknowledges funding by grant R01-GM114029 from the National Institute of Health (NIH).}

\newpage

\appendix
\renewcommand{\thesection}{\Alph{section}}
\section{Proofs}\label{Proofs}

\subsection{Proof of Theorem \ref{theorem_GSM_loss_alt}}\label{Proof of Theorem theorem_GSM_loss_alt}
The following integration by parts lemma is used in the proof of Theorem \ref{theorem_GSM_loss_alt}.

\begin{lemma}\label{lemma_integration_by_parts}
Let $f,g:\mathbb{R}_+\to\mathbb{R}$ be functions that are absolutely continuous in every bounded sub-interval of $\mathbb{R}_+$. Then
\[
\lim_{x\nearrow+\infty}f(x)g(x)-\lim_{x\searrow0^+}f(x)g(x)\\=\int_0^{\infty}f(\boldsymbol{x})\frac{\d g(x)}{\d x}\d x+\int_{0}^{\infty}g(\boldsymbol{x})\frac{\d f(x)}{\d x}\d x.
\]
\end{lemma}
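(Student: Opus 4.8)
The plan is to reduce the identity to the classical fundamental theorem of calculus for absolutely continuous functions on a \emph{compact} interval, and then pass to the limit at the two endpoints $0^+$ and $+\infty$.

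First I would fix $0<\epsilon<T<\infty$ and work on $[\epsilon,T]$. Since $f$ and $g$ are absolutely continuous on $[\epsilon,T]$, they are bounded there, and the product $fg$ is again absolutely continuous on $[\epsilon,T]$: for a finite disjoint collection of subintervals $(a_i,b_i)$ one has
\[
\sum_i |f(b_i)g(b_i)-f(a_i)g(a_i)|\;\le\;\|f\|_{\infty,[\epsilon,T]}\sum_i|g(b_i)-g(a_i)|+\|g\|_{\infty,[\epsilon,T]}\sum_i|f(b_i)-f(a_i)|,
\]
so the absolute continuity of $f$ and $g$ transfers to $fg$. Consequently $fg$ is differentiable a.e.\ on $[\epsilon,T]$, its derivative is Lebesgue integrable, the fundamental theorem of calculus applies, and at every point where both $f$ and $g$ are differentiable (which is a.e.) the ordinary product rule $(fg)'=f'g+fg'$ holds. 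Therefore
\[
f(T)g(T)-f(\epsilon)g(\epsilon)\;=\;\int_\epsilon^T (fg)'(x)\,\d x\;=\;\int_\epsilon^T f'(x)g(x)\,\d x+\int_\epsilon^T f(x)g'(x)\,\d x.
\]

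Next I would let $\epsilon\searrow 0^+$ and $T\nearrow+\infty$. The left-hand side converges to $\lim_{x\nearrow+\infty}f(x)g(x)-\lim_{x\searrow 0^+}f(x)g(x)$; in the situations where the lemma is invoked these one-sided limits exist (this is exactly what Assumptions (A1)--(A2) secure in Theorem~\ref{theorem_GSM_loss_alt}), and in the abstract statement it is part of the hypothesis, or else both sides are read as an equality of improper/extended quantities. The two integrals on the right converge to the improper integrals $\int_0^\infty f'g\,\d x$ and $\int_0^\infty fg'\,\d x$ by definition of the improper Lebesgue integral, and by dominated convergence when the integrands lie in $L^1(\mathbb{R}_+)$. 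Combining the three limits yields the stated identity.

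The only genuinely non-routine point — and hence the main obstacle — is the first step: verifying that the pointwise product of two functions absolutely continuous on every bounded subinterval is itself absolutely continuous on every bounded subinterval, so that the FTC and the a.e.\ product rule are legitimately available. The displayed estimate above handles this, exploiting boundedness of $f$ and $g$ on each compact subinterval together with the $\varepsilon$--$\delta$ definition of absolute continuity; everything after that is bookkeeping with limits of (improper) integrals.
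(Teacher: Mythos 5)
Your proposal is correct and follows essentially the same route as the paper's own argument: restrict to a compact subinterval, observe that the product of two locally absolutely continuous functions is absolutely continuous there, apply the fundamental theorem of calculus together with the a.e.\ product rule, and then let the endpoints tend to $0^+$ and $+\infty$. Your explicit estimate verifying absolute continuity of $fg$ and your remark that existence of the boundary limits and integrability are supplied by (A1)--(A2) where the lemma is invoked only make explicit what the paper leaves implicit.
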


\begin{proof}
This is an analog of Lemma 4 from \citet{hyv05} that can be proved by integrating the partial derivatives, and follows from the fundamental theorem of calculus for absolutely continuous functions and the product rule. In particular, we work on integrals in bounded $[0,c]$, where the product of two absolutely continuous functions in a bounded interval is again absolutely continuous, and the result is then obtained by letting $c\nearrow+\infty$.
\end{proof}

\begin{proof}[Proof of Theorem \ref{theorem_GSM_loss_alt}]
Recall the following assumptions from Section \ref{Generalized Score Matching for Non-Negative Data}:\\
\resizebox{\linewidth}{!}{\begin{minipage}{\linewidth}
\begin{align*}
(\text{A1}) &\,\,\, p_0(\boldsymbol{x})h_j(x_j)\partial_j\log p(\boldsymbol{x})\left|^{x_j\nearrow+\infty}_{x_j\searrow 0^+}=0\right.,\,\,\, \forall\boldsymbol{x}_{-j}\in\mathbb{R}_{+}^{m-1},\quad \forall p\in\mathcal{P}_+;\\
(\text{A2}) &\,\,\,\mathbb{E}_{p_0}\|\nabla\log p(\boldsymbol{X})\circ \boldsymbol{h}^{1/2}(\boldsymbol{X})\|_2^2<+\infty, \,\,\, \mathbb{E}_{p_0}\|(\nabla\log p(\boldsymbol{X})\circ\boldsymbol{h}(\boldsymbol{X}))'\|_1<+\infty,\,\,\, \forall p\in\mathcal{P}_+.
\end{align*}
\end{minipage}}\\\\
Without explicitly writing the domains $\mathbb{R}_+$ or $\mathbb{R}_+^m$ in all integrals, by (\ref{eq_gsm}) we have
\begin{alignat*}{3}
J_{\boldsymbol{h}}(p)&=\frac{1}{2}\int p_0(\boldsymbol{x})\left[\|\nabla\log p(\boldsymbol{x})\circ\boldsymbol{h}^{1/2}(\boldsymbol{x})\|_2^2\right.
\\&\quad\quad\left.-2(\nabla\log p(\boldsymbol{x})\circ\boldsymbol{h}^{1/2}(\boldsymbol{x}))^{\top}(\nabla\log p_0(\boldsymbol{x})\circ\boldsymbol{h}^{1/2}(\boldsymbol{x}))
+\|\nabla\log p_0(\boldsymbol{x})\circ\boldsymbol{h}^{1/2}(\boldsymbol{x})\|_2^2\right]\d\boldsymbol{x}\\
&=\underbrace{\frac{1}{2}\int p_0(\boldsymbol{x})\sum_{j=1}^mh_j(x_j)\left(\frac{\partial \log p(\boldsymbol{x})}{\partial x_j}\right)^2\d\boldsymbol{x}}_{\equiv\,A}+\underbrace{\frac{1}{2}\int p_0(\boldsymbol{x})\sum_{j=1}^{m}h_j(x_j)\left(\frac{\partial\log p_0(\boldsymbol{x})}{\partial x_j}\right)^2\d\boldsymbol{x}}_{\equiv\,C}\\
&\quad\quad\quad\quad\underbrace{-\int p_0(\boldsymbol{x})\sum_{j=1}^mh_j(x_j)\frac{\partial\log p(\boldsymbol{x})}{\partial x_j}\frac{\partial\log p_0(\boldsymbol{x})}{\partial x_j}\d\boldsymbol{x}}_{\equiv\,B},
\end{alignat*}
where $A$ will simply appear in the final display as is, $C$ is a constant as it only involves the true pdf $p_0$, and we wish to simplify $B$ by integration by parts. We can split the integral into these three parts since $A$ and $C$ are assumed finite in the first part of (A2), and the integrand in $B$ is integrable since $|2ab|\leq a^2+b^2$. Thus, by linearity and Fubini's theorem, we can write
\begin{align*}
B&=-\sum_{j=1}^m\int p_0(\boldsymbol{x})h_j(x_j)\frac{\partial \log p(\boldsymbol{x})}{\partial x_j}\frac{\partial \log p_0(\boldsymbol{x})}{\partial x_j}\d\boldsymbol{x}\\
&=-\sum_{j=1}^m\int\left[\int p_0(\boldsymbol{x})h_j(x_j)\frac{\partial \log p(\boldsymbol{x})}{\partial x_j}\frac{\partial \log p_0(\boldsymbol{x})}{\partial x_j}\d x_j\right]\d\boldsymbol{x}_{-j}.
\end{align*}
By the fact that $\frac{\partial \log p_0(\boldsymbol{x})}{\partial x_j}=\frac{1}{p_0(\boldsymbol{x})}\frac{\partial p_0(\boldsymbol{x})}{\partial x_j}$, this can be simplified to
\[B=-\sum_{j=1}^m\int\left[\int\frac{\partial p_0(\boldsymbol{x})}{\partial x_j}h_j(x_j)\frac{\partial\log p(\boldsymbol{x})}{\partial x_j}\d x_j\right]\d \boldsymbol{x}_{-j}.\]

But, we assume $p_0$ and $p$ are twice continuously differentiable, for every $j=1,\dots,m$ and fixed $\boldsymbol{x}_{-j}\in\mathbb{R}_+^{m-1}$. Hence, in every bounded sub-interval of $\mathbb{R}_+$, $p_0(\boldsymbol{x}_{-j};x_j)$ is an absolutely continuous function of $x_j$, $\partial_j\log p(\boldsymbol{x}_{-j},x_j)=\partial_j p(\boldsymbol{x}_{-j},x_j)/p(\boldsymbol{x}_{-j},x_j)$ is a continuously differentiable (and hence absolutely continuous) function of $x_j$ by the quotient rule. Thus $h_j(x_j)\partial_j\log p(\boldsymbol{x}_{-j};x_j)$ is also absolutely continuous by the absolute continuity assumption on $h_j$. Then, by Lemma \ref{lemma_integration_by_parts}, where we take $f\equiv p_0(\boldsymbol{x}_{-j};x_j)$ and $g\equiv h_j(x_j)\partial_j\log p(\boldsymbol{x}_{-j};x_j)$ as functions of $x_j$, followed by assumption (A1),
\begin{align*}
B&=-\sum_{j=1}^m\int\left[\lim_{a\nearrow+\infty,b\searrow 0^+}\left[p_0(\boldsymbol{x}_{-j};a)h_j(a)\partial_j\log p(\boldsymbol{x}_{-j},a)-p_0(\boldsymbol{x}_{-j};b)h_j(b)\partial_j\log p(\boldsymbol{x}_{-j},b)\right]\right.\\
&\quad\quad\quad\quad\quad\quad-\left.\int p_0(\boldsymbol{x})\frac{\partial\left(h_j(x_j)\partial_j\log p(\boldsymbol{x})\right)}{\partial x_j}\d x_j\right]\d\boldsymbol{x}_{-j}\\
&=\sum_{j=1}^m\int\left[\int p_0(\boldsymbol{x})\frac{\partial(h_j(x_j)\partial_j \log p(\boldsymbol{x}))}{\partial x_j}\d x_j\right]\d\boldsymbol{x}_{-j}.
\end{align*}
Justified by the second half of (A2), by Fubini-Tonelli and linearity again
\begin{align*}
B&=\sum_{j=1}^m\int p_0(\boldsymbol{x})\frac{\partial(h_j(x_j)\partial_j \log p(\boldsymbol{x}))}{\partial x_j}\d \boldsymbol{x},\\
&=\sum_{j=1}^m\int h'_j(x_j) \frac{\partial\log p(\boldsymbol{x})}{\partial x_j}p_0(\boldsymbol{x})\d\boldsymbol{x}+\sum_{j=1}^{m}\int h_j(x_j)\frac{\partial^2\log p(\boldsymbol{x})}{\partial x_j^2}p_0(\boldsymbol{x})\d \boldsymbol{x}.
\end{align*}
Thus,
\begin{align*}
&J_{\boldsymbol{h}}(p)\\
=\,&B+A+C\\
=\,&\int_{\mathbb{R}_+^m}p_0(\boldsymbol{x})\sum_{j=1}^m\left[h_j'(x_j)\frac{\partial\log p(\boldsymbol{x})}{\partial x_j}+h_j(x_j)\frac{\partial^2\log p(\boldsymbol{x})}{\partial x_j^2}+\frac{1}{2}h_j(x_j)\left(\frac{\partial\log p(\boldsymbol{x})}{\partial x_j}\right)^2\right]\d\boldsymbol{x}+C,
\end{align*}
where $C$ is a constant that does not depend on $p$.
\end{proof}

\subsection{Proof of Theorems and Examples in Section \ref{Exponential_Families}}\label{Proof of Theorems and Examples in Section Exponential_Families}

\begin{proof}[Proof of Theorem \ref{theorem_GSM_estimator_exp}]
For 
exponential families and under the assumptions, the empirical loss
$\hat{J}_{\boldsymbol{h}}(p_{\boldsymbol{\theta}})$ in
(\ref{eq_gsm_sample}) becomes (up to an additive constant)
\newpage
\begin{align*}
&\,\hat{J}_{\boldsymbol{h}}(p_{\boldsymbol{\theta}})\\
=&\,\frac{1}{n}\sum_{i=1}^n\sum_{j=1}^m\left[h_j'(X_{j}^{(i)})\frac{\partial \log p_{\boldsymbol{\theta}}(\boldsymbol{X}^{(i)})}{\partial X_{j}^{(i)}}+h_j(X_{j}^{(i)})\frac{\partial^2 \log p_{\boldsymbol{\theta}}(\boldsymbol{X}^{(i)})}{\partial (X_{j}^{(i)})^2}\right.\\
&\hspace{3.7in}\left.+
	\frac{1}{2}h_j(X_{j}^{(i)})\left(\frac{\partial \log p_{\boldsymbol{\theta}}(\boldsymbol{X}^{(i)})}{\partial X_{j}^{(i)}}\right)^2\right]\\
=&\,\frac{1}{n}\sum_{i=1}^n\sum_{j=1}^m\left[h_j'(X_j^{(i)})(\boldsymbol{\theta}^{\top}\boldsymbol{t}_j'(\boldsymbol{X}^{(i)})+b_j'(\boldsymbol{X}^{(i)}))+h_j(X_j^{(i)})(\boldsymbol{\theta}^{\top}\boldsymbol{t}_j''(\boldsymbol{X}^{(i)})+b_j''(\boldsymbol{X}^{(i)}))\right.\\
&\quad\quad\quad\quad\,\,\,+\left.\frac{1}{2}h_j(X_j^{(i)})(\boldsymbol{\theta}^{\top}\boldsymbol{t}'_j(\boldsymbol{X}^{(i)})+b'_j(\boldsymbol{X}^{(i)}))^2\right]\hspace{0.4in}\\
=&\,\frac{1}{n}\left\{\frac{1}{2}\boldsymbol{\theta}^{\top}\left[\sum_{i=1}^n\sum_{j=1}^m
h_j(X_j^{(i)})\boldsymbol{t}_j'(\boldsymbol{X}^{(i)})
\boldsymbol{t}_j'(\boldsymbol{X}^{(i)})^{\top}\right]\boldsymbol{\theta}+\right.\\
&\left.\left[\sum_{i=1}^n\sum_{j=1}^mh_j(X_j^{(i)})b_j'(\boldsymbol{X}^{(i)})\boldsymbol{t}_j'(\boldsymbol{X}^{(i)})+h_j(X_j^{(i)})
\boldsymbol{t}_j''(\boldsymbol{X}^{(i)})+h_j'(X_j^{(i)})\boldsymbol{t}_j'
(\boldsymbol{X}^{(i)})\right]^{\top}\boldsymbol{\theta}\right\}+\text{const},
\end{align*}
which is quadratic in $\boldsymbol{\theta}$. Let
\begin{align}
\boldsymbol{\Gamma}(\mathbf{x})&=\frac{1}{n}\sum_{i=1}^{n}\sum_{j=1}^mh_j(X_j^{(i)})\boldsymbol{t}_j'(\boldsymbol{X}^{(i)})
\boldsymbol{t}_j'(\boldsymbol{X}^{(i)})^{\top},\\
\boldsymbol{g}(\mathbf{x})&=-\frac{1}{n}\sum_{i=1}^n\sum_{j=1}^m\left[h_j(X_j^{(i)})b_j'(\boldsymbol{X}^{(i)})
\boldsymbol{t}'_j(\boldsymbol{X}^{(i)})+h_j(X_j^{(i)})\boldsymbol{t}_j''(\boldsymbol{X}^{(i)})+
h_j'(X_j^{(i)})\boldsymbol{t}_j'(\boldsymbol{X}_i)\right].
\end{align}
Then we can write $\hat{J}_{\boldsymbol{h}}(p_{\boldsymbol{\theta}})=\frac{1}{2}\boldsymbol{\theta}^{\top}\boldsymbol{\Gamma}(\mathbf{x})\boldsymbol{\theta}-\boldsymbol{g}(\mathbf{x})^{\top}\boldsymbol{\theta}+\mathrm{const}$.
\end{proof}

\begin{proof}[Proof of Theorem \ref{theorem_exponential}]
By Theorem \ref{theorem_GSM_estimator_exp}, $\hat{J}_{\boldsymbol{h}}(p_{\boldsymbol{\theta}})=\frac{1}{2}\boldsymbol{\theta}^{\top}\boldsymbol{\Gamma}\boldsymbol{\theta}-\boldsymbol{g}^{\top}\boldsymbol{\theta}+\mathrm{const}$. The minimizer of $\hat{J}_{\boldsymbol{h}}(p_{\boldsymbol\theta})$ is thus available in the unique closed form $\hat{\boldsymbol{\theta}}\equiv\boldsymbol{\Gamma}(\mathbf{x})^{-1}\boldsymbol{g}(\mathbf{x})$ as long as $\boldsymbol{\Gamma}$ is invertible (C1).
Since $\boldsymbol{\Gamma}$ and $\boldsymbol{g}$ are sample averages, 
the weak law of large numbers yields that 
$\boldsymbol{\Gamma}\to_p\mathbb{E}_{p_0}\boldsymbol{\Gamma}\equiv\boldsymbol{\Gamma}_0$ and $\boldsymbol{g}\to_p\mathbb{E}_{p_0}\boldsymbol{g}\equiv\boldsymbol{g}_0$,
where existence of $\boldsymbol{\Gamma}_0$ and $\boldsymbol{g}_0$ is assumed in (C2). Since $J_{\boldsymbol{h}}(p_{\boldsymbol{\theta}})=\mathbb{E}[\hat{J}_{\boldsymbol{h}}(p_{\boldsymbol{\theta}})]=\mathbb{E}[\frac{1}{2}\boldsymbol{\theta}^{\top}\boldsymbol{\Gamma}(\mathbf{x})\boldsymbol{\theta}-\boldsymbol{g}(\mathbf{x})^{\top}\boldsymbol{\theta}]=\frac{1}{2}\boldsymbol{\theta}^{\top}\boldsymbol{\Gamma}_0\boldsymbol{\theta}-\boldsymbol{g}_0\boldsymbol{\theta}$ and we know $\boldsymbol{\theta}_0$ minimizes $J_{\boldsymbol{h}}(p_{\boldsymbol{\theta}})$ by definition, by the first-order condition we must have $\boldsymbol{\Gamma}_0\boldsymbol{\theta}_0=\boldsymbol{g}_0$.  Then by the Lindeberg-L\'evy central limit theorem, 
\[\sqrt{n}(\boldsymbol{g}(\mathbf{x})-\boldsymbol{\Gamma}(\mathbf{x})\boldsymbol{\theta}_0)\to_d\mathcal{N}_m(\boldsymbol{0},\boldsymbol{\Sigma}_0),\]
where $\boldsymbol{\Sigma}_0\equiv\mathbb{E}_{p_0}[(\boldsymbol{\Gamma}(\mathbf{x})\boldsymbol{\theta}_0-\boldsymbol{g}(\mathbf{x}))(\boldsymbol{\Gamma}(\mathbf{x})\boldsymbol{\theta}_0-\boldsymbol{g}(\mathbf{x}))^{\top}]$, as long as $\boldsymbol{\Sigma}_0$ exists (C2).
Thus, by Slutsky's theorem, 
\[\sqrt{n}(\hat{\boldsymbol{\theta}}-\boldsymbol{\theta}_0)\equiv\sqrt{n}(\boldsymbol{\Gamma}(\mathbf{x})^{-1}(\boldsymbol{g}(\mathbf{x})-\boldsymbol{\Gamma}(\mathbf{x})\boldsymbol{\theta}_0))\to_d\mathcal{N}_{r}(\boldsymbol{0},\boldsymbol{\Gamma}_0^{-1}\boldsymbol{\Sigma}\boldsymbol{\Gamma}_0^{-1}),\]
as long as $\boldsymbol{\Gamma}_0$ is invertible (C2).

For the second half of the theorem, (C2) $\mathbb{E}_{p_0}\boldsymbol{\Gamma}(\mathbf{x})<\infty$ and $\mathbb{E}_{p_0}\boldsymbol{g}(\mathbf{x})<\infty$ implies $\mathbb{E}_{p_0}|\boldsymbol{\Gamma}(\mathbf{x})|<\infty$ and $\mathbb{E}_{p_0}|\boldsymbol{g}(\mathbf{x})|<\infty$, so by strong law of large numbers (and a union bound on at most $k^2$ null sets)
\[\boldsymbol{\Gamma}(\mathbf{x})\to_{\text{a.s.}}\boldsymbol{\Gamma}_0,\quad \boldsymbol{g}(\mathbf{x})\to_{\text{a.s.}}\boldsymbol{g}_0.\]
Then outside a null set,
\[\hat{\boldsymbol{\theta}}\equiv\boldsymbol{\Gamma}(\mathbf{x})^{-1}\boldsymbol{g}(\mathbf{x})\to_{\mathrm{a.s.}}\boldsymbol{\Gamma}_0^{-1}\boldsymbol{g}_0=\boldsymbol{\theta}_0.
\]
\end{proof}

\begin{proof}[Proof for Example \ref{theorem_GSM_normality_tnorm_mu}]
We choose to estimate $\theta\equiv\mu/\sigma^2$. Then by (\ref{def_Gamma}) and (\ref{def_g}), 
\begin{align*}
\hat{\mu}_{h}&=\sigma^2\hat{\theta}\equiv\sigma^2\Gamma(\boldsymbol{x})^{-1}g(\boldsymbol{x})\\
&=-\sigma^2\left[\sum_{i=1}^nh(X_i)t'(X_i)^2\right]^{-1}\left[\sum_{i=1}^nh(X_i)b'(X_i)t'(X_i)+h(X_i)t''(X_i)+h'(X_i)t'(X_i)\right]\\
&=-\sigma^2\left[\sum_{i=1}^nh(X_i)\right]^{-1}\left[\sum_{i=1}^n-h(X_i)\frac{X_i}{\sigma^2}+h'(X_i)\right].
\end{align*}
By Theorem \ref{theorem_exponential},
\begin{align*}\sqrt{n}(\hat{\mu}_{h}-\mu_0)&\to_d \mathcal{N}\left(0,\frac{\sigma^4\mathbb{E}_0\left[-h(X)\frac{X-\mu_0}{\sigma^2}+h'(X)\right]^2}{\mathbb{E}_0^2[h(X)]}\right)\\
&\sim \mathcal{N}\left(0,\frac{\mathbb{E}_0\left[-h(X)(X-\mu_0)+\sigma^2 h'(X)\right]^2}{\mathbb{E}_0^2[h(X)]}\right).
\end{align*}
By integration by parts, (suppressing the dependence of $p_{\mu_0}$ on $\mu_0$)
\begin{align*}
&\,\mathbb{E}_0[h(X)h'(X)(X-\mu_0)]\\
=&\,\int_0^{\infty}h'(x)h(x)(x-\mu_0)p(x)\d x\\
=&\int_0^{\infty}h(x)(x-\mu_0)p(x)\d h(x)\\
=&\,\left.h^2(x)(x-\mu_0)p(x)\right|_0^{\infty}-\int h(x)\d h(x)(x-\mu_0)p(x)\\
=&\,-\int h^2(x)p(x)\d x-\int h(x)h'(x)(x-\mu_0)p(x)\d x+\int h^2(x)\frac{(x-\mu_0)^2}{\sigma^2}p(x)\d x,
\end{align*}
where the last step follows from the assumptions $\lim\limits_{x\searrow 0^+}h(x)=0$ and $\lim\limits_{x\nearrow+\infty}h^2(x)(x-\mu_0)p_{\mu_0}(x)=0$. So
\begin{align}
\mathbb{E}_0[h(X)h'(X)(X-\mu_0)]&=\frac{\mathbb{E}[h^2(X)((X-\mu_0)^2/\sigma^2-1)]}{2}.\label{eq_proof_thm_GSM_normality_tnorm_mu}
\end{align}
The asymptotic variance is thus 
\begin{align*}
&\,\,\frac{\mathbb{E}_0\left[-h(X)(X-\mu_0)+\sigma^2 h'(X)\right]^2}{\mathbb{E}_0^2[h(X)]}\\
=&\,\,\frac{\mathbb{E}_0\left[h^2(X)(X-\mu_0)^2-2\sigma^2h^2(X)\left((X-\mu_0)^2/\sigma^2-1\right)/2+\sigma^4 {h'}^{2}(X)\right]}{\mathbb{E}_0^2[h(X)]}\\
=&\,\,\frac{\mathbb{E}_0[\sigma^2 h^2(X)+\sigma^4 {h'}^{2}(X)]}{\mathbb{E}_0^2[h(X)]}.
\end{align*}
The Cram\'er-Rao lower bound follows from taking the second derivative of $\log p_{\mu_0}$ with respect to $\mu_0$.
\end{proof}

\begin{proof}[Proof for Example \ref{theorem_GSM_normality_tnorm_sigma}]
We estimate $\theta\equiv 1/\sigma^2$. By (\ref{def_Gamma}) and (\ref{def_g}), 
\begin{align*}
\hat{\theta}&\equiv\Gamma(\boldsymbol{x})^{-1}g(\boldsymbol{x})\\
&=-\left[\sum_{i=1}^nh(X_i)t'(X_i)^2\right]^{-1}\left[\sum_{i=1}^nh(X_i)b'(X_i)t'(X_i)+h(X_i)t''(X_i)+h'(X_i)t'(X_i)\right]\\
&=\left[\sum_{i=1}^nh(X_i)(X_i-\mu)^2\right]^{-1}\left[\sum_{i=1}^nh(X_i)+h'(X_i)(X_i-\mu)\right].
\end{align*}
By Theorem \ref{theorem_exponential}, $\sqrt{n}(\hat{\theta}-\theta)\to_d \mathcal{N}(0,\varsigma^2)$, where
{\begin{align*}
\varsigma^2&\equiv\frac{\mathbb{E}_0\left[h(X)((X-\mu)^2/\sigma_0^2-1)-h'(X)(X-\mu)\right]^2}{\mathbb{E}_0^2[h(X)(X-\mu)^2]}\\
&=%
\frac{1}{\mathbb{E}_0^2[h(X)(X-\mu)^2]}\bigg(\mathbb{E}_0[h^2(X)(X-\mu)^4/\sigma_0^4-2h^2(X)(X-\mu)^2/\sigma_0^2+h^2(X)
\\&\quad\quad\quad\quad\quad\quad\quad+{h'}^{2}(X)(X-\mu)^2-2h(X)h'(X)(X-\mu)^3/\sigma_0^2+2h(X)h'(X)(X-\mu)\bigg).
\end{align*}}
By integration by parts, (suppressing the dependence of $p_{\sigma_0^2}$ on $\sigma_0^2$)
\begin{align*}
&\,\mathbb{E}_0[h(X)h'(X)(X-\mu)^3]\\
=&\,\int_0^{\infty}h'(x)h(x)(x-\mu)^3p(x)\d x\\
=&\int_0^{\infty}h(x)(x-\mu)^3p(x)\d h(x)\\
=&\,\left.h^2(x)(x-\mu)^3p(x)\right|_0^{\infty}-\int h(x)\d h(x)(x-\mu)^3p(x)\\
=&\,-\int h(x)h'(x)(x-\mu)^3p(x)\d x-3\int h^2(x)(x-\mu)^2p(x)\d x+\int h^2(x)\frac{(x-\mu)^4}{\sigma_0^2}p(x)\d x,
\end{align*}
where the last step follows from the assumptions $\lim\limits_{x\searrow 0^+}h(x)=0$ and $\lim\limits_{x\nearrow+\infty}h^2(x)(x-\mu)^3p_{\sigma_0^2}(x)=0$. Combining this with (\ref{eq_proof_thm_GSM_normality_tnorm_mu}) we get
\[\sqrt{n}(\hat{\theta}-\theta)\to_d\mathcal{N}(0,\varsigma^2)\sim\mathcal{N}\left(0,\frac{2\mathbb{E}_0[h^2(X)(X-\mu)^2/\sigma_0^2]+\mathbb{E}_0[{h'}^{2}(X-\mu)^2]}{\mathbb{E}_0^2[h(X)(X-\mu)^2]}\right),\]
and so by the delta method, for $\hat{\sigma}_k^2\equiv\hat{\theta}^{-1}$,
\[\sqrt{n}(\hat{\sigma}_{h}^2-\sigma_0^2)\to_d\mathcal{N}\left(0,\frac{2\sigma_0^6\mathbb{E}_0[h^2(X)(X-\mu)^2]+\sigma_0^8\mathbb{E}_0[{h'}^{2}(X-\mu)^2]}{\mathbb{E}_0^2[h(X)(X-\mu)^2]}\right).\]
The Cram\'er-Rao lower bound follows from taking the second derivative of $\log p_{\sigma_0^2}$ with respect to $\sigma_0^2$.
\end{proof}

\subsection{Proof of Theorems in Section \ref{sec:graph-models-trunc}}\label{Proof of Theorems in Section sec:graph-models-trunc}

\begin{proof}[Proof of Theorem \ref{thm_normalizability}]~\\
\emph{Case $b\neq 0$:} We use a strategy similar to that of \citet{ino16}.  Let $\mathcal{V}_1=\{\boldsymbol{v}:\|\boldsymbol{v}\|_1=1,\boldsymbol{v}\in\mathbb{R}_+^m\}$. Then by Fubini-Tonelli the normalizing constant is,
\begin{align*}
\phantom{=}&\int_{\mathbb{R}_+^m}\exp\left(\boldsymbol{\eta}^{\top}\frac{\boldsymbol{x}^b-\mathbf{1}_m}{b}-\frac{1}{2a}{\boldsymbol{x}^a}^{\top}\mathbf{K}\boldsymbol{x}^a\right)\d\boldsymbol{x}\\
=&\int_{\mathcal{V}_1}\int_{0}^{\infty}\exp\left(\boldsymbol{\eta}^{\top}\frac{z^b\boldsymbol{v}^b-\mathbf{1}_m}{b}-\frac{1}{2a}z^{2a}{\boldsymbol{v}^{a}}^{\top}\mathbf{K}\boldsymbol{v}^{a}\right)\d z\d\boldsymbol{v}\\
\propto&\int_{\mathcal{V}_1}\int_{0}^{\infty}\exp\Big(z^b(\boldsymbol{\eta}^{\top}\boldsymbol{v}^b)/b-z^{2a}({\boldsymbol{v}^{a}}^{\top}\mathbf{K}\boldsymbol{v}^{a})/(2a)\Big)\d z\d\boldsymbol{v}.
\end{align*}
Here $\mathcal{V}_1$ is compact and the inner integral, if finite, is continuous in $\boldsymbol{v}$. It thus suffices to show that the inner integral is finite at every single $\boldsymbol{v}\in\mathcal{V}_1$.

Fixing $\boldsymbol{v}\in\mathcal{V}_1$, write $A\equiv A(\boldsymbol{v})\equiv{\boldsymbol{v}^{a}}^{\top}\mathbf{K}\boldsymbol{v}^a/(2a)$ and $B\equiv B(\boldsymbol{v})\equiv(\boldsymbol{\eta}^{\top}\boldsymbol{v}^b)/b$. We need to show that
\[N(A,B,a,b)\equiv\int_0^{\infty}\exp\left(-Az^{2a}+Bz^b\right)\d z<+\infty.\]
Recall that (CC1) $\boldsymbol{v}^{\top}\mathbf{K}\boldsymbol{v}>0$ for all $\boldsymbol{v}\in\mathbb{R}_+^m\backslash\{\boldsymbol{0}\}$, so $A>0$.

\begin{enumerate}[(i)]
\item Suppose $B\leq 0$. Then $N(A,B,a,b)\leq \int_0^{\infty}\exp(-Az^{2a})\d z=A^{-a/2}\Gamma(1+1/(2a))$, a finite constant since $A>0$ and $a>0$.
\item Suppose $B>0$.  We first want  to bound $\exp(-Az^{2a}+Bz^b)\leq N_0\exp(-Az^{2a}/2)$ by some finite constant $N_0>0$, so that $N(A,B,a,b)\leq N_0\int_0^{\infty}\exp(-Az^{2a}/2)\d z$, a finite constant for $a>0$. Thus, it remains to give conditions so that $\exp(-Az^{2a}/2+Bz^b)$ is bounded by some finite constant $N_0$, which by continuity only requires a finite limit as $z\searrow 0$ and as $z\nearrow+\infty$. As $z\nearrow+\infty$, $Bz^b\nearrow+\infty$, while $-Az^{2a}/2\searrow -\infty$. We thus need $b<2a$ so that the sum of the two does not go to positive infinity. On the other hand, as $z\searrow 0$, $-Az^{2a}/2\nearrow 0$, so we need $b>0$, otherwise $z^b\nearrow+\infty$. In conclusion, we require that $2a>b>0$.
\end{enumerate}
It thus suffices to require (CC1) and (CC2) $2a>b>0$ to eliminate restrictions on $B$, and hence on $\boldsymbol{\eta}$. That is, $\boldsymbol{\eta}$ can take value in the entirety of $\mathbb{R}^m$.\\

\noindent\emph{Case $b=0$}: Again in (CC1) we assume $\boldsymbol{v}^{\top}\mathbf{K}\boldsymbol{v}>0$ for all $\boldsymbol{v}\in\mathbb{R}_+^m\backslash\{\boldsymbol{0}\}$. Since $\mathcal{V}_2\equiv\{\boldsymbol{v}:\|\boldsymbol{v}\|_2=1,\boldsymbol{v}\in\mathbb{R}_+^m\}$ is compact and $\boldsymbol{v}^{\top}\mathbf{K}\boldsymbol{v}$ is continuous in $\boldsymbol{v}$ and strictly positive on $\mathcal{V}_2$, the image of $\mathcal{V}_2$ under $\boldsymbol{v}^{\top}\mathbf{K}\boldsymbol{v}$ is a compact subset of $(0,\infty)$, i.e.~$N_{\mathbf{K}}\equiv\min_{\boldsymbol{v}\in\mathbb{R}_+^m\backslash\{\boldsymbol{0}\}}\boldsymbol{v}^{\top}\mathbf{K}\boldsymbol{v}/\boldsymbol{v}^{\top}\boldsymbol{v}\equiv\min_{\boldsymbol{v}\in\mathcal{V}_2}\boldsymbol{v}^{\top}\mathbf{K}\boldsymbol{v}>0$. We thus have


\begin{align*}
\phantom{=}&\int_{\mathbb{R}_+^m}\exp\left(\boldsymbol{\eta}^{\top}\log(\boldsymbol{x})-\frac{1}{2a}{\boldsymbol{x}^a}^{\top}\mathbf{K}\boldsymbol{x}^a\right)\d\boldsymbol{x}\\
\leq&\int_{\mathbb{R}_+^m}\exp\left(\boldsymbol{\eta}^{\top}\log(\boldsymbol{x})-\frac{N_{\mathbf{K}}}{2a}{\boldsymbol{x}^a}^{\top}\boldsymbol{x}^a\right)\d\boldsymbol{x}\\
=&\prod_{j=1}^m\int_{0}^{\infty}\exp\left(\eta_j\log(x_j)-\frac{N_{\mathbf{K}}}{2a}x_j^{2a}\right)\d x_j\\
=&\prod_{j=1}^m\left[\Gamma\left(\frac{\eta_j+1}{2a}\right)\frac{(N_{\mathbf{K}}/2a)^{-\frac{\eta_j+1}{2a}}}{2a}\right],
\end{align*}
where the integration follows by change of variable and requires $a>0$. Assuming $a>0$, the last quantity is finite if and only if $\boldsymbol{\eta}\succ-\mathbf{1}_m$, by definition of the gamma function.

In conclusion, given conditions (CC1) $\min_{\boldsymbol{v}\in\mathbb{R}_+^m\backslash\{\boldsymbol{0}\}}\boldsymbol{v}^{\top}\mathbf{K}\boldsymbol{v}>0$, (CC2) $2a>b>0$, and (CC3) $a>0$, $b=0$ and $\boldsymbol{\eta}\succ-\mathbf{1}_m$, the unnormalized density (\ref{eq:ab-density2}) has a finite normalizing constant when (CC1) and (CC2) both hold, or (CC1) and (CC3) both hold.

The centered settings, where the term involving $\boldsymbol{x}^b$ is excluded, can be considered as a special case of both (1) and (2) with $\boldsymbol{\eta}\equiv \boldsymbol{0}$, and thus (CC1) and $a>0$ are sufficient.
\end{proof}

\begin{proof}[Proof of Theorem \ref{thm_h}]
Recall assumptions (A1) and (A2):
\begin{enumerate}[label=(A\arabic*)]
\item $p_0(\boldsymbol{x})h_j(x_j)\partial_j\log p(\boldsymbol{x})\left|^{x_j\nearrow+\infty}_{x_j\searrow 0^+}=0\right.,\quad\forall\boldsymbol{x}_{-j}\in\mathbb{R}_{+}^{m-1},\quad \forall p\in\mathcal{P}_+$;
\item $\mathbb{E}_{p_0}\|\nabla\log p(\boldsymbol{X})\circ \boldsymbol{h}^{1/2}(\boldsymbol{X})\|_2^2<+\infty, \,\,\, \mathbb{E}_{p_0}\|(\nabla\log p(\boldsymbol{X})\circ\boldsymbol{h}(\boldsymbol{X}))'\|_1<+\infty,\,\,\,\forall p\in\mathcal{P}_+$.
\end{enumerate}
Let $\mathbf{K}_0$ and $\boldsymbol{\eta}_0$ be the true parameters so that $p_0\in\mathcal{P}_+$, with $\mathcal{P}_+$ corresponding to a parameter space in which all parameters satisfy the conditions for a finite normalizing constant. We now give sufficient conditions for $h$ to satisfy (A1) and (A2).\\

\noindent \emph{Conditions for (A1):}
Fix $j=1,\dots,m$ and $\boldsymbol{x}_{-j}\in\mathbb{R}_+^{m-1}$. We show that the conditions on $h_j$ imply that the limits go to $0$ as $x_j\nearrow+\infty$ and as $x_j\searrow 0^+$, which is stronger than (A1); in fact, from (\ref{A1_proof_ABC}) below, the limits cannot go to a nonzero finite constant assuming an $h$ with polynomial tail, since $a>0$ and $B_1\equiv\kappa_{0,jj}>0$ for all $j$. Now,
\begin{align}
& \phantom{\propto=} p_0(\boldsymbol{x})h_j(x_j)\partial_j\log p(\boldsymbol{x})\nonumber\\
& \propto h_j(x_j)\exp\left(-\frac{1}{2a}{\boldsymbol{x}^a}^{\top}\mathbf{K}_0\boldsymbol{x}^a+\boldsymbol{\eta}_0^{\top}\frac{\boldsymbol{x}^b-\mathbf{1}_m}{b}\right)\partial_j\left(-\frac{1}{2a}{\boldsymbol{x}^a}^{\top}\mathbf{K}\boldsymbol{x}^a+\boldsymbol{\eta}^{\top}\frac{\boldsymbol{x}^b-\mathbf{1}_m}{b}\right)\nonumber\\
& \propto h_j(x_j)\exp\left(-\frac{1}{a}(\boldsymbol{k}_{0,j,-j}^{\top}\boldsymbol{x}^{a}_{-j})x_j^a-\frac{\kappa_{0,jj}}{2a}x_j^{2a}+\eta_{0,j}\frac{x_j^b-1}{b}\right)\times\nonumber \\
&\hspace{2.43in}\left(-\boldsymbol{k}_{j,-j}^{\top}\boldsymbol{x}^{a}_{-j}x_j^{a-1}-\kappa_{jj}x_j^{2a-1}+\eta_j x_j^{b-1}\right)\nonumber\\
& \equiv h_j(x_j)\exp\left(\frac{A_1x_j^a}{a}+\frac{B_1x_j^{2a}}{2a}+C_1\frac{x_j^b-1}{b}\right)\left(A_2x_j^{a-1}+B_2x_j^{2a-1}+C_2x_j^{b-1}\right),\label{A1_proof_ABC}
\end{align}
where $A_1\equiv-\boldsymbol{k}_{0,j,-j}^{\top}\boldsymbol{x}^{a}_{-j}$, $A_2\equiv-\boldsymbol{k}_{j,-j}^{\top}\boldsymbol{x}^{a}_{-j}$, $B_1\equiv-\kappa_{0,jj}<0$ and $B_2\equiv-\kappa_{jj}<0$ by condition (CC1). Finally $C_1\equiv\eta_{0,j}$, $C_2\equiv\eta_j$.
\begin{enumerate}[(1)]
\item Let $x_j\nearrow+\infty$. If $b>0$, since $2a>b>0$ and $B_1<0$, the exponential term in (\ref{A1_proof_ABC}) decreases to $0$ exponentially and its reciprocal dominates any polynomial functions. Thus, the entire product goes to $0$ if $h_j(x_j)$ grows no faster than polynomially as $x_j\nearrow+\infty$. If $b=0$, the $C_1\log x_j$ term is again dominated by $B_1x_j^{2a}/(2a)$, and the same conclusion holds.
\item Let $x_j\searrow 0$.
\begin{enumerate}[(i)]
\item Let $b>0$. Then the exponential term in (\ref{A1_proof_ABC}) goes to constant $\exp(-C_1/b)$, and we only need 
\begin{equation}\label{A1_proof_sub}
\lim_{x_j\searrow 0^+}h_j(x_j)(A_2x_j^{a-1}+B_2x_j^{2a-1}+C_2x_j^{b-1})=0.
\end{equation}
\begin{itemize}
\item If $a>1$ and $b>1$, the second term in (\ref{A1_proof_sub}) is a polynomial with three terms having powers $\geq\min\{a-1,b-1\}$. The product goes to zero if and only if $h_j(x_j)=o(x_j^{\max\{1-a,1-b\}})$ as $x_j\searrow 0$. Note that this is satisfied by any $h_j$ that has a finite right limit at $0$.
\item If $a=1$ and $b\geq 1$, or $a\geq 1$ and $b=1$, then the second term in (\ref{A1_proof_sub}) is a polynomial of non-negative power plus a potentially nonzero constant. A 
sufficient condition for (\ref{A1_proof_sub}) is thus $\lim_{x_j\searrow 0}h_j(x_j)=0$.
\item If $a<1$ or $b<1$, then the second part in (\ref{A1_proof_sub}) is a polynomial having terms with negative degree $\geq\min\{a-1,b-1\}$. To counteract this a 
sufficient condition is $h_j(x_j)=o(x_j^{\max\{1-a,1-b\}})$.
\end{itemize}
In conclusion, $\lim_{x_j\searrow 0^+}p_0(\boldsymbol{x})h_j(x_j)\partial_j\log p(\boldsymbol{x})=0$ if and only if \[\lim_{x_j\searrow 0^+}h_j(x_j)/x_j^{\max\{1-a,1-b\}}=0.\]
\item Now assume $b=0$. Then, (\ref{A1_proof_ABC}) now becomes
\[h_j(x_j)\exp\left(A_1x_j^a/a+B_1x_j^{2a}/(2a)+C_1\log x_j\right)\big(A_2x_j^{a-1}+B_2x_j^{2a-1}+C_2/x_j\big).\]
With $C_1\log x_j$ dominating, the exponential part scales as $x_j^{C_1}$. We thus require
\[
\lim_{x_j\searrow 0^+}h_j(x_j)(A_2x_j^{a-1+C_1}+B_2x_j^{2a-1+C_1}+C_2x_j^{C_1-1})=0,\]
which by the previous discussion on (\ref{A1_proof_sub}) holds if and only if
\[\lim_{x_j\searrow 0^+}h_j(x_j)/x_j^{1-C_1}=0\]
since $1-a-C_1<1-C_1$. 
\end{enumerate}
\end{enumerate}
In summary, (A1) is satisfied if $h_j(x_j)$ grows at most polynomially as $x_j\nearrow +\infty$, and $\lim_{x_j\searrow 0^+}h_j(x_j)/x_j^{\max\{1-a,1-b\}}=0$ if $b>0$, or $\lim_{x_j\searrow 0^+}h_j(x_j)/x_j^{1-\eta_{0,j}}=0$ if $b=0$.\\

\noindent \emph{Conditions for (A2):} For (A2), we consider powers of $x$ as the $h$ functions for simplicity; conclusions for other functions that have the same tail behavior (big-O scaling) as $x\searrow0$ and $x\nearrow+\infty$ follow similarly. Sufficiency results for piecewise power functions follow by partitioning, and similarly for other functions $h$ whose function values and derivatives can be bounded by those of some piecewise power function (e.g.~truncated powers), since (A2) is on integrability of products involving positive powers of $h$ and $h'$.

Let $\mathbf{K}_0$ and $\boldsymbol{\eta}_0$ be the true parameters from the parameter space that satisfies the conditions for finite normalizing constant. By part (2) of the proof of Theorem \ref{thm_normalizability}, the assumption that $\mathbf{K}_0$ satisfies (CC1) implies that $\min_{\boldsymbol{v}\in\mathbb{R}_+^m\backslash\{\boldsymbol{0}\}}\boldsymbol{v}^{\top}\mathbf{K}_0\boldsymbol{v}/\boldsymbol{v}^{\top}\boldsymbol{v}\equiv N_{\mathbf{K}_0}>0$. Then we have the following decomposition
\begin{align*}
p_{\mathbf{K}_0,\boldsymbol{\eta}_0}(\boldsymbol{x})&\equiv\exp\left(-\frac{1}{2a}{\boldsymbol{x}^a}^{\top}\mathbf{K}_0\boldsymbol{x}^a+\boldsymbol{\eta}_0^{\top}\frac{\boldsymbol{x}^b-\mathbf{1}_m}{b}\right)\\
&\leq\prod_{j=1}^m\exp\left(-\frac{N_{\mathbf{K}_0}}{2a}x^{2a}_j+\eta_{0,j}\frac{x_j^b-1}{b}\right).
\end{align*}

Then for any other $\mathbf{K}$ and $\boldsymbol{\eta}$ in the parameter space, for the first part of (A2) it suffices to show for any $j=1,\dots,m$ that $D<\infty$, where
\begin{align*}
D&\equiv \int_{\mathbb{R}_+^m}\prod_{j=1}^m\exp\left(-\frac{N_{\mathbf{K}_0}}{2a}x^{2a}_j+\eta_{0,j}\frac{x_j^b-1}{b}\right)h_j(x_j)\times\\
&\hspace{2in}\bigg(-\kappa_{jj}x^{2a-1}_j-\sum_{i\neq j}\kappa_{ji}x_i^{a}x_j^{a-1}+\eta_j x_j^{b-1}\bigg)^2\d\boldsymbol{x}\\
&\geq \int_{\mathbb{R}_+^m}p_0(\boldsymbol{x})h_j(x_j)\left(\partial_j\log p(\boldsymbol{x})\right)^2\d \boldsymbol{x}.
\end{align*}
Note that 
\begin{align*}
&\phantom{=}\left(-\kappa_{jj}x^{2a-1}_j-\sum_{i\neq j}\kappa_{ji}x_i^{a}x_j^{a-1}+\eta_j x_j^{b-1}\right)^2\\
&=\kappa_{jj}^2x_j^{4a-2}+\sum_{i\neq j,\ell\neq j}\kappa_{ji}\kappa_{j\ell}x_i^{a}x_{\ell}^ax_j^{2a-2}+\eta_j^2x_j^{2b-2}+2\sum_{i\neq j}\kappa_{jj}\kappa_{ji}x_i^ax_j^{3a-2}
\\&\hspace{2.0in}-2\sum_{i\neq j}\kappa_{ji}\eta_j x_i^ax_j^{a+b-2}-2\kappa_{jj}\eta_jx_j^{2a+b-2}.
\end{align*}
Thus, plugging this back in the definition of $D$, we can split $D$ into a sum of six terms $D_1$ through $D_6$, each of which is a sum of terms of the form
\begin{align*}
\phantom{=}&\int_{\mathbb{R}_+^m}\prod_{k=1}^m\exp\left(-\frac{N_{\mathbf{K}_0}}{2a}x_k^{2a}+\eta_{0,k}\frac{x_k^b-1}{b}\right)h_j(x_j)x_i^{\text{pow}_i}x_{\ell}^{\text{pow}_{\ell}}x_j^{\text{pow}_j}\d \boldsymbol{x}\\
=&\prod_{k\neq j}\int_{0}^{\infty}\exp\left(-\frac{N_{\mathbf{K}_0}}{2a}x_k^{2a}+\eta_{0,k}\frac{x_k^b-1}{b}\right)x_k^{\mathrm{pow}_k}\d x_k
\\ &\hspace{1.5in}\times \int_{0}^{\infty}\exp\left(-\frac{N_{\mathbf{K}_0}}{2a}x_j^{2a}+\eta_{0,j}\frac{x_j^b-1}{b}\right)h_j(x_j)x_j^{\mathrm{pow}_j}\d x_j
\end{align*}
times a constant involving $\mathbf{K}$ and $\eta_j$, where $\mathrm{pow}_k\geq 0$ for each $k\neq j$. We have thus decomposed the integral into a product of univariate integrals. Note that
\[\int_{0}^{\infty}\exp\left(-\frac{N_{\mathbf{K}_0}}{2a}x_i^{2a}+\eta_{0,i}\frac{x_i^b-1}{b}\right)x_i^{\mathrm{pow}_i}\d x_i\]
is finite for all $\mathrm{pow}_i\geq 0$ regardless of whether $b$ is nonzero, since we assumed $\mathbf{K}_0$ and $\boldsymbol{\eta}_0$ to lie in the parameter space with a finite normalizing constant. Indeed, if $b>0$ then the terms in the exponential is a regular polynomial with positive degree and a negative leading term; if $b=0$ then integrability follows from $\eta_{0,i}+\mathrm{pow}_i\geq\eta_{0,i}>-1$. Thus, we only need to consider the univariate integral that involve the $x_j$ terms, namely
\[\int_{0}^{\infty}\exp\left(-\frac{N_{\mathbf{K}_0}}{2a}x_j^{2a}+\eta_{0,j}\frac{x_j^b-1}{b}\right)h_j(x_j)x_j^{\mathrm{pow}_j}\d x_j,\]
where $\mathrm{pow}_j$ takes value in $\{4a-2,2a-2,2b-2,3a-2,a+b-2,2a+b-2\}\subseteq[2\min\{a,b\}-2,4a-2]$. 
We split the integral into two parts over $[0,1]$ and $[1,\infty]$, respectively. 
\begin{itemize}
\item If $b>0$, on $[0,1]$ the exponential part is bounded above and below by positive constants, and for (A1) we require $h_j(x)=o(x^{1-\min\{a,b\}})$ as $x\searrow 0^+$, so the integrand is $o(x^{\min\{a,b\}-1})=o(x^{-1})$ and is thus integrable on $[0,1]$. The integrand on $[1,\infty)$ is integrable as  in (A1) we assume $h$ to grow at most polynomially. 
\item If $b=0$, $\mathrm{pow}_j\in[-2,4a-2]$ and the integrand becomes \[\exp(-N_{\mathbf{K}_0}x_j^{2a}/(2a))h_j(x_j)x_j^{\mathrm{pow}_j+\eta_{0,j}}.\] On $[0,1]$, (A1) requires $h_j(x)=o(x^{1-\min_j\eta_{0,j}})$, so $h_j(x_j)x_j^{\mathrm{pow}_j+\eta_{0,j}}=o(x^{-1})$ and the integrand is again integrable. Integrability on $[1,\infty)$ follows similarly to the case with $b>0$.
\end{itemize}

Now consider the second part of (A2). By definition $\mathbb{E}_{p_0}\|(\nabla\log p(\boldsymbol{X})\circ\boldsymbol{h}(\boldsymbol{X}))'\|_1$ equals
\begin{align*}
&\,\int_{\mathbb{R}_+^m}p_0(\boldsymbol{x})\sum_{j=1}^m\left|h_j'(X_j)\partial_j\log p(\boldsymbol{X})+h_j(X_j)\partial_j^2\log p(\boldsymbol{X})\right|\d\boldsymbol{x}\\
\leq&\,\sum_{j=1}^m\int_{\mathbb{R}_+^m}\exp\left(-\frac{N_{\mathbf{K}_0}}{2a}x^{2a}_j+\eta_{0,j}\frac{x_j^b-1}{b}\right)\left|h_j'(x_j)\left(-\kappa_{jj}x^{2a-1}_j-\sum_{i\neq j}\kappa_{ji}x_i^{a}x_j^{a-1}+\eta_j x_j^{b-1}\right)\right.\\
&\hspace{0.8in}\left.+h_j(x_j)\left(-\kappa_{jj}(2a-1)x^{2a-2}_j-\sum_{i\neq j}\kappa_{ji}(a-1)x_i^{a}x_j^{a-2}+(b-1)\eta_j x_j^{b-2}\right)\right|\d\boldsymbol{x}.
\end{align*}
By the triangle inequality and the fact that $h_j\geq 0$ and $h_j'\geq 0$, similar to the proof for the first part, for each $j$ the integral can be bounded by a sum of six integrals, each of the form
\[\mathrm{const}\times\int_{\mathbb{R}_+^m}\prod_{k=1}^m\exp\left(-\frac{N_{\mathbf{K}_0}}{2a}x_k^{2a}+\eta_{0,k}\frac{x_k^b-1}{b}\right)h_j(x_j)x_i^{\text{pow}_i}x_j^{\text{pow}_j}\d \boldsymbol{x},\]
or with $h_j$ replaced by $h_j'$. Finiteness thus follows from the same type of discussion by noting that $h_j(x)=o\left(x^{1-\min\{a,b\}}\right)$ and $h'_j(x)=o\left(x^{-\min\{a,b\}}\right)$.

We conclude that if the true and the proposed parameters give densities with finite normalizing constants, and if $h$ satisfies assumption (A1), then (A2) is automatically satisfied.

In the centered case where we assume $\boldsymbol{\eta}\equiv \boldsymbol{0}$, we only need $\lim_{x_j\searrow 0^+}h_j(x_j)/x_j^{1-a}=0$ as it is a special case with $b=2a$.
\end{proof}

\subsection{Proof of Theorems in Section \ref{Theory for Graphical Models}}\label{Proof of Theorems in Section Theory for Graphical Models}

\begin{proof}[Proof of Corollary \ref{cor_L2-consistency}]
By Theorem \ref{theorem_lin}, under assumptions in that theorem, the support of $\hat{\mathbf{\Psi}}$ is a subset of the true support of $\mathbf{\Psi}_0$, and $\|\hat{\mathbf{\Psi}}-\mathbf{\Psi}_0\|_{\infty}\leq\frac{c_{\boldsymbol{\Gamma}_0}}{2-\alpha}\lambda$. Since $\mathbf{\Psi}_0$ has $|S_0|$ nonzero entries, \[\mnorm{\hat{\mathbf{\Psi}}-\mathbf{\Psi}_0}_F=\left[\sum_{\mathbf{\Psi}_{0,jk}\neq 0}(\hat{\mathbf{\Psi}}_{jk}-\mathbf{\Psi}_{0,jk})^2\right]^{1/2}\leq\sqrt{|S_0|}\|\hat{\mathbf{\Psi}}-\mathbf{\Psi}_0\|_{\infty}\leq\frac{c_{\boldsymbol{\Gamma}_0}}{2-\alpha}\lambda\sqrt{|S_0|}.\]
Similarly, by the definition of matrix $\ell_{\infty}$-$\ell_{\infty}$ norm, 
\[\mnorm{\hat{\mathbf{\Psi}}-\mathbf{\Psi}_0}_{2}\leq\mnorm{\hat{\mathbf{\Psi}}-\mathbf{\Psi}_0}_{\infty}=\max_{j=1,\ldots,m}\sum_{k=1}^m|\hat{\mathbf{\Psi}}_{jk}-\mathbf{\Psi}_{0,jk}|\leq \frac{c_{\boldsymbol{\Gamma}_0}}{2-\alpha}\lambda d_{\mathbf{\Psi}_0}.\]
The result follows by also noting that $\mnorm{\hat{\mathbf{\Psi}}-\mathbf{\Psi}_0}_{2}\leq\mnorm{\hat{\mathbf{\Psi}}-\mathbf{\Psi}_0}_{F}$.
\end{proof}

\begin{proof}[Proof of Theorem \ref{thm_gau}]
The proof is based on Theorem \ref{theorem_lin} and a probabilistic bound on $\|\boldsymbol{\Gamma}_{\boldsymbol{\gamma}}-\boldsymbol{\Gamma}_0\|_{\infty}$, where in the case of centered Gaussian $\boldsymbol{\Gamma}=\mathrm{diag}(\mathbf{xx}^{\top},\dots,\mathbf{xx}^{\top})$. Denote $\boldsymbol{\Sigma}_0=\mathbf{K}_0^{-1}$. In particular, given $\tau>2$ we wish to show that for $\epsilon=80\sqrt{2}c_0\max_j(\Sigma_{0,jj})$, assuming $c_0\equiv\sqrt{(\tau\log m+\log 4)/n}<1/\sqrt{2}$,
\[\mathbb{P}\left(\left|n^{-1}\sum_{i=1}^nX_j^{(i)}X_{k}^{(i)}+\gamma_{1j}\mathds{1}_{\{j=k\}}-\mathbb{E}X_jX_k\right|>\epsilon\right)\leq m^{2-\tau},\]
and so the results follow from Theorem \ref{theorem_lin}.

By Lemma 1 of \citet{rav11}, since $X_j/\sqrt{\Sigma_{0,jj}}$ is Gaussian with mean $0$ and standard deviation $1$,
\[\P\left(\left|n^{-1}\sum_{i=1}^nX_j^{(i)}X_{k}^{(i)}-\mathbb{E}X_jX_k\right|>t\right)\leq 4\exp\left(-\frac{nt^2}{3200\max_j(\Sigma_{0,jj})^2}\right)\]
for $t\in (0,40\max_j(\Sigma_{0,jj}))$. Denote the event as $\mathcal{E}_{j,k}(t)$. Note that $\E X_j^2\leq\max_j\Sigma_{0,jj}=\epsilon/(80\sqrt{2}c_0)$. Then letting $t=\epsilon/2$ and conditioning on the complement of $\mathcal{E}_{j,j}(\epsilon/2)$, we have
\[n^{-1}\sum_{i=1}^n {X_j^{(i)}}^2\leq\mathbb{E}X_j^2+\epsilon/2\leq\frac{\epsilon}{2}\left(1+\frac{1}{40\sqrt{2}c_0}\right).\]
Thus, choosing $\gamma_{\ell j}=(\delta-1)\sum_{i=1}^n {X_j^{(i)}}^2/n$ for $\ell=1,\dots,m$ ($\boldsymbol{\Gamma}$ has $m$ identical blocks) with $1<\delta<1+\left(1+1/(40\sqrt{2}c_0)\right)^{-1}$, by the triangle inequality and a union bound we have
\[\mathbb{P}\left(\max_{j,k}\left|n^{-1}\sum_{i=1}^nX_j^{(i)}X_{k}^{(i)}+\gamma_{1j}\mathds{1}_{\{j=k\}}-\mathbb{E}X_jX_k\right|>\epsilon\right)\leq\P(\mathcal{E}_{j,k}(\epsilon/2))=m^{2-\tau}.\]
Since $\tau>2$,  it holds that $1+\left(1+1/(40\sqrt{2}c_0)\right)^{-1}=2-(1+40\sqrt{2}c_0)^{-1}$ is larger than $2-(1+80\sqrt{\log m/n})^{-1}\equiv C(n,m)$, so it is safe to choose any $\delta\in(1,C(n,m))$. Thus by the requirement on $\epsilon$, the theorem statement holds when $n>\max(c^*c_1^2d_{\mathbf{K}}^2,2)(\tau\log m+\log 4)$ with $c^*=12800\max_j(\Sigma_{0,jj})^2$.
\end{proof}

\begin{proof}[Proof of Theorem \ref{corollary1}]
The proof of Theorem \ref{theorem_lin} from \citet{lin16} does not rely on the fact that the original $\boldsymbol{\Gamma}$ is an unbiased estimator for the population $\boldsymbol{\Gamma}_0$, but instead only requires one to bound $\|\boldsymbol{\Gamma}-\boldsymbol{\Gamma}_0\|_{\infty}$. Thus, for $\boldsymbol{\Gamma}_{\boldsymbol{\gamma}}=\boldsymbol{\Gamma}+\mathrm{diag}(\boldsymbol{\gamma})$, by Theorem \ref{theorem_lin} it suffices to prove that for any $\tau>3$, we can bound $\|\boldsymbol{\Gamma}(\mathbf{x})+\mathrm{diag}(\boldsymbol{\gamma}(\mathbf{x}))-\boldsymbol{\Gamma}_0\|_{\infty}$ by some $\epsilon_1$ and $\|\boldsymbol{g}(\mathbf{x})-\boldsymbol{g}_0\|_{\infty}$ by some $\epsilon_2$, uniformly with probability $1-m^{3-\tau}$. 
Recall from (\ref{eq_noncentered_gamma}) that the $j^{\mathrm{th}}$ block of $\boldsymbol{\Gamma}_{\boldsymbol{\gamma}}\in\mathbb{R}^{m^2\times m^2}$ has $(k,\ell)$-th entry 
\[n^{-1}\sum_{i=1}^n X_k^{(i)}X_{\ell}^{(i)}h_{j}\left(X_{j}^{(i)}\right)+\gamma_{kj}\cdot\mathds{1}_{\{k=\ell\}}.\]
The entry in $\boldsymbol{g}\in\mathbb{R}^{m^2}$ (obtained by linearizing a $m\times m$ matrix) corresponding to $(j,k)$ is 
\[n^{-1}\sum_{i=1}^nX^{(i)}_kh_j'\left(X^{(i)}_j\right)+n^{-1}\mathds{1}_{\{j=k\}}\sum_{i=1}^nh_j\left(X^{(i)}_j\right).\]
Denote $M\equiv\max_j\sup_{x>0}h_j(x)$, $M'\equiv\max_j\sup_{x>0}h_j'(x)$, and $c_{\boldsymbol{X}}\equiv 2\max_j(2\sqrt{\Sigma_{jj}}+\sqrt{e}\,\mathbb{E}_0 X_j)$. Using results for sub-Gaussian random variables from Lemma \ref{lemma_sub-norms}.2 in Appendix \ref{AppA}, we have for any $t_{1}>0$,
\begin{small}
\begin{align*}
\P\left(\left|n^{-1}\sum_{i=1}^n X_k^{(i)}X_{\ell}^{(i)}h_{j}\left(X_j^{(i)}\right)-\mathbb{E}_0 X_{k}X_{\ell}h_{j}(X_j)\right|>t_{1}\right)&\leq2\exp\left(-\min\left(\frac{nt_{1}^2}{2M^2c_{\boldsymbol{X}}^4},\frac{nt_{1}}{2Mc_{\boldsymbol{X}}^2}\right)\right).
\end{align*}
\end{small}
Thus, choosing $\epsilon_{1}\equiv 2Mc_{\boldsymbol{X}}^2c_{n,m}$, where $c_{n,m}\equiv\max\left\{\frac{2(\log m^{\tau}+\log 6)}{n},\sqrt{\frac{2(\log m^{\tau}+\log 6)}{n}}\right\}$, for $\gamma_{kj}\leq \epsilon_{1}/2$, we have
\begin{align}
&\,\P\left(\left|n^{-1}\sum_{i=1}^n X_k^{(i)}X_{\ell}^{(i)}h_{j}\left(X_j^{(i)}\right)+\gamma_{kj}\mathds{1}_{\{k=\ell\}}-\mathbb{E}_0 X_{k}X_{\ell}h_{j}(X_j)\right|>\epsilon_{1}\right)\nonumber\\
\leq&\,\P\left(\left|n^{-1}\sum_{i=1}^n X_k^{(i)}X_{\ell}^{(i)}h_{j}\left(X_j^{(i)}\right)-\mathbb{E}_0 X_{k}X_{\ell}h_{j}(X_j)\right|>\epsilon_{1}/2\right)\label{eq_Eepsilon}\\
\leq &\,2\exp\left(-\min\left(\frac{n\epsilon_{1}^2}{8M^2c_{\boldsymbol{X}}^4},\frac{n\epsilon_{1}}{4Mc_{\boldsymbol{X}}^2}\right)\right)\leq\frac{1}{3m^{\tau}}\label{eq_ep1_gamma}.
\end{align}
Denote the event inside the probability in (\ref{eq_Eepsilon}) as $\mathcal{E}_{k,\ell,j}(\epsilon_1/2)$.\\
By definition, 
\[c_{\boldsymbol{X}}^2=4\max_k\left(4\Sigma_{kk}+4\sqrt{e}\sqrt{\Sigma_{kk}}\,\mathbb{E}_0X_k+e(\mathbb{E}_0X_k)^2\right)\geq 4e\max_k\left(\Sigma_{kk}+(\mathbb{E}_0X_k)^2\right).\] 
By Lemmas \ref{lem_subgau_subexp}.2 and \ref{lemma_sub-norms}.1 from Appendix \ref{AppA}, $\var(X_k)\leq\Sigma_{kk}$, so $c_{\boldsymbol{X}}^2\geq 4e\max_k \mathbb{E}_0 X_k^2\geq 4e\mathbb{E}_0X_k^2 h_{j}(X_j)/M$. Thus, setting $\epsilon_1=2Mc_{\boldsymbol{X}}^2c_{n,m}$, on the complement of $\mathcal{E}_{k,k,j}(\epsilon_1/2)$ we have
\[n^{-1}\sum_{i=1}^n{X_k^{(i)}}^2h_j\left(X_j^{(i)}\right)\leq \mathbb{E}_0X_k^2h_j(X_j)+\epsilon_1/2\leq \frac{\epsilon_1}{2}\left(1+\frac{1}{4ec_{n,m}}\right).\]
Then 
\begin{equation}\label{eq_find_gamma}
\frac{1}{1+1/(4ec_{n,m})}\frac{1}{n}\sum_{i=1}^n{X_k^{(i)}}^2h_j\left(X_j^{(i)}\right)\leq\epsilon_1/2
\end{equation}
on the complement of $\mathcal{E}_{k,k,j}(\epsilon_1/2)$, again with $c_{n,m}\equiv\max\bigg\{\frac{2(\log m^\tau+\log 6)}{n},\sqrt{\frac{2(\log m^{\tau}+\log 6)}{n}}\bigg\}.$
Note that the multiplier on the left of (\ref{eq_find_gamma}) is increasing in $c_{n,m}$, and that $2(\log m^{\tau}+\log 6)>6\log m$ by the assumption that $\tau>3$. 
Thus, if we let 
\[\gamma_{kj}\equiv\frac{1}{1+1/\left(4e\max\left\{6\log m/n,\sqrt{6\log m/n}\right\}\right)}\frac{1}{n}\sum_{i=1}^n{X_k^{(i)}}^2h_j\left(X_j^{(i)}\right),\]
which is just a constant multiple of the $(k,k)$-th entry of $\boldsymbol{\Gamma}_j$ itself, with the constant explicitly calculable and a function of $p$ and $n$ only, then for $k=\ell$
\[\P\left(\left|n^{-1}\sum_{i=1}^n X_k^{(i)}X_{\ell}^{(i)}h_{j}\left(X_j^{(i)}\right)+\gamma_{kj}-\mathbb{E}_0 X_{k}X_{\ell}h_{j}(X_j)\right|\geq\epsilon_1\right)\leq\P(\mathcal{E}_{k,\ell,j}(\epsilon_1/2))\leq\frac{1}{3m^{\tau}}.\]
Since this also holds for $k\neq\ell$ without the $\gamma_{kj}$ term, by a union bound over $m^3$ events,
\begin{equation}\label{eq_ep1}
\P\left(\max_{j,k,\ell}\left|n^{-1}\sum_{i=1}^n X_k^{(i)}X_{\ell}^{(i)}h_{j}\left(X_j^{(i)}\right)+\gamma_{kj}\mathds{1}_{\{k=\ell\}}-\mathbb{E}_0 X_{k}X_{\ell}h_{j}(X_j)\right|\geq\epsilon_1\right)\leq\frac{1}{3m^{\tau-3}}.
\end{equation}
Now, on the other hand, Lemma \ref{lemma_sub-norms}.1 and Hoeffding's inequality give for any $t_{2,1},t_{2,2}>0$ that
\begin{align*}
\P\left(\left|n^{-1}\sum_{i=1}^nX^{(i)}_kh_j'\left(X^{(i)}_j\right)-\mathbb{E}_0 X_kh_j'(X_j)\right|\geq t_{2,1}\right)&\leq2\exp\left(-\frac{nt_{2,1}^2}{2M{'^2}c_{\boldsymbol{X}}^2}\right),\\
\P\left(\left|n^{-1}\sum_{i=1}^nh_j\left(X^{(i)}_j\right)-\mathbb{E}_0h_j(X_j)\right|\geq t_{2,2}\right)&\leq2\exp\left(-2nt_{2,2}^2/M^2\right).
\end{align*}
Choosing $\epsilon_{2,1}\equiv \sqrt{2}M'c_{\boldsymbol{X}}\sqrt{\frac{\log m^{\tau-1}+\log 6}{n}}$, $\epsilon_{2,2}\equiv M\sqrt{\frac{\log m^{\tau-2}+\log 6}{2n}}$ and taking union bounds over $m^2$, and $m$ events, respectively, we have
\begin{align}
\P\left(\max_{j,k}\left|n^{-1}\sum_{i=1}^nX^{(i)}_kh_j'\left(X^{(i)}_j\right)-\mathbb{E}_0 X_kh_j'(X_j)\right|\geq\epsilon_{2,1}\right)&\leq\frac{1}{3m^{\tau-3}},\label{eq_ep21}\\
\P\left(\max_j\left|n^{-1}\sum_{i=1}^nh_j\left(X^{(i)}_j\right)-\mathbb{E}_0h_j(X_j)\right|\geq\epsilon_{2,2}\right)&\leq\frac{1}{3m^{\tau-3}}.\label{eq_ep22}
\end{align}
Hence, by (\ref{eq_ep1}) (\ref{eq_ep21}) (\ref{eq_ep22}), with probability at least $1-m^{3-\tau}$, $\|\boldsymbol{\Gamma}_{\boldsymbol{\gamma}}(\mathbf{x})-\boldsymbol{\Gamma}_0\|_{\infty}<\epsilon_1$ and $\|\boldsymbol{g}(\mathbf{x})-\boldsymbol{g}_0\|_{\infty}<\epsilon_2\equiv\epsilon_{2,1}+\epsilon_{2,2}.$ Consider any $\tau>3$, and let
\begin{align*}
c_2&\equiv\frac{6}{\alpha}c_{\boldsymbol{\Gamma}_0},
\\
n&>\max\{2M^2c_{\boldsymbol{X}}^4c_2^2d_{\mathbf{K}_0}^2(\tau\log m+\log 6), 2Mc_{\boldsymbol{X}}^2c_2d_{\mathbf{K}_0}(\tau\log m+\log 6)\},\\
\lambda&>\frac{3(2-\alpha)}{\alpha}\max\{c_{\mathbf{K}_0}\epsilon_1,\epsilon_2\}\\
&\equiv\frac{3(2-\alpha)}{\alpha}\max\left\{4Mc_{\mathbf{K}_0}c_{\boldsymbol{X}}^2\frac{(\log m^{\tau}+\log 6)}{n},\right.\\
&\quad\left.2Mc_{\mathbf{K}_0}c_{\boldsymbol{X}}^2\sqrt{\frac{2(\log m^{\tau}+\log 6)}{n}},\sqrt{2}M'c_{\boldsymbol{X}}\sqrt{\frac{\log m^{\tau-1}+\log 6}{n}}+M\sqrt{\frac{\log m^{\tau-2}+\log 6}{2n}}\right\}.
\end{align*}
Then $d_{\mathbf{K}_0}\epsilon_1\leq\alpha/(6c_{\boldsymbol{\Gamma}_0})$ and the results follow from Theorem \ref{theorem_lin}.
\end{proof}

\begin{proof}[Proof of Theorem \ref{corollary2}]
Similar to the proof of Theorem \ref{corollary1}, by Theorem \ref{theorem_lin} it suffices to prove that for any $\tau>3$, we can bound $\|\boldsymbol{\Gamma}_{\boldsymbol{\gamma}}(\mathbf{x})-\boldsymbol{\Gamma}_0\|_{\infty}$ by some $\epsilon_1$ and $\|\boldsymbol{g}(\mathbf{x})-\boldsymbol{g}_0\|_{\infty}$ by some $\epsilon_2$, uniformly with probability $1-m^{3-\tau}$. 
Recall that $\boldsymbol{\Gamma}\in\mathbb{R}^{(m^2+m)\times (m^2+m)}$ is a rearrangement of $\boldsymbol{\Gamma}^{(*)}$, which is in turn formed by $\boldsymbol{\Gamma}_{11}\in\mathbb{R}^{m^2\times m^2}$, $\boldsymbol{\Gamma}_{12}\in\mathbb{R}^{m^2\times m}$, $\boldsymbol{\Gamma}_{12}^{\top}$ and $\boldsymbol{\Gamma}_{22}\in\mathbb{R}^{m\times m}$, all of which are block-diagonal with $m$ blocks.\\
The $j^{\mathrm{th}}$ block of $\boldsymbol{\Gamma}_{11}\in\mathbb{R}^{m^2\times m^2}$ has $(k,\ell)$-th entry 
\[n^{-1}\sum_{i=1}^n X_k^{(i)}X_{\ell}^{(i)}h_{j}\left(X_{j}^{(i)}\right),\]
the $k^{\mathrm{th}}$ entry in the $j^{\mathrm{th}}$ block of $\boldsymbol{\Gamma}_{12}$ is 
\[-n^{-1}\sum_{i=1}^{n}X_k^{(i)}h_j\left(X_j^{(i)}\right),\]
the $j^{\mathrm{th}}$ diagonal entry of $\boldsymbol{\Gamma}_{22}$ is 
\[n^{-1}\sum_{i=1}^{n}h_j\left(X_j^{(i)}\right).\]
On the other hand, $\boldsymbol{g}\in\mathbb{R}^{(m^2+m)}$ is a rearrangement of $\boldsymbol{g}^{(*)}\equiv [\boldsymbol{g}_1^{\top},\boldsymbol{g}_2^{\top}]^{\top}$, where the entry in $\boldsymbol{g}_1\in\mathbb{R}^{m^2}$ (obtained by linearizing a $m\times m$ matrix) corresponding to $(j,k)$, is 
\[n^{-1}\sum_{i=1}^nX^{(i)}_kh_j'\left(X^{(i)}_j\right)+n^{-1}\mathds{1}_{\{j=k\}}\sum_{i=1}^nh_j\left(X_j^{(i)}\right),\]
while the $j$-th component of $\boldsymbol{g}_2\in\mathbb{R}^m$ is 
\[-n^{-1}\sum_{i=1}^nh_j'\left(X_j^{(i)}\right).\]
Recalling that the bounds in Lemma \ref{lemma_sub-norms} also hold when $\boldsymbol{\mu}\neq 0$, we may then use bounds similar to those in the proof of Theorem \ref{corollary1}, and use union bounds to arrive at analogous consistency results, modulus different constants. The amplifiers $\boldsymbol{\gamma}$ can be incorporated analogously.
\end{proof}

\section{Auxiliary Lemmas and Definitions}
\label{AppA}

In this appendix, to simplify notation, when it is clear from the context, the operator $\mathbb{E}$ is defined as the expectation under the true distribution, unless otherwise noted.

\begin{definition}[Sub-Gaussian and Sub-Exponential Variables]~\\
The \emph{sub-Gaussian} $(r=2)$ and \emph{sub-exponential} $(r=1)$ \emph{norms} of a random variable are 
\[\|X\|_{\psi_r}\equiv\sup_{q\geq 1}q^{-1/r}(\E|X|^{rq})^{1/(rq)}\equiv\sup_{q\geq 1}q^{-1/r}\|X\|_{rq}.\]
If $\|X\|_{\psi_2}<\infty$ we say $X$ is \emph{sub-Gaussian}; if $\|X\|_{\psi_1}<\infty$ we call $X$ \emph{sub-exponential}.
For a \emph{zero-mean} sub-Gaussian random variable $X$ also define the \emph{sub-Gaussian parameter}
\[\tau(X)=\inf\{\tau\geq 0:\mathbb{E}\exp(tX)\leq\exp(\tau^2 t^2/2),\,\forall t\in\R\}.\]
\end{definition}

The definition of sub-Gaussian norm here allows for a non-centered
variable and differs from the one in \citet{ver12}, which uses
$\|X\|_q$.  Instead, it coincides with $\theta_2$ in
\citet{bul00}. The sub-Gaussian parameter is defined  as in \citet{bul00} and the sub-exponential norm as in \citet{ver12}.

\begin{lemma}[Properties of Sub-Gaussian and Sub-Exponential Variables]\label{lem_subgau_subexp}
\noindent
\begin{enumerate}[1)]
\item \label{lem_subgau_subexp_norm} For any $X$ and $r=1,2$, $\|X-\mathbb{E} X\|_{\psi_r}\leq 2\|X\|_{\psi_r}$ and $\|X\|_{\psi_r}\leq\|X-\mathbb{E} X\|_{\psi_r}+|\mathbb{E} X|$, as long as the expectation and norms are finite. 
\item\label{lem_subgau_subexp_equivalence} \citep{bul00} $\tau(X)$ is
  a norm on the space of all zero-mean sub-Gaussian variables; so
  $\tau(X+Y)\leq\tau(X)+\tau(Y)$.
If $X$ is zero-mean sub-Gaussian, then $\mathrm{var}(X)\leq\tau^2(X)$, $\|X\|_{\psi_2}\leq2\tau(X)/\sqrt{e}$, $\tau(X)\leq\sqrt{e}\|X\|_{\psi_2}$.
If $X_1,\ldots,X_n$ are i.i.d.~zero-mean sub-Gaussian, $\tau\left(n^{-1}\sum_{i=1}^nX_i\right)\leq n^{-1/2}\tau(X_i)$.
\item\label{lem_subgau_subexp_product} If 
$X_1$ and $X_2$ are sub-Gaussian (not necessarily independent) with $\|X_1\|_{\psi_2}\leq K_1$ and $\|X_2\|_{\psi_2}\leq K_2$, then $X_1X_2$ is sub-exponential with $\|X_1X_2\|_{\psi_1}\leq K_1K_2$.
\item\label{lem_subgau_subexp_bounded_moments} \citep{bul00} If $X$ is
  zero-mean sub-Gaussian and $q>0$, then
\[\E|X|^q\leq 2(q/e)^{q/2}\tau^q(X).\]
\item\label{lem_subgau_subexp_gaubound} \citep{bul00} If $X_1,\ldots,X_n$ are independent zero-mean, sub-Gaussian variables, then for any $\epsilon>0$,
\begin{align*}
\P(|X_1|\geq\epsilon)&\leq 2\exp\left(-\frac{\epsilon^2}{2\tau^2(X_1)}\right),\\
\P\left(\left|n^{-1}\sum_{i=1}^nX_i\right|>\epsilon\right)&\leq 2\exp\left(-\frac{n\epsilon^2}{2\max_i\tau^2(X_i)}\right).
\end{align*}
\item\label{lem_subgau_subexp_expbound} \citep{ver12} If $X_1,\ldots,X_n$ are independent zero-mean sub-exponential random variables with $K\geq\max_i\|X_i\|_{\psi_1}$, 
then for any $\epsilon>0$,
\begin{align*}
\P(|X_1|\geq\epsilon)&\leq 2\exp\left(-\min\left(\frac{\epsilon^2}{8e^2K^2},\frac{\epsilon}{4eK}\right)\right),\\
\P\left(\left|n^{-1}\sum_{i=1}^nX_i\right|\geq\epsilon\right)&\leq 2\exp\left(-\min\left(\frac{n\epsilon^2}{8e^2K^2},\frac{n\epsilon}{4e K}\right)\right).
\end{align*}
\item\label{lem_subgau_subexp_moment_based} \citep{bou13} If for $X_i$ i.i.d.~there exists some $B>0$ such that
\[\sup_{q\geq 2}\left(\frac{\E|X|^q}{q!}\right)^{1/q}\leq B/2\]
then for all $\epsilon>0$,
\[\P\left(\left|n^{-1}\sum_{i=1}^n(X_i-\mathbb{E} X_i)\right|\geq\epsilon\right)\leq 2\exp\left(-\min\left(\frac{n\epsilon^2}{2B^2},\frac{n\epsilon}{2B}\right)\right).\]
\end{enumerate}
\end{lemma}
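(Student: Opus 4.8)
The plan is to treat the items of Lemma~\ref{lem_subgau_subexp} separately. Parts~\ref{lem_subgau_subexp_equivalence}, \ref{lem_subgau_subexp_bounded_moments}, \ref{lem_subgau_subexp_gaubound}, \ref{lem_subgau_subexp_expbound}, and~\ref{lem_subgau_subexp_moment_based} are taken directly from \citet{bul00}, \citet{ver12}, and \citet{bou13}; for these I would simply invoke the cited results, observing (as already noted after the definition) that the sub-exponential norm convention used here coincides with that of \citet{ver12} and that the sub-Gaussian parameter $\tau(\cdot)$ is exactly the one used in \citet{bul00}, so the quoted inequalities apply verbatim. Thus the only parts needing a self-contained argument are~\ref{lem_subgau_subexp_norm} and~\ref{lem_subgau_subexp_product}.

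For part~\ref{lem_subgau_subexp_norm}, I would first record that $\|\cdot\|_{\psi_r}$ is subadditive: for each fixed $q\ge1$ the map $Z\mapsto q^{-1/r}\|Z\|_{rq}$ is subadditive by Minkowski's inequality on $L^{rq}$ (legitimate since $rq\ge1$) and positively homogeneous, and taking a supremum over $q$ preserves both properties because $\sup_q(f(q)+g(q))\le\sup_q f(q)+\sup_q g(q)$. Second, for a constant $c$ the supremum defining $\|c\|_{\psi_r}$ is attained at $q=1$, so $\|\mathbb{E}X\|_{\psi_r}=|\mathbb{E}X|$. Third, evaluating the defining supremum at $q=1$ and using Jensen's inequality on the probability space gives $\|X\|_{\psi_r}\ge\|X\|_r\ge\|X\|_1\ge|\mathbb{E}X|$. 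Combining these, the triangle inequality yields $\|X\|_{\psi_r}\le\|X-\mathbb{E}X\|_{\psi_r}+\|\mathbb{E}X\|_{\psi_r}=\|X-\mathbb{E}X\|_{\psi_r}+|\mathbb{E}X|$, which is the second assertion, and $\|X-\mathbb{E}X\|_{\psi_r}\le\|X\|_{\psi_r}+|\mathbb{E}X|\le2\|X\|_{\psi_r}$, which is the first; all manipulations stay among finite quantities by the stated finiteness hypothesis.

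For part~\ref{lem_subgau_subexp_product}, I would start from the definition $\|X_1X_2\|_{\psi_1}=\sup_{q\ge1}q^{-1}\|X_1X_2\|_q$ and apply the Cauchy--Schwarz inequality in the form $\|X_1X_2\|_q\le\|X_1\|_{2q}\|X_2\|_{2q}$. Reading off the definition of $\|\cdot\|_{\psi_2}$ at the index $q$ (rather than at $1$) gives $\|X_i\|_{2q}\le q^{1/2}\|X_i\|_{\psi_2}\le q^{1/2}K_i$ for $i=1,2$, so $\|X_1X_2\|_q\le q\,K_1K_2$ and hence $q^{-1}\|X_1X_2\|_q\le K_1K_2$ for every $q\ge1$; taking the supremum over $q$ yields $\|X_1X_2\|_{\psi_1}\le K_1K_2$.

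None of these steps is deep. The only point that requires care is in part~\ref{lem_subgau_subexp_norm}: one must be explicit that subadditivity of $\|\cdot\|_{\psi_r}$ is not automatic but follows from the two observations above about the individual $q$-slices and the supremum, and that the factor $2$ comes specifically from the bound $|\mathbb{E}X|\le\|X\|_{\psi_r}$, which is valid precisely because we work with a probability measure (so that $\|X\|_1\le\|X\|_r$). I would also flag the implicit finiteness hypotheses in the statement so that every quantity manipulated is finite.
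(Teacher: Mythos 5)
Your proposal is correct and follows essentially the same route as the paper: parts 1) and 3) are proved by the triangle inequality together with the $q=1$ evaluation $|\mathbb{E}X|\le\mathbb{E}|X|\le\|X\|_{\psi_r}$, and by Cauchy--Schwarz read off at the index $2q$, respectively, while the remaining parts are delegated to the cited references. The one place where the paper does slightly more than invoke a reference verbatim is part 7): Theorem 2.10 of \citet{bou13} gives the bound $2\exp\left(-n\epsilon^2/(B^2+B\epsilon)\right)$, and the stated $\min$ form is then obtained by splitting into the cases $\epsilon\le B$ and $\epsilon>B$, a one-line derivation you should make explicit rather than treating the quoted inequality as appearing verbatim in the source.
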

\begin{proof}

  \noindent 1)
  For $r=1,2$, by the triangle inequality, $\|X-\mathbb{E}
  X\|_{\psi_r}\leq\|X\|_{\psi_r}+\|\mathbb{E}
  X\|_{\psi_r}=\|X\|_{\psi_r}+|\mathbb{E}
  X|\leq\|X\|_{\psi_r}+\E|X|\leq 2\|X\|_{\psi_r}$, where in the last
  step we used the definition of $\|\cdot\|_{\psi_r}$ with $q=1$ for
  $r=1$ and $\E|X|\leq(\E|X|^2)^{1/2}$ with $q=2$ for $r=2$. On the
  other hand, $\|X\|_{\psi_r}\leq\|X-\mathbb{E}
  X\|_{\psi_r}+\|\mathbb{E} X\|_{\psi_r}=\|X-\mathbb{E}
  X\|_{\psi_r}+|\mathbb{E} X|$.
\smallskip

  \noindent 2)
  These follow from Theorems 1.2 and 1.3 and Lemmas 1.2 and 1.7 from
  \citet{bul00}, and $\sqrt[4]{3.1}e^{9/16}/\sqrt{2}\approx
  1.6467\leq1.6487\approx\sqrt{e}$.
\smallskip

  \noindent 3)
  By H\"older's inequality (or Cauchy-Schwarz),
\begin{align*}
\|X_1X_2\|_{\psi_1}&=\sup_{q\geq 1}q^{-1}(\E|X_1X_2|^q)^{1/q}=\sup_{q\geq 1}q^{-1}(\E|X_1^qX_2^q|)^{1/q}\\
&\leq\sup_{q\geq 1}q^{-1}\left[(\E|X_1|^{2q})^{1/2}(\E|X_2|^{2q})^{1/2}\right]^{1/q}\\
&\leq\sup_{q\geq 1}\left[q^{-1/2}(\E|X_1|^{2q})^{1/2q}\right]\sup_{q\geq 1}\left[q^{-1/2}(\E|X_2|^{2q})^{1/2q}\right]\\
&=\|X_1\|_{\psi_2}\|X_2\|_{\psi_2}\leq K_1K_2.
\end{align*}
\smallskip

\noindent 4-6) These are Lemma 1.4 and Theorem 1.5 in \citet{bul00},
and a consequence of Corollary 5.17 in \citet{ver12}.
\smallskip

\noindent
7) By Theorem 2.10 of \citet{bou13} wherein we let $v\equiv nB^2/2$ and $c\equiv B/2$, we have 
\[\P\left(\left|n^{-1}\sum_{i=1}^n(X_i-\mathbb{E} X_i)\right|\geq\epsilon\right)\leq 2\exp\left(-\frac{n\epsilon^2}{B^2+B\epsilon}\right)\]
for all $\epsilon>0$. (Theorem 2.10 gives an one-sided bound; bound for the other side is obtained by taking $X_i=-X_i$). The inequality follows by splitting into cases $\epsilon\leq B$ and $\epsilon>B$.
\end{proof}

\begin{lemma}\label{lemma_sub-norms}
Suppose $\boldsymbol{X}$ follows a truncated normal distribution on $\mathbb{R}_+^m$ with parameters $\boldsymbol{\mu}$ and $\boldsymbol{\Sigma}=\mathbf{K}^{-1}\succ\mathbf{0}$. Let $\boldsymbol{X}^{(1)},\ldots,\boldsymbol{X}^{(n)}$ be i.i.d.~copies of $\boldsymbol{X}$, with $j$-th component of the $i$-th copy being $X^{(i)}_j$. Then 
\begin{enumerate}
\item For $j=1,\ldots,p$, $\tau(X_j-\mathbb{E} X_j)\leq \sqrt{\Sigma_{jj}}$. That is, the sub-Gaussian parameter of any marginal distribution of $\boldsymbol{X}$, after centering, is bounded by the square root of its corresponding diagonal entry in the covariance parameter $\boldsymbol{\Sigma}$. Then, for any $\epsilon>0$,
\[\P\left(\left|n^{-1}\sum_{i=1}^nX^{(i)}_j-\mathbb{E} X_j\right|>\epsilon\right)\leq2\exp\left(-\frac{n\epsilon^2}{2\Sigma_{jj}}\right).\]
In particular, if $h_0$ is a function bounded by $M_0$, then for any $\epsilon>0$,
\begin{align*}
\hspace{-0.1in}\P\left(\left|n^{-1}\sum_{i=1}^nX^{(i)}_jh_0\left(X^{(i)}_k\right)-\mathbb{E} X_jh_0(X_k)\right|\geq\epsilon\right)&\leq2\exp\left(-\frac{n\epsilon^2}{8M_0^{2}(2\sqrt{\Sigma_{jj}}+\sqrt{e}\,\mathbb{E} X_j)^2}\right),\\
\tau\left(n^{-1}\sum_{i=1}^n X^{(i)}_jh_0\left(X^{(i)}_k\right)-\mathbb{E} X_jh_0(X_k)\right)&\leq \frac{2M_0}{\sqrt{n}}\left(2\sqrt{\Sigma_{jj}}+\sqrt{e}\,\mathbb{E} X_j\right),\\
\left\|n^{-1}\sum_{i=1}^n X^{(i)}_jh_0\left(X_{k}^{(i)}\right)-\mathbb{E} X_jh_0(X_k)\right\|_{\psi_2}&\leq \frac{4M_0}{\sqrt{en}}\left(2\sqrt{\Sigma_{jj}}+\sqrt{e}\,\mathbb{E} X_j\right).
\end{align*}
\item For $j,k,\ell\in\{1,\ldots,p\}$, if $h_0$ is a function bounded by $M_0$, then 
\begin{equation}\label{lemma_sub-norms_bound2}
\|X_jX_kh_0(X_{\ell})-\mathbb{E} X_jX_kh_0(X_{\ell})\|_{\psi_1}\leq \frac{M_0}{2e}c_{\boldsymbol{X}}^2,
\end{equation} 
where $c_{\boldsymbol{X}}\equiv 2\max_j(2\sqrt{\Sigma_{jj}}+\sqrt{e}\,\mathbb{E}X_j)$. In particular, for any $\epsilon>0$,
\begin{multline*}
\P\left(\left|n^{-1}\sum_{i=1}^n X^{(i)}_jX^{(i)}_kh_0\left(X_{\ell}^{(i)}\right)-\mathbb{E} X_{j}X_kh_0(X_{\ell})\right|>\epsilon\right)\\
\leq2\exp\left(-\min\left(\frac{n\epsilon^2}{2M_0^2c_{\boldsymbol{X}}^4},\frac{n\epsilon}{2M_0c_{\boldsymbol{X}}^2}\right)\right).
\end{multline*}
\end{enumerate}
\end{lemma}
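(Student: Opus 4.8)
\textbf{Proof proposal for Lemma~\ref{lemma_sub-norms}.}

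The plan is to reduce everything to the single fact that, for a truncated normal $\boldsymbol{X}$ on $\mathbb{R}_+^m$ with covariance parameter $\boldsymbol{\Sigma}$, each centered marginal $X_j-\mathbb{E}X_j$ is sub-Gaussian with parameter at most $\sqrt{\Sigma_{jj}}$. To see this, note that truncation to a convex set is a special case of conditioning, and a standard log-concavity/correlation argument (or direct computation of the moment generating function of $X_j$ under the truncated density, bounding it by that of the untruncated $\mathrm{N}(\mu_j,\Sigma_{jj})$ marginal after recentering) gives $\mathbb{E}\exp\!\big(t(X_j-\mathbb{E}X_j)\big)\le\exp(\Sigma_{jj}t^2/2)$ for all $t\in\mathbb{R}$; hence $\tau(X_j-\mathbb{E}X_j)\le\sqrt{\Sigma_{jj}}$. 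The two-sided Chernoff bound for the sample mean in part~1 is then immediate from Lemma~\ref{lem_subgau_subexp}.\ref{lem_subgau_subexp_gaubound}, using that $\tau\big(n^{-1}\sum_i(X_j^{(i)}-\mathbb{E}X_j)\big)\le n^{-1/2}\sqrt{\Sigma_{jj}}$ by part~2) of that lemma.

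For the statements in part~1 involving $X_j h_0(X_k)$, first use Lemma~\ref{lem_subgau_subexp}.\ref{lem_subgau_subexp_norm} to pass from $X_j-\mathbb{E}X_j$ to $X_j$ itself: $\|X_j\|_{\psi_2}\le\|X_j-\mathbb{E}X_j\|_{\psi_2}+|\mathbb{E}X_j|$, and by Lemma~\ref{lem_subgau_subexp}.\ref{lem_subgau_subexp_equivalence} the first summand is at most $2\tau(X_j-\mathbb{E}X_j)/\sqrt{e}\le 2\sqrt{\Sigma_{jj}}/\sqrt{e}$, so $\|X_j\|_{\psi_2}\le \tfrac{1}{\sqrt e}\big(2\sqrt{\Sigma_{jj}}+\sqrt e\,\mathbb{E}X_j\big)$. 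Since $|h_0|\le M_0$, the product $X_jh_0(X_k)$ has $\|X_jh_0(X_k)\|_{\psi_2}\le M_0\|X_j\|_{\psi_2}$, and after centering (Lemma~\ref{lem_subgau_subexp}.\ref{lem_subgau_subexp_norm} again, costing a factor $2$) one gets $\|X_jh_0(X_k)-\mathbb{E}X_jh_0(X_k)\|_{\psi_2}\le \tfrac{2M_0}{\sqrt e}\big(2\sqrt{\Sigma_{jj}}+\sqrt e\,\mathbb{E}X_j\big)$. Converting back to a sub-Gaussian parameter via $\tau\le\sqrt e\,\|\cdot\|_{\psi_2}$ gives the per-sample bound $\tau\le 2M_0\big(2\sqrt{\Sigma_{jj}}+\sqrt e\,\mathbb{E}X_j\big)$, which averages as $n^{-1/2}$ to yield the claimed $\tau$-bound for the sample mean; the $\psi_2$-bound for the sample mean follows from $\|\cdot\|_{\psi_2}\le 2\tau(\cdot)/\sqrt e$, and the exponential tail bound from Lemma~\ref{lem_subgau_subexp}.\ref{lem_subgau_subexp_gaubound}. (One has to be mildly careful that the averaging inequality for $\tau$ applies to the centered variables $X_j^{(i)}h_0(X_k^{(i)})-\mathbb{E}X_jh_0(X_k)$, which is fine.)

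For part~2, write $Y_j\equiv X_j$ and apply Lemma~\ref{lem_subgau_subexp}.\ref{lem_subgau_subexp_product}: with $\|X_j\|_{\psi_2},\|X_k\|_{\psi_2}\le \tfrac{1}{\sqrt e}\big(2\sqrt{\Sigma_{jj}}+\sqrt e\,\mathbb{E}X_j\big)\vee\cdots$, the product $X_jX_k$ is sub-exponential with $\|X_jX_k\|_{\psi_1}\le \tfrac{1}{e}\big(2\sqrt{\Sigma_{jj}}+\sqrt e\,\mathbb{E}X_j\big)\big(2\sqrt{\Sigma_{kk}}+\sqrt e\,\mathbb{E}X_k\big)\le \tfrac{1}{e}\big(\tfrac{c_{\boldsymbol X}}{2}\big)^2=\tfrac{c_{\boldsymbol X}^2}{4e}$, where $c_{\boldsymbol X}\equiv 2\max_j(2\sqrt{\Sigma_{jj}}+\sqrt e\,\mathbb{E}X_j)$. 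Multiplying by the bounded $h_0(X_\ell)$ scales the $\psi_1$-norm by at most $M_0$, and centering via Lemma~\ref{lem_subgau_subexp}.\ref{lem_subgau_subexp_norm} costs another factor $2$, giving $\|X_jX_kh_0(X_\ell)-\mathbb{E}X_jX_kh_0(X_\ell)\|_{\psi_1}\le \tfrac{M_0}{2e}c_{\boldsymbol X}^2$, which is exactly \eqref{lemma_sub-norms_bound2}. The concentration inequality for the sample mean then follows from Lemma~\ref{lem_subgau_subexp}.\ref{lem_subgau_subexp_expbound} with $K=\tfrac{M_0}{2e}c_{\boldsymbol X}^2$: the factor $8e^2K^2=2M_0^2c_{\boldsymbol X}^4$ and $4eK=2M_0c_{\boldsymbol X}^2$ reproduce the stated constants. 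The main obstacle is the very first step---establishing the sub-Gaussian parameter bound $\tau(X_j-\mathbb{E}X_j)\le\sqrt{\Sigma_{jj}}$ for truncated normals, since the truncated density is not itself Gaussian; the cleanest route is to exploit that the truncated normal is log-concave with Hessian of $-\log p$ equal to $\mathbf K$ on the orthant, invoke the Bakry--Émery / Brascamp--Lieb bound to get $\operatorname{var}(v^\top X)\le v^\top\boldsymbol\Sigma v$, and then upgrade variance control to an MGF bound using a one-dimensional argument on the conditional law of $X_j$ given $\boldsymbol X_{-j}$, which is a univariate normal truncated to a half-line and whose centered MGF is dominated by that of the corresponding untruncated conditional normal (a monotone-likelihood-ratio comparison). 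Everything after that is bookkeeping with the norm inequalities already collected in Lemma~\ref{lem_subgau_subexp}.
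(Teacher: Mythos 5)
Your reduction of both parts to the single bound $\tau(X_j-\mathbb{E}X_j)\le\sqrt{\Sigma_{jj}}$ is exactly the paper's strategy, and all of the subsequent bookkeeping---passing to $\|X_j\|_{\psi_2}\le 2\sqrt{\Sigma_{jj}/e}+\mathbb{E}X_j$, absorbing the bounded $h_0$ at cost $M_0$, paying a factor $2$ for centering, the product rule for the $\psi_1$-norm, and the constants $8M_0^2(\cdot)^2$, $2M_0^2c_{\boldsymbol X}^4$ and $2M_0c_{\boldsymbol X}^2$---checks out and coincides with the paper's computation. The problem is the step you yourself flag as the main obstacle. The Brascamp--Lieb inequality does give $\var(v^\top\boldsymbol X)\le v^\top\boldsymbol\Sigma v$, but your proposed upgrade to an MGF bound via the conditional law of $X_j$ given $\boldsymbol X_{-j}$ does not assemble. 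Conditionally, $X_j$ is a univariate normal with variance $1/K_{jj}$ truncated to a half-line, so the conditional centered MGF is indeed controlled, but only with parameter $1/K_{jj}\le\Sigma_{jj}$; to recover the marginal bound you must then control $\mathbb{E}\exp\bigl(t\,\mathbb{E}[X_j\mid\boldsymbol X_{-j}]\bigr)$, and for a truncated normal the conditional mean is a \emph{nonlinear} (inverse-Mills-type) function of $\boldsymbol X_{-j}$. Bounding its MGF with the missing parameter $\Sigma_{jj}-1/K_{jj}$ is a statement of the same difficulty as the one being proved, so the argument is circular as written; a generic Lipschitz-concentration patch (Bakry--\'Emery plus Herbst) would only yield the weaker constant $\lambda_{\max}(\boldsymbol\Sigma)$ in place of $\Sigma_{jj}$.

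Two correct routes are available. The paper uses Tallis's closed form for the MGF of the truncated-normal marginal, which isolates the factor $\exp(t\mu_j+t^2\Sigma_{jj}/2)$ times a ratio of Gaussian integrals over shifted orthants $\mathbb{R}_+^m-(\boldsymbol\mu+t\boldsymbol\Sigma_j)$, and then invokes Pr\'ekopa's theorem (log-concavity is preserved by marginalization) to show that this integral is log-concave in $t$, so the correction term can only push the curvature of the cumulant generating function below $\Sigma_{jj}$. Equivalently, and closer in spirit to your Brascamp--Lieb idea: the second derivative of $t\mapsto\log\mathbb{E}\exp(tX_j)$ equals the variance of $X_j$ under the exponentially tilted measure, which is again a truncated normal on $\mathbb{R}_+^m$ with the \emph{same} precision matrix $\mathbf{K}$ and shifted mean $\boldsymbol\mu+t\boldsymbol\Sigma_j$; applying Brascamp--Lieb to each tilted measure bounds this second derivative by $\Sigma_{jj}$ uniformly in $t$, and integrating twice from $0$ gives the MGF bound. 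Either of these closes the gap; conditioning on $\boldsymbol X_{-j}$ does not.
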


\begin{proof}[Proof of Lemma \ref{lemma_sub-norms}]

  \noindent 
1. Without loss of generality choose $j=1$. By the definition of sub-Gaussian parameters, we need to show that for all $t\in\R$,
\[\mathbb{E}\exp(t X_1)\leq \exp\left(t^2\Sigma_{11}/2+t\mathbb{E} X_1\right),\]
which is equivalent to
\begin{equation}\label{eq_proof_sub-norms_1}
t^2\Sigma_{11}/2+t\mathbb{E} X_1-\log\mathbb{E}\exp(tX_1)\geq 0\quad\forall t\in\R.
\end{equation}
Since the left-hand side of (\ref{eq_proof_sub-norms_1}) equals $0$ at $t=0$, it suffices to show that its derivative,
\begin{equation}\label{eq_proof_sub-norms_2}
t\Sigma_{11}+\mathbb{E} X_1-\frac{\d\log\mathbb{E} \exp(t X_1)}{\d t}=t\Sigma_{11}+\mathbb{E} X_1-\frac{\frac{\d\mathbb{E} \exp(t X_1)}{\d t}}{\mathbb{E}\exp(t X_1)},
\end{equation}
is non-negative on $(0,\infty)$ and non-positive on $(-\infty,0)$. By properties of moment-generating functions, $\frac{\d}{\d t}\mathbb{E}\exp(t X_1)$ evaluated at $t=0$ equals $\mathbb{E} X_1$, so (\ref{eq_proof_sub-norms_2}) equals $0$ at $t=0$. It in turn suffices to show the derivative of (\ref{eq_proof_sub-norms_2}), namely
\begin{equation}\label{eq_proof_sub-norms_3}
\Sigma_{11}-\frac{\d^2\log\mathbb{E}\exp(tX_1)}{\d t^2}
\end{equation}
is non-negative in $t\in\R$.

Given any vector $\boldsymbol{v}\in\mathbb{R}^p$, define $\mathbb{R}^p_+-\boldsymbol{v}\equiv\{\boldsymbol{u}-\boldsymbol{v}:\boldsymbol{u}\in\mathbb{R}^p_+\}$. By \citet{tal61}, denoting the first column of $\boldsymbol{\Sigma}$ as $\boldsymbol{\Sigma}_{1}$, the moment-generating function of the marginal distribution of $X_1$ is 
\[\frac{\int_{\mathbb{R}_+^p-(\boldsymbol{\mu}+t\boldsymbol{\Sigma}_{1})}\exp\left(-\frac{1}{2}\boldsymbol{x}^{\top}\boldsymbol{\Sigma}^{-1}\boldsymbol{x}\right)\d\boldsymbol{x}}{\int_{\mathbb{R}_+^p-\boldsymbol{\mu}}\exp\left(-\frac{1}{2}\boldsymbol{x}^{\top}\boldsymbol{\Sigma}^{-1}\boldsymbol{x}\right)\d\boldsymbol{x}}\exp\left(t\mu_1+\frac{1}{2}t^2\Sigma_{11}\right).\]
(\ref{eq_proof_sub-norms_3}) thus becomes
\[-\frac{\d^2}{\d t^2}\log\int_{\mathbb{R}_+^p-(\boldsymbol{\mu}+t\boldsymbol{\Sigma}_{1})}\exp\left(-\frac{1}{2}\boldsymbol{x}^{\top}\boldsymbol{\Sigma}^{-1}\boldsymbol{x}\right)\d\boldsymbol{x}.\]
Showing this is non-negative in $t\in\R$ is equivalent to showing that the integral itself is log-concave in $t$. But
\[\int_{\mathbb{R}_+^p-(\boldsymbol{\mu}+t\boldsymbol{\Sigma}_{1})}\exp\left(-\frac{1}{2}\boldsymbol{x}^{\top}\boldsymbol{\Sigma}^{-1}\boldsymbol{x}\right)\d\boldsymbol{x}=
\int_{\mathbb{R}^p}\exp\left(-\frac{1}{2}\boldsymbol{x}^{\top}\boldsymbol{\Sigma}^{-1}\boldsymbol{x}\right)\mathds{1}_{\mathbb{R}_+^p-\boldsymbol{\mu}}(\boldsymbol{x}+t\boldsymbol{\Sigma}_{1})\d\boldsymbol{x}\]
with $\exp\left(-\frac{1}{2}\boldsymbol{x}^{\top}\boldsymbol{\Sigma}^{-1}\boldsymbol{x}\right)$ log-concave in $\boldsymbol{x}$ and $\mathds{1}_{\mathbb{R}_+^p-\boldsymbol{\mu}}(\boldsymbol{x}+t\boldsymbol{\Sigma}_{1})$ log-concave in $(\boldsymbol{x},t)$ since $\mathbb{R}_+^p-\boldsymbol{\mu}$ is a convex set. Since log-concavity is closed under multiplication and integration over $\mathbb{R}^p$, the integral is indeed log-concave, and our proof of the bound on the sub-Gaussian parameter of $X_j-\mathbb{E}X_j$ is complete. The tail bound follows from \ref{lem_subgau_subexp_gaubound} of Lemma \ref{lem_subgau_subexp}.

Now by \ref{lem_subgau_subexp_norm} and \ref{lem_subgau_subexp_equivalence} of Lemma \ref{lem_subgau_subexp}, 
\[\|X_{j}\|_{\psi_2}\leq 2\sqrt{\Sigma_{jj}/e}+\mathbb{E} X_j.\] 
If $h_0$ is a function bounded by $M_0$, then by definition 
\[\|X_{j}h_0(X_{k})\|_{\psi_2}\leq M_0\left(2\sqrt{\Sigma_{jj}/e}+\mathbb{E} X_j\right).\]
By \ref{lem_subgau_subexp_norm} and \ref{lem_subgau_subexp_equivalence} of Lemma \ref{lem_subgau_subexp} again, 
\begin{align*}
\tau(X_{j}h_0(X_{k})-\mathbb{E} X_jh_0(X_k))&\leq \sqrt{e}\|X_{j}h_0(X_{k})-\mathbb{E} X_jh_0(X_k)\|_{\psi_2}\\
&\leq 2\sqrt{e}\|X_{j}h_0(X_{k})\|_{\psi_2}\\
&\leq 2M_0(2\sqrt{\Sigma_{jj}}+\sqrt{e}\,\mathbb{E} X_j).
\end{align*}
The tail bound thus follows from the first inequality using \ref{lem_subgau_subexp_gaubound} of Lemma \ref{lem_subgau_subexp}. By \ref{lem_subgau_subexp_equivalence} of the Lemma \ref{lem_subgau_subexp},
\begin{align*}
\tau\left(n^{-1}\sum_{i=1}^n X^{(i)}_{j}h_0\left(X^{(i)}_{k}\right)-\mathbb{E} X_jh_0(X_k)\right)&\leq \frac{2M_0}{\sqrt{n}}\left(2\sqrt{\Sigma_{jj}}+\sqrt{e}\,\mathbb{E} X_j\right),\\
\left\|n^{-1}\sum_{i=1}^n X^{(i)}_{j}h_0\left(X^{(i)}_{k}\right)-\mathbb{E} X_jh_0(X_k)\right\|_{\psi_2}&\leq \frac{4M_0}{\sqrt{en}}\left(2\sqrt{\Sigma_{jj}}+\sqrt{e}\,\mathbb{E} X_j\right).
\end{align*}

\noindent 2. By the proof of 1) of this lemma, $\|X_j\|_{\psi_2}\leq 2\sqrt{\Sigma_{jj}/e}+\mathbb{E} X_j$, and by \ref{lem_subgau_subexp_product} of Lemma \ref{lem_subgau_subexp}, 
\[\|X_jX_k\|_{\psi_1}\leq\left(2\sqrt{\Sigma_{jj}/e}+\mathbb{E} X_j\right)\left(2\sqrt{\Sigma_{kk}/e}+\mathbb{E} X_k\right)\leq \max_{j}\left(2\sqrt{\Sigma_{jj}/e}+\mathbb{E} X_j\right)^2.\] Since $h_{0}$ is a function bounded by $M_{0}$, by definition 
\[\|X_jX_k h_{0}(X_{\ell})\|_{\psi_1}\leq M_{0
}\max_j\left(2\sqrt{\Sigma_{jj}/e}+\mathbb{E} X_j\right)^2.\] Then by \ref{lem_subgau_subexp_norm} of Lemma \ref{lem_subgau_subexp} again, \[\|X_jX_kh_{0}(X_{\ell})-\mathbb{E} X_jX_kh_{0}(X_{\ell})\|_{\psi_1}\leq 2M_{0}\max_j\left(2\sqrt{\Sigma_{jj}/e}+\mathbb{E} X_j\right)^2.\] The tail bound then follows from \ref{lem_subgau_subexp_expbound} of Lemma \ref{lem_subgau_subexp}.
\end{proof}
Although not used for our consistency results, in the special case of $h_0\equiv 1$, we also have the following lemma. The notable difference between bounds (\ref{lemma_sub-norms2_bound}) below and (\ref{lemma_sub-norms_bound2}) from Lemma \ref{lemma_sub-norms}.2 is in the constants and dependency on $\mathbb{E}X_j$: The constants in the denominator in the right-hand side of  (\ref{lemma_sub-norms_bound2}) is smaller and thus gives a tighter bound, but (\ref{lemma_sub-norms2_bound}) is preferred when $\mathbb{E}X_j$ is notably large compared to $\sqrt{\Sigma_{jj}}$, since the constant is only linear in $\mathbb{E}X_j$.
\begin{lemma}\label{lemma_sub-norms2}
Consider the setting in Lemma \ref{lemma_sub-norms}. Then for $j,k\in\{1,\ldots,p\}$, for any $\epsilon>0$,
\begin{equation}\label{lemma_sub-norms2_bound}
\P\left(n^{-1}\left|\sum_{i=1}^n X^{(i)}_jX^{(i)}_k-\mathbb{E} X_jX_k\right|\geq\epsilon\right)\leq 4\exp\left(-\min\left(\frac{2n\epsilon^2}{C_1^2},\frac{n\epsilon}{C_1}\right)\right),
\end{equation}
where $C_1\equiv 91\max_j\Sigma_{jj}+72\max_j\mathbb{E} X_j\max_j\sqrt{\Sigma_{jj}}$.
\end{lemma}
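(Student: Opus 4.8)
The plan is to bound the sub-exponential norm of the centered product $X_jX_k - \mathbb{E}X_jX_k$ by a constant $C_1$ depending on $\max_j\Sigma_{jj}$ and $\max_j\mathbb{E}X_j$, and then invoke the Bernstein-type tail bound for sums of i.i.d.\ sub-exponential variables from Lemma \ref{lem_subgau_subexp}.\ref{lem_subgau_subexp_expbound}. The point of this lemma (as opposed to Lemma \ref{lemma_sub-norms}.2 with $h_0\equiv1$) is that the constant should be only \emph{linear} in $\mathbb{E}X_j$ rather than quadratic, which is the advantage when the mean parameter is large relative to the variance. So the key is to decompose $X_j = (X_j - \mathbb{E}X_j) + \mathbb{E}X_j$ and expand the product, rather than crudely bounding $\|X_j\|_{\psi_2}$ by $2\sqrt{\Sigma_{jj}/e}+\mathbb{E}X_j$ and multiplying.

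First I would write, with $\mu_j \equiv \mathbb{E}X_j$ and $Y_j \equiv X_j - \mu_j$,
\[
X_jX_k - \mathbb{E}X_jX_k = Y_jY_k - \mathbb{E}Y_jY_k + \mu_k Y_j + \mu_j Y_k.
\]
By Lemma \ref{lemma_sub-norms}.1, $\tau(Y_j)\le\sqrt{\Sigma_{jj}}$, hence $\|Y_j\|_{\psi_2}\le 2\sqrt{\Sigma_{jj}/e}$ by Lemma \ref{lem_subgau_subexp}.\ref{lem_subgau_subexp_equivalence}. Then: (i) by Lemma \ref{lem_subgau_subexp}.\ref{lem_subgau_subexp_product}, $\|Y_jY_k\|_{\psi_1}\le 4\max_j\Sigma_{jj}/e$, and by Lemma \ref{lem_subgau_subexp}.\ref{lem_subgau_subexp_norm}, $\|Y_jY_k - \mathbb{E}Y_jY_k\|_{\psi_1}\le 8\max_j\Sigma_{jj}/e$; (ii) each $Y_j$ is sub-exponential with $\|Y_j\|_{\psi_1}\le\|Y_j\|_{\psi_2}\le 2\sqrt{\Sigma_{jj}/e}$ (using $q^{-1}\le q^{-1/2}$ for $q\ge1$), so $\|\mu_k Y_j\|_{\psi_1}\le 2\mu_k\sqrt{\Sigma_{jj}/e}$ and likewise $\|\mu_j Y_k\|_{\psi_1}\le 2\mu_j\sqrt{\Sigma_{kk}/e}$. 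Adding these via the triangle inequality for $\|\cdot\|_{\psi_1}$ and bounding everything by maxima over the index set gives
\[
\|X_jX_k - \mathbb{E}X_jX_k\|_{\psi_1} \le \frac{8}{e}\max_j\Sigma_{jj} + \frac{4}{\sqrt{e}}\,\max_j\mu_j\,\max_j\sqrt{\Sigma_{jj}},
\]
and one checks $8/e \le 91$ and $4/\sqrt{e}\le 72$ (with ample room; the stated constants are presumably slightly inflated to absorb constant factors in whichever version of the sub-exponential Bernstein bound is applied), so the right-hand side is at most $C_1/2$ with $C_1 \equiv 91\max_j\Sigma_{jj} + 72\max_j\mathbb{E}X_j\max_j\sqrt{\Sigma_{jj}}$.

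Having $\max_{j,k}\|X_j^{(i)}X_k^{(i)} - \mathbb{E}X_jX_k\|_{\psi_1}\le C_1/2 \equiv K$, I would apply Lemma \ref{lem_subgau_subexp}.\ref{lem_subgau_subexp_expbound} to the i.i.d.\ centered variables $X_j^{(i)}X_k^{(i)}-\mathbb{E}X_jX_k$, $i=1,\dots,n$, which yields
\[
\P\!\left(\Bigl|n^{-1}\textstyle\sum_{i=1}^n X_j^{(i)}X_k^{(i)} - \mathbb{E}X_jX_k\Bigr| \ge \epsilon\right) \le 2\exp\!\left(-\min\!\left(\frac{n\epsilon^2}{8e^2K^2},\frac{n\epsilon}{4eK}\right)\right);
\]
substituting $K=C_1/2$ turns the exponents into $n\epsilon^2/(2e^2C_1^2)$ and $n\epsilon/(2eC_1)$, and since $2e^2 \le$ (some universal constant) this can be loosened to the claimed $\min(2n\epsilon^2/C_1^2, n\epsilon/C_1)$ form; the factor $2$ in front becomes $4$ because no independence-free symmetrization is needed here but one may want to keep a union-bound-friendly two-sided statement, or simply because the looser constants in $C_1$ leave slack. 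The main obstacle is purely bookkeeping: getting the numerical constants to line up so that the final bound has exactly the advertised shape, since the various lemmas invoked come with slightly different absolute constants ($1/e$, $1/\sqrt{e}$, $8e^2$, $4e$), and one must verify that $C_1 = 91\max_j\Sigma_{jj}+72\max_j\mathbb{E}X_j\max_j\sqrt{\Sigma_{jj}}$ is large enough to absorb all of them simultaneously while preserving the $\min(n\epsilon^2/C_1^2, n\epsilon/C_1)$ structure. There is no real conceptual difficulty beyond the decomposition that keeps the $\mathbb{E}X_j$-dependence linear.
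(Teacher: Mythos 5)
Your proposal is correct in substance and shares the paper's key idea---center first so that the dependence on $\mathbb{E}X_j$ enters only linearly---but implements it by a genuinely different route. The paper uses the polarization identity $X_jX_k=\tfrac14\bigl[(X_j+X_k)^2-(X_j-X_k)^2\bigr]$, expands each square into a centered quadratic part, a part linear in the means, and a constant (exactly the analogue of your $Y_jY_k$, $\mu_kY_j+\mu_jY_k$ decomposition), bounds the $q$-th moments of the resulting variable explicitly via Lemma~\ref{lem_subgau_subexp}.\ref{lem_subgau_subexp_bounded_moments}, and feeds them into the moment-condition Bernstein inequality of Lemma~\ref{lem_subgau_subexp}.\ref{lem_subgau_subexp_moment_based}; the factor $4$ in front comes from the union bound over the two polarization terms. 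You instead expand $X_jX_k-\mathbb{E}X_jX_k=Y_jY_k-\mathbb{E}Y_jY_k+\mu_kY_j+\mu_jY_k$ directly, control the $\psi_1$-norm of each piece (your inequality $\|\cdot\|_{\psi_1}\le\|\cdot\|_{\psi_2}$ is valid under the paper's definitions, via $\|X\|_q\le\|X\|_{2q}$ and $q^{-1}\le q^{-1/2}$), and invoke Lemma~\ref{lem_subgau_subexp}.\ref{lem_subgau_subexp_expbound}; in your route the leading $4$ is pure slack over the $2$ that the lemma delivers. Both approaches are legitimate and yield constants of the same shape.

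One step in your final bookkeeping is wrong as written, though easily repaired. Having shown $K\equiv\|X_jX_k-\mathbb{E}X_jX_k\|_{\psi_1}\le\tfrac{8}{e}\max_j\Sigma_{jj}+\tfrac{4}{\sqrt e}\max_j\mathbb{E}X_j\max_j\sqrt{\Sigma_{jj}}$, you substitute $K=C_1/2$ and claim the exponent $n\epsilon^2/(2e^2C_1^2)$ ``can be loosened'' to $2n\epsilon^2/C_1^2$. That goes the wrong direction: $n\epsilon^2/(2e^2C_1^2)$ is \emph{smaller} than $2n\epsilon^2/C_1^2$, so the resulting tail bound would be weaker than the one claimed in the lemma, and the inequality $K\le C_1/2$ is not the condition you need. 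The condition that actually makes Lemma~\ref{lem_subgau_subexp}.\ref{lem_subgau_subexp_expbound} deliver the stated form is $4eK\le C_1$, since then $8e^2K^2=(4eK)^2/2\le C_1^2/2$ and $4eK\le C_1$ give $\min\bigl(n\epsilon^2/(8e^2K^2),\,n\epsilon/(4eK)\bigr)\ge\min\bigl(2n\epsilon^2/C_1^2,\,n\epsilon/C_1\bigr)$. With your value of $K$ this reads $4eK\le 32\max_j\Sigma_{jj}+16\sqrt e\,\max_j\mathbb{E}X_j\max_j\sqrt{\Sigma_{jj}}\le 91\max_j\Sigma_{jj}+72\max_j\mathbb{E}X_j\max_j\sqrt{\Sigma_{jj}}=C_1$, which holds comfortably, so your proof closes once that step is corrected.
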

\begin{proof}[Proof of Lemma \ref{lemma_sub-norms2}]
We use a proof similar to Lemma 1 in \citet{rav11} (note that 
$\mathbb{E}X_j$ may be nonzero in our case). Define
\[U^{(i)}_{jk}\equiv X^{(i)}_{j}+X^{(i)}_{k},\quad U_{jk}\equiv X_{j}+X_k,\quad V^{(i)}_{jk}\equiv X^{(i)}_{j}-X^{(i)}_{k},\quad V_{jk}\equiv X_j-X_k.\]
Since $X^{(i)}_{j}X^{(i)}_{k}=\frac{1}{4}\left({U^{(i)}_{jk}}^2-{V^{(i)}_{jk}}^2\right)$, by union bound we have
\begin{align*}
&\,\P\left(\left|n^{-1}\sum_{i=1}^nX^{(i)}_{j}X^{(i)}_{k}-\mathbb{E} X_jX_k\right|\geq\epsilon\right)\\
\leq&\,\P\left(\left|n^{-1}\sum_{i=1}^n{U^{(i)}_{jk}}^2-\mathbb{E} U_{jk}^2\right|\geq 2\epsilon\right)+\P\left(\left|n^{-1}\sum_{i=1}^n{V^{(i)}_{jk}}^2-\mathbb{E} V_{jk}^2\right|\geq 2\epsilon\right).
\end{align*}

We next define
\begin{alignat*}{3}
Z^{(i)}_{jk}&
\equiv{U^{(i)}_{jk}}^2-\mathbb{E} U_{jk}^2={A^{(i)}_{jk}}^2+B_{jk}^{(i)}+C_{jk},\quad &&\!\!\overline{X}^{(i)}_{j}\equiv X^{(i)}_{j}-\mathbb{E} X_{j},\\
A^{(i)}_{jk}&\equiv \overline{X}^{(i)}_{j}+\overline{X}^{(i)}_{k}, \quad B^{(i)}_{jk}\equiv 2(\mathbb{E} X_j+\mathbb{E} X_k)(\overline{X}^{(i)}_{j}+\overline{X}^{(i)}_{k}),\quad && C_{jk}\equiv-\E(\overline{X}^{(i)}_{j}+\overline{X}^{(i)}_{k})^2.
\end{alignat*}

Then since $\tau$ is a norm by \ref{lem_subgau_subexp_equivalence} of Lemma \ref{lem_subgau_subexp}, $A_{jk}$ is sub-Gaussian with parameter $\leq\sqrt{\Sigma_{jj}}+\sqrt{\Sigma_{kk}}$, and $B_{jk}$ is sub-Gaussian with parameter $\leq2(\mathbb{E} X_j+\mathbb{E} X_k)\left(\sqrt{\Sigma_{jj}}+\sqrt{\Sigma_{kk}}\right)$. Using \ref{lem_subgau_subexp_bounded_moments} of Lemma \ref{lem_subgau_subexp} together with  the inequality $(a+b+c)^q\leq (3\max\{a,b,c\})^q\leq 3^q(a^q+b^q+c^q)$ for all $a,b,c\geq 0$ and $q>0$, we have for any $q\geq 2$
\begin{align*}
(\E|Z_{jk}|^q)^{1/q}&\leq \left(3^q\left(\mathbb{E} |A_{jk}|^{2q}+\mathbb{E} |B_{jk}|^q+|C_{jk}|^q\right)\right)^{1/q}\\
&\leq 3^{1+1/q}\left((\E|A_{jk}|^{2q})^{1/q} + (\E|B_{jk}|^q)^{1/q}+|C_{jk}|\right)\\
&\leq 3^{1+1/q}\left(2^{1/q}(2q/e)\left(\sqrt{\Sigma_{jj}}+\sqrt{\Sigma_{kk}}\right)^2\right.\\&\quad\quad\quad\,\,\,\,\,\left.+2^{1/q}\sqrt{q/e}2(\mathbb{E} X_j+\mathbb{E} X_k)(\sqrt{\Sigma_{jj}}+\sqrt{\Sigma_{kk}})+\var(X_{j}+X_{k})\right).
\end{align*}

Using $\var(X+Y)\leq2(\var(X)+\var(Y))$ and the fact that
$\var(X_{j})=\var(X_j-\mathbb{E}X_j)\leq\tau^2(X_j-\mathbb{E}X_j)\leq\Sigma_{jj}$
(by \ref{lem_subgau_subexp_equivalence} of Lemma
\ref{lem_subgau_subexp} and 1) of Lemma \ref{lemma_sub-norms}, we then have
\begin{multline*}
\left(\frac{\E|Z_{jk}|^q}{q!}\right)^{1/q}\\
\leq3^{1+1/q}\frac{2^{3+1/q}(q/e)\max_j\Sigma_{jj}+2^{3+1/q}\sqrt{q/e}\max_j\mathbb{E} X_j\cdot\max_j\sqrt{\Sigma_{jj}}+4\max_j\Sigma_{jj}}{(q!)^{1/q}}.
\end{multline*}
Since all three coefficients involving $q$ are decreasing in $q\geq 2$, we have
\[\sup_{q\geq 2}\left(\frac{\E|Z_{jk}|^q}{q!}\right)^{1/q}\leq \left(48\sqrt{3}/e+6\sqrt{6}\right)\max_j\Sigma_{jj}+24\sqrt{6/e}\max_j\mathbb{E} X_j\max_j\sqrt{\Sigma_{jj}}.\]
Thus by \ref{lem_subgau_subexp_moment_based} of Lemma \ref{lem_subgau_subexp}, letting $B\equiv \left(91\max_j\Sigma_{jj}+72\max_{j}\mathbb{E} X_j\max_j\sqrt{\Sigma_{jj}}\right)$, we have for all $\epsilon>0$:
\[\P\left(n^{-1}\left|\sum_{i=1}^n Z_{jk}^{(i)}\right|\geq 2\epsilon\right)\leq 2\exp\left(-\min\left(\frac{2n\epsilon^2}{B^2},\frac{n\epsilon}{B}\right)\right).\]
A tail bound for the sample average of $V_{jk}^2$ can be similarly derived, and the result follows.
\end{proof}

\newpage

\section{Simulation Results for Erd\"os-R\'enyi Graphs}\label{ER}

We revisit the simulations from Section \ref{Numerical
  Experiments} but use Erd\"os-R\'enyi (ER) graphs in which
each possible edge is independently included with probability $\pi$.  Independent
uniform draws from $[0.5,1]$ are used to fill the
non-zero off-diagonal entries
 of the symmetric matrix $\mathbf{K}_0$. The
  diagonal elements are set such
that $\mathbf{K}_0$ has minimum eigenvalue $0.1$.  We choose $\pi=0.08$ for $n=1000$, and $\pi=0.02$ for $n=80$.

\subsection{Truncated GGMs}
In this section we present the results for truncated GGMs.
\subsubsection{Choice of \texorpdfstring{$\boldsymbol{h}$}{h}}

The results for truncated centered GGMs are reported in
Table~\ref{App_table_chooseh_centered} and Figure~\ref{App_plot_chooseh_centered}.  Those for truncated
non-centered GGMs using the profiled estimator are in Table~\ref{App_table_chooseh_noncentered} and
Figure~\ref{App_plot_chooseh_noncentered}.


\begin{table}[H]
\begin{center}
\scalebox{0.75}{
\begin{tabular}{|c|c|c|c|c|c|}
\hline
\multicolumn{6}{|c|}{Centered, $n=80$, multiplier 1.8647, ER}
\tabularnewline
\hline\hline
\multicolumn{3}{|c|}{$\min(\log(1+x),c)$} & \multicolumn{3}{c|}{$\min(x,c)$}
\tabularnewline
\hline
$c$ & Mean & sd & $c$ & Mean & sd
\tabularnewline\hline
$\infty$ & 0.632 & 0.036 & $\infty$ & 0.638 & 0.035
\tabularnewline
2 & 0.632 & 0.036 & 3 & 0.638 & 0.035
\tabularnewline
1 & 0.630 & 0.035 & 2 & 0.635 & 0.035
\tabularnewline 
0.5 & 0.613 & 0.033 & 1 & 0.623 & 0.033
\tabularnewline
\hline
\multicolumn{3}{|c|}{$\mathrm{MCP}(1,c)$} & \multicolumn{3}{c|}{$\mathrm{SCAD}(1,c)$}
\tabularnewline
\hline
$c$ & Mean & sd & $c$ & Mean & sd
\tabularnewline
\hline
10 & 0.637 & 0.035 & 10 & 0.638 & 0.035
\tabularnewline
5 & 0.636 & 0.036 & 5 & 0.637 & 0.035
\tabularnewline
1 & 0.617 & 0.033 & 2 & 0.632 & 0.035
\tabularnewline
\hline
\multicolumn{3}{|c|}{$x^{1.5}$: (0.627, 0.032)} & \multicolumn{3}{|c|}{$x^2$: (0.595,0.028)}
\tabularnewline
\hline
\hline
\multicolumn{3}{|c|}{GLASSO (0.553,0.029)} & \multicolumn{3}{|c|}{SPACE: (0.544, 0.026)}
\tabularnewline
\hline
\multicolumn{3}{|c|}{NS: (0.543,0.028)} & \multicolumn{3}{|c|}{SJ: (0.519,0.028)}
\tabularnewline
\hline
\end{tabular}}
\vskip0.05in
\scalebox{0.75}{
\begin{tabular}{|c|c|c|c|c|c|}
\hline
\multicolumn{6}{|c|}{Centered, $n=1000$, multiplier 1, ER}
\tabularnewline
\hline\hline
\multicolumn{3}{|c|}{$\min(\log(1+x),c)$} & \multicolumn{3}{c|}{$\min(x,c)$}
\tabularnewline
\hline
$c$ & Mean & sd & $c$ & Mean & sd
\tabularnewline\hline
$\infty$ & 0.716 & 0.016 & 2  & 0.710 & 0.016
\tabularnewline
2 & 0.716 & 0.016  & 3  & 0.710 & 0.016
\tabularnewline
1 & 0.715 & 0.016& $1$ & 0.710 & 0.017
\tabularnewline 
0.5  & 0.694 & 0.017& $\infty$ & 0.709 & 0.016
\tabularnewline
\hline
\multicolumn{3}{|c|}{$\mathrm{MCP}(1,c)$} & \multicolumn{3}{c|}{$\mathrm{SCAD}(1,c)$}
\tabularnewline
\hline
$c$ & Mean & sd & $c$ & Mean & sd
\tabularnewline
\hline
5 & 0.714 & 0.016 & 2 & 0.713 & 0.016
\tabularnewline
10 & 0.711 & 0.016 & 5 & 0.711 & 0.016
\tabularnewline
1 & 0.707 & 0.017& 10 & 0.710 & 0.016
\tabularnewline
\hline
\multicolumn{3}{|c|}{$x^{1.5}$: (0.678,0.016)} & \multicolumn{3}{|c|}{$x^2$: (0.64,0.017)}
\tabularnewline
\hline
\hline
\multicolumn{3}{|c|}{GLASSO: (0.675,0.016)} & \multicolumn{3}{|c|}{SPACE: (0.675,0.016)}
\tabularnewline
\hline
\multicolumn{3}{|c|}{NS: (0.675,0.016)} & \multicolumn{3}{|c|}{SJ: (0.624,0.017)}
\tabularnewline
\hline
\end{tabular}
\begin{tabular}{|c|c|c|c|c|c|}
\hline
\multicolumn{6}{|c|}{Centered, $n=1000$, multiplier 1.6438, ER}
\tabularnewline
\hline\hline
\multicolumn{3}{|c|}{$\min(\log(1+x),c)$} & \multicolumn{3}{c|}{$\min(x,c)$}
\tabularnewline
\hline
$c$ & Mean & sd & $c$ & Mean & sd
\tabularnewline\hline
$\infty$ & 0.796 & 0.014 & $\infty$ & 0.795 & 0.014
\tabularnewline
2 & 0.796 & 0.014 & 3  & 0.794 & 0.014
\tabularnewline
1 & 0.794 & 0.014 & 2 & 0.792 & 0.014
\tabularnewline 
0.5 & 0.772 & 0.015 & 1 & 0.784 & 0.015
\tabularnewline
\hline
\multicolumn{3}{|c|}{$\mathrm{MCP}(1,c)$} & \multicolumn{3}{c|}{$\mathrm{SCAD}(1,c)$}
\tabularnewline
\hline
$c$ & Mean & sd & $c$ & Mean & sd
\tabularnewline
\hline
5 & 0.796 & 0.014 & 5 & 0.795 & 0.014
\tabularnewline
10 & 0.796 & 0.014 & 10  & 0.795 & 0.014
\tabularnewline
1 & 0.778 & 0.015 & 2 & 0.793 & 0.014
\tabularnewline
\hline
\multicolumn{3}{|c|}{$x^{1.5}$: (0.757,0.015)} & \multicolumn{3}{|c|}{$x^2$: (0.693,0.016)}
\tabularnewline
\hline
\hline
\multicolumn{3}{|c|}{GLASSO: (0.675,0.016)} & \multicolumn{3}{|c|}{SPACE: (0.675,0.016)}
\tabularnewline
\hline
\multicolumn{3}{|c|}{NS: (0.675,0.016)} & \multicolumn{3}{|c|}{SJ: (0.624,0.017)}
\tabularnewline
\hline
\end{tabular}}
\end{center}
\caption{\vspace{-0.1in}\label{App_table_chooseh_centered} Mean and standard deviation of areas under the ROC curves (AUC) using different estimators in the centered setting, with $n=80$ and multiplier $1.8647$, or $n=1000$ and multipliers $1$ and $1.6438$. Methods include our estimator with different choices of $h$,  GLASSO, SPACE, neighborhood selection (NS), and Space JAM (SJ).}
\end{table}
\vspace{-0.5in}

\begin{figure}[ht]
\vspace{-0.in}
\centering
\subfloat[$n=80$, $\mathrm{mult}=1.8647$, ER]{\includegraphics[trim={0 0.5in 1.3in 0},clip,scale=0.26]{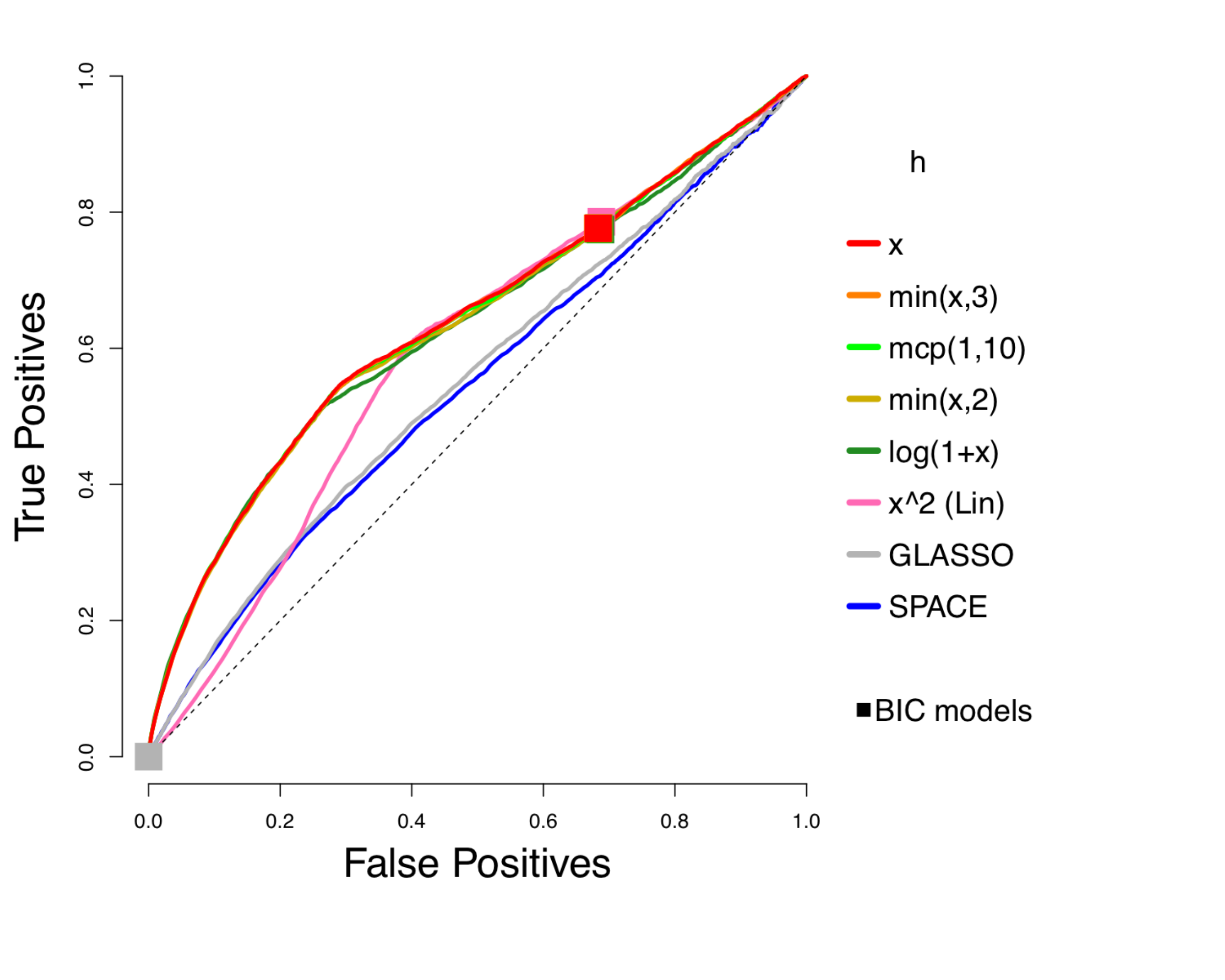}\hspace{-0.02in}}
\subfloat[$n=1000$, $\mathrm{mult}=1$, ER]{\includegraphics[trim={0
    0.5in 2in
    0},clip,scale=0.26]{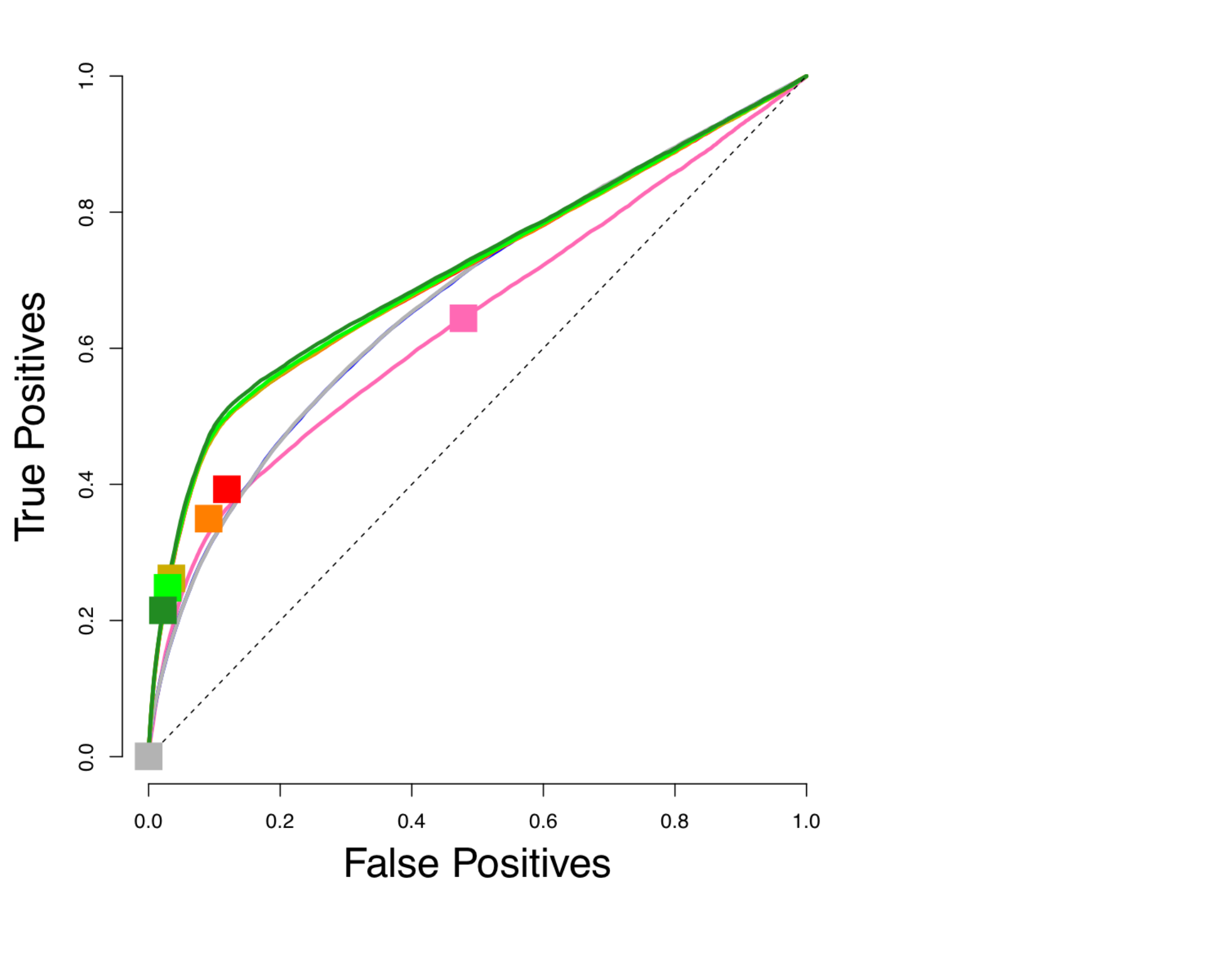}\hspace{-0.3in}}\subfloat[$n=1000$,
$\mathrm{mult}=1.6438$, ER]{\includegraphics[trim={0 0.5in 3in
    0},clip,scale=0.26]{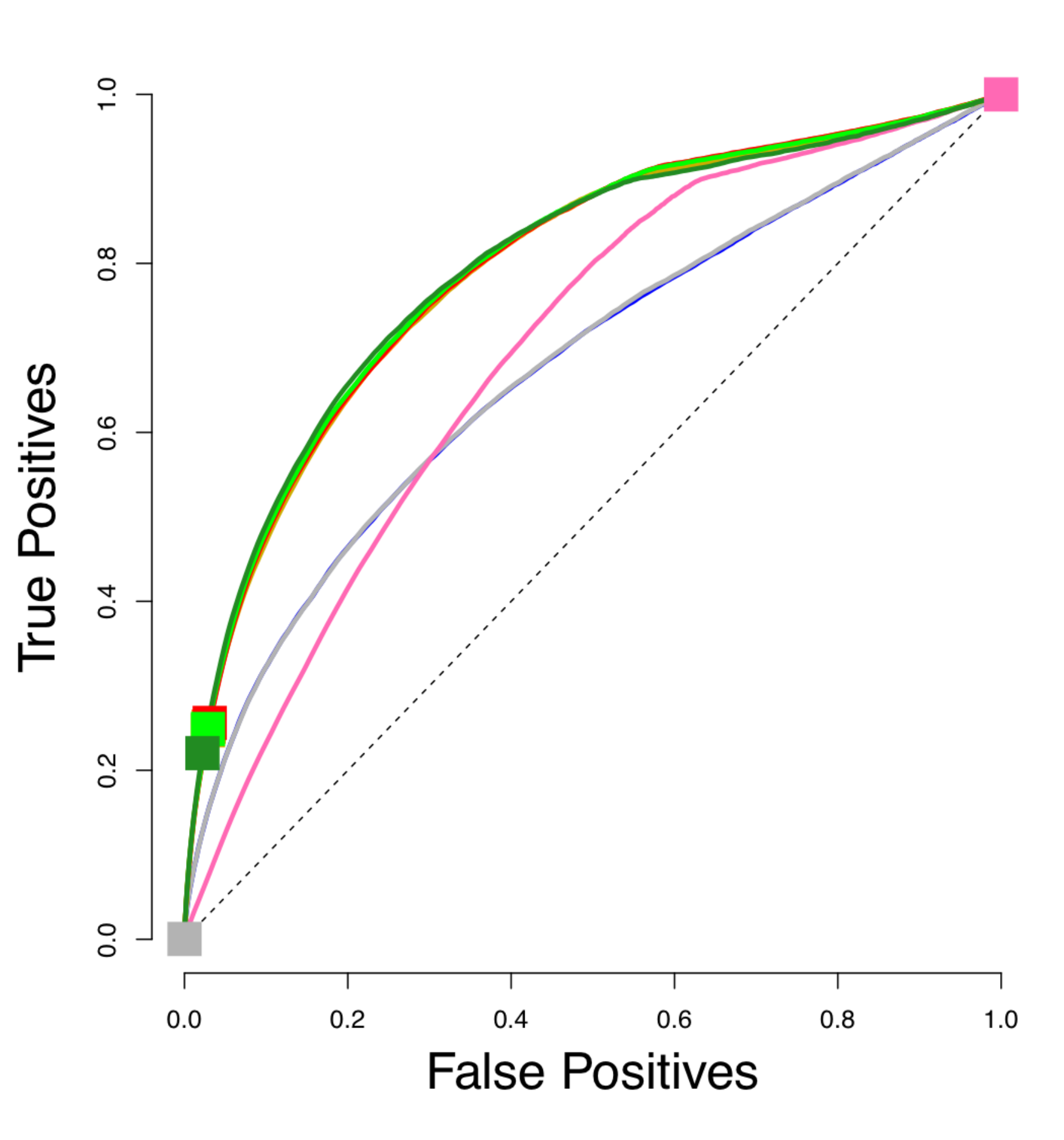}}
\caption{\label{App_plot_chooseh_centered} Average ROC curves of our \emph{centered} estimator for
  $m=100$ variables and two sample sizes $n$ under various choices of
  $h$, compared to SPACE and GLASSO, for the \emph{truncated centered GGM} case.  Squares indicate average true positive rate (TPR) and false positive rate (FPR) of models picked by eBIC with refitting for the estimator in the same color. }
\end{figure}
\vspace{-0.2in}


\begin{table}[H]
\vspace{-0.15in}
\begin{center}
\scalebox{0.75}{
\begin{tabular}{|c|c|c|c|c|c|}
\hline
\multicolumn{6}{|c|}{Non-centered profiled, $n=80$, multiplier 1.8647, ER}
\tabularnewline
\hline\hline
\multicolumn{3}{|c|}{$\min(\log(1+x),c)$} & \multicolumn{3}{c|}{$\min(x,c)$}
\tabularnewline
\hline
$c$ & Mean & sd & $c$ & Mean & sd
\tabularnewline\hline
1 & 0.588 & 0.034 & 3 & 0.588 & 0.033
\tabularnewline
$\infty$ & 0.588 & 0.034 & $\infty$ & 0.588 & 0.033
\tabularnewline
2 & 0.588 & 0.034 & 2 & 0.588 & 0.033
\tabularnewline 
0.5 & 0.576 & 0.033 & 1 & 0.583 & 0.033
\tabularnewline
\hline
\multicolumn{3}{|c|}{$\mathrm{MCP}(1,c)$} & \multicolumn{3}{c|}{$\mathrm{SCAD}(1,c)$}
\tabularnewline
\hline
$c$ & Mean & sd & $c$ & Mean & sd
\tabularnewline
\hline
5 & 0.588 & 0.033 & 5 & 0.588 & 0.033
\tabularnewline
10 & 0.588 & 0.033 & 10 & 0.588 & 0.033
\tabularnewline
1 & 0.581 & 0.033 & 2 & 0.587 & 0.033
\tabularnewline
\hline
\multicolumn{3}{|c|}{$x^{1.5}$: (0.582,0.028)} & \multicolumn{3}{|c|}{$x^2$: (0.576,0.028)}
\tabularnewline
\hline
\hline
\multicolumn{3}{|c|}{GLASSO: (0.572,0.033)} & \multicolumn{3}{|c|}{SPACE: (0.562,0.031)}
\tabularnewline
\hline
\multicolumn{3}{|c|}{NS: (0.560,0.032)} & \multicolumn{3}{|c|}{SJ: (0.535,0.027)}
\tabularnewline
\hline
\end{tabular}}
\vskip0.05in
\scalebox{0.75}{
\begin{tabular}{|c|c|c|c|c|c|}
\hline
\multicolumn{6}{|c|}{Non-centered profiled, $n=1000$, multiplier 1, ER}
\tabularnewline
\hline\hline
\multicolumn{3}{|c|}{$\min(\log(1+x),c)$} & \multicolumn{3}{c|}{$\min(x,c)$}
\tabularnewline
\hline
$c$ & Mean & sd & $c$ & Mean & sd
\tabularnewline\hline
2 & 0.692 & 0.022 & 1 & 0.687 & 0.022
\tabularnewline
$\infty$ & 0.692 & 0.022 & $\infty$  & 0.686 & 0.022
\tabularnewline
1 & 0.691 & 0.022 & 3 & 0.685 & 0.022
\tabularnewline 
0.5 & 0.684 & 0.02 & 2 & 0.685 & 0.022
\tabularnewline
\hline
\multicolumn{3}{|c|}{$\mathrm{MCP}(1,c)$} & \multicolumn{3}{c|}{$\mathrm{SCAD}(1,c)$}
\tabularnewline
\hline
$c$ & Mean & sd & $c$ & Mean & sd
\tabularnewline
\hline
5 & 0.689 & 0.022 & 2 & 0.687 & 0.022
\tabularnewline
1 & 0.689 & 0.020 & 5 & 0.687 & 0.022
\tabularnewline
10 & 0.687 & 0.022 & 10 & 0.686 & 0.022
\tabularnewline
\hline
\multicolumn{3}{|c|}{$x^{1.5}$: (0.663,0.020)} & \multicolumn{3}{|c|}{$x^2$: (0.638,0.019)}
\tabularnewline
\hline
\hline
\multicolumn{3}{|c|}{GLASSO (0.700,0.022)} & \multicolumn{3}{|c|}{SPACE: (0.699,0.022)}
\tabularnewline
\hline
\multicolumn{3}{|c|}{NS: (0.699,0.022)} & \multicolumn{3}{|c|}{SJ: (0.655,0.021)}
\tabularnewline
\hline
\end{tabular}
\begin{tabular}{|c|c|c|c|c|c|}
\hline
\multicolumn{6}{|c|}{Non-centered profiled, $n=1000$, multiplier 1.6438, ER}
\tabularnewline
\hline\hline
\multicolumn{3}{|c|}{$\min(\log(1+x),c)$} & \multicolumn{3}{c|}{$\min(x,c)$}
\tabularnewline
\hline
$c$ & Mean & sd & $c$ & Mean & sd
\tabularnewline\hline
2 & 0.705 & 0.021 & $\infty$ & 0.705 & 0.022
\tabularnewline
$\infty$ & 0.705 & 0.021 & 3 & 0.705 & 0.021
\tabularnewline
1 & 0.703 & 0.021 & 2 & 0.702 & 0.022
\tabularnewline 
0.5 & 0.683 & 0.019 & 1 & 0.695 & 0.021
\tabularnewline
\hline
\multicolumn{3}{|c|}{$\mathrm{MCP}(1,c)$} & \multicolumn{3}{c|}{$\mathrm{SCAD}(1,c)$}
\tabularnewline
\hline
$c$ & Mean & sd & $c$ & Mean & sd
\tabularnewline
\hline
5 & 0.706 & 0.021 & 10 & 0.705 & 0.022
\tabularnewline
10 & 0.706 & 0.022 & 5 & 0.705 & 0.022
\tabularnewline
1 & 0.690 & 0.019 & 2 & 0.703 & 0.022
\tabularnewline
\hline
\multicolumn{3}{|c|}{$x^{1.5}$: (0.689,0.021)} & \multicolumn{3}{|c|}{$x^2$: (0.664,0.019)}
\tabularnewline
\hline
\hline
\multicolumn{3}{|c|}{GLASSO (0.700,0.022)} & \multicolumn{3}{|c|}{SPACE: (0.699,0.022)}
\tabularnewline
\hline
\multicolumn{3}{|c|}{NS: (0.699,0.022)} & \multicolumn{3}{|c|}{SJ: (0.655,0.021)}
\tabularnewline
\hline
\end{tabular}}\vspace{-0.2in}
\end{center}\caption{\label{App_table_chooseh_noncentered} Mean and standard deviation of AUC using different profiled estimators in the non-centered
  setting, with $n=80$ and multiplier $1.8647$, or $n=1000$ and
  multipliers $1$ and $1.6438$. Methods as for Table~\ref{App_table_chooseh_centered}.}
\end{table}

\begin{figure}[H]
\centering
\vspace{-0.0in}
\subfloat[$n=80$, $\mathrm{mult}=1.8647$, ER]{\includegraphics[trim={0 0.5in 1.3in 0},clip,scale=0.26]{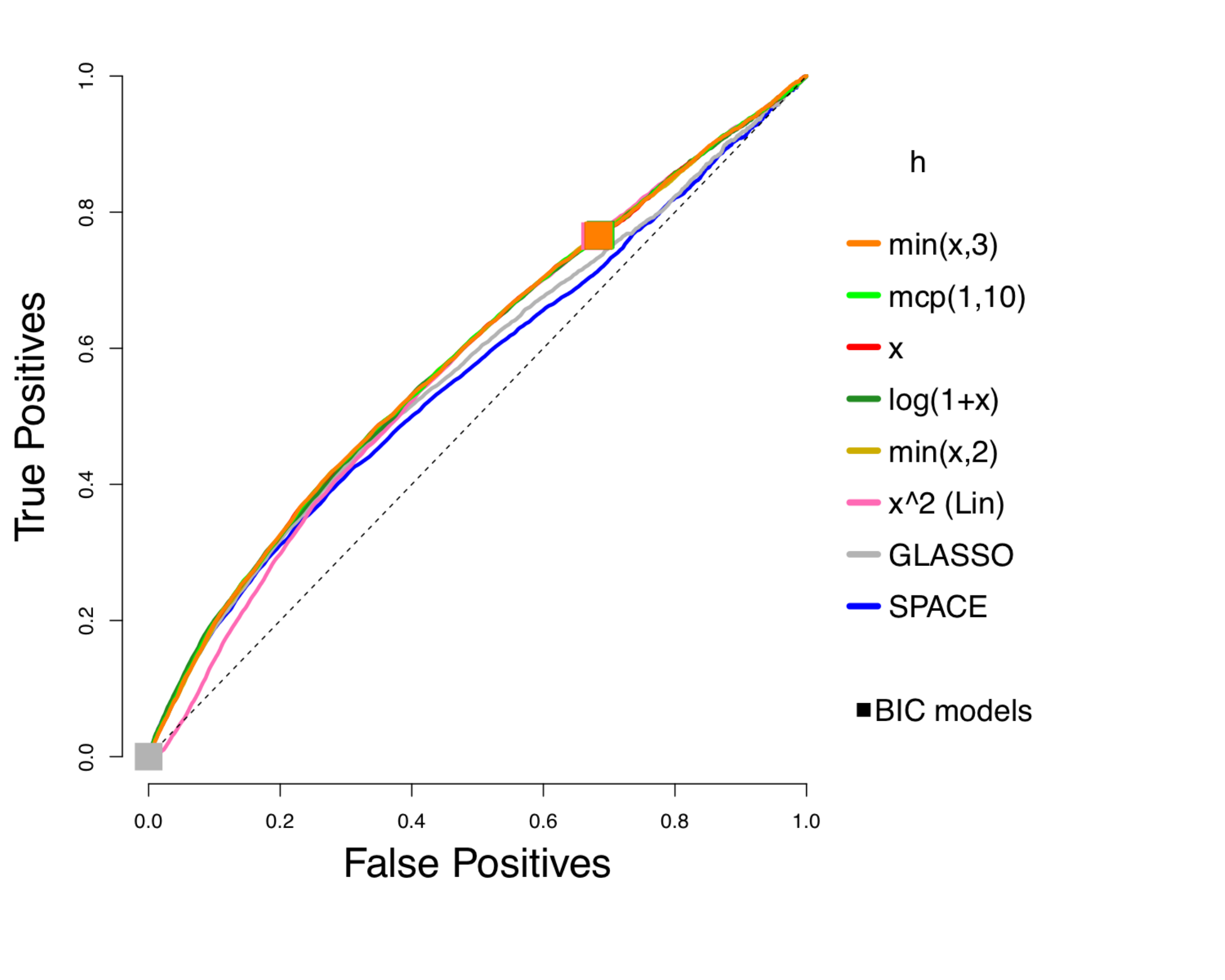}\hspace{-0.02in}}
\subfloat[$n=1000$, $\mathrm{mult}=1$, ER]{\includegraphics[trim={0 0.5in 2in 0},clip,scale=0.26]{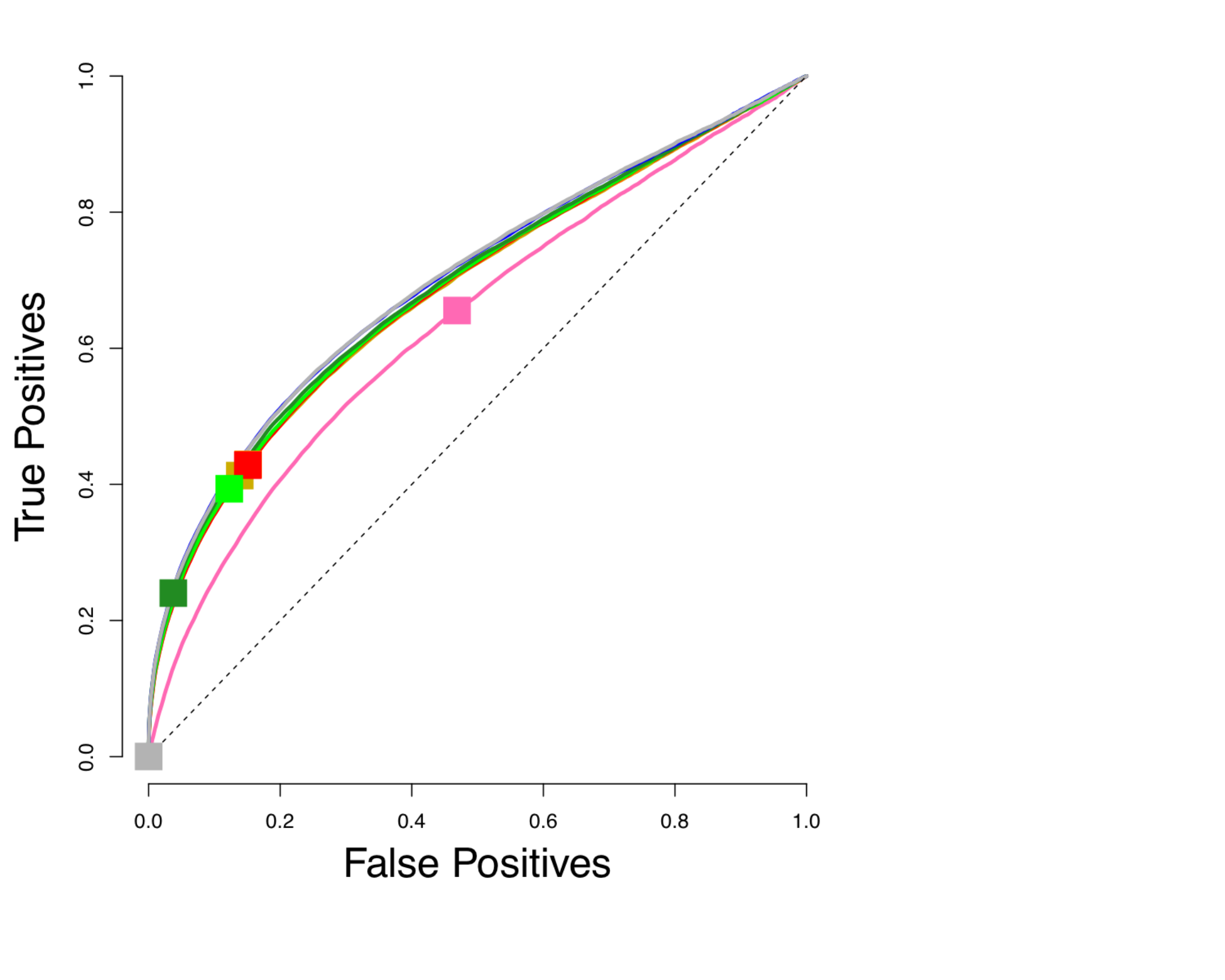}\hspace{-0.3in}}\subfloat[$n=1000$, $\mathrm{mult}=1.6438$, ER]{\includegraphics[trim={0 0.5in 3in 0},clip,scale=0.26]{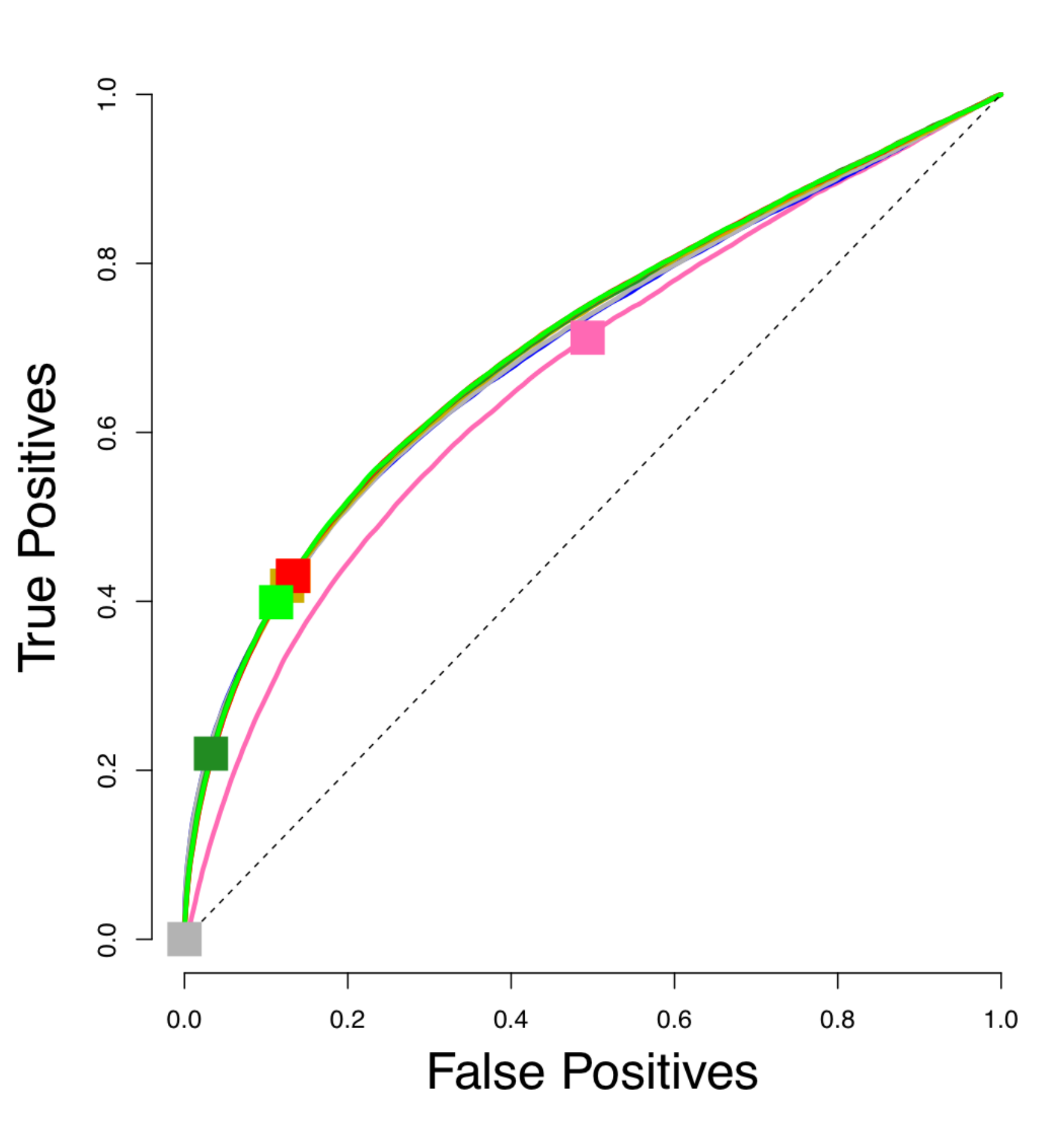}}
\caption{\label{App_plot_chooseh_noncentered} Average for the truncated non-centered GGM case. $n=80$ or $1000$, $m=100$.}
\end{figure}


\subsubsection{Choice of multiplier}
The results for truncated centered GGMs where each curve represents a different multiplier are shown in Figure \ref{App_plot_mixed}, and those for truncated non-centered GGMs are in Figure \ref{App_plot_lr}, where each curve corresponds to a different ratio $\lambda_{\mathbf{K}}/\lambda_{\boldsymbol{\eta}}$.
\begin{figure}[ht]
\centering
\vspace{-0.0in}
\subfloat[$n=80$, ER]{\includegraphics[scale=0.27]{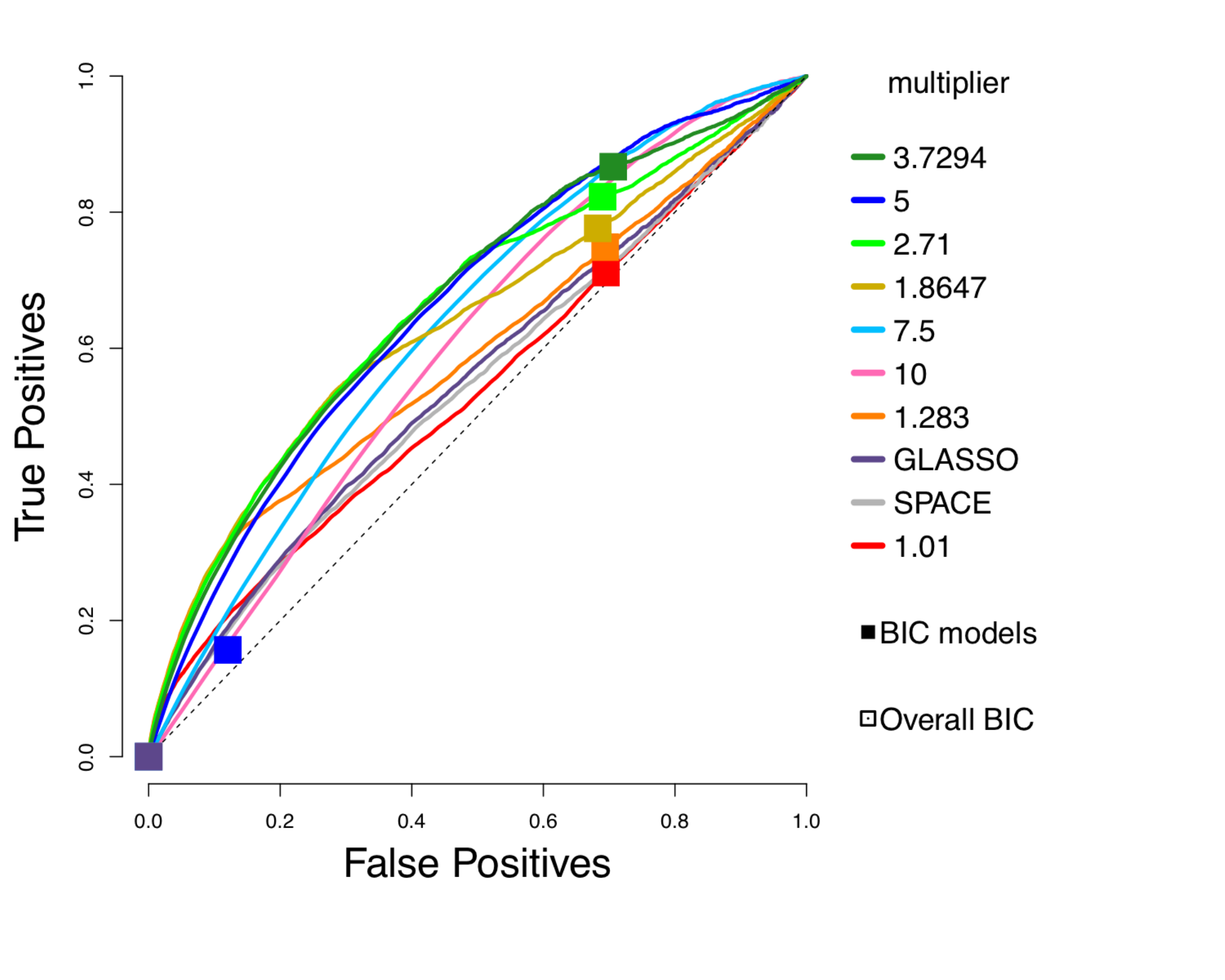}\hspace{-0.02in}}
\subfloat[$n=1000$, ER]{\includegraphics[scale=0.27]{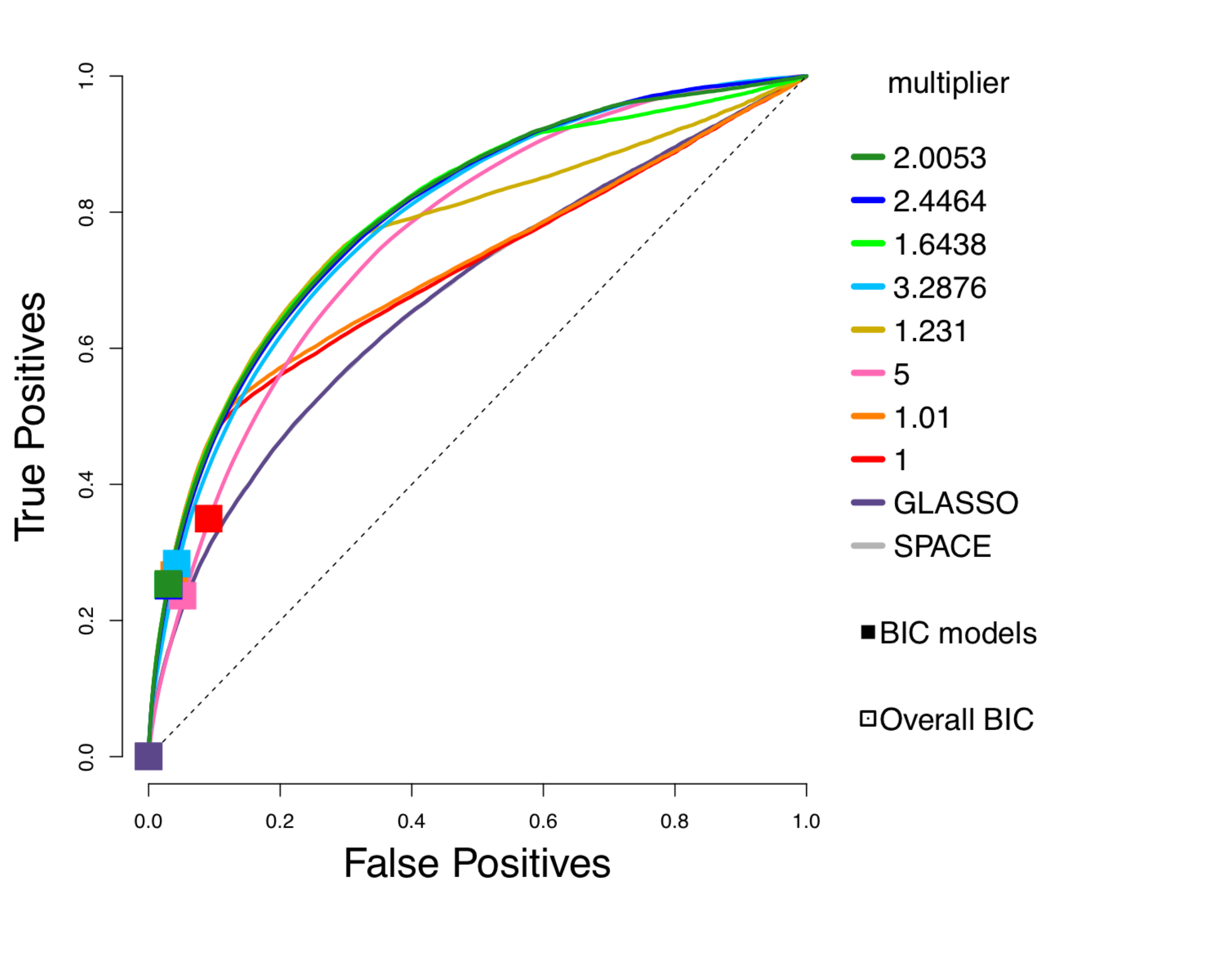}}
\caption{\label{App_plot_mixed} Performance of $\min(x,3)$ for truncated centered GGMs with different multipliers, compared to GLASSO and SPACE, in the centered setting, $n=80$ or 1000.}
\vspace{-0.0in}
\end{figure}

\begin{figure}[htp]
\centering
\vspace{-0.4in}
\subfloat[$n=80$, $\mathrm{mult}=1.7897$, ER]{\includegraphics[trim={0 0.5in 1.3in 0},clip,scale=0.27]{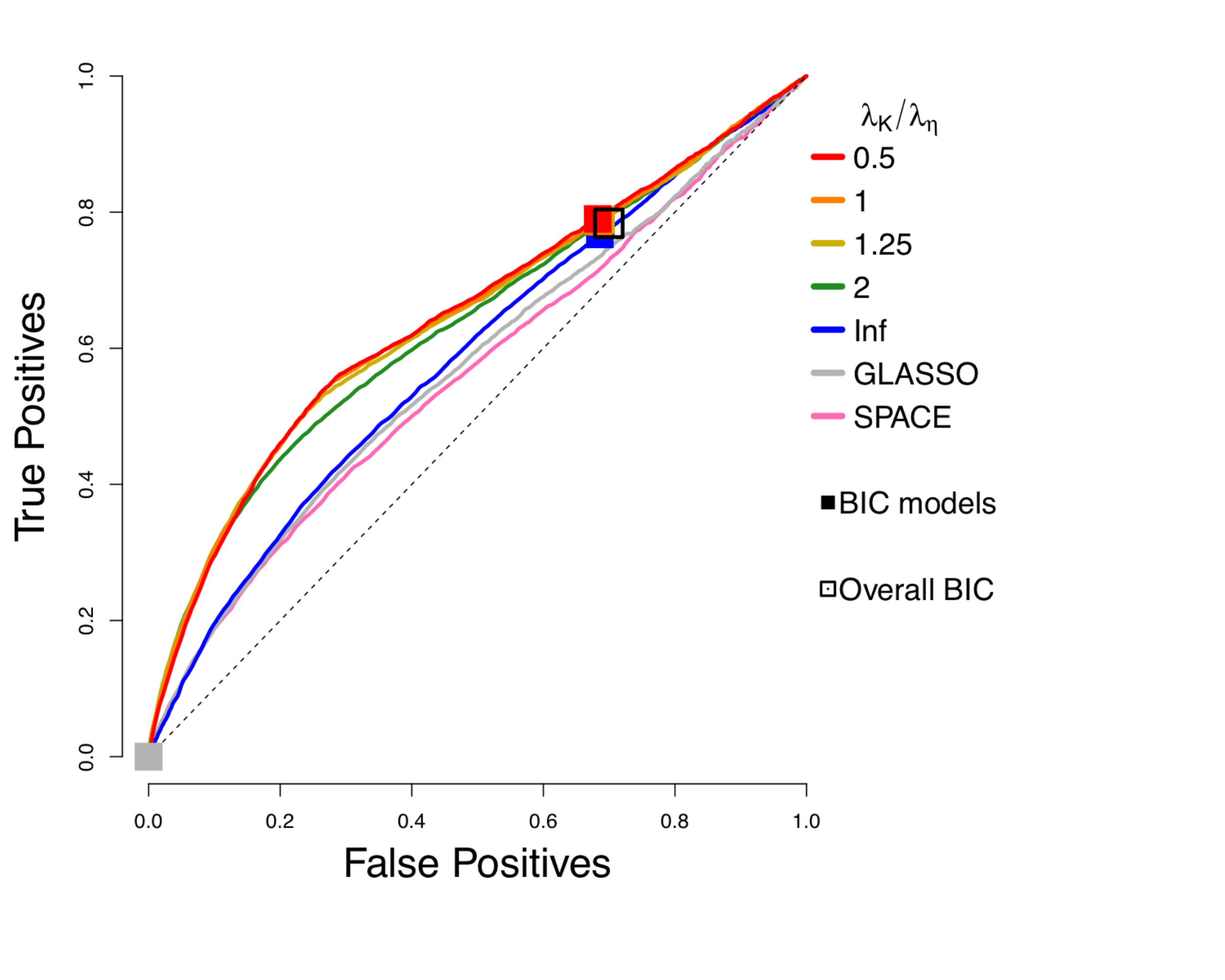}\hspace{-0.02in}}
\subfloat[$n=80$, $\mathrm{mult}=1.8647$, ER]{\includegraphics[trim={0 0.5in 2in 0},clip,scale=0.27]{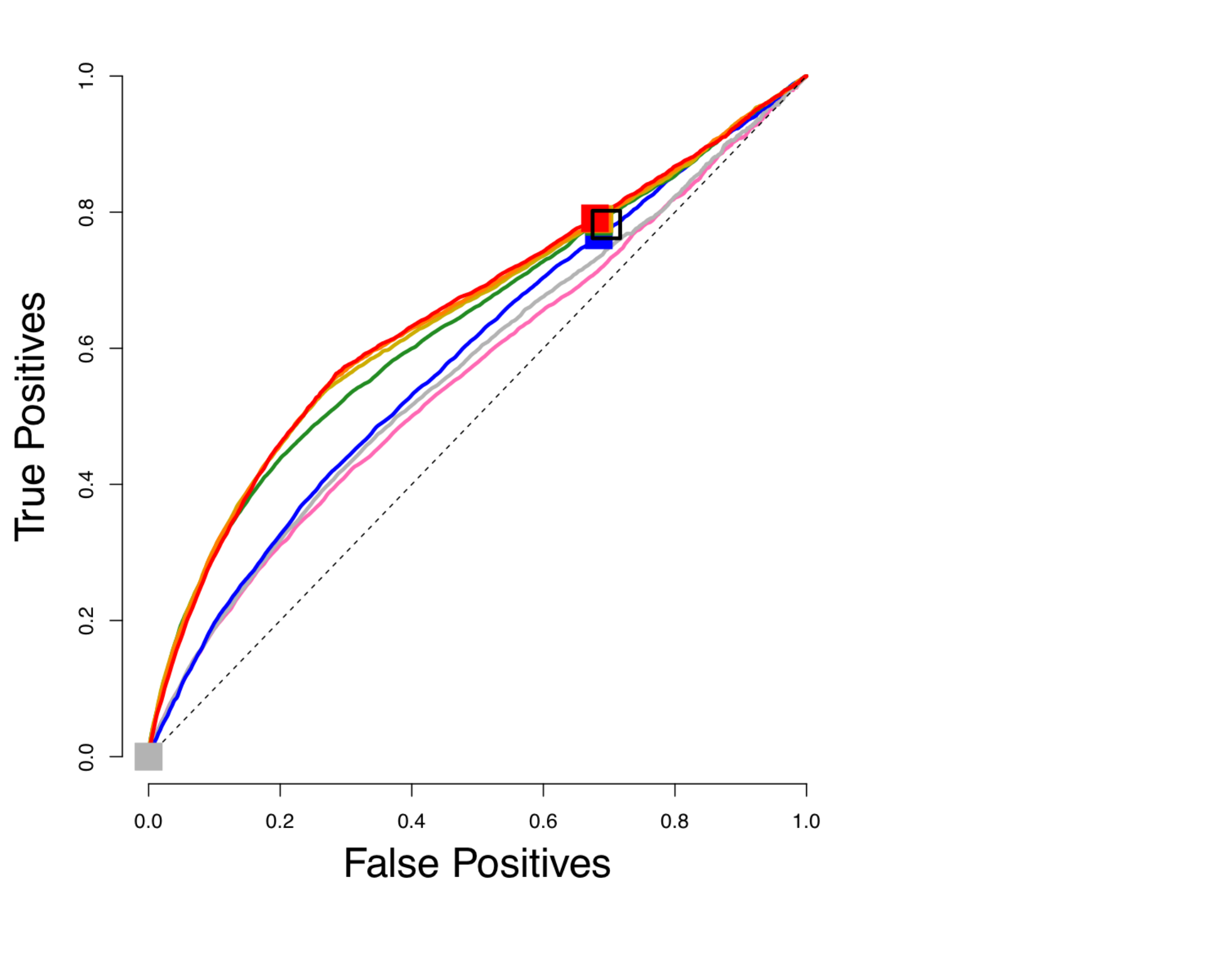}\hspace{-0.4in}}\\ \vspace{-0.1in}
\subfloat[$n=1000$, $\mathrm{mult}=1$, ER]{\includegraphics[trim={0 0.5in 2in 0},clip,scale=0.27]{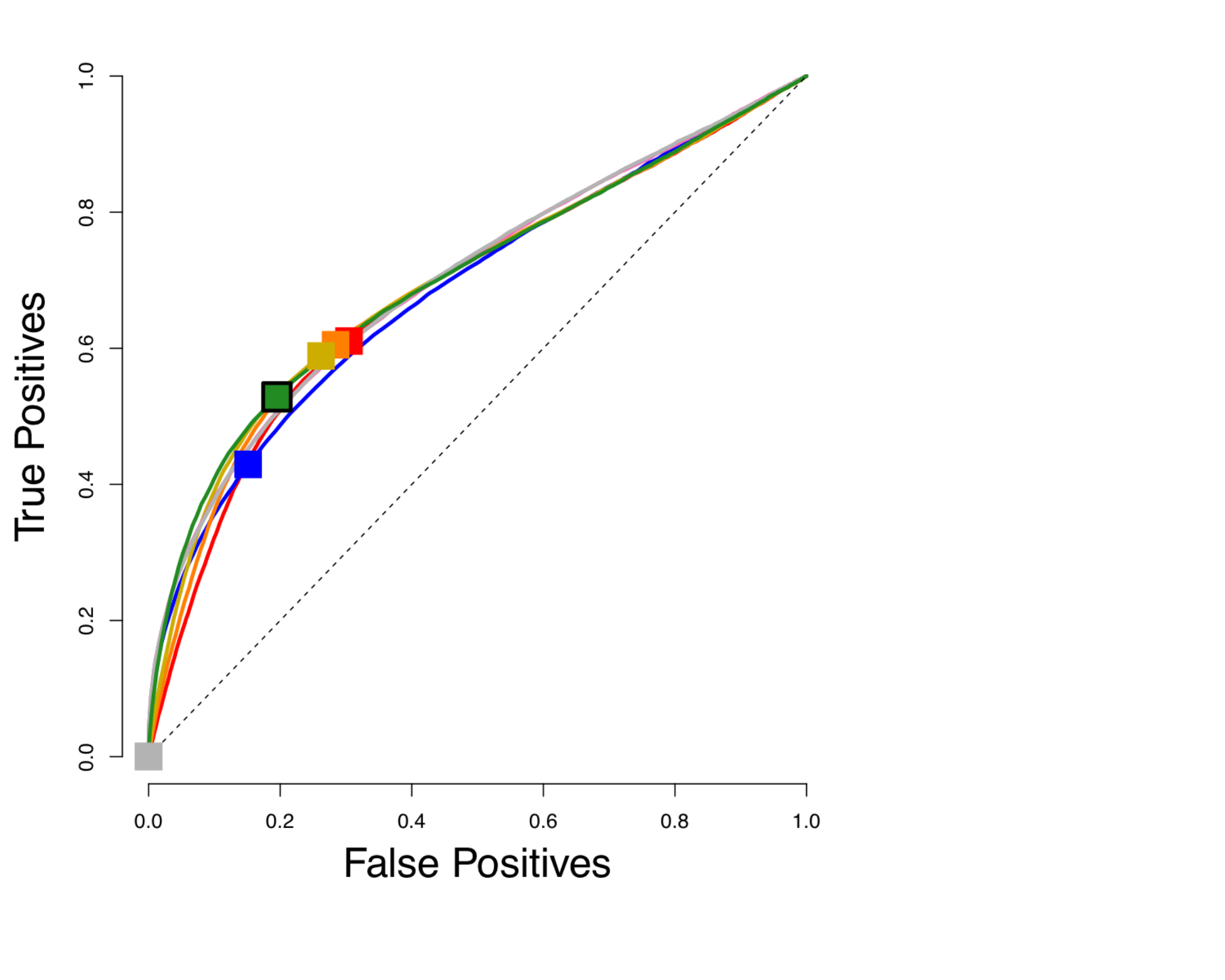}\hspace{-0.3in}}
\subfloat[$n=1000$, $\mathrm{mult}=1.2310$, ER]{\includegraphics[trim={0 0.5in 2in 0},clip,scale=0.27]{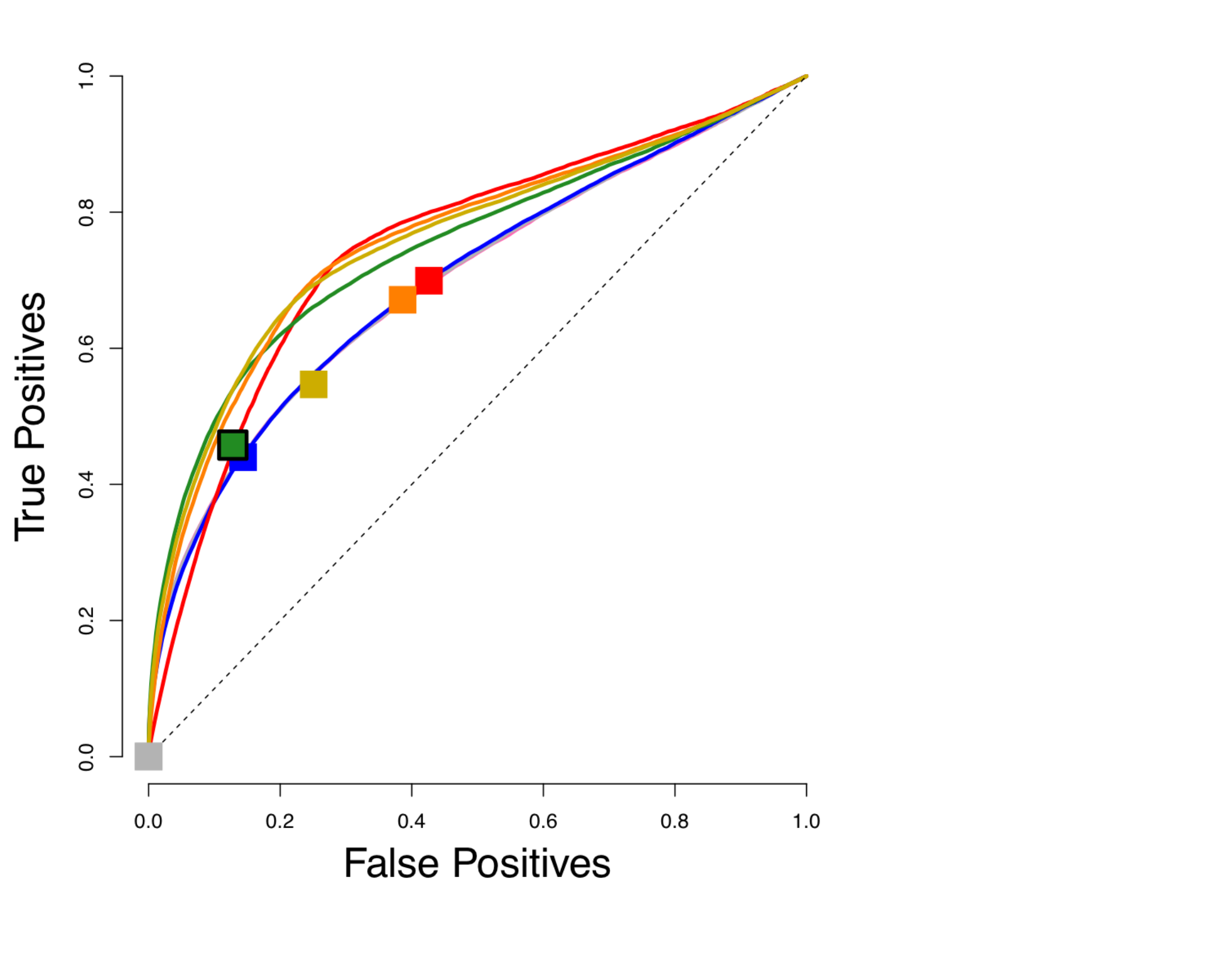}\hspace{-0.3in}}\subfloat[$n=1000$, $\mathrm{mult}=1.6438$, ER]{\includegraphics[trim={0 0.5in 2in 0},clip,scale=0.27]{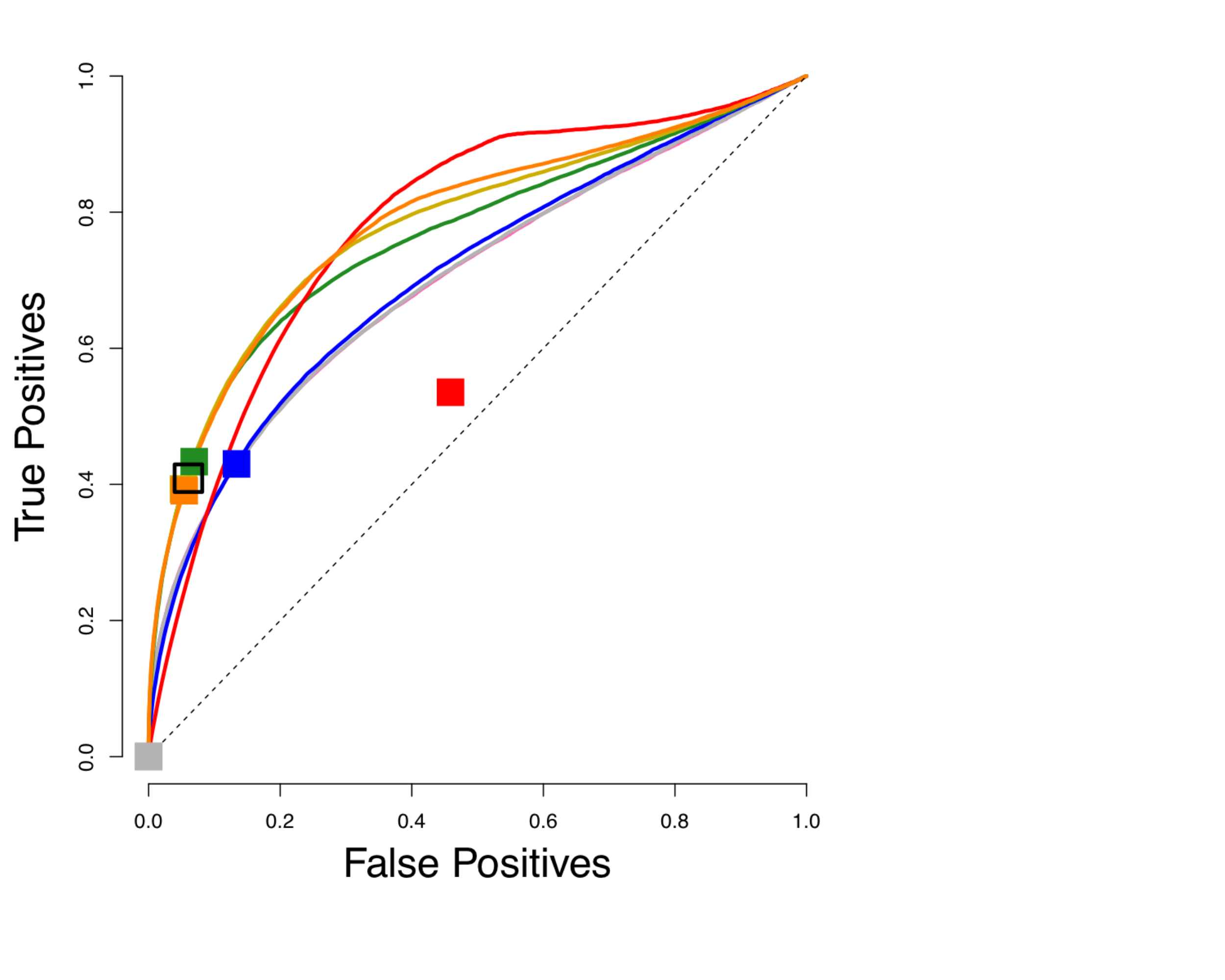}}
\caption{\label{App_plot_lr}Performance of the non-centered estimator with
  $h(x)=\min(x,3)$. Each curve corresponds to a different choice of
  $\lambda_{\mathbf{K}}/\lambda_{\boldsymbol{\eta}}$. 
  Squares indicate models picked by eBIC with refit.  The square with
  black outline has the highest eBIC among all models (combinations of
  $\lambda_{\mathbf{K}}$, $\lambda{\boldsymbol{\eta}}$). The multipliers correspond to medium or high for $n=80$, and low, medium and high for $n=1000$, respectively.}
 \vspace{-0.3in}
\end{figure}


\newpage
\subsection{Other \texorpdfstring{$a/b$}{a/b} Models}
Figure \ref{App_plot_exp} exhibits the results for the exponential models.
\begin{figure}[H]
\centering
\vspace{-0.1in}
\subfloat[$n=80$, $\boldsymbol{\eta}=-0.5\mathbf{1}_{100}$, profiled est, ER]{\includegraphics[scale=0.26]{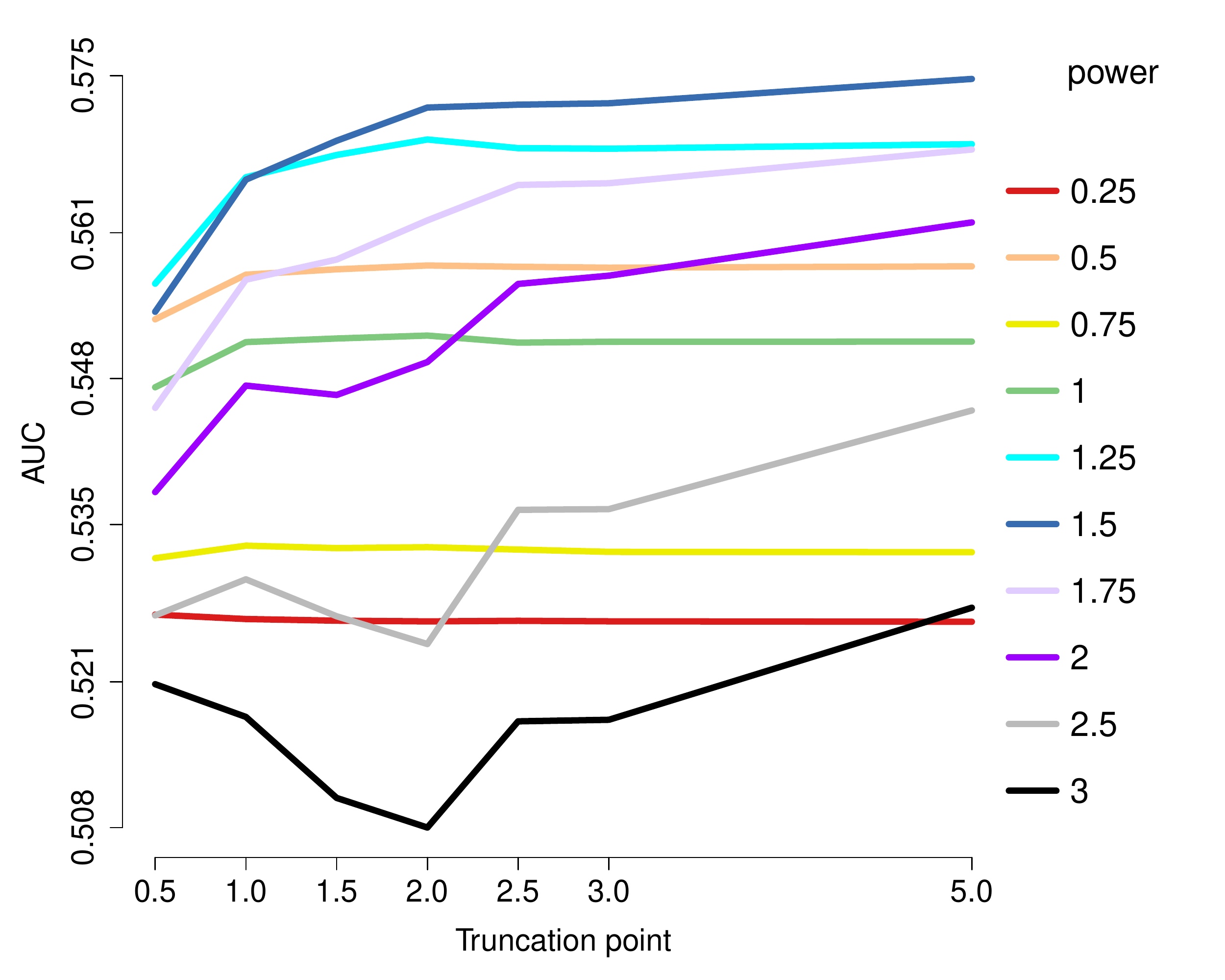}\hspace{-0.02in}}
\subfloat[$n=1000$, $\boldsymbol{\eta}=-0.5\mathbf{1}_{100}$, profiled est, ER]{\includegraphics[scale=0.26]{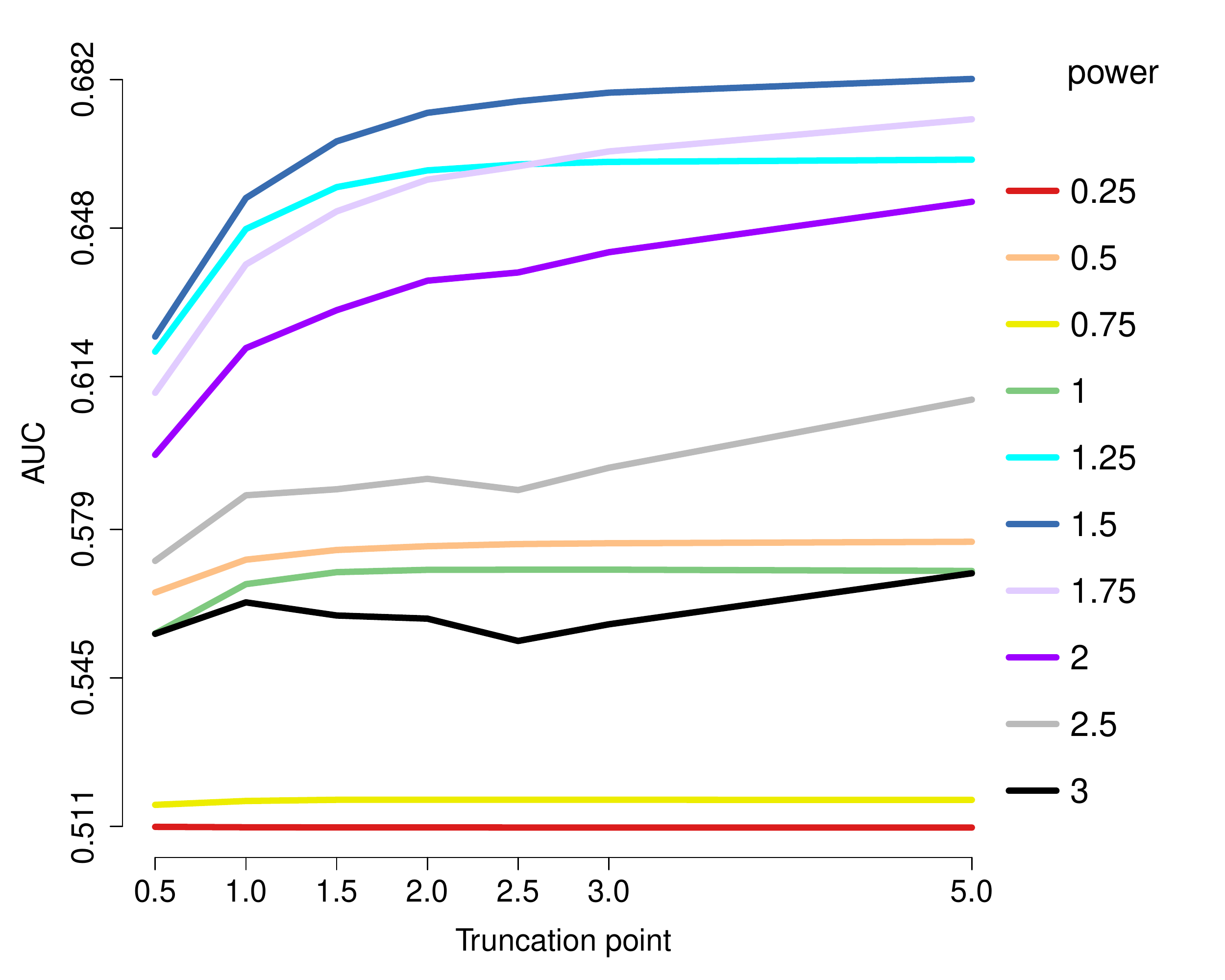}\hspace{-0.02in}}
\\ \vspace{-0.1in}
\subfloat[$n=80$, $\boldsymbol{\eta}=\boldsymbol{0}$, centered est, ER]{\includegraphics[scale=0.26]{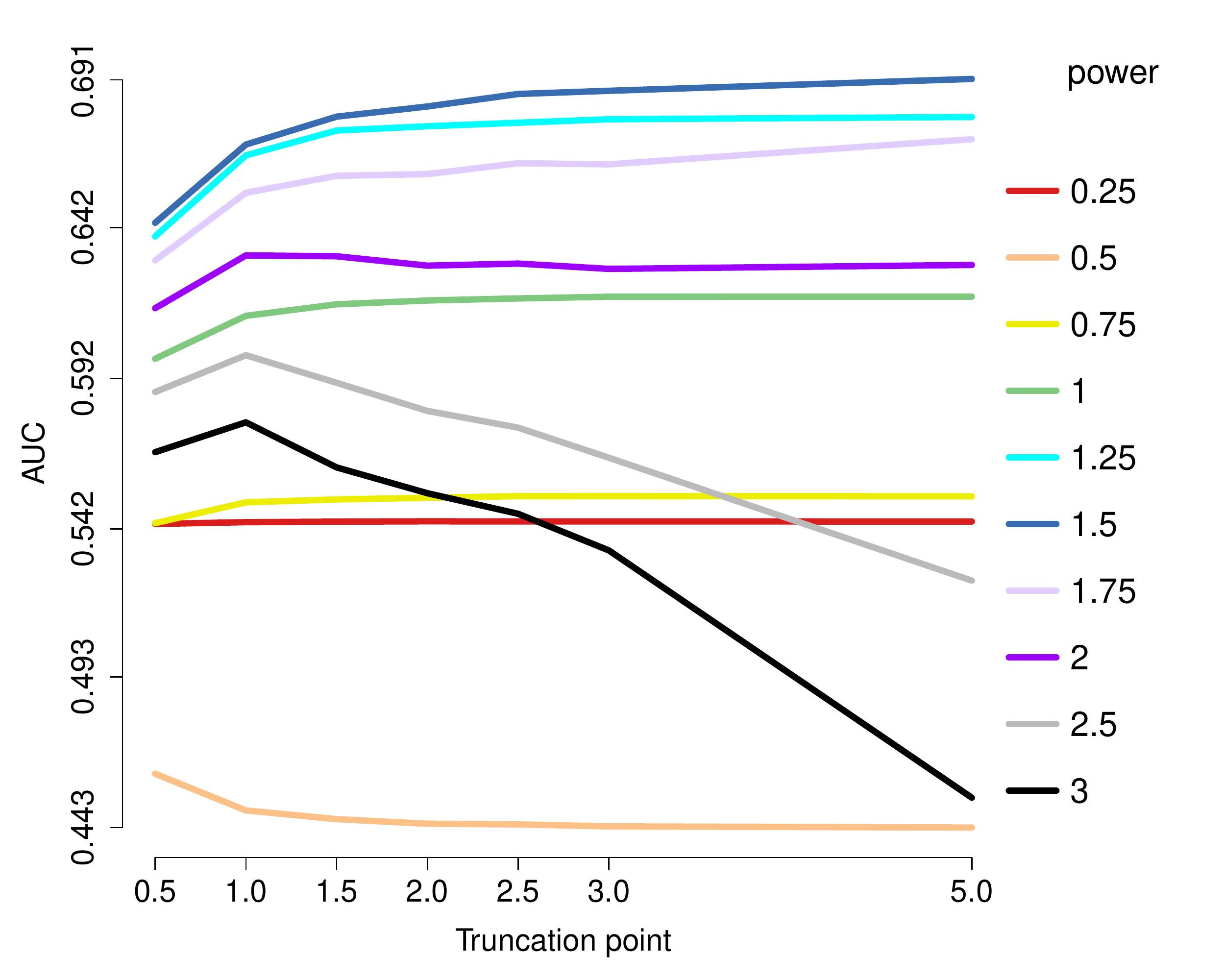}\hspace{-0.02in}}
\subfloat[$n=1000$, $\boldsymbol{\eta}=\boldsymbol{0}$, centered est, ER]{\includegraphics[scale=0.26]{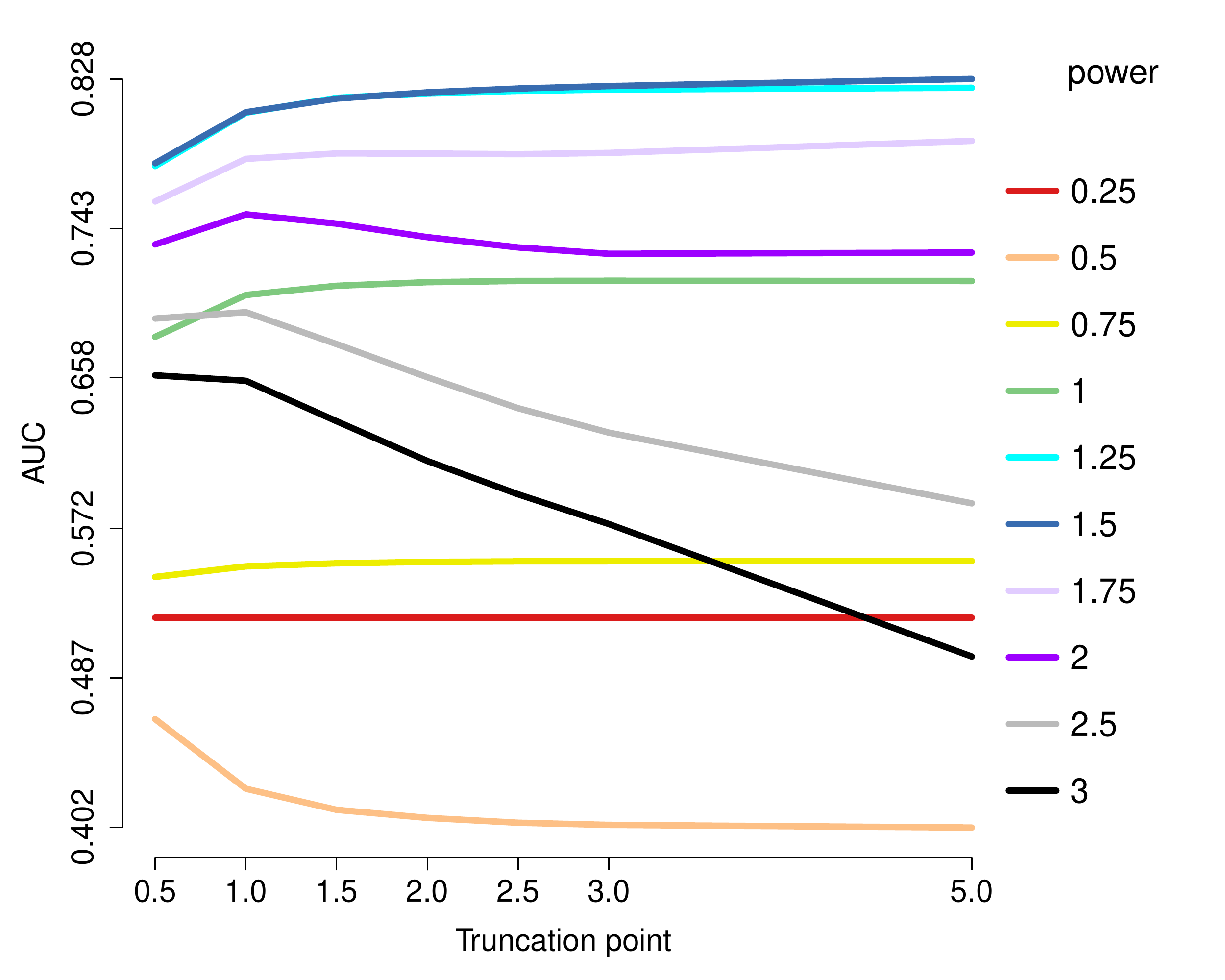}\hspace{-0.02in}}
\\ \vspace{-0.1in}
\subfloat[$n=80$, $\boldsymbol{\eta}=0.5\mathbf{1}_{100}$, profiled est, ER]{\includegraphics[scale=0.26]{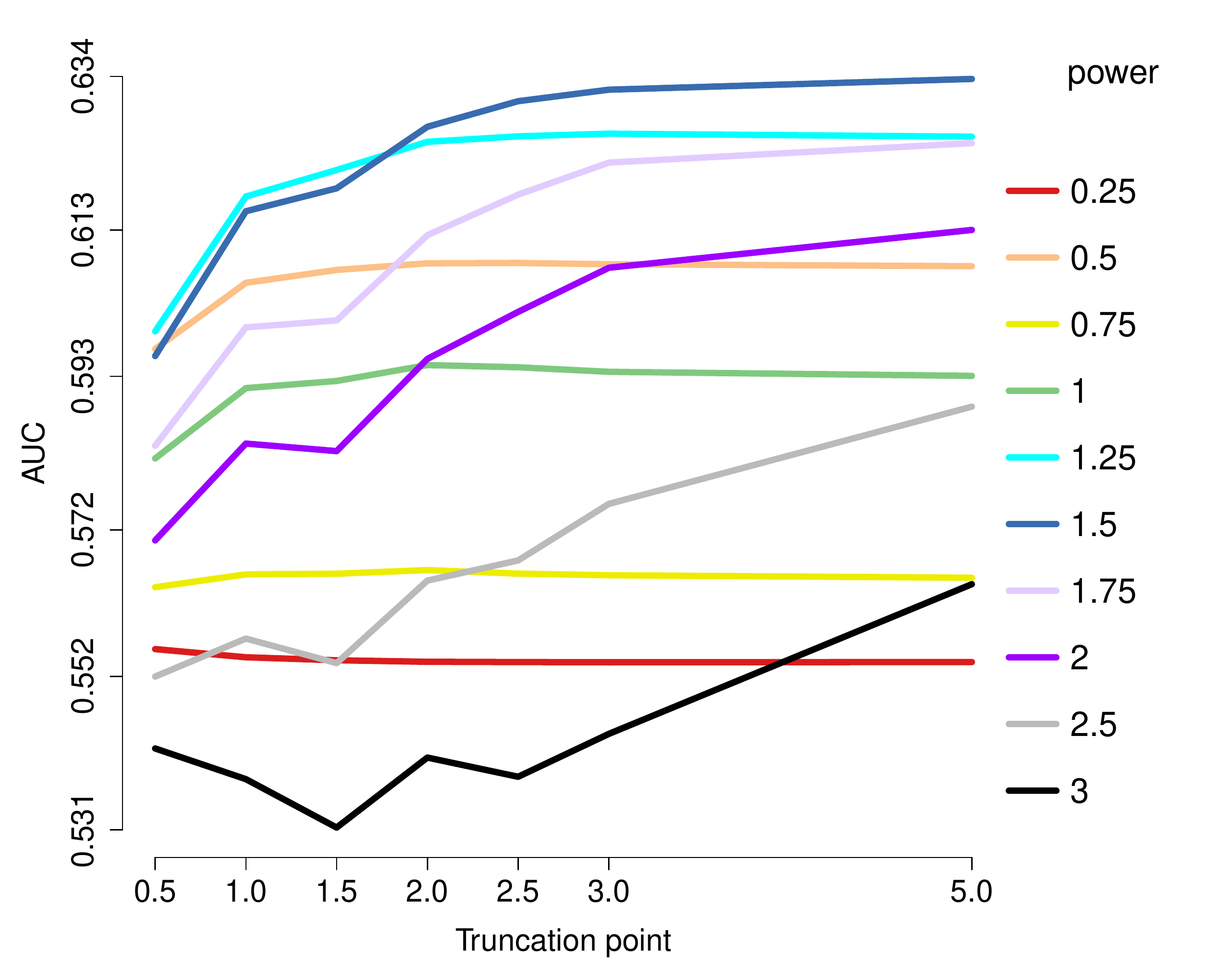}\hspace{-0.02in}}
\subfloat[$n=1000$, $\boldsymbol{\eta}=0.5\mathbf{1}_{100}$, profiled est, ER]{\includegraphics[scale=0.26]{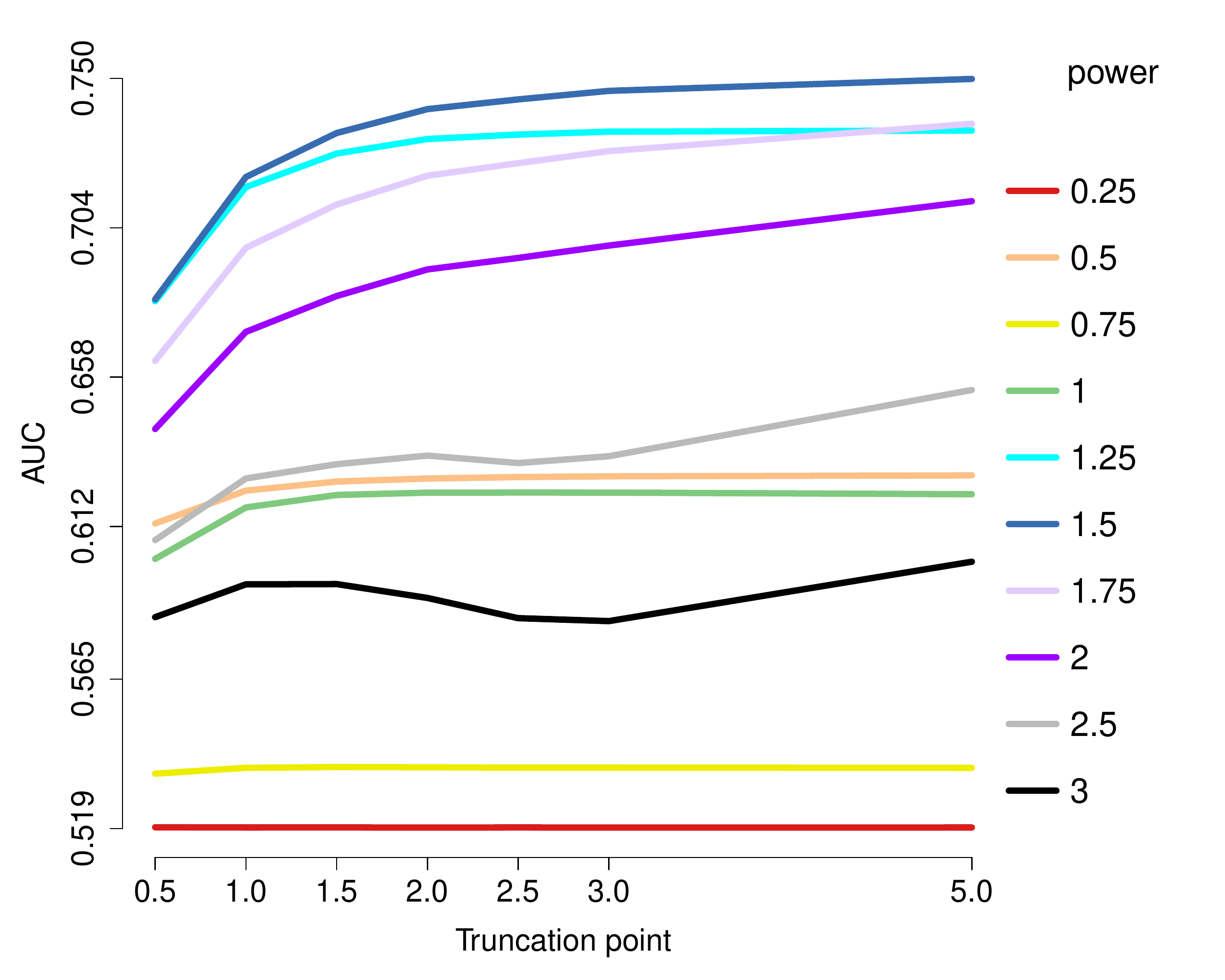}\hspace{-0.02in}}
\caption{\label{App_plot_exp}AUCs for edge recovery using generalized score matching for the exponential models. Each curve represents a different choice of power $p$ in $h(x)=\min(x^p,c)$, and the $x$ axis marks the truncation point $c$. Colors are sorted by $p$.}
\vspace{-0.5in}
\end{figure}

\newpage
\noindent Figure \ref{App_plot_gamma} displays the results for the gamma models.
\begin{figure}[htp]
\centering
\vspace{-0.0in}
\subfloat[$n=80$, $\boldsymbol{\eta}=-0.5\mathbf{1}_{100}$, profiled estimator, ER]{\includegraphics[scale=0.30]{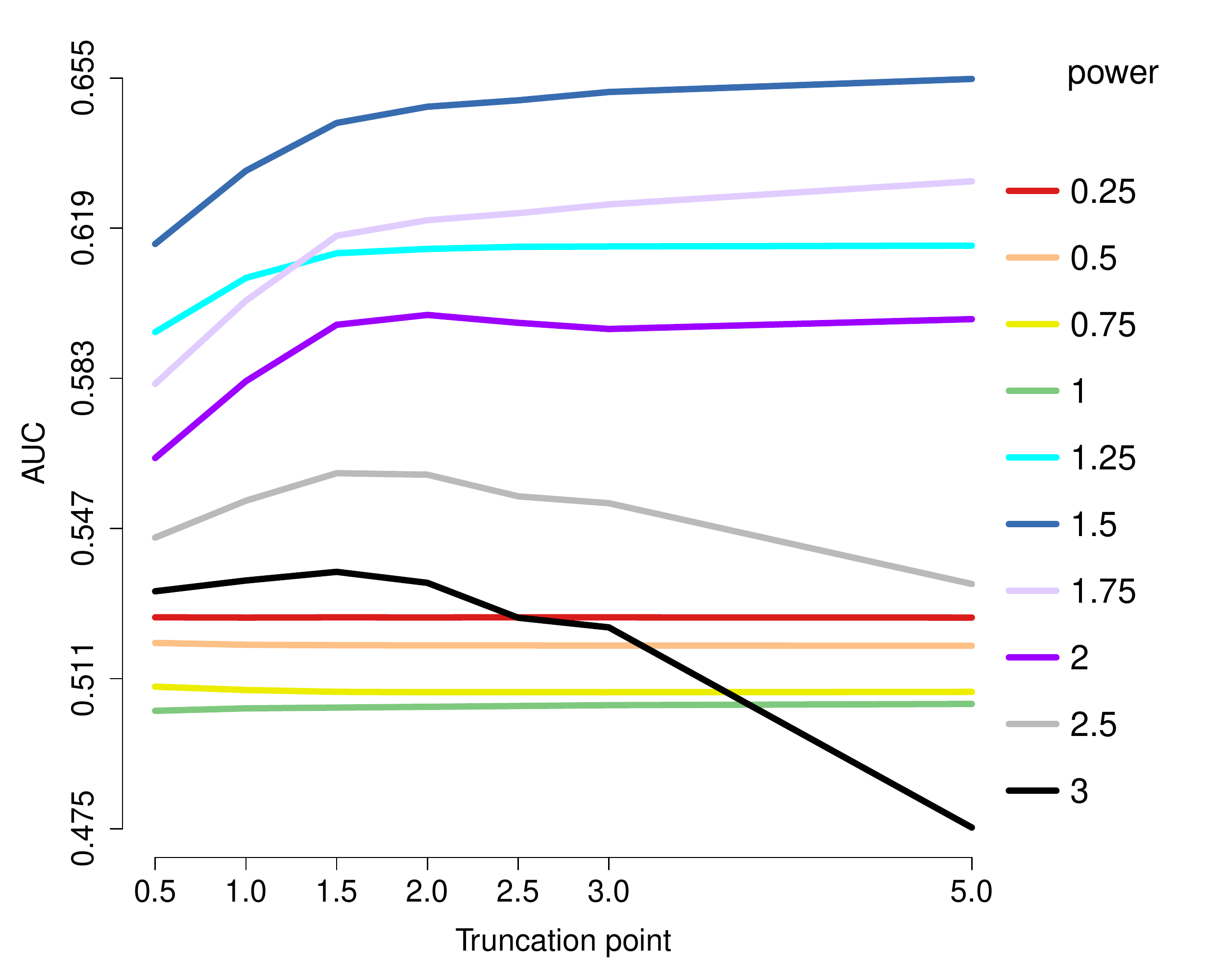}\hspace{-0.02in}}
\subfloat[$n=1000$, $\boldsymbol{\eta}=-0.5\mathbf{1}_{100}$, profiled estimator, ER]{\includegraphics[scale=0.30]{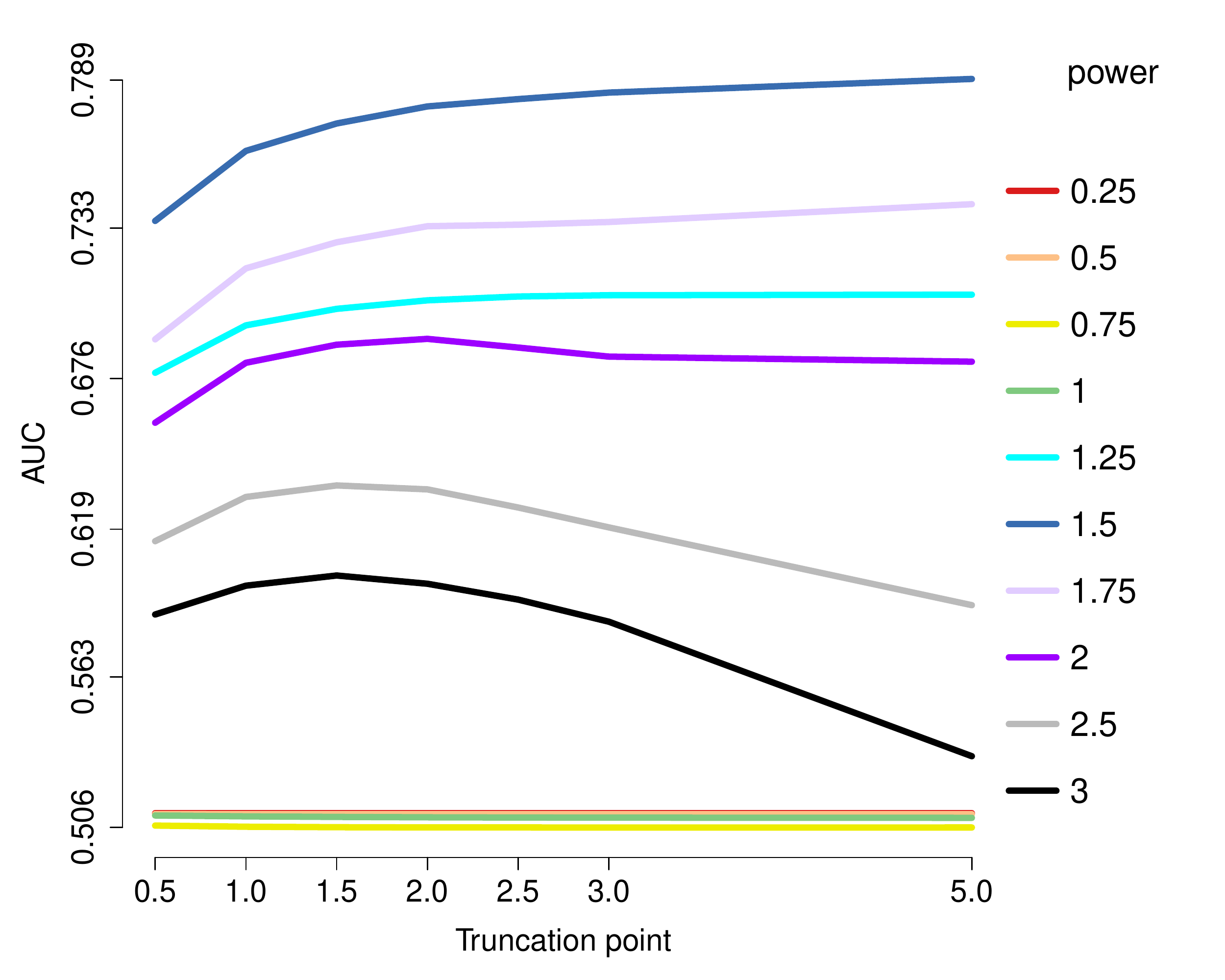}\hspace{-0.02in}}
\\ \vspace{-0.1in}
\subfloat[$n=80$, $\boldsymbol{\eta}=0.5\mathbf{1}_{100}$, profiled estimator, ER]{\includegraphics[scale=0.30]{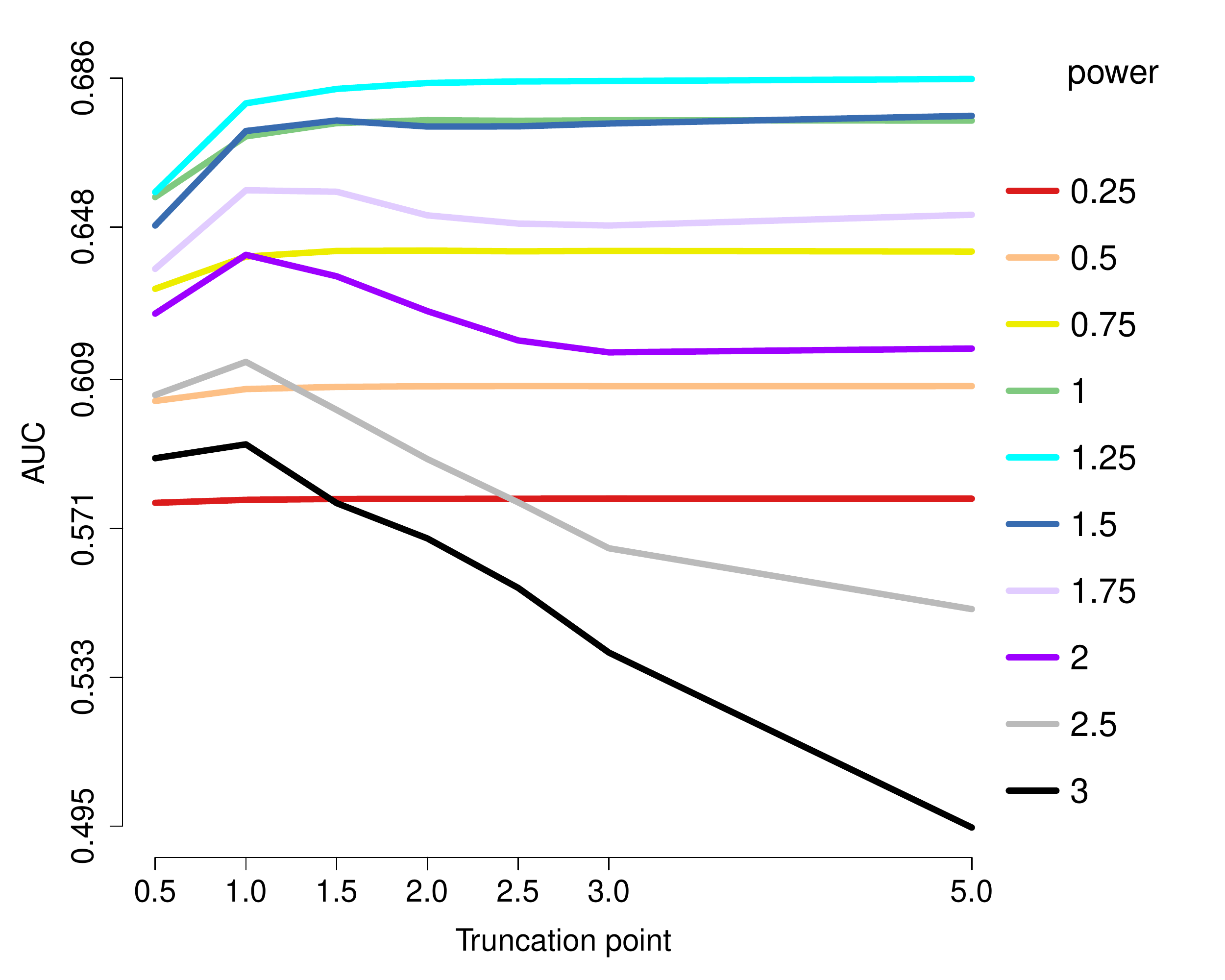}\hspace{-0.02in}}
\subfloat[$n=1000$, $\boldsymbol{\eta}=0.5\mathbf{1}_{100}$, profiled estimator, ER]{\includegraphics[scale=0.30]{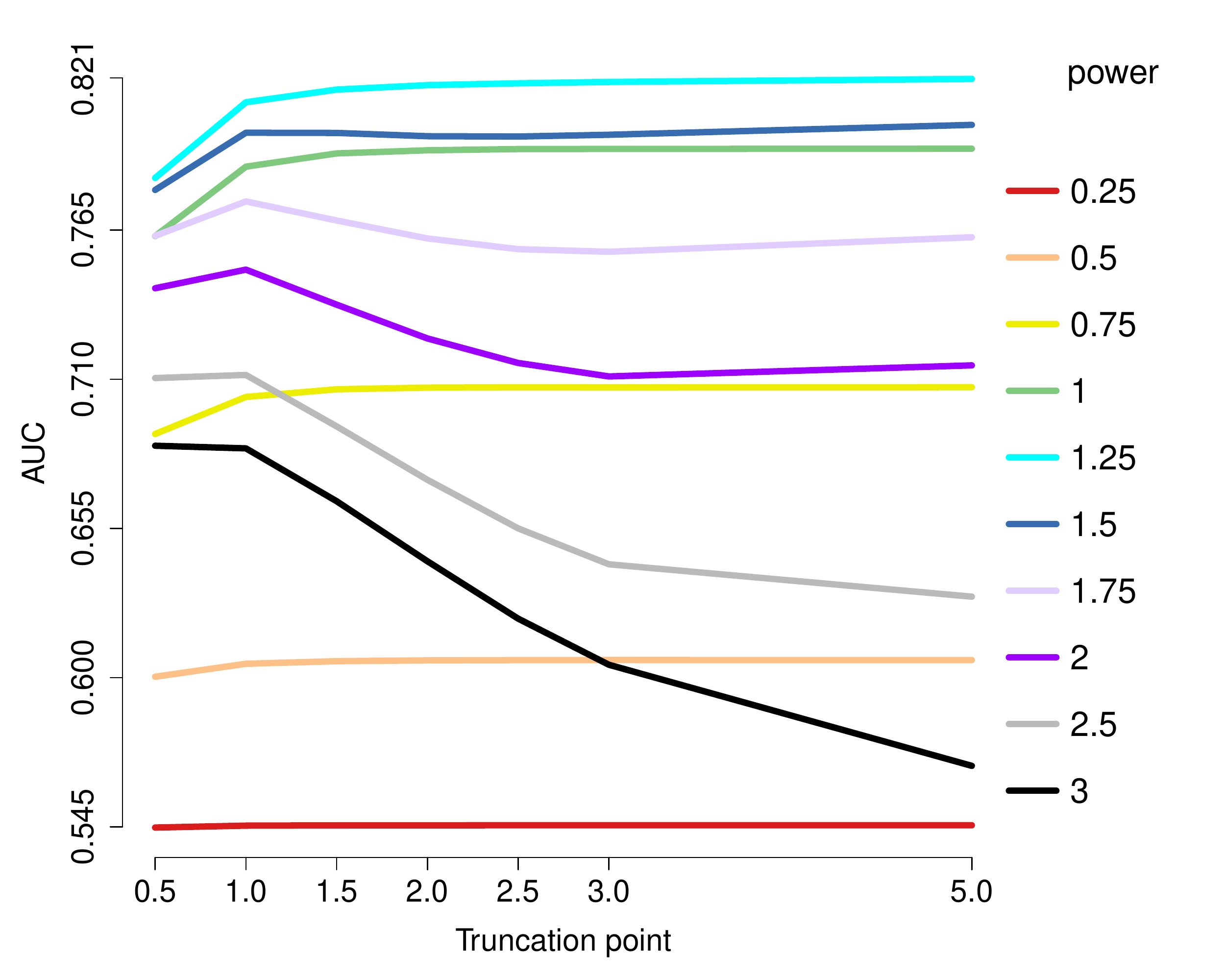}\hspace{-0.02in}}
\caption{\label{App_plot_gamma}AUCs for edge recovery using generalized score matching for the gamma models. Each curve represents a different choice of power $p$ in $h(x)=\min(x^p,c)$, and the $x$ axis marks the truncation point $c$. Colors are sorted by $p$.}
\end{figure}

\newpage
\noindent Figures \ref{App_plot_a1.5_b0.5} and \ref{App_plot_a1.5_b0} demonstrate the results for $a=3/2$ and $b=1/2$ or $b=0$, respectively.
\begin{figure}[H]
\centering
\subfloat[$n=80$, $\boldsymbol{\eta}=-0.5\mathbf{1}_{100}$, profiled est, ER]{\includegraphics[scale=0.26]{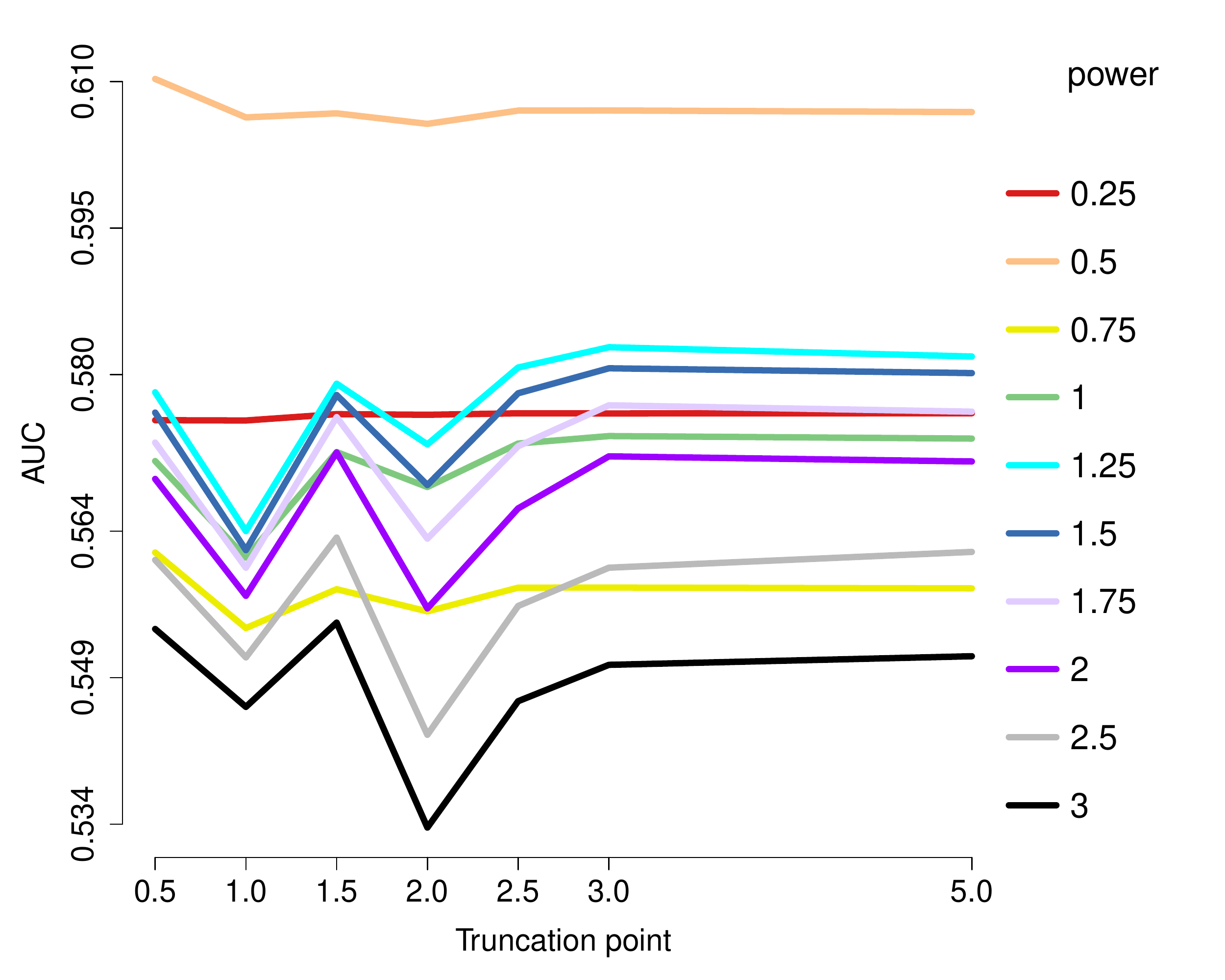}\hspace{-0.02in}}
\subfloat[$n=1000$, $\boldsymbol{\eta}=-0.5\mathbf{1}_{100}$, profiled est, ER]{\includegraphics[scale=0.26]{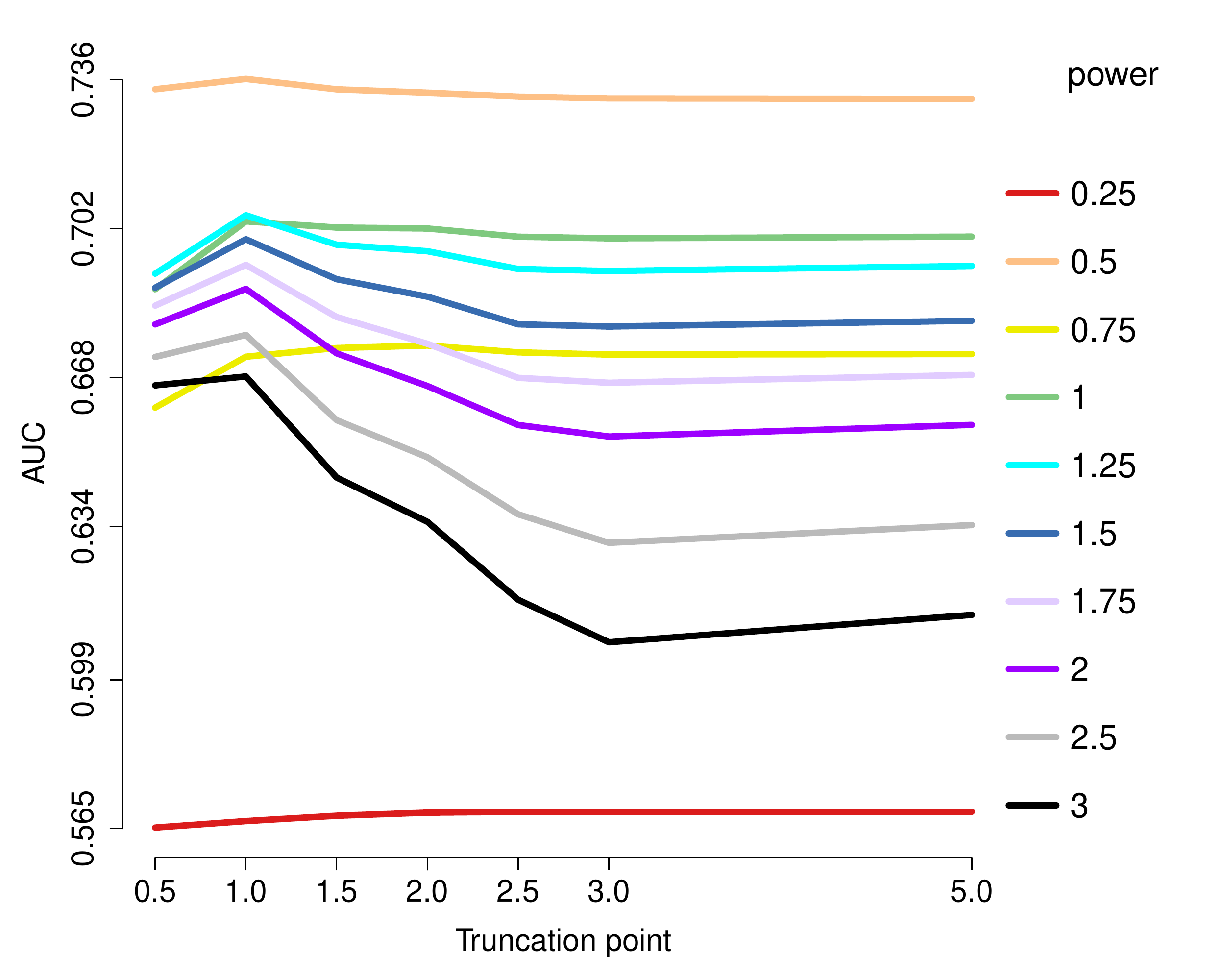}\hspace{-0.02in}}
\\ \vspace{-0.1in}
\subfloat[$n=80$, $\boldsymbol{\eta}=\boldsymbol{0}$, centered est, ER]{\includegraphics[scale=0.26]{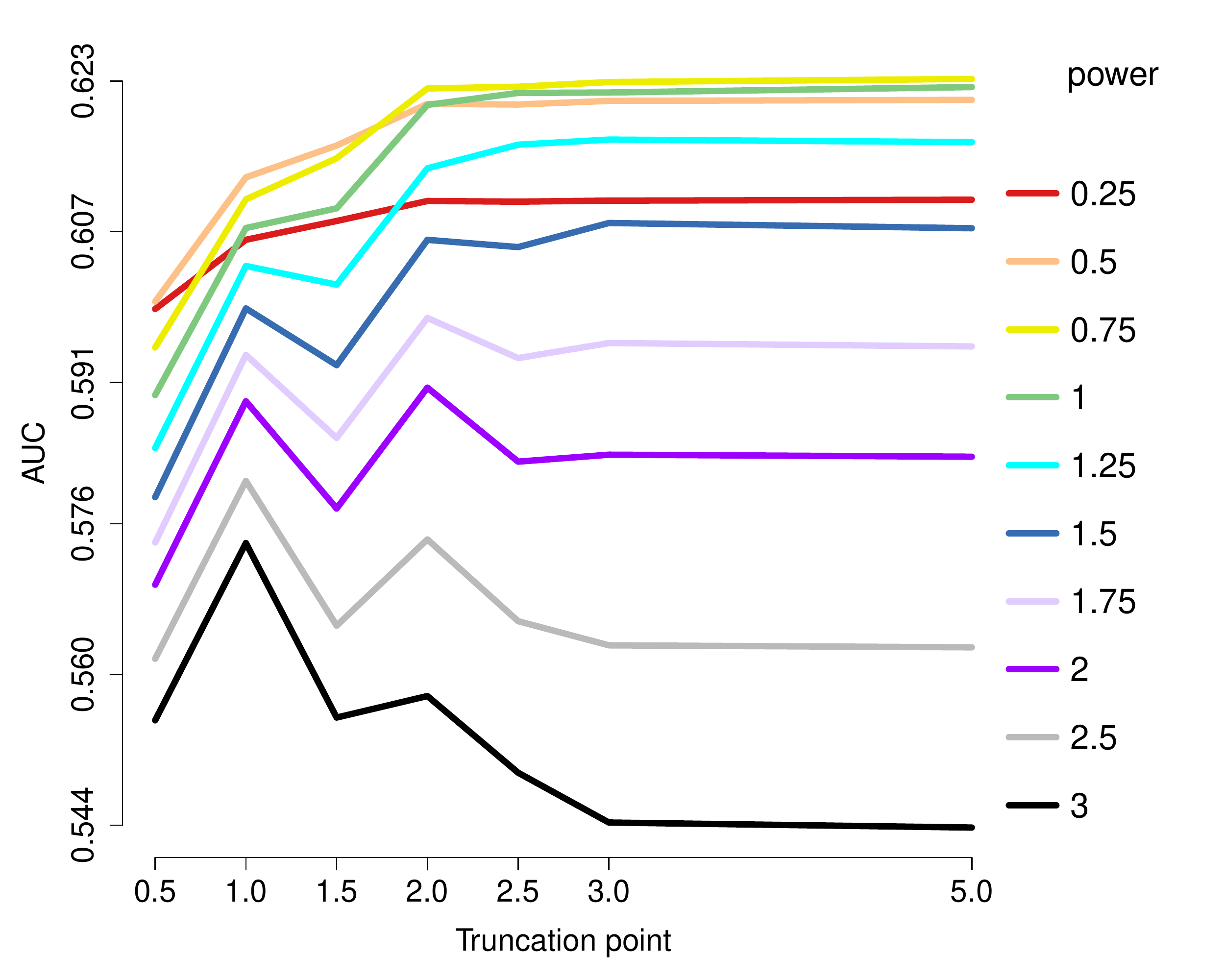}\hspace{-0.02in}}
\subfloat[$n=1000$, $\boldsymbol{\eta}=\boldsymbol{0}$, centered est, ER]{\includegraphics[scale=0.26]{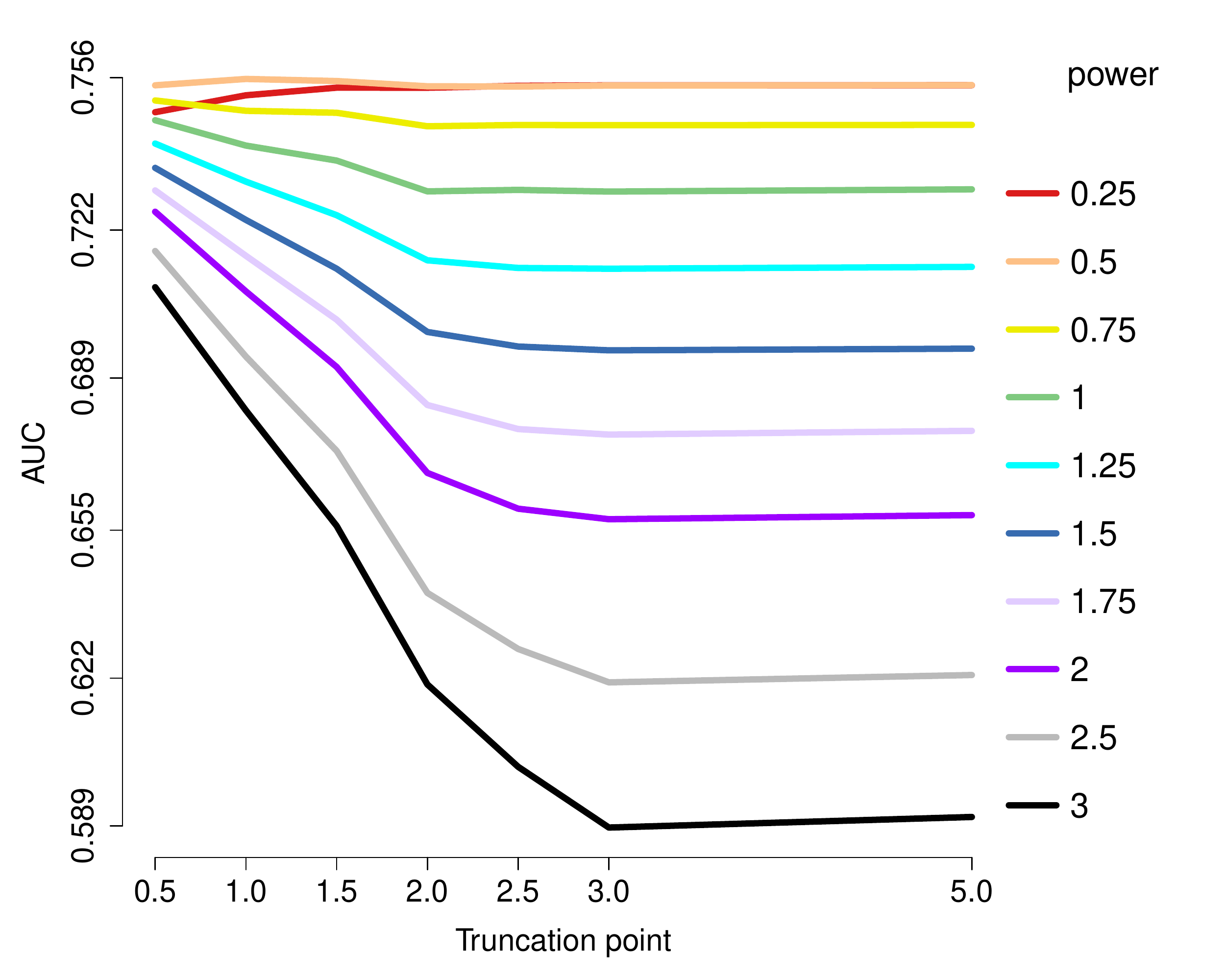}\hspace{-0.02in}}
\\ \vspace{-0.1in}
\subfloat[$n=80$, $\boldsymbol{\eta}=0.5\mathbf{1}_{100}$, profiled est, ER]{\includegraphics[scale=0.26]{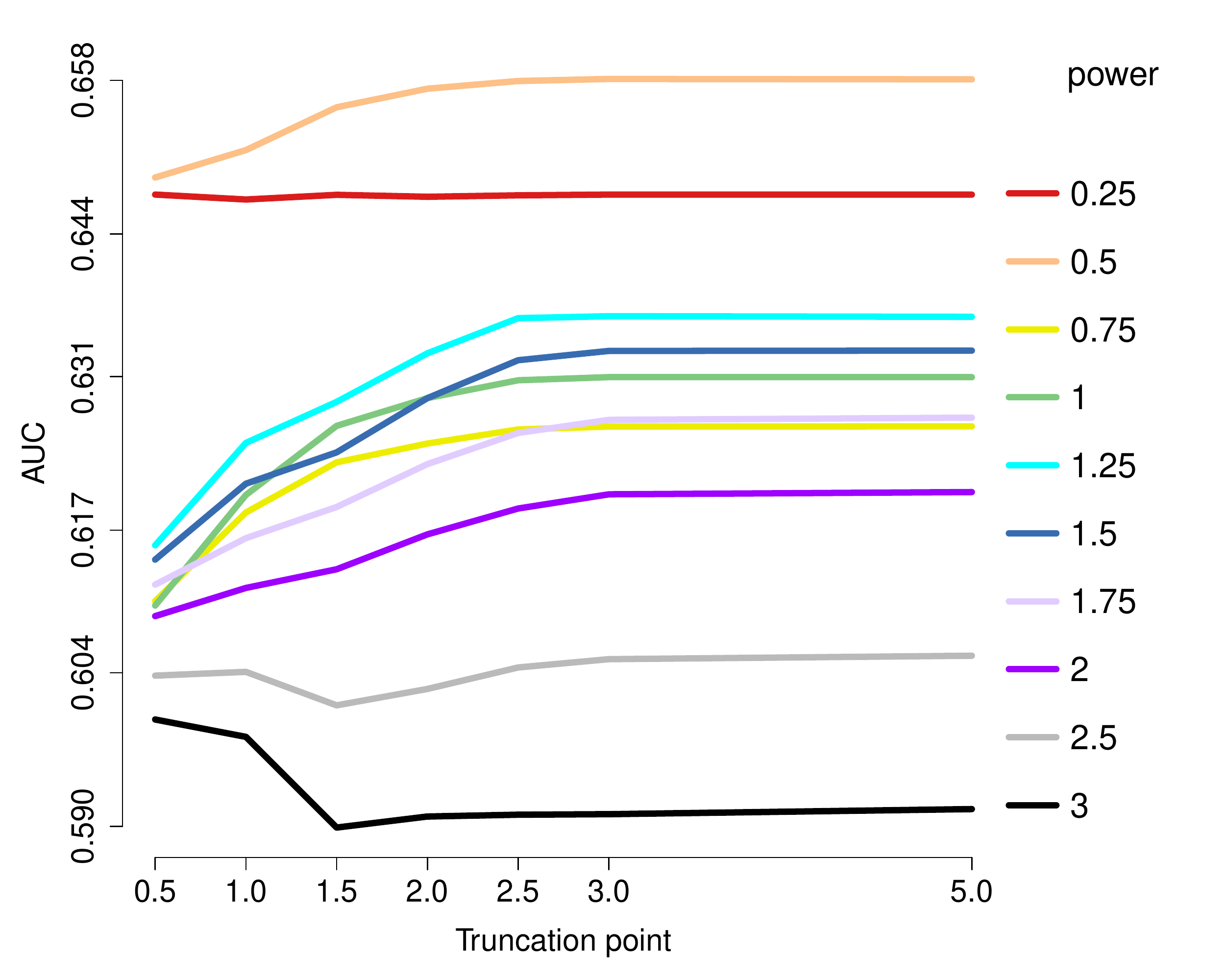}\hspace{-0.02in}}
\subfloat[$n=1000$, $\boldsymbol{\eta}=0.5\mathbf{1}_{100}$, profiled est, ER]{\includegraphics[scale=0.26]{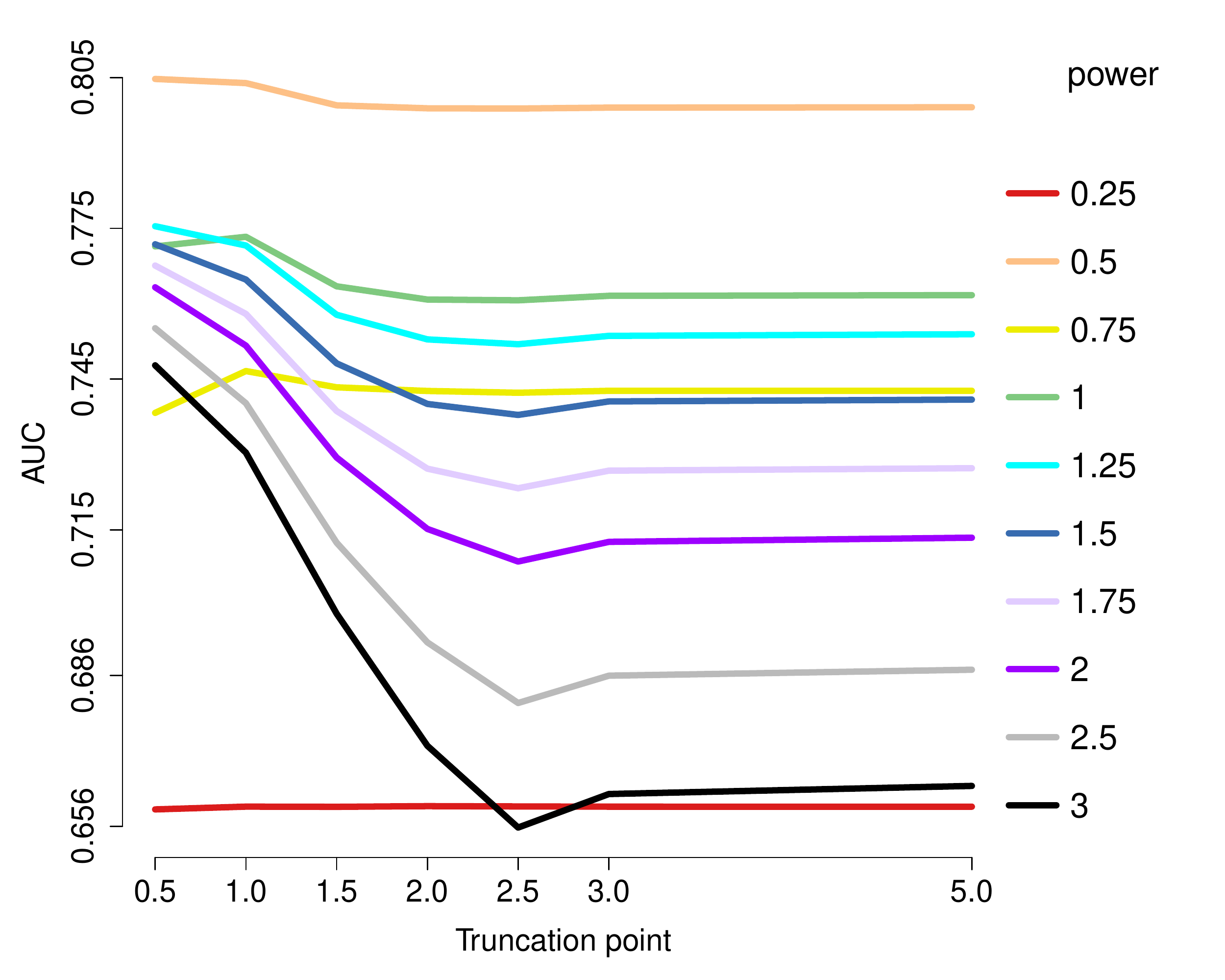}\hspace{-0.02in}}
\caption{\label{App_plot_a1.5_b0.5}AUCs for edge recovery using generalized score matching for $a=3/2$, $b=1/2$. Each curve represents a different choice of power $p$ in $h(x)=\min(x^p,c)$, and the $x$ axis marks the truncation point $c$. Colors are sorted by $p$.}
\end{figure}

\begin{figure}[H]
\centering
\vspace{-0.0in}
\subfloat[$n=80$, $\boldsymbol{\eta}=-0.5\mathbf{1}_{100}$, profiled estimator, ER]{\includegraphics[scale=0.30]{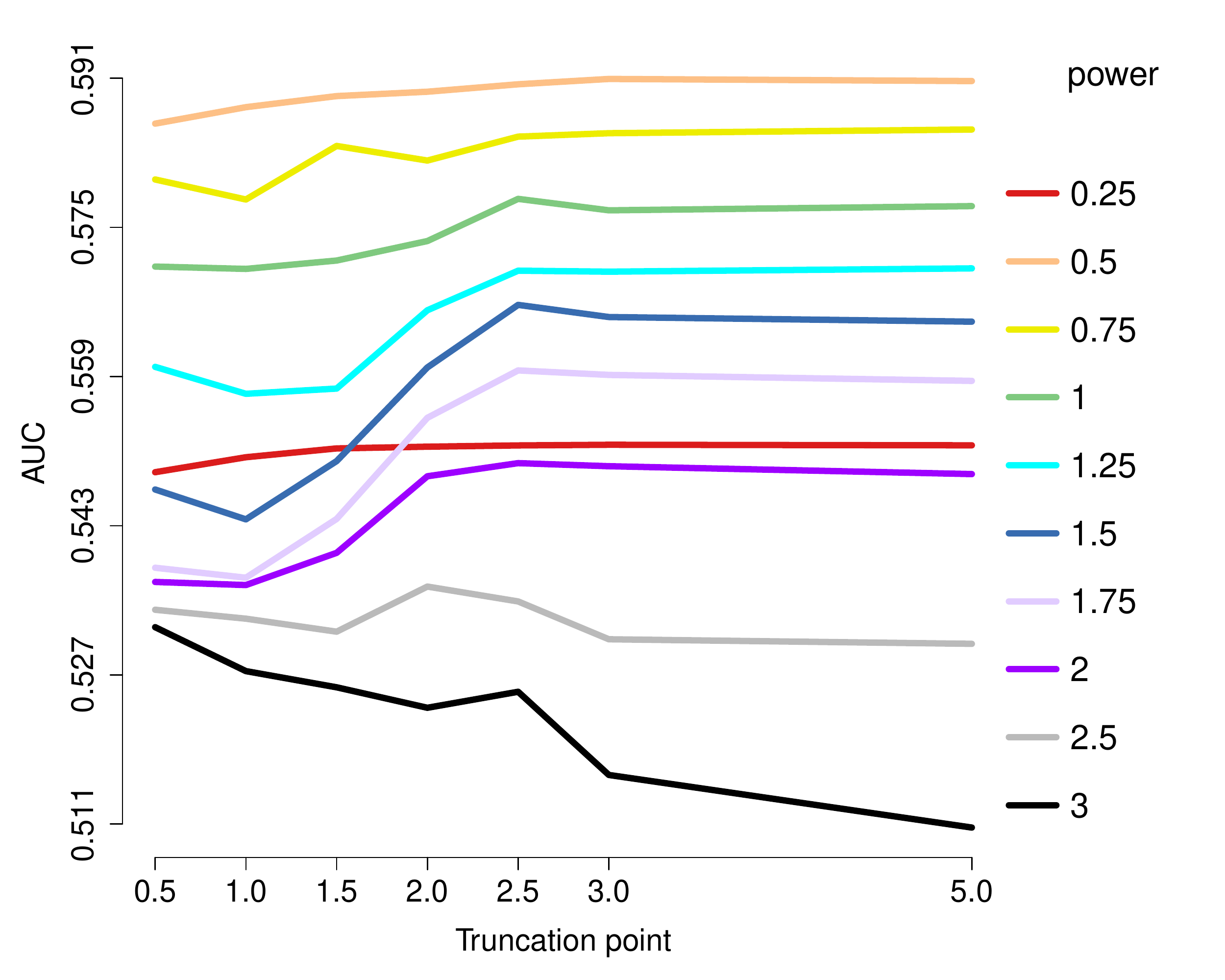}\hspace{-0.02in}}
\subfloat[$n=1000$, $\boldsymbol{\eta}=-0.5\mathbf{1}_{100}$, profiled estimator, ER]{\includegraphics[scale=0.30]{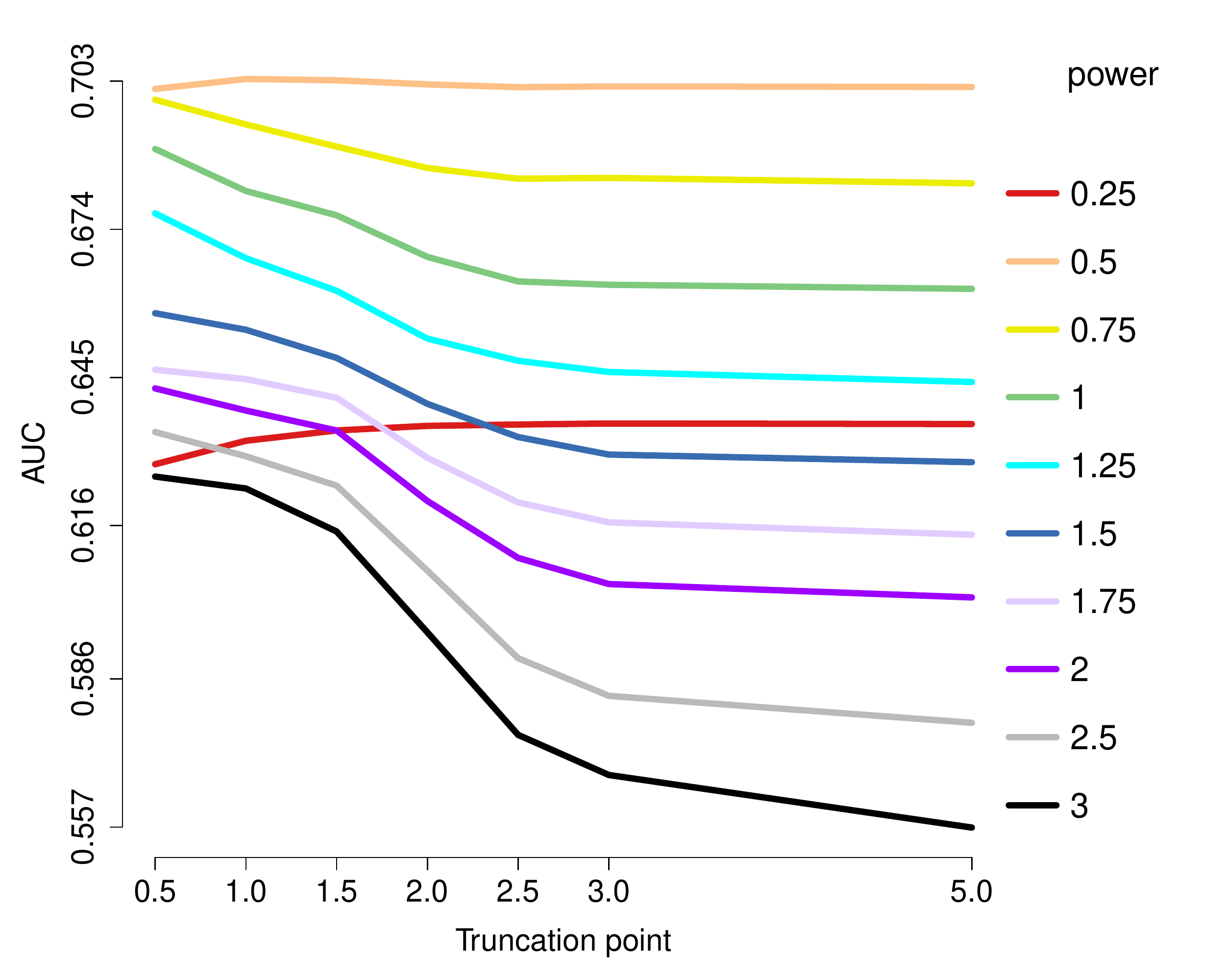}\hspace{-0.02in}}
\\ \vspace{-0.1in}
\subfloat[$n=80$, $\boldsymbol{\eta}=0.5\mathbf{1}_{100}$, profiled estimator, ER]{\includegraphics[scale=0.30]{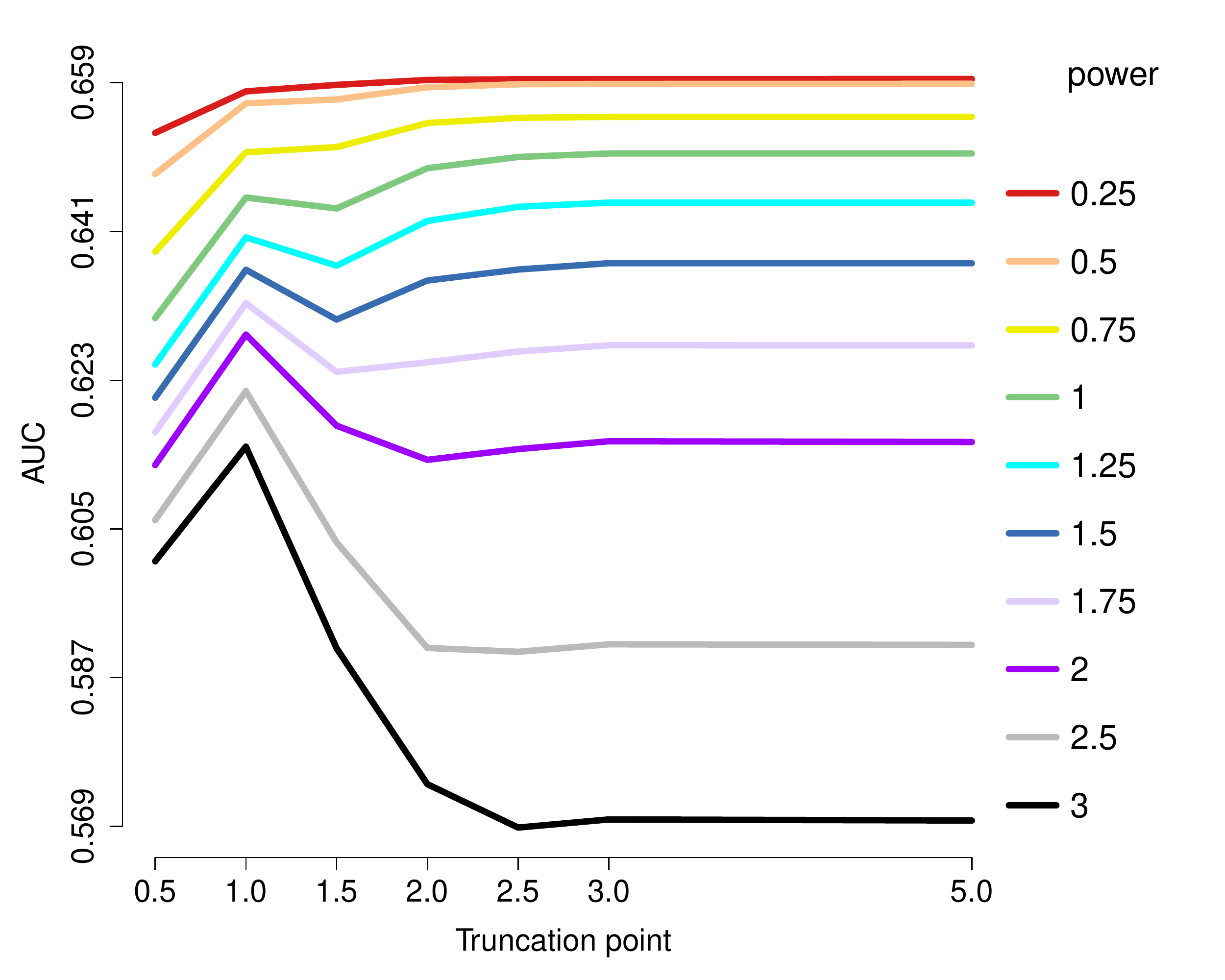}\hspace{-0.02in}}
\subfloat[$n=1000$, $\boldsymbol{\eta}=0.5\mathbf{1}_{100}$, profiled estimator, ER]{\includegraphics[scale=0.30]{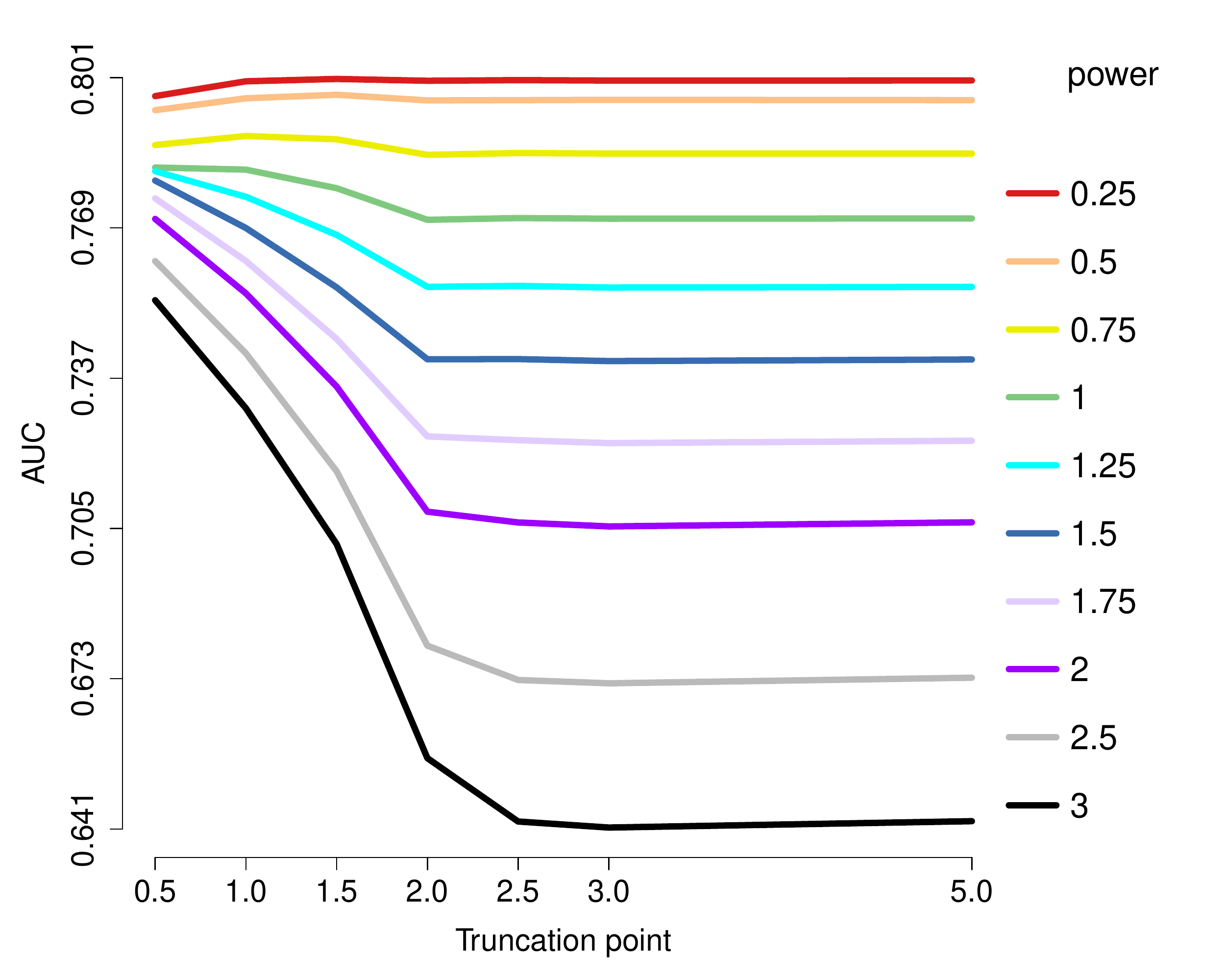}\hspace{-0.02in}}
\caption{\label{App_plot_a1.5_b0}AUCs for edge recovery using generalized score matching for $a=3/2$, $b=0$. Each curve represents a different choice of power $p$ in $h(x)=\min(x^p,c)$, and the $x$ axis marks the truncation point $c$. Colors are sorted by $p$.}
\end{figure}

\newpage

\bibliography{Paper}

\end{document}